    \definecolor{darkcerulean}{rgb}{0.03, 0.27, 0.49}
    \definecolor{smokyblack}{rgb}{0.06, 0.05, 0.03}
    \definecolor{warmblack}{rgb}{0.0, 0.26, 0.26}
    \definecolor{cobalt}{rgb}{0.0, 0.28, 0.67}
    \definecolor{brickred}{rgb}{0.8, 0.25, 0.33}
\title{A Comprehensive Analysis on the Learning Curve \\ in Kernel Ridge Regression}
\author{%
  Tin Sum Cheng, Aurelien Lucchi
    \\
  Department of Mathematics and Computer Science \\
  University of Basel, Switzerland \\
  \texttt{tinsum.cheng@unibas.ch, aurelien.lucchi@unibas.ch} \\
   \And
    Anastasis Kratsios \\
  Department of Mathematics\\
  McMaster University and The Vector Institute\\
  Ontario, Canada \\
  \texttt{kratsioa@mcmaster.ca} 
  \And
  David Belius \\
  Faculty of Mathematics and Computer Science \\
  UniDistance Suisse \\
  Switzerland
  \texttt{david.belius@cantab.ch}
}
\begin{document}

\maketitle

\begin{abstract}

This paper conducts a comprehensive study of the learning curves of kernel ridge regression (KRR) under minimal assumptions.
Our contributions are three-fold: 1) we analyze the role of key properties of the kernel, such as its spectral eigen-decay, the characteristics of the eigenfunctions, and the smoothness of the kernel; 2) we demonstrate the validity of the Gaussian Equivalent Property (GEP), which states that the generalization performance of KRR remains the same when the whitened features are replaced by standard Gaussian vectors, thereby shedding light on the success of previous analyzes under the Gaussian Design Assumption; 3) we derive novel bounds that improve over existing bounds across a broad range of setting such as (in)dependent feature vectors and various combinations of eigen-decay rates in the over/underparameterized regimes.

\end{abstract}

\section{Introduction} \label{section:introduction}
Kernel ridge regression (KRR) is a central tool in machine learning due to its ability to provide a flexible and efficient framework for capturing intricate patterns within data. Additionally, it stands as one of the earliest endeavors in statistical machine learning, with ongoing research into its generalization properties~\cite{caponnetto2007optimal, steinwart2008support}. Over the past few years, kernels have experienced a resurgence in importance in the field of deep learning theory~\cite{zhang2017understanding, belkin18understand, bartlett2020benign}, partly because many deep neural networks (DNNs) can be interpreted as approaching specific kernel limits as they converge~\cite{jacot2018NTK, arora2019fine, bietti2019inductive}.

One central topic in machine learning theory is the \textbf{learning curve} of the regressor in the fixed input dimensional setting as the sample size grows to infinity. Formally:
let $n$ be the sample size, $\lambda=\lambda(n)$ be the ridge regularization parameter depending on $n$ and $\mathcal{R}_n$ be the test error/excess risk of the ridge regression. For large $n$, the test error $\mathcal{R}_n$ should decay with $n$ as $\mathcal{R}_n=\bigo{n,\mathbb{P}}{g(n)}$ for some function $g:\R\to\R$ such that $g(n)\xrightarrow[]{n\to\infty}0$. The decay of $g$ with respect to $n$ provides an upper bound on the learning curve of the ridge regressor and will be the main focus of this paper. To conduct our analysis, we concentrate on several crucial properties of the kernel, including its spectral eigen-decay and the characteristics of the eigenfunctions, which we will elaborate on next.

\paragraph{Properties of the eigenfunctions}
In a series of studies \cite{bordelon2020spectrum, cui2021generalization, loureiro2021learning}, the feature vectors are substituted by random Gaussian vectors following the Gaussian Design \ref{assumption:GD} Assumption, and the learning curve is derived using the Replica method. In a separate body of research \cite{li2022saturation,li2023kernel,li2023on}, it is demonstrated that \emph{similar} learning curves occur for H\"{o}lder continuous kernels under an assumption called the Embedding Condition \ref{assumption:EC} (see Section \ref{section:assumptions} for more details). 
Consequently, there is a fundamental mismatch between the distribution of the feature vector and the Gaussian random vectors used in~\cite{bordelon2020spectrum, cui2021generalization, loureiro2021learning}: 
in the former case, each coordinate is highly dependent on the others, whereas in the Gaussian Design case, each coordinate operates independently from the others. Astonishingly, however, both settings share similar learning curves. 
This phenomenon, initially identified by \cite{goldt2020modeling,aubin2020generalization,seddik2020random,goldt2022gaussian}, is termed the \emph{Gaussian Equivalence Property}.
This prompts the question:
\begin{center}
    \textit{Q1: When and why does the Gaussian Equivalence Property exist?}
\end{center}

\paragraph{Spectral eigen-decay}
Many recent papers \cite{bordelon2020spectrum, mallinar2022benign, simon2023eigenlearning, misiakiewicz2024nonasymptotic} have attempted to characterize the test error solely by the (kernel) eigenspectrum decay. It is for instance common to differentiate between different eigenspectrum decays: \cite{li2022saturation,li2023kernel,li2023on} assumes the embedding condition (EC) and H\"{o}lder continuity to kernel with polynomial eigen-decay; \cite{long2024duality} assumes either polynomial or exponential eigen-decay (with noiseless labels) under the Maximum Degree-of-Freedom (MaxDof) Assumption; \cite{misiakiewicz2024nonasymptotic} assumes some concentration and the so-called hypercontractivity on eigenfunctions.

However, \cite{cheng2024characterizing} pointed out that the characterization of generalization performance solely by the spectral eigen-decay might oversimplify the generalization performance of ridge regression. In relation to the second question, we further ask:
\begin{center}
    \textit{Q2: Under what conditions 
    is the generalization error fully determined by the eigen-decay?
    }
\end{center}

\paragraph{Additional assumptions and settings}
In addition to the two properties above, other hyperparameters or settings, including capacity of the kernels/feature vectors, the ridge regularization decay, the source condition of the target function, the noise level in the output label, and the amount of over-parameterization, play an important role in the analysis of the learning curve of ridge regression. 
Within the present body of research, various papers establish bounds on the test error of KRR across diverse assumptions and settings (we refer the reader to Section~\ref{section:assumptions} for further elaboration).
It is therefore of significant interest to ask:
\begin{center}
     \textit{Q3: Is there a unifying theory explaining the generalization under minimal assumptions?}
\end{center}

\paragraph{Contributions}
We address questions \textit{Q1-3} through the following contributions:
\begin{enumerate}
    \item[(i)] \textbf{Unified theory:} We provide a unifying theory of the test error of KRR across a wide variety of settings (see Subsection \ref{subsection:assumptions_settings} and Table \ref{tab:over_parameterized} in Section \ref{section:main_result}).
    \item[(ii)] \textbf{Validation and GEP:} We show that the generalization performance with independent (Gaussian) features and dependent (kernel) features coincides asymptotically and it solely depends on the eigen-decay under strong ridge regularization, hence validating the Gaussian Equivalent Property (GEP) (see Subsection \ref{subsection:discussion})
    . 
    \item[(iii)] \textbf{New and sharpened bounds:} We provide novel bounds of the KRR test error that improve over prior work across various settings (see Subsections \ref{subsection:discussion}).
    \item[(iv)] \textbf{Smoothness and generalization:} We relate the spectral eigen-decay to kernel smoothness (see Appendix~\ref{a:SobolevResult}) and 
    hence to the kernel's generalization performance.
\end{enumerate}

\section{Setting} \label{section:preliminary}
In this section, we introduce the basic notation for ridge regression, which includes high-dimensional linear regression and kernel ridge regression as special cases. 

\subsection{Notations}

Suppose $p\in\N\cup\{\infty\}$. Let $\x = (x_k)_{k=1}^p\in\R^p$ be a random (feature) vector sampled from some distribution $\mu$ on $\R^p$.
Let $n\in\N$ be an integer and denote by $\x_1,...,\x_n$ n i.i.d. draw of $\x$. Denote the input matrix $\X\in\R^{n\times p}$ to be a matrix with rows $\x_i^\top$. By fixing an orthonormal basis, we assume that the covariance matrix is diagonal:
\begin{equation*}
    \Si\eqdef\Expect{\mu}{\x\x^\top}=\diag(\lambda_1,\lambda_2,...,\lambda_p)\in\R^{p\times p}, 
\end{equation*}
where the eigenvalues
$\lambda_1\geq\lambda_2\geq...\geq\lambda_p>0$ is a decreasing sequence of positive numbers. 
\footnote{For $p=\infty$, we regard $\x$ as a vector of an infinite dimensional Hilbert space and $\Si$ as a Hilbert Schmidt operator, meaning $\tr[\Si]<\infty$. With abuse of notation, we write $\x$ as vector and $\Si$ as matrix throughout the paper. In some proofs, we would apply some linear algebraic results on some $\R^{p\times p}$ matrices. When dealing with $p=\infty$, replace those results with their Hilbert space counterparts.}

We also assume that $\y\in\R^n$ is an output vector such that
\begin{equation} \label{line:output}
    \y = \X\th^* + \ep
\end{equation}
where $\th^*\in\R^p$ is a deterministic vector, $\ep\in\R^n$ is a random vector whose entries are i.i.d. drawn from a centered random variable $\epsilon$ with variance $\Expect{}{\epsilon^2}=\sigma^2<\infty$ and independent to $\X$.  

Then the linear regressor 
\begin{equation} \label{line:regressor}
    \hat{\th}(\y) \eqdef \X^\top({\X\X^\top + n\lambda  \I_n})^{-1}\y
\end{equation}
is the minimizer of the empirical mean square loss (MSE) problem:
\begin{equation} \label{line:minimization}
    \min_{\th\in\R^p} \frac{1}{n}\eunorm{\X\th - \y}^2 + \lambda \|\th\|_2^2,
\end{equation}
where $\lambda\geq0$ is the ridge%
\footnote{For $\lambda$ without subscript, we mean the ridge regularization coefficient; for $\lambda_k$ with subscript $k$, we mean the eigenvalues of the covariance $\Si=\Expect{}{\x\x^\top}$.}%
.
This paper focuses on bounding the test error, with which we can analyse the learning curve of the regressor. To do so, we use the following well-known bias-variance decomposition.

\begin{definition}[Bias-variance decomposition] \label{definition:test_error}
    Consider input-output pairs $(\X,\y)$ of sample size $n$ and a ridge $\lambda\geq0$. Define the test error $\mathcal{R}$ to be the population mean squared error between the regressor and the true label averaged over noise.
    \begin{equation} \label{line:test_error}
        \mathcal{R} 
        \eqdef \Expect{\x,\ep}{\Big( \x^\top\hat{\th}(\y) - \x^\top\th^* \Big)^2}
    \end{equation}
    Note that $\mathcal{R}$ is a random variable depending on the samples $(\X,\y)$ and the ridge $\lambda\geq0$. Hence, we can also view $\mathcal{R}=\mathcal{R}_n$ as a random variable indexed in $n$, where the samples $(\X,\y)$ are $n$ i.i.d. drawn input-output pairs and $\lambda$ is chosen to depend on $n$.
    
    We decompose the test error into a bias $\B$ and variance $\V$, which is typical for most KRR literature \cite{liang2020just, hastie2022surprises, bartlett2020benign, tsigler2023benign, li2022saturation,li2023kernel,li2023on, cheng2023theoretical, cheng2024characterizing}:
    \begin{equation} \label{line:bias_variance_decomposition}
        \mathcal{R} 
        =
        \B + \V
    \end{equation}
    where
    $
        \B \eqdef \Expect{\x}{\Big( \x^\top\hat{\th}(\X\th^*) - \x^\top\th^* \Big)^2},\
        \V \eqdef \Expect{\x,\ep}{\Big( \x^\top\hat{\th}(\ep) \Big)^2}.
    $
\end{definition}
\begin{remark}[Noiseless labels]
    If there is no noise in the label, that is, $\ep=0$, the test error $\mathcal{R}$ is simply the bias $\B$. Hence, the analysis of the bias term $\B$ in this paper is directly applicable to the noiseless labels setting.
\end{remark}

We now summarize the combinations of assumptions and settings made in this paper.

\subsection{Assumptions and Settings} \label{subsection:assumptions_settings}

\paragraph{Polynomial/exponential eigen-decay} 
We consider two types of spectral decay rates, namely, \textit{polynomial} and \textit{exponential} decay rates,
because: 1) polynomial eigen-decay is, roughly speaking 
, equivalent to the case where the RKHS is comprised of at most 
finitely many
continuous derivatives; 2) the exponential eigen-decay is, possibly up to a canonical change in the relevant function space, equivalent to the case where the RKHS consists of smooth (infinitely differentiable) functions. 
For the formal definition of the eigen-decay, see Assumptions \ref{assumption:PE} and \ref{assumption:EE}. 
For further details and explanation on the relationship between eigen-decay and smoothness, we refer the reader to Section~\ref {s:Spec_Decay_and_smoothness}.

\paragraph{Source condition} 

Many previous works \cite{bartlett2020benign, cui2021generalization,  barzilai2023generalization, li2023kernel} include the so-called source condition as assumptions on the target. If the task is proper, that is, $\th^*\in\H=\H^1$, we have $s\geq1$. More generally, a larger source coefficient $s$ implies a smoother target $\th^*$ in the RKHS $\H$. 
\begin{definition}[Interpolation space]
    Let $s\geq0$ be a real number. Define the interpolation space
    \begin{equation*}
        \H^s \eqdef \{ \th\in\R^p : \norm{\th}{\Si^{1-s}} < \infty \}.
    \end{equation*}
\end{definition}
\begin{assumption}{(SC)}[Source Condition] \label{assumption:SC}
    The \emph{source coefficient} of a target coefficient $\th^*$ is defined as
\begin{equation*}
    s = \inf \{t>0: \th^*\in\H^t \}.
\end{equation*}
\end{assumption}
 See Subsection \ref{subsection:SC} for more elaborations for the source coefficient $s$ in polynomial or in exponential decay.

\paragraph{Strong/weak ridge} We set the ridge $\lambda=\lambda(n)\geq0$ to depend on the sample size $n$. The ridge is considered strong (relative to the eigen-decay) if $\lambda\succcurlyeq\lambda_{\min\{n,p\}}$, that is, if $\lambda/\lambda_{\min\{n,p\}}\xrightarrow{n\to\infty}0$; otherwise, it is considered weak. Intuitively, the ridge is weak when it is negligible compared to the entries in the kernel matrix, effectively making it ridgeless. 

To summarize the assumptions discussed previously,
let $(\lambda_k)_{k=1}^p$ be the eigenvalues of the kernel $K$, and $\th^*\eqdef(\theta^*_k)_{k=1}^p$ the coefficients of the target function being learned in the eigen-basis defined by $K$. Then we assume either of the following assumptions:
\begin{assumption}{(PE)}[Polynomial Eigen-decay] \label{assumption:PE}
    Assume that $\lambda_k=\bigtheta{k}{k^{-1-a}}$, $|\theta^*_k|=\bigtheta{k}{k^{-r}}$, $\lambda=\bigtheta{n}{n^{-b}}$ for some constants $a,b,r>0$, where $a+2 \neq 2r$ unless specified. 
    \footnote{The condition $a+2\neq 2r$ is purely technical and aims to simplify the results.}
    Hence, if Assumption \ref{assumption:SC} holds, the source coefficient is $s=\frac{2r+a}{1+a}$.
    We call the ridge $\lambda$ strong if $b\in(0,1+a]$, and weak if $b\in(1+a,\infty]$, under the convention that $b=\infty$ implies $ \lambda=0$.
\end{assumption}

\begin{assumption}{(EE)}[Exponential Eigen-decay] \label{assumption:EE}
    Assume that $\lambda_k=\bigtheta{k}{e^{-ak}}$, $\theta^*_k=\bigtheta{k}{e^{-rk}}$, $\lambda=\bigtheta{n}{e^{-bn}}$ for some constants $a,b,r>0$, where $a\neq2r$ unless specified.
    \footnote{Similar as above.}
    Hence, if Assumption \ref{assumption:SC} holds, the source coefficient is $s=\frac{2r+a}{a}=\frac{2r}{a}+1$.
    We call the ridge $\lambda$ strong if $b\in(0,a]$, and weak if $b\in(a,\infty]$, under the convention that $b=\infty$ implies $ \lambda=0$.
\end{assumption}

\paragraph{Generic/independent features} 

Our analysis centers on the assumptions regarding feature vectors, with a focus on the dependencies between coordinates, particularly exploring two cases:
\begin{enumerate}
    \item Generic features \ref{assumption:GF}: include the cases where the feature vectors are dependent on each other, for example, 
    the feature vectors from the following kernels:
    \begin{itemize}
    \item dot-product kernels on hyperspheres;
    \item kernels with bounded eigenfunctions;
    \item radial base function (RBF) and shift-invariant kernels;
    \item kernels on hypercubes,
    \end{itemize}
    satisfy Assumption \ref{assumption:GF}. Most previous literature \cite{li2023kernel, long2024duality, misiakiewicz2024nonasymptotic, gavrilopoulos2024geometrical} have assumptions that only a proper subset of the above kernels satisfies. Therefore, we believe that we are operating under the minimal assumptions that exist in the field.
    \item Independent features \ref{assumption:IF}: replace the feature vector with sub-Gaussian random vector with independent coordinates. A special case is the Gaussian Design assumption \ref{assumption:GD} used in literature \cite{simon2023eigenlearning, mallinar2022benign, cui2021generalization}.
\end{enumerate}

For further explanations regarding the assumptions, we refer the reader to Section~\ref{section:assumptions}.

\section{Main result} \label{section:main_result}
We first present an overview of the test error bounds across various properties, assumptions, and regimes. Our main results, summarized in Table \ref{tab:over_parameterized}, describe the learning curve in the over-parameterized regime, in terms of the bias $\B$ and variance $\V$ decomposition (see~Equation (\ref{line:bias_variance_decomposition})). Then, we will discuss the implications of our results in depth.

\subsection{Overview}

Table~\ref{tab:over_parameterized} summarizes many of our results in the over-parameterized regime under various combinations of the assumptions described in subsection~\ref{subsection:assumptions_settings}. The bounds are expressed in terms of the sample size $n$ as $\bigo{n}{\cdot}$ or $\bigot{n}{\cdot}$ (ignoring logarithmic terms). Whenever we can also prove a matching lower bound, we replace $\bigo{n}{\cdot}$ with $\bigtheta{n}{\cdot}$.  
We write $(\cdot)_+\eqdef\max\{\cdot,0\}$.

\begin{table}[ht]
\centering
\resizebox{\textwidth}{!}{%
\begin{tabular}{cc|cc|cc}
\hline
\multicolumn{2}{c}{Ridge}    & \multicolumn{2}{c}{strong} & \multicolumn{2}{c}{weak} \\ \hline
\multicolumn{2}{c}{Feature}  & \ref{assumption:IF}    & \ref{assumption:GF}        & \ref{assumption:IF}        & \ref{assumption:GF}       \\
\multirow{2}{*}{Poly \ref{assumption:PE}} & $\B$ & $\textcolor{blue}{\bigtheta{}{n^{-b\Tilde{s}}}}$ &  $\bigo{}{n^{-b\Tilde{s}}}$ & $\textcolor{blue}{\bigtheta{}{n^{-(1+a)\Tilde{s}}}}$ & 
$\textcolor{blue}{\begin{cases}
  \bigo{}{n^{-(1+a)\tilde{s}}},& s>1\\
  \bigot{}{n^{-(\min\{2(r-a),2-a\})_+}},& s\leq1
\end{cases}}$\\
                      & $\V$ &  $\textcolor{blue}{\bigtheta{}{\sigma^2n^{-1+\frac{b}{a+1}}}}$  &  $\bigo{}{\sigma^2n^{-1+\frac{b}{a+1}}}$ & $\bigtheta{}{\sigma^2}$  &  $\bigot{}{\sigma^2 n^{2a}}$ \\
\multirow{2}{*}{Exp \ref{assumption:EE}}  & $\B$ &$\textcolor{blue}{\bigtheta{}{e^{-b\Tilde{s}n}}}$ & $\bigo{}{e^{-b\Tilde{s}n}}$ & $\textcolor{blue}{\bigo{}{e^{-a\Tilde{s}n}},\ s>1}$ & $\bigo{}{e^{-a\Tilde{s}n}},\ s>1$\\
                      & $\V$ & $\textcolor{blue}{\bigtheta{}{\sigma^2n^{-1+\frac{b}{a}}}}$ & $\textcolor{blue}{\bigo{}{\sigma^2n^{-1+\frac{b}{a}}}}$  &  \multicolumn{2}{c}{catastrophic overfitting}  \\ 
                      \cline{1-6} 
\end{tabular}
}
\caption{\textit{Learning curve in the over-parameterized regime ($p>n$):} 
$n$ is the sample size, $a,r>0$ define the \textit{eigen-decay rates} of the kernel and target function, $b>0$ controls the decay rate of the ridge regularization parameter (Assumptions~\ref{assumption:PE} and \ref{assumption:EE}), $\sigma^2\eqdef\Expect{}{\epsilon^2}$ is the \textit{noise level}, and $s>0$ is a technical parameter often determined by $a$ and $r$ (e.g.\ under Assumption~\ref{assumption:SC}). 
Here $\tilde{s}\eqdef\min\{s,2\}$. 
\hfill\\
Results in \textcolor{blue}{blue} indicate either previously unstudied regimes or improvements in available rates in a studied regime. See Table \ref{tab:over_parameterized:2} for more comparisons and Subsection~\ref{subsection:assumptions_settings} for details on various settings.}
\label{tab:over_parameterized}
\end{table}

Before delving into the detailed discussion of our comprehensive results in Subsection \ref{subsection:discussion}, let us highlight some important observations from Table \ref{tab:over_parameterized}.

\paragraph{Asymptotic bounds} 
The upper bound illustrates the asymptotic relationship between the test error and sample size $n$ as well as the following constants: $a$ related to the eigen-decay, $b$ related to the ridge, $r$ related to the target function and $\sigma^2$ related to the label noise.

\paragraph{Independent feature \ref{assumption:IF} versus generic features \ref{assumption:GF}} 
The bounds in both cases coincide under strong ridge (see the left columns of Table \ref{tab:over_parameterized}); meanwhile, under weak ridge (see the right columns of Table \ref{tab:over_parameterized}), the bounds with generic features are looser than those with independent features. In Subsection \ref{subsection:discussion}, we will explain the necessity of this difference and hence showcase the previous limitations in the literature, which has widely adopted the Gaussian Design Assumption \ref{assumption:GD} under the weak ridge/interpolation regime.

\paragraph{Novel bound of bias under weak ridge}
A notably sophisticated bound (on the upper right corner of Table \ref{tab:over_parameterized})
\begin{equation} \label{line:bias:novel_bound}
    \B 
    =
    \begin{cases}
            \bigo{}{n^{-(1+a)\tilde{s}}},& s>1\\
            \bigot{}{n^{-(\min\{2(r-a),2-a\})_+}},& s\leq1
    \end{cases}
\end{equation}
is novel to the best of our knowledge. Our improvement compared to previous literature \cite{barzilai2023generalization} under various eigen-decay (in terms of $a$) and target coefficients' decay (in terms of $r$) is shown in Figure \ref{fig:phase_diagram}. By comparison, we can see that the decay of our novel bound in Equation \ref{line:bias:novel_bound} is faster than previous results. Also, we prove that the upper bound in the middle green region ,where $s\in(1,2)$, is sharp.

\begin{figure}[ht]
    \centering
    \includegraphics[width = 0.9\linewidth]{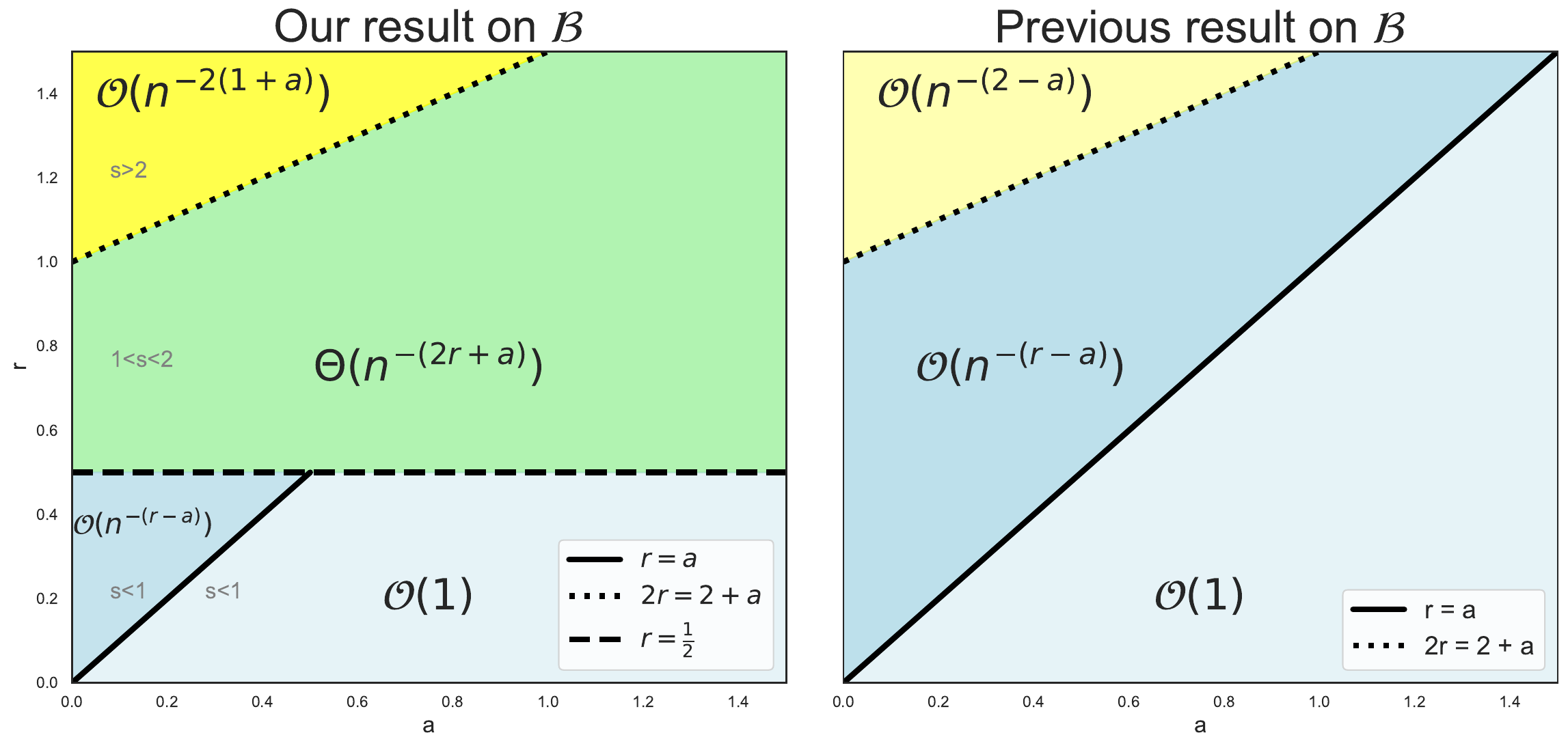}
    \caption{Phase diagram of the bound (Equation( \ref{line:bias:novel_bound})) of the bias term $\B$ under weak ridge and polynomial eigen-decay. $\lambda_k=\bigtheta{k}{k^{-1-a}}$, $|\theta^*_k|=\bigtheta{k}{k^{-r}}$, for some $a,r>0$. Our result (Propositions \ref{proposition:bias:ub:asymptotic:ridgeless}+\ref{proposition:bias:ub:asymptotic:ridgeless:0}+\ref{proposition:bias:lb:asymptotic}) is on the left, which improves over previous result from \cite{barzilai2023generalization} (Proposition \ref{proposition:bias:ub:asymptotic:ridgeless:0}) on the right. On the left plot, the range of the source coefficient $s=\frac{2r+a}{1+a}$ in Assumption \ref{assumption:SC} is shown in gray font in each colored region.}
    \label{fig:phase_diagram}
\end{figure}

Experimental validations of the results in Table~\ref{tab:over_parameterized} are given in Section \ref{section:experiment}.

In the under-parameterized regime, the bias $\B$ and variance $\V$ terms can be bounded similarly as in over-parameterized regime. We postpone the details of these results to Section \ref{section:under_parameterized}.

\subsection{Detailed discussion} \label{subsection:discussion}

In this subsection, we elaborate more on the details of our results shown in Table \ref{tab:over_parameterized}.

\paragraph{Independent and generic features} 
Table \ref{tab:over_parameterized} indicates that the test error exhibits the same upper bound with either independent or generic features under strong ridge conditions in the over-parameterized regime. This similarity arises from the bounds in both cases being derived from the same Master inequalities that we will introduce later (see Section \ref{section:proof_sketch}). However, under weak ridge conditions, the empirical kernel spectrum displays qualitative differences, as reported in \cite{barzilai2023generalization, cheng2024characterizing}. From Figure \ref{fig:overfitting}, we can see that $\V=\bigo{}{1}$ with Laplacian kernel in the left plot and $\V$ diverges with the neural tangent kernel (with 1 hidden layer) in the right plot. Hence under weak ridge and polynomial eigen-decay, the case distinction of the bound
\begin{equation} \label{line:variance:overfitting}
    \V
    =
    \begin{cases}
        \bigtheta{}{\sigma^2} ,& \text{Assumption \ref{assumption:IF} holds}, \\
        \bigo{}{\sigma^2n^{2a}} ,& \text{Assumption \ref{assumption:GF} holds}
    \end{cases}
\end{equation}
in Table \ref{tab:over_parameterized} is necessary, as Assumption \ref{assumption:GF} includes the cases of Gaussian Design \ref{assumption:GD} (or more generally independent features \ref{assumption:IF}) and Laplacian kernel which yields $\V=\bigtheta{}{1}$, the so-called tempered overfitting from \cite{mallinar2022benign}; as well as the case of neural tangent kernel (NTK) which yields $\V\xrightarrow[]{n\to\infty}\infty$, the so-called catastrophic overfitting. In particular, our proof shows that the Gaussian Equivalent Property (GEP) does not hold under weak ridge. 

\begin{figure}[ht]
    \centering
    \includegraphics[width=0.45\linewidth]{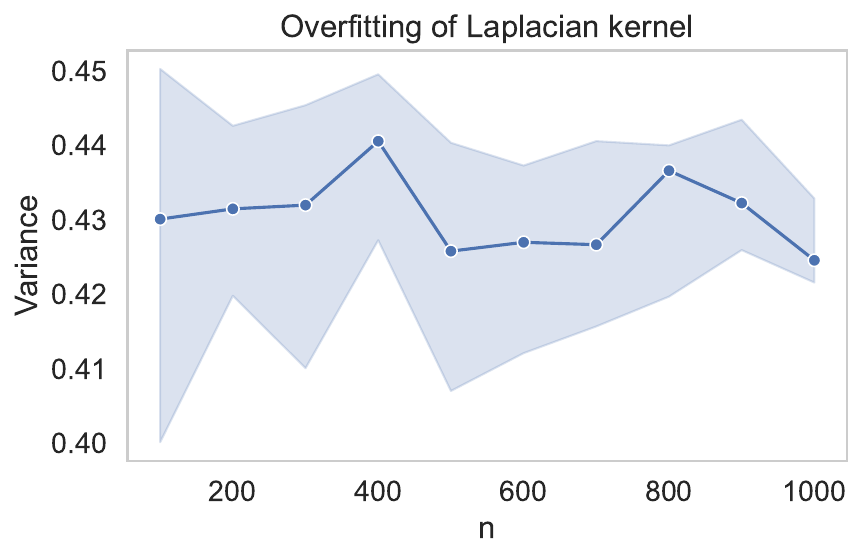}
    \includegraphics[width=0.45\linewidth]{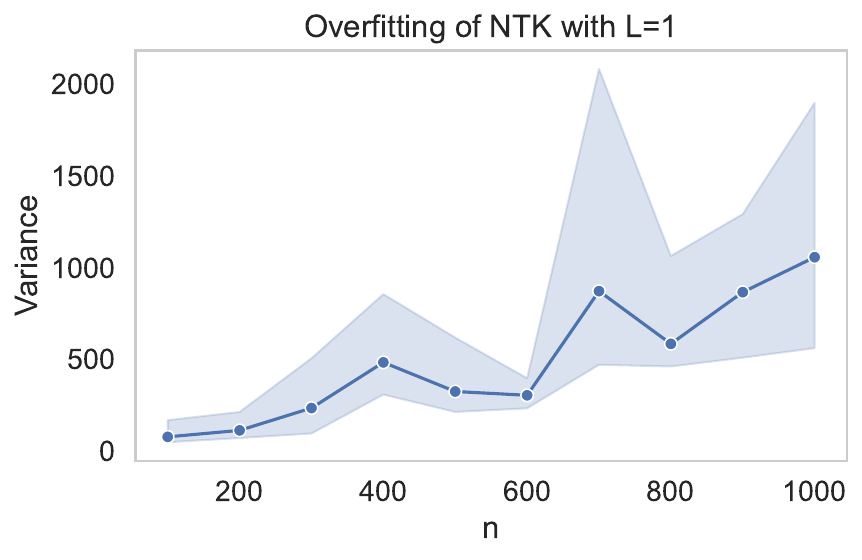}
    \caption{Variance $\V$ against sample size $n$ for the Laplacian kernel (left) and the neural tangent kernel with 1 hidden-layer (right) defined on the unit 2-disk, validating Equation (\ref{line:variance:overfitting}) where the variance with generic features \ref{assumption:GF} can be as good as with independent features \ref{assumption:IF} ($\V=\bigtheta{n}{1}$) or qualitatively different ($\V\xrightarrow[]{n\to\infty}\infty$). See Section \ref{section:experiment} for more details.}
    \label{fig:overfitting}
\end{figure}

We are now prepared to address the first question posed in Section \ref{section:introduction}:

\begin{center}
\textit{Q1: When and why does the Gaussian Equivalence Property (GEP) exist?}\\
\textbf{The Master inequalities provide the same non-asymptotic bounds for both cases under a strong ridge. \textcolor{brickred}{However, GEP does not hold under weak ridge!}
\footnote{Here we only concern the upper bound of the asymptotic learning rate for GEP. As far as we know, there is no other literature also concerning the lower bound. For readers interested in whether a matching lower bound for GEP exists, please refer to the paragraph `` \textbf{Lower bound of the test error}'' and Table \ref{tab:lower_bound} below.}
}
\end{center}

In particular, our work implies that previous works \cite{bordelon2020spectrum, cui2021generalization, loureiro2021learning, simon2023eigenlearning, mallinar2022benign} under the Gaussian Design assumption \ref{assumption:GD} can be applied only when the ridge is strong. 

Upon finalizing the preparation for this paper, we became aware of concurrent research conducted by \cite{gavrilopoulos2024geometrical}, which also concerns the Gaussian Equivalence Property (GEP) in the non-asymptotic setting. For more comparison between their assumptions and ours, we refer the reader to Section \ref{section:assumptions}.

\paragraph{Importance of ridge}

To address the second question posed in Section \ref{section:introduction}, it is evident that in either the under-parameterized setting with any ridge (see Section \ref{section:under_parameterized}) or the over-parameterized regime with a strong ridge (see Table \ref{tab:over_parameterized}), the bounds for both $\B$ and $\V$ remain the same, irrespective of the features:
\begin{center}
    \textit{Q2: Under what conditions 
    is the generalization error fully determined by the eigen-decay?}\\
    \textbf{Either (i) in the under-parameterized setting; or (ii) with a strong ridge in the over-parameterized regime.}
\end{center}

Several results \cite{rosasco2010learning, long2024duality, misiakiewicz2024nonasymptotic} have suggested that the test error bound can be characterized by the covariance spectrum $\Si$, but they implicitly require the ridge $\lambda>0$ to be larger than some threshold. This paper clearly demonstrates the necessity of the presence of a strong ridge in such analyses.

From Table \ref{tab:over_parameterized}, we can see that the eigen-decay also affects the test error qualitatively.
As mentioned in \cite{long2024duality}, the bias term $\B$ decays polynomially (or, respectively, exponentially) when the eigen-decay is polynomial (or, respectively, exponential). However, we prove that $\V$ decays only polynomially at a rate at most $\bigo{}{\frac{1}{n}}$, regardless of the eigen-decay. Hence, in a noisy setting with polynomial eigen-decay, one can find an optimal ridge $\lambda=\bigtheta{n}{n^{-\frac{1+a}{(1+a)\tilde{s}+1}}}$ to balance both terms $\B$ and $\V$ as in \cite{cui2021generalization, li2023on}.
In contrast, in noisy settings with exponential eigen-decay, $\V$ dominates the test error.

The bound of the variance term $\V$ with exponential eigen-decay under weak ridge is omitted in Table \ref{tab:over_parameterized}  due to the so-called catastrophic overfitting phenomenon observed in \cite{mallinar2022benign, cheng2024characterizing}. 

\paragraph{Improved upper bound} 
During our research, we discovered that we can improve the upper bound of $\B$ (Equation \ref{line:bias:novel_bound}) in the over-parameterized regime with polynomial decay, weak ridge, and Assumption \ref{assumption:GF} by adapting the result from \cite{li2023on} and integrating it with the insights from \cite{barzilai2023generalization} (see the upper right corner in Table \ref{tab:over_parameterized} or \ref{tab:over_parameterized:2}, and Figure \ref{fig:phase_diagram} for visualization).
\begin{theorem}[Improved upper bound]
    Suppose Assumption \ref{assumption:GF} holds. Assume the eigen-spectrum and the target coefficient both have polynomial decay, that is, $\lambda_k=\bigtheta{k}{k^{-1-a}}$ and $|\theta^*_k|=\bigtheta{k}{k^{-r}}$. Let $s=\frac{2r+a}{1+a}$ be the source coefficient defined in Definition \ref{assumption:SC}. Then the kernel ridgeless regression has the following learning rate for the bias term $\B$:
    \[
    \mathcal{B}
    =
    \begin{cases}
        \bigo{n}{n^{-(r-a)_+}} &\for s<1; \\
        \bigtheta{n}{n^{-(2r+a)}} &\for 1\leq s\leq 2; \\
        \bigo{n}{n^{-2(1+a)}} &\for s > 2.
    \end{cases}
    \]
    where $n$ is the sample size and $(\cdot)_+ \eqdef \max(\cdot,0)$.
\end{theorem}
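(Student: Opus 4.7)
The plan is to recognize the ridgeless, noiseless bias as an orthogonal projection loss, split the target coordinates at a carefully chosen frequency cutoff $k^{\star}\asymp n$, and then handle the three regimes of $s$ separately by combining the Master inequalities of Section~\ref{section:proof_sketch} with the source-condition analysis of~\cite{li2023on} and the high-frequency tail analysis of~\cite{barzilai2023generalization}.

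\textbf{Step 1 (Projection identity and master bound).} For $\lambda=0$ and noiseless labels, $\hat\th(\X\th^*)=\X^\top(\X\X^\top)^{-1}\X\th^*$ is the projection of $\th^*$ onto the row space of $\X$, so
\begin{equation*}
    \B \;=\; \th^{*\top}(\I-\Pi_{\X})\,\Si\,(\I-\Pi_{\X})\,\th^*, \qquad \Pi_{\X}\eqdef\X^\top(\X\X^\top)^{-1}\X.
\end{equation*}
Under Assumption~\ref{assumption:GF}, the Master inequalities provide two-sided control of the empirical covariance restricted to the top $k^{\star}$ eigendirections whenever $k^{\star}=o(n)$; this is the only feature-side input required in every regime below.

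\textbf{Step 2 (Middle regime $1\le s\le 2$).} Pick $k^{\star}\asymp n$ and decompose $\th^*=\th^*_{\le k^{\star}}+\th^*_{>k^{\star}}$. The head contribution is bounded by $\|\Si^{1/2}(\I-\Pi_{\X})\th^*_{\le k^{\star}}\|_2^2$; invoking the Master inequalities together with the source condition $s\ge 1$ controls this by $O(n^{-(2r+a)})$. The tail contribution is the direct series $\sum_{k>k^{\star}}\lambda_k(\theta^*_k)^2=\bigtheta{n}{n^{-(2r+a)}}$. Summing yields the claimed upper bound $\bigo{n}{n^{-(2r+a)}}$. For the matching lower bound, $\Pi_{\X}$ has rank at most $n$, so $(\I-\Pi_{\X})$ retains a positive fraction of the eigendirections above index $Cn$ for some absolute $C>0$, producing $\B=\Omega\bigl(\sum_{k>Cn}\lambda_k(\theta^*_k)^2\bigr)=\Omega(n^{-(2r+a)})$.

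\textbf{Step 3 (Extreme regimes).} For $s>2$, I would adapt the saturation argument of~\cite{li2023on}: even an arbitrarily smooth target cannot improve the head loss below the intrinsic empirical projection error of order $n^{-2(1+a)}$, so the source-condition exponent saturates and one obtains the ceiling $\bigo{n}{n^{-2(1+a)}}$. For $s<1$ (target outside the RKHS), I would follow~\cite{barzilai2023generalization} and bound the unrecoverable high-frequency mass of $\th^*$ via an effective-dimension estimate of the form $\tr(\Si_{>k^{\star}})/\lambda_{k^{\star}}$, yielding $\bigo{n}{n^{-(r-a)_+}}$.

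\textbf{Main obstacle.} The key difficulty is the matching lower bound $\Omega(n^{-(2r+a)})$ in the middle regime under only Assumption~\ref{assumption:GF}. In the Gaussian/independent-feature setting one gets anti-concentration of the projection onto high-frequency eigenspaces essentially for free from rotational symmetry, but under the generic assumption~\ref{assumption:GF} the row space of $\X$ can in principle align with the dominant eigendirections of $\Si$. Overcoming this will require combining a rank argument with a lower bound on the $(n+1)$-th singular value of the whitened feature matrix restricted to mid-frequency coordinates, delicately interfacing with the Master inequalities so that the same concentration used for the upper bound also lower-bounds the residual.
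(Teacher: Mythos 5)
Your skeleton (projection identity, head/tail split at $k^\star\asymp n$, regime-by-regime treatment) matches the paper's at a high level, but the decisive step in the middle regime is glossed over in a way that fails if implemented literally. Under Assumption \ref{assumption:GF} with weak or zero ridge, the concentration coefficient $\rho_{n,k}=\frac{n\opnorm{\Si_{>k}}+s_1(\A_k)}{s_n(\A_k)}$ is \emph{not} $O(1)$ for $k\asymp n$: the paper only shows $\rho=\bigot{n}{n^{a}}$ (Proposition \ref{proposition:rho}), and this is exactly why the direct Master-inequality route (Proposition \ref{proposition:bias:ub:asymptotic:ridgeless:0}) yields only $\bigot{n}{n^{-(\min\{2(r-a),2-a\})_+}}$ rather than $n^{-(2r+a)}$ — the tail term picks up a $\rho^2=\tilde{O}(n^{2a})$ prefactor. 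So "invoking the Master inequalities together with the source condition $s\ge1$" does not control the bias by $O(n^{-(2r+a)})$ under \ref{assumption:GF}. The paper's actual mechanism (Proposition \ref{proposition:bias:ub:asymptotic:ridgeless}) is different and is the missing idea: write $\I_p-\mathbf{P}_\lambda=\lambda(\lambda\I_p+\hat\Si)^{-1}$, use its operator monotonicity in $\lambda$ to replace the weak ridge by a \emph{strong} ridge $\tilde\lambda\asymp\lambda_n$, and use Cordes' inequality to factor out the source exponent $t\in[1,\min\{s,2\})$, reducing the problem to the strong-ridge bias of an auxiliary target $\Si^{1/2}\v$ with source coefficient $2$ — a regime where $\rho=O(1)$ and the Master inequality is sharp. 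The $s>2$ ceiling $n^{-2(1+a)}$ comes from the same $\min\{s,2\}$ cap in that argument, not from a separate saturation lemma.

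Your lower-bound discussion also misidentifies both the difficulty and the fix. The claim that $\I-\Pi_{\X}$ "retains a positive fraction of the eigendirections above index $Cn$" and hence $\B=\Omega\bigl(\sum_{k>Cn}\lambda_k(\theta^*_k)^2\bigr)$ for the \emph{given} target is unjustified under \ref{assumption:GF}: the residual could be small precisely on the coordinates where $\theta^*_k$ is large, and making this work for a fixed target requires either sign randomization (Assumption \ref{assumption:PS}, used in the paper only under \ref{assumption:IF}) or a worst-case formulation. The paper's Proposition \ref{proposition:bias:lb:asymptotic} takes the latter route and is purely deterministic: $\sup_{\norm{\th^*}{\Si^{1-s}}\le1}\B=\opnorm{\Si^{s/2}-\Si^{1/2}\mathbf{P}_\lambda\Si^{(s-1)/2}}^2\ge\lambda_{n+1}^{s}$ because $\mathbf{P}_\lambda$ has rank at most $n$; no anti-concentration or control of singular values of the whitened feature matrix is needed. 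So the lower bound is easier than you anticipate, but only in the sup-over-the-source-ball sense.
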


For a detailed explanation of our improvements, refer to our novel Proposition \ref{proposition:bias:ub:asymptotic:ridgeless} and the known Proposition \ref{proposition:bias:ub:asymptotic:ridgeless:0} from \cite{barzilai2023generalization} in Section \ref{section:proof}. 

\paragraph{Lower bound of test error} 

It is of theoretical interest to provide lower bounds as well. For independent features, we can prove that the upper bound is tight using the result from \cite{tsigler2023benign}, generalizing the result from \cite{cui2021generalization}, which only showed upper bounds in Gaussian Design Assumption \ref{assumption:GD}. However, for generic features, we can only provide lower bounds in some but not all settings, using the result from \cite{li2023on}. We summarize our results in Table \ref{tab:lower_bound}.

\begin{table}[ht]
\centering
\resizebox{\textwidth}{!}{%
\begin{tabular}{cc|cc|cc}
\hline
\multicolumn{2}{c}{Ridge}    & \multicolumn{2}{c}{strong} & \multicolumn{2}{c}{weak} \\ \hline
\multicolumn{2}{c}{Feature} \vline & \ref{assumption:IF}    & \ref{assumption:GF}        & \ref{assumption:IF}        & \ref{assumption:GF}       \\
\multirow{2}{*}{Poly \ref{assumption:PE} or Exp \ref{assumption:EE}} & $\B$ & $\tick$ &  $\tick$ & $\tick$ & 
$\tick$ (when $1\leq s \leq 2$)\\
                      & $\V$ &  $\tick$  &  unknown & $\tick$  &  $\cross$ because of Figure \ref{fig:overfitting} \\
                      \cline{1-6} 
\end{tabular}
}
    \caption{The table shows whether the lower bound is matching the upper bound deduced in this paper.}
    \label{tab:lower_bound}
\end{table}

See Section \ref{section:matching_lower_bound} for details on our proof of matching lower bounds. We note that there is some KRR literature, such as \cite{li2023on, misiakiewicz2024nonasymptotic, gavrilopoulos2024geometrical}, that discusses matching upper and lower bounds of test error under more assumptions, which is beyond the scope of this paper. For a comparison of assumptions in different papers, see Section \ref{section:assumptions}.

Finally, we summarize the above discussion by answering the third question we raised in Section \ref{section:introduction}:
\begin{center}
    \textit{Q3: Is there a unifying theory on the generalization performance under minimal assumptions?}\\
    \textbf{Yes, this paper considers assumptions \ref{assumption:IF} and \ref{assumption:GF} which cover a wide range of kernel settings under any regularization and source conditions.}
\end{center}


\section{Proof sketch} \label{section:proof_sketch}
All the above results in Table \ref{tab:over_parameterized} can be derived using the following proof strategy:
\begin{enumerate}
    \item We prove a concentration result on the whitened features $\z$ under Assumptions \ref{assumption:GF} or \ref{assumption:IF} (see Section \ref{section:concentration}).
    \item Using the above result, we bound the condition number of the (truncated) kernel matrix (see Section \ref{section:concentration}) as in \cite{barzilai2023generalization, cheng2024characterizing}, which will be used to bound the test error in the next step.
    \item Combining the bound of the condition number with the result from \cite{bartlett2020benign, tsigler2023benign}, which we call the \emph{Master inequalities} (see the paragraph below for details),
    we can compute the non-asymptotic test error bound for various settings (see Section \ref{section:proof}).
    \item In the over-parameterized regime, we derive the asymptotic behavior of the learning curves by plugging in the eigen-decay and the choice of the target function and ridge in the above non-asymptotic bound (see Section \ref{section:proof}).
    \item Using the results from \cite{tsigler2023benign}, we are able to show that the asymptotic upper bound for independent features \ref{assumption:IF} is tight (see section \ref{section:matching_lower_bound}). 
    \item For generic features \ref{assumption:GF}, we only provide tight bound results in limited settings (see section \ref{section:matching_lower_bound}). As shown in Figure \ref{fig:overfitting} in Section \ref{section:main_result}, the generic feature Assumption \ref{assumption:GF} includes a broad variety of features where a universal matching lower bound does not exist.
    \item In the under-parameterized regime, the Master inequalities also give upper bounds for $\B$ and $\V$, which might not be tight if the ridge $\lambda$ is strong. However, we present another way to obtain tight bounds without the Master inequalities (see Section \ref{section:under_parameterized} for more details).
\end{enumerate}
We summarize the proof techniques with a flowchart in Figure \ref{fig:flowchart}. 

\paragraph{Master Inequalities} 
We will now briefly introduce the key inequality used in our analysis (further details can be found in the Appendix). We will use the subscript $>k$ or $\leq k$ to refer to the submatrix of a matrix consisting of columns with index $>k$ or $\leq k$ (see definition~\ref{def:truncation}). The analysis relies on two crucial matrices: $\A_k = \X_{>k}\X_{>k}^\top + n\lambda\I_n\in\R^{n\times n}$, defined for any $k \in \N$, which is employed to partition the spectrum, and a whitened input matrix $\Z \eqdef \X \Si^{-1/2} \in \R^{n \times p}$. Additionally, we require the following definitions.

\begin{definition}[Concentration coefficients \cite{barzilai2023generalization}] 
    Define the quantities:
    \begin{equation*}
        \rho 
        \eqdef
        \frac{n\opnorm{\Si_{>k}}+s_1(\A_k)}{s_n(\A_k)};\
        \quad        
        \zeta 
        \eqdef 
        \frac{s_1(\Z_{\leq k}^\top \Z_{\leq k})}{s_k(\Z_{\leq k}^\top \Z_{\leq k})};\ 
        \quad
        \xi
        \eqdef 
        \frac{s_1(\Z_{\leq k}^\top \Z_{\leq k})}{n},
    \end{equation*}
    where $s_i(\cdot)$ denotes the $i$-th largest singular value of the matrix.
\end{definition}

\begin{definition}[Effective ranks \citep{bartlett2020benign}]
    Let $k\in\N$. Define two quantities:
    \begin{equation*}
        r_k \eqdef \frac{\tr[\Si_{>k}]}{\opnorm{\Si_{>k}}} = \frac{\sum_{l=k+1}^p \lambda_l}{\lambda_{k+1}} ,\ \quad 
        R_k \eqdef \frac{\tr[\Si_{>k}]^2}{\tr[\Si_{>k}^2]} = \frac{\left(\sum_{l=k+1}^p \lambda_l\right)^2}{\sum_{l=k+1}^p \lambda_l^2}.
    \end{equation*}
\end{definition}

The Master inequalities provide upper bounds on the bias and the (scaled) variance term of the test error in the following form:
\begin{align*}
    \mathcal{B} 
    &\leq 
    \textcolor{blue}{\left(\frac{1+\rho^2 \zeta^2 \xi^{-1}+\rho}{\delta}\right)} \textcolor{gray}{\|\th_{>k}^*\|_{\Si_{>k}}^2} 
    + 
    \textcolor{blue}{(\zeta^2\xi^{-2}+\rho \zeta^2\xi^{-1})} \textcolor{gray}{ \frac{s_1(\A_k)^2}{n^2}\norm{\th^*_{\leq k}}{\Si_{\leq k}^{-1}}^2};\\
    \V / \sigma^2
    &\leq
    \textcolor{blue}{\rho^2}\left(
    \textcolor{blue}{\zeta^2\xi^{-1}}\textcolor{gray}{\frac{k}{n}}
    +
    \textcolor{blue}{\frac{\tr[\Z_{>k}\Si_{>k}^2\Z_{>k}^\top]}{n\tr[\Si_{>k}^2]}} \textcolor{gray}{\frac{r_k(\Si)^2}{nR_k(\Si)}}
    \right).
\end{align*}
The term appearing in the above bound can be categorized into two distinct components: the "probably constant" part (highlighted in \textcolor{blue}{blue}) and the "decay" part (highlighted in \textcolor{gray}{gray}). The "probably constant" part consists of terms that, with high probability, are bounded both below and above by positive constants—these bounds represent a primary contribution of this paper. On the other hand, the "decay" part can be approximated using basic calculus, once a specific constant $k\in\N$, smaller than the sample size $n$, is selected. Together, these two parts allow us to derive the KRR learning rate for all combinations of eigen-spectra, features, and ridge parameters discussed in the preceding sections. Furthermore, \cite{tsigler2023benign} provides a corresponding lower bound under certain assumptions, demonstrating that the decay of the upper bound matches that of the lower bound. Establishing that Assumption \ref{assumption:IF} satisfies these assumptions is another key contribution of this work.
For the formal definitions of the terms in the above inequalities, we refer the reader to Propositions \ref{proposition:bias:ub} and \ref{proposition:variance:ub} in the appendix.

\section{Related work}
We briefly discuss related previous works and compare them with our results in the over-parameterized regime: (see Table \ref{tab:over_parameterized:2} for more visual illustration)
\begin{enumerate}[left=0.5mm]
    \item \cite{cui2021generalization} considered the upper bound of $\B$ and $\V$ in polynomial eigen-decay under any ridge and under the Gaussian Design Assumption. Our result proves \emph{both} the matching upper and lower bound with the same decay rate under a weaker assumption \ref{assumption:IF}. This implies that we validate the Gaussian Equivalence Property for Sub-Gaussian Design.
    \item \cite{li2023kernel,li2023on} proved tight upper bounds of $\B$ and $\V$ for H\"{o}lder continuous kernels under polynomial eigen-decay, strong ridge and the so-called Embedding condition \ref{assumption:EC}. \cite{barzilai2023generalization} recovered the upper bounds under Assumption \ref{assumption:GF}.
    \item For polynomial decay under weak ridge, \cite{li2023on} provided a tight upper bound of $\B$ when the source condition $s>1$; while \cite{barzilai2023generalization} provided a loose bound regardless of $s$. Hence we modify the proof in \cite{li2023on} under Assumption \ref{assumption:GF} instead and combine it with \cite{barzilai2023generalization}  to obtain a novel upper bound on the bias.
    \item \cite{barzilai2023generalization} also provided an upper bound of $\V$ under polynomial decay, weak ridge and Assumption \ref{assumption:GF}.
    \item \cite{cheng2024characterizing} showed that $\V$ is bounded above and below by positive constants under polynomial eigen-decay, weak ridge and Assumption \ref{assumption:IF}, and it exhibits the so-called tempered overfitting from \cite{mallinar2022benign}.
    \item \cite{long2024duality} provided tight upper bounds of $\B$ for both polynomial and exponential eigen-decay for kernels under the so-called Maximal Degree-of-Freedom (MaxDoF) Assumption. We recover their result under Assumption \ref{assumption:GF} instead of Assumption \ref{assumption:maxdof}.
    \item We apply the result from \cite{tsigler2023benign} and \cite{li2023on} to obtain a matching lower bound in some settings and strengthen our result from $\bigo{n}{\cdot}$ to $\bigtheta{n}{\cdot}$.
\end{enumerate}

\section{Experiments} \label{section:experiment}
Due to page constraints, this section focuses solely on experiments validating the Gaussian Equivalent Property (GEP).
For detailed experiments on other contributions, refer to Section \ref{section:experiments_detailed}.

We consider a simple example: let $\lambda_k=(\frac{2k-1}{2}\pi)^{-1-a}$ and $\psi_k(\cdot)=\sqrt{2}\sin\left( \frac{2k-1}{2} \pi \cdot \right)$ such that $\norm{\psi_k}{L^2_\mu}=1$ for $\mu=\unif[0,1]$; let $\theta^*_k = (\frac{2k-1}{2}\pi)^{-r}$. For $p=\infty$ and $a=1$, the regression coincides with kernel ridge regression with kernel $k(x,x')=\min\{x,x'\}$ defined on the interval $[0,1]$ \cite{Wainwright2019high}. Similar experiments have been conducted on this kernel $k$ by \cite{li2023kernel, long2024duality}. However, to simulate regression for independent features \ref{assumption:IF}, the feature rank $p$ must be finite. In the following experiment, we choose $p=2000$, and the sample size $n$ ranges from 100 to 1000, with ridge parameter $\lambda=(\frac{2n-1}{2}\pi)^{-b}$ where $b\in[0,1+a]$.

In Figure \ref{fig:bias:main_text}, our experiment demonstrates the GEP, as the learning curves with kernel features (Sine features) and independent features (Gaussian features $\z\sim\mathcal{N}(0,I_p)$ or Rademacher features $\z\sim(\unif\{\pm1\})^p$) coincide and match the theoretical decay. 

\begin{figure}[ht]
    \centering
    \includegraphics[width=0.45\linewidth]{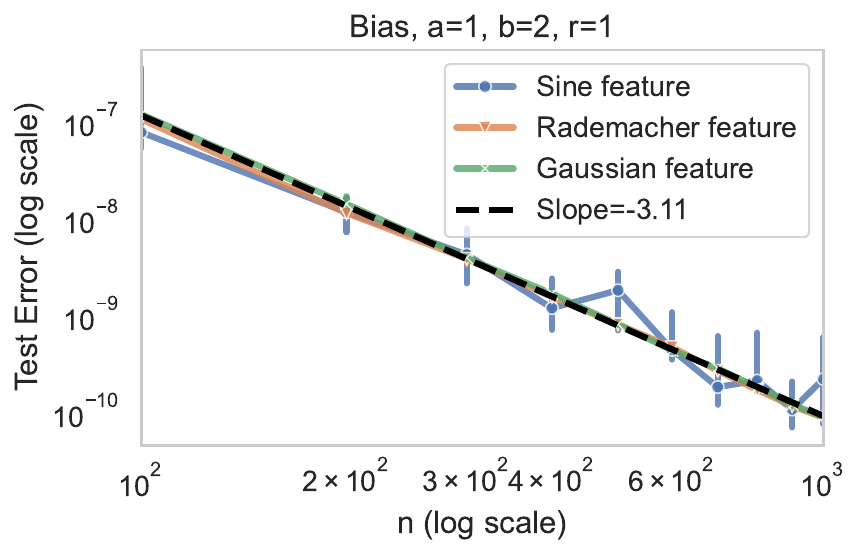}
    \includegraphics[width=0.45\linewidth]{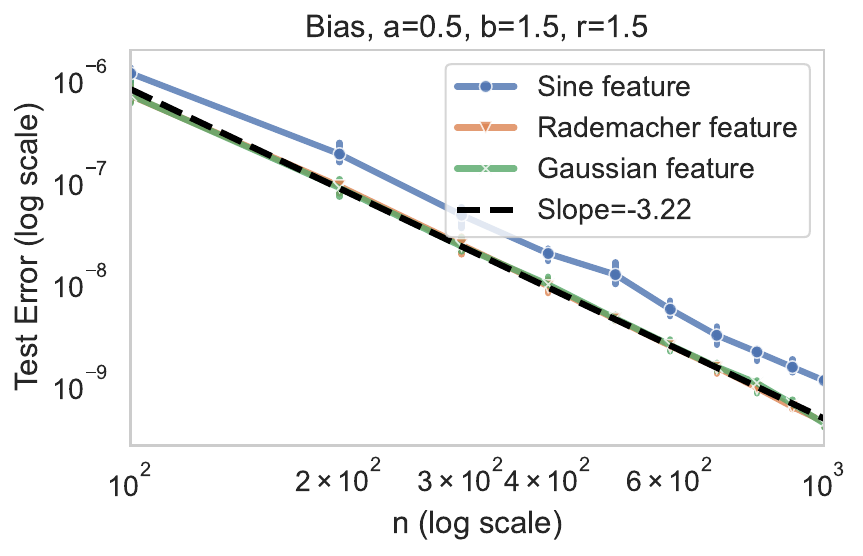}
    \includegraphics[width=0.45\linewidth]{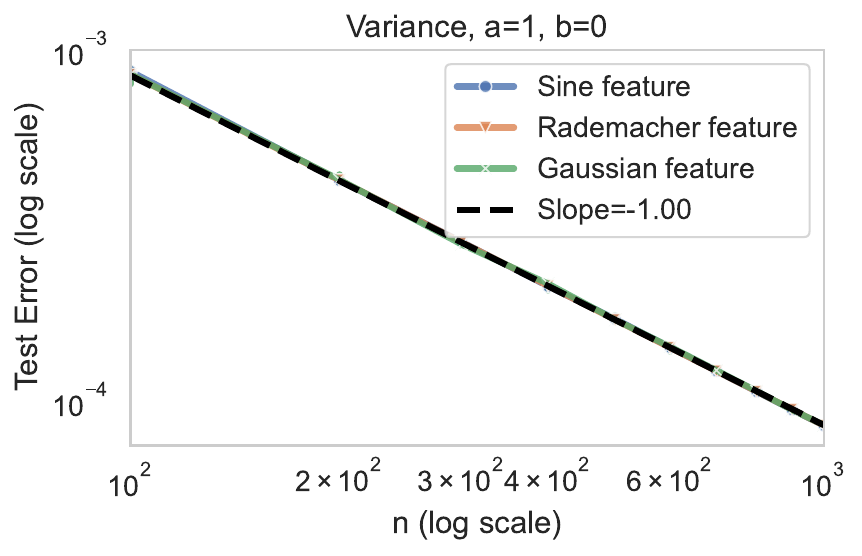}
    \includegraphics[width=0.45\linewidth]{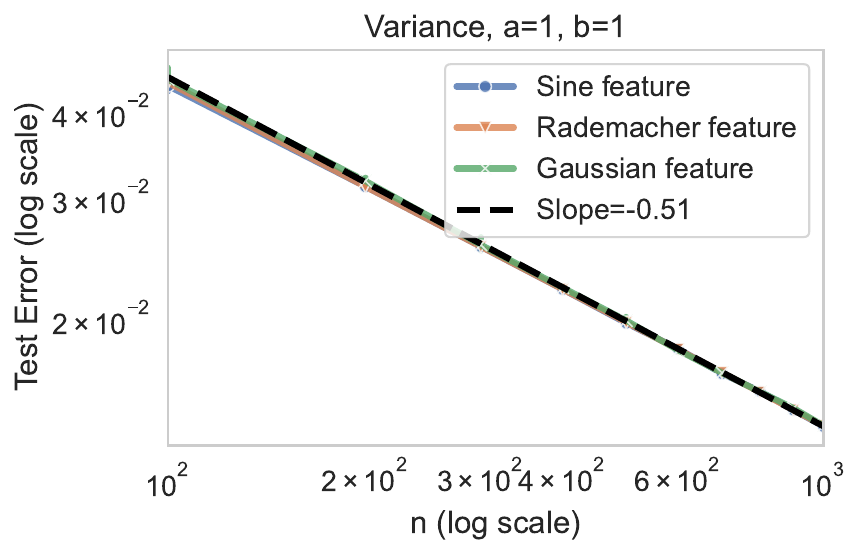}
    \caption{
    \textit{Decay of the bias term $\B$ and the variance term $\V$ under different ridge decays and target coefficient decays.}
    All features demonstrate the same theoretical decay, validating the GEP for independent features. 
    }
    \label{fig:bias:main_text}
\end{figure}

\section{Conclusion}\label{section:conclusion}
In this paper, we present a unifying theory and a comprehensive analysis of the learning curve of kernel ridge regression under various settings.
We elucidate the coincidence of learning curves between the Gaussian Design \ref{assumption:GD} setting (or more generally, independent features \ref{assumption:IF}) and the kernel setting (or more generally, generic features \ref{assumption:GF}), and validate the Gaussian Equivalence Property under strong ridge. In addition to recovering previous results, we also improve test error bounds under specific circumstances, thus filling a gap in the existing literature.

\paragraph{Future potential work} Our results also raise several theoretical questions. For instance: i) The upper bound in several regions of the phase diagram in Figure \ref{fig:phase_diagram} is not known to be sharp.  It would be of interest to either improve it or find matching lower bounds in those regions; ii) Why is there a qualitative difference in overfitting with the Laplacian kernel and the neural tangent kernel, as shown in Figure \ref{fig:overfitting}? Can one further distinguish different cases in the generic feature Assumption \ref{assumption:GF} to explain such differences? Additionally, how tight would the bound $\V=\bigo{}{\sigma^2n^{2a}}$ in Eq. (\ref{line:variance:overfitting}) be?

Addressing these questions could provide deeper insights into the behavior of kernel ridge regression under various conditions and contribute to further advancing our understanding in machine learning.

\newpage
\section*{Acknowledgments}
A.\ Kratsios acknowledges financial support from an NSERC Discovery Grant No.\ RGPIN-2023-04482 and No.\ DGECR-2023-00230.  A.\ Kratsios also acknowledges that resources used in preparing this research were provided, in part, by the Province of Ontario, the Government of Canada through CIFAR, and companies sponsoring the Vector Institute\footnote{\href{https://vectorinstitute.ai/partnerships/current-partners/}{https://vectorinstitute.ai/partnerships/current-partners/}}.


\newpage
\appendix

\vbox{
  {\hrule height 2pt \vskip 0.15in \vskip -\parskip}
  \centering
  {\LARGE\bf Appendix\par}
  {\vskip 0.2in \vskip -\parskip \hrule height 0.5pt \vskip 0.09in}
}

The appendix will be organized in the following way: 
\begin{itemize}
    \item The detailed discussion of the assumptions used in this paper and the comparison to previous literature is presented in Section \ref{section:assumptions};
    \item an extended discussion of the eigen-decay assumptions is presented in Section \ref{s:Spec_Decay_and_smoothness};
    \item the main tool of this paper, the Master inequalities, is presented in Section \ref{section:non_asymptotic_bound};
    \item the details of the proof on the asymptotic in over-parameterized regime in Table \ref{tab:over_parameterized} is presented in Section \ref{section:proof};
    \item the proof for a matching lower bound is presented in Section \ref{section:matching_lower_bound};
    \item the details of the proof on the non-asymptotic bound in under-parameterized regime is presented in Section \ref{section:under_parameterized};
    \item the concentration result on the whitened feature $\z$ and the conditioning of the kernel matrix are presented in Section \ref{section:concentration};
    \item related technical results from previous literature are collected in Section \ref{section:technical_lemmata};
    \item experimental results are shown in Section \ref{section:experiments_detailed};
    \item extra tables of summary can be found in Section \ref{section:tables}.
\end{itemize}
The summary of our proof can be found in the flowchart in Figure \ref{fig:flowchart}.
\usetikzlibrary{shapes,arrows,positioning}
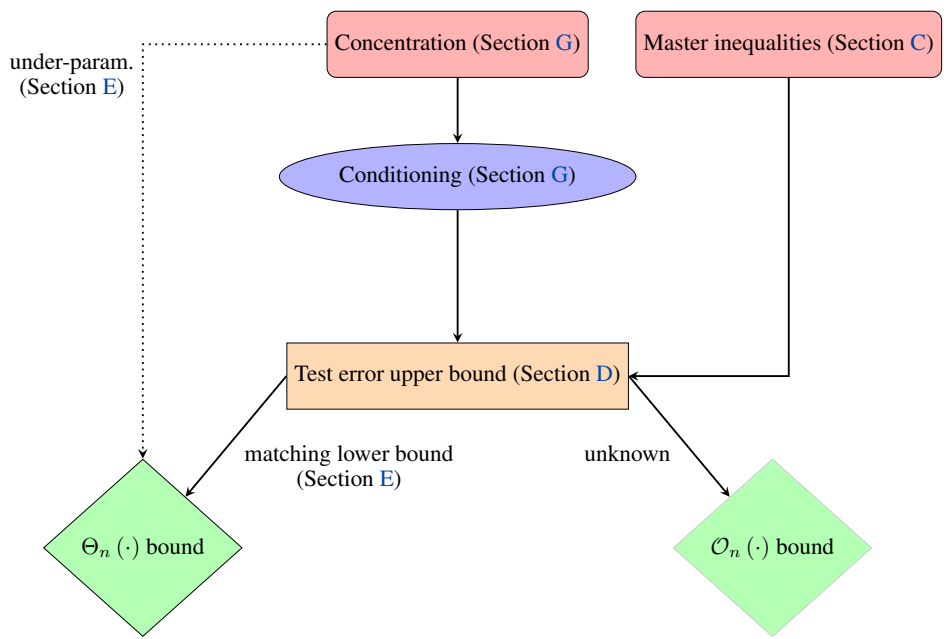
\begin{figure}[ht]
    \centering
    \begin{minipage}{0.9\textwidth}
    \resizebox{\linewidth}{!}{%
    \begin{tikzpicture}[node distance=2cm]
    \tikzstyle{startstop} = [rectangle, rounded corners, minimum width=3cm, minimum height=1cm,text centered, draw=black, fill=red!30]
    \tikzstyle{io} = [ellipse, minimum width=1.5cm, minimum height=1cm, text centered, draw=black, fill=blue!30]
    \tikzstyle{process} = [rectangle, minimum width=3cm, minimum height=1cm, text centered, draw=black, fill=orange!30]
    \tikzstyle{decision} = [diamond, minimum width=3cm, minimum height=1cm, text centered, draw=black, fill=green!30]
    \tikzstyle{arrow} = [thick,->,>=stealth]

    \node (concentration) [startstop] {Concentration (Section \ref{section:concentration})};
    \node (master) [startstop, right of=concentration, xshift=+3cm] {Master inequalities (Section \ref{section:non_asymptotic_bound})};
    \node (conditioning) [io, below of=concentration] {Conditioning (Section \ref{section:concentration})};
    \node (result) [process, below =of conditioning] {Test error upper bound (Section \ref{section:proof})};
    \node (process1) [decision, below left=of result, align=center] {$\bigtheta{n}{\cdot}$ bound};
    \node (process2) [decision, below right=of result, draw=black!20] {$\bigo{n}{\cdot}$ bound};

    \draw [arrow] (concentration) -- (conditioning);
    \draw [arrow] (conditioning) -- (result);
    \draw [arrow] (result.west) -- node[anchor=south, below right, align=center] {matching lower bound\\(Section \ref{section:matching_lower_bound})} (process1);
    \draw [arrow] (result.east) -- node[anchor=south, below left] {unknown} (process2);
    \draw [arrow] (master) |- (result);
    \draw [arrow, dotted] (concentration) -| node[anchor=south, below left, align=center] {under-param.\\(Section \ref{section:matching_lower_bound})} (process1);
    \end{tikzpicture}
    }
    \caption{A flowchart about the proof techniques in this paper}
    \label{fig:flowchart}
    \end{minipage}%
\end{figure}

\paragraph{Big-O Notation}
We use the standard \textit{Big-O} notations $\bigo{\cdot}{\cdot}, \smallo{\cdot}{\cdot}, \bigomega{\cdot}{\cdot}, \bigtheta{\cdot}{\cdot}$ to represents the asymptotic behaviours of the bounds in this paper, where the subscript indicates that the constant in the bound is independent to the variables in the subscript. We use $\tilde{\cdot}$ to suppress logarithmic terms. With abuse of notation, we use the same notations for their probability versions, when the context is clear.

\newpage
\section{Assumptions in details} \label{section:assumptions}
The primary assumptions made in this paper concern the whitened feature vectors $\z\in\R^p$:  

\begin{assumption}{(GF)}[Generic features \cite{barzilai2023generalization}] \label{assumption:GF}
    Let $p\in\N\cup\{\infty\}$. Assume the isotropic feature vector $\z\eqdef\Si^{-1/2}\x\in\R^p$ and the covariance matrix $\Si\in\R^{p\times p}$ are such that:
    \footnote{We write $\norm{\v}{\M}\eqdef\sqrt{\v^\top\M\v}$ for PDS matrix $\M$ and any vector $\v$. See Definition \ref{definition:norm} Section \ref{section:non_asymptotic_bound} for more details.}
    \begin{align*}
        \alpha_k &\eqdef \essinf_{\z} \frac{\norm{\z_{>k}}{\Si_{>k}}^2}{\tr[\Si_{>k}]} = \bigtheta{k}{1},\\
        \beta_k &\eqdef \esssup_{\z} \max \left\{ \frac{\eunorm{\z_{\leq k}}^2}{k}, \frac{\norm{\z_{>k}}{\Si_{>k}}^2}{\tr[\Si_{>k}]}, \frac{\norm{\z_{>k}}{\Si_{>k}^2}^2}{\tr[\Si_{>k}^2]} \right\}
        =\bigtheta{k}{1},
    \end{align*}
\end{assumption}
\begin{remark}[$k=0$] \label{remark:k0}
    For $k=0$, we take the convention that
    $
        \beta_0 \eqdef \esssup_{\z} \max \left\{ \frac{\norm{\z}{\Si}^2}{\tr[\Si]}, \frac{\norm{\z}{\Si^2}^2}{\tr[\Si^2]} \right\}. 
    $
\end{remark}
\begin{remark}[Intuition]
    The intuition is that the three fractions above have expected values of 1: 
    $
        \Expect{\z}{\frac{\eunorm{\z_{\leq k}}^2}{k}}= 
        \Expect{\z}{\frac{\norm{\z_{>k}}{\Si_{>k}}^2}{\tr[\Si_{>k}]}}=
        \Expect{\z}{\frac{\norm{\z_{>k}}{\Si_{>k}^2}^2}{\tr[\Si_{>k}^2]}}
        =
        1.
    $
Hence, Assumption \ref{assumption:GF} imposes strong concentration on $\z$ at each truncation point $k\in\N$. 
\end{remark}
\begin{remark}[Weaken Assumption (GF)] \label{remark:GF_weakened}
    One can further replace the essential supremum and infinum by a high probability guarantee. The argument on the test error bounds would follow analogously with a weaker probability. For simplicity reasons, we only consider the assumption in the above formulation.
\end{remark}

Assumption \ref{assumption:GF} encompasses both cases of dependent and independent features $z_k$, which can have qualitatively different effects on the test error.

To compare the realistic kernel setting with the Gaussian Design setting \ref{assumption:GD} used in previous literature \cite{bordelon2020spectrum, loureiro2021learning, cui2021generalization, simon2023eigenlearning, mallinar2022benign}, which replaces the feature vectors with Gaussian random vectors (with independent entries), we consider a weaker version of the Gaussian Design assumption commonly required in KRR literature \cite{bartlett2020benign, cheng2023theoretical, bach2023high, misiakiewicz2024nonasymptotic, gavrilopoulos2024geometrical}:
\begin{assumption}{(IF)}[Independent sub-Gaussian features] \label{assumption:IF}
    Suppose $p\in\N$. Assume that the features $z_k$'s in the isotropic feature vector $\z$ are independent to each other and there exists a constant $G>0$ such that the sub-Gaussian norm $\norm{z_k}{\psi_2}$ of any feature $z_k$ is bounded by $G$. In over-parameterized regime, we also assume that there exists some constant $\eta>1$ independent to $n$ such that $p>\eta n$.
\end{assumption}
\begin{remark}[Infinite rank]
    With Assumption \ref{assumption:GF}, the feature dimension $p$ can be taken as $\infty$; with Assumption \ref{assumption:IF}, however, if no further boundedness assumption is imposed, the norm $\norm{\x}{2}=\sqrt{\sum_{k=1}^p \lambda_k z_k^2}$ can be undefined if $p=\infty$. However, if the sub-Gaussian variables $z_k$'s are all bounded in the sense of Assumption \ref{assumption:GF}, then $p$ can be chosen to be infinity as well.
\end{remark}
\begin{remark}[Special case]
    If the features $z_k$ are additionally bounded, then Assumption \ref{assumption:IF} becomes a special case of Assumption \ref{assumption:GF}. The former assumes independence of the features, while the latter does not.
\end{remark}

Both assumptions \ref{assumption:GF} and \ref{assumption:IF} are much weaker than those in most KRR literature in the sense that: i) there is no explicit assumption on the input space $\mathcal{X}$; ii) the distribution of the feature vector need not even be continuous! It is surprising that even with such minimal assumptions, one can derive tight bounds on the test error.

For a more detailed comparison between the assumptions in this paper and those in previous literature, see Section \ref{section:assumptions}.

We next compare the assumptions \ref{assumption:IF}, \ref{assumption:GF} with the assumptions in previous literature.

\subsection{Among assumptions with independent features}
The Gaussian Design Assumption, often employed in various literature such as \cite{bordelon2020spectrum, cui2021generalization, loureiro2021learning, goldt2022gaussian, mallinar2022benign, simon2023eigenlearning}, can be considered a special case of Assumption \ref{assumption:IF}:
\begin{assumption}{(GD)}[Gaussian Design] \label{assumption:GD}
    The feature vector $\z\sim\mathcal{N}(0,\I_p)$ is distributed as isotropic Gaussian.
\end{assumption}

Note that Assumption \ref{assumption:IF} is much weaker than Assumption \ref{assumption:GD} in the sense that: i) the distribution of the feature vector $\z$ is no longer rotational invariant or enjoys anti-concentration property;
\footnote{Rotational invariant, and respectively anti-concentration property, are crucial in some literature using Gaussian Design Assumption, for example \cite{simon2023eigenlearning}, and respectively \cite{cheng2024characterizing}.}
ii) the distribution of each feature $z_k$ needs not be continuous; iii) the features $z_k$'s needs not be identical. 

Meanwhile, Assumption \ref{assumption:IF} appear commonly in KRR literature, for example \cite{bartlett2020benign, tsigler2023benign, hastie2022surprises, bach2023high, cheng2023theoretical, cheng2024characterizing}.

\subsection{Among assumptions with dependent features}

Examples that satisfy Assumption \ref{assumption:GF}, which is first introduced by \cite{barzilai2023generalization}, includes:
\begin{enumerate}
    \item dot-product kernels on hyperspheres;
    \item kernels with bounded eigenfunctions;
    \item radial base function (RBF) and shift-invariant kernels;
    \item kernels on hypercubes.
\end{enumerate}

In particular, the features from Assumption \ref{assumption:IF} also satisfies the weakened version of Assumption \ref{assumption:GF}. See Remark \ref{remark:GF_weakened} for details.

Before comparing our assumptions with previous literature, we first define the $\infty$-norm in the ridge regression setting analog to its kernel counterpart:
\begin{definition}[$\infty$-norm]
    Fix a distribution $\mu$ on $\R^p$ and let $\v\in\R^p$. Define
    \begin{equation*}
        \norm{\v}{\infty}
        \eqdef
        \esssup_{x\sim\mu} |\x^\top\v|
        =
        \sup\{a\in\R| a\leq |\x^\top\v| \text{ a.s. for all } \x\sim\mu \}
    \end{equation*}
\end{definition}
\begin{remark}
    Note that $\norm{\cdot}{\infty}$ is indeed a semi-norm and for all $\v\in\R^p$, we have
    \begin{equation*}
         \norm{\v}{\Si} \leq \norm{\v}{\infty}.
    \end{equation*}
\end{remark}

Choose $k=0$ and Assumption \ref{assumption:GF} and Remark \ref{remark:k0} imply:
\begin{equation*}
    \norm{\x}{\infty}^2
    =
    \esssup_\x \norm{\x}{2}^2
    =
    \esssup_\z \norm{\z}{\Si}^2
    \leq 
    \beta_0 
    \eqdef
    \esssup_{\z} \max \left\{\frac{\norm{\z}{\Si}^2}{\tr[\Si]}, \frac{\norm{\z}{\Si^2}^2}{\tr[\Si^2]} \right\}
    <\infty,
\end{equation*}
as we assume that $\tr[\Si]<\infty$. 

\paragraph{Embedding condition}
\cite{li2022saturation, zhang2023optimality, li2023kernel,li2023on} used the Embedding Condition to show tight bounds of learning curve with polynomial eigen-decay:
\begin{assumption}{(EC)}[Embedding Condition] \label{assumption:EC}
    Given a random vector $\x\in\R^p$ with covariance $\Si=\mathbb{E}[\x\x^\top]$, let $\z=\Si^{-1/2}\x$ be the isotropic random vector. Define the embedding coefficient:
    \begin{equation*}
        \eta_0 = 
        \inf_{\eta} \{ \esssup_\x \norm{\x}{\Si^{\eta-1}}^2<\infty \}
        =
        \inf_{\eta} \{ \esssup_\z \norm{\z}{\Si^{\eta}}^2<\infty \}.
    \end{equation*}
    Suppose the eigen-decay is polynomial: $\lambda_k=\bigtheta{k}{k^{-1-a}}$ for some $a>0$. Assume the embedding coefficient $\eta_0=\frac{1}{1+a}$.
\end{assumption}
\begin{remark}[General range of $\eta_0$]
    Rewrite the expression:
\begin{equation*}
     \norm{\x}{\Si^{\eta-1}}^2
     =
     \x^\top \Si^{\eta-1}\x
     =
     \sum_{k=1}^p \lambda_k^{\eta-1} x_k^2
     =
     \sum_{k=1}^p \lambda_k^{\eta} z_k^2
     =
     \bigtheta{k}{\sum_{k=1}^p k^{-\eta(1+a)} z_k^2} 
\end{equation*}
where we write $\x=(x_k)_{k=1}^p\in\R^p$ and $\z=(z_k)_{k=1}^p\in\R^p$.
In the kernel case, it is obvious that $\eta_0 \leq 1$: for all inputs $x\in\mathcal{X}$ with embedding $\x=K(x,\cdot)$ in the feature space $\H$, $\sup_x k(x,x) = \eunorm{\x}^2 < \infty$. Since the eigenvalues have polynomial decay, we have $\eta_0 \geq \frac{1}{1+a}$, or otherwise there were some $\eta\in(\eta_0,\frac{1}{1+a})$ such that:
\begin{equation*}
    \Expect{\x}{\norm{\x}{\Si^{\eta-1}}^2}
    =
    \bigomega{}{\Expect{\z}{\sum_{k=1}^p k^{-\eta(1+a)} z_k^2}} 
    =
    \bigomega{}{\sum_{k=1}^p k^{-\eta(1+a)}} 
    = 
    \infty,
\end{equation*}
contradicting Assumption \ref{assumption:EC}.
Thus one have $\eta_0 \in [\frac{1}{1+a},1]$ in general.
\end{remark}

Hence Assumption \ref{assumption:GF} is a much weaker assumption than Assumption \ref{assumption:EC} in the sense that the former one only requires $\eta_0\leq 1$, which holds in general;
while the latter one requires $\eta_0=\frac{1}{1+a}$, which is the smallest possible size of $\eta_0$. Hence, many examples mentioned in \cite{zhang2023optimality}, that satisfy Assumption \ref{assumption:EC} also satisfy Assumption \ref{assumption:GF}. For instance, the examples include:
\begin{enumerate}
    \item dot-product kernels on hyperspheres;
    \item kernels with bounded eigenfunctions;
    \item shift-invariant periodic kernels.
\end{enumerate}

Nevertheless, \cite{li2022saturation, zhang2023optimality, li2023kernel, li2023on} also require H\"{o}lder continuity of the kernel $K$ while Assumption \ref{assumption:GF} requires no continuity assumption.

\paragraph{Maximal degree of freedom}

We rewrite and simplify the assumption used in \cite{long2024duality} with our notation:
\begin{assumption}{(MaxDoF)}[Maximal Degree-of-Freedom] \label{assumption:maxdof}
    For any $\lambda>0$ and $s\geq1$, define:
    \begin{equation*}
        F^s(\lambda)
        \eqdef
        \esssup_\x \norm{(\Si+\lambda\I_p)^{-s/2}\x}{\Si^{s-1}}^2.
    \end{equation*}
    Assume: there exists $n\in\N$ such that
    \begin{equation*}
        n
        \geq
        5F^s(\lambda_n)\max\{1,\log(14F^s(\lambda_n))/\delta\}
    \end{equation*}
    for some constant $\delta\in(0,1)$.
\end{assumption}
Usually, the constant $s$ is chosen as the source coefficient in Assumption \ref{assumption:SC}. Again, Assumption \ref{assumption:maxdof} requires some extra boundedness condition on the norm of $\x$. Examples that satisfy Assumption \ref{assumption:maxdof} include:
\begin{enumerate}
    \item dot-product kernels on hyperspheres;
    \item kernels with bounded eigenfunctions;
    \item periodic kernels on hypercubes,
\end{enumerate}
which also satify Assumption \ref{assumption:GF}.

\paragraph{Low/High-degree feature concentration}

\cite{misiakiewicz2024nonasymptotic} proved the convergence of the test error to its ``deterministic equivalent'' using a list of assumptions for each fixed sample size $n$. To avoid introducing extra notations, we list their assumption here in a semi-rigorous way:
\begin{assumption}{(LH)}[Low/High-degree feature concentration] \label{assumption:LH}
    Fix $n\in\N$ and $\lambda\geq0$. Suppose there exists an integer $k\leq p$ such that:
    \begin{enumerate}
        \item $r_k^\lambda\eqdef\frac{\lambda + \tr[\Si_{>k}]}{\lambda_{k+1}} \geq 2n$;
        \item the low-degree feature vector $\x_{\leq k}\in\R^k$ is highly concentrated;
        \item with high probability, the high-degree feature vector $\x_{>k}\in\R^{p-k}$ satisfies:
        \begin{equation*}
            \opnorm{\X_{>k}\X_{>k}^\top - \tr[\Si_{>k}]\I_n}
            \leq
            C \sqrt{\frac{n}{r_k^\lambda}} \cdot (\lambda + \tr[\Si_{>k}])
        \end{equation*}
        where $C>1$ is a constant depending on $n,k$.
    \end{enumerate}
\end{assumption}
Point 3 in Assumption \ref{assumption:LH} is closely related to Assumption \ref{assumption:GF} since:
\begin{equation*}
    \frac{1}{n} \sum_{i=1}^n \norm{\z_{>k}}{\Si_{>k}}^2 - \tr[\Si_{>k}] 
    =
    \frac{1}{n}\tr[\X_{>k}\X_{>k}^\top- \tr[\Si_{>k}]\I_n]
    \leq
    \opnorm{\tr[\X_{>k}\X_{>k}^\top- \tr[\Si_{>k}]\I_n]},
\end{equation*}
if $\esssup_\x \frac{\norm{\z_{>k}}{\Si_{>k}}^2}{\tr[\Si_{>k}]} = \bigtheta{k}{1}$, then
$\frac{1}{n} \sum_{i=1}^n \norm{\z_{>k}}{\Si_{>k}}^2 - \tr[\Si_{>k}] = \bigomega{k}{1}$.

According to \cite{misiakiewicz2024nonasymptotic}, examples that satisfy Assumption \ref{assumption:LH} include:
\begin{enumerate}
    \item kernel satisfying Assumption \ref{assumption:IF};
    \item dot-product kernel on hyperspheres.
\end{enumerate}

\paragraph{Moment-equivalent assumption} 

\cite{gavrilopoulos2024geometrical} use a geometric argument with Dvoretzky-Milman Theorem to bound the test error from above under the assumptions:
\begin{assumption}{(ME)}[$L^{2+\epsilon}$-$L^2$ Moment-Equivalent Assumption] \label{assumption:ME}
    Fix a sample $(\x_i)_{i=1}^n$. Suppose there exists some $\epsilon,\kappa>0$ such that:
    \begin{enumerate}
        \item for any $k\in\N$ and any function $f\in\H_{>k}$ in the high-degree feature space, we have $\norm{f}{L^{2+\epsilon}}\leq\kappa\norm{f}{L^2}$.
        \begin{enumerate}
            \item if $\epsilon>2$, then no extra assumption is required;
            \item if $\epsilon\in(0,2]$, then $\kappa n^{\frac{2-\epsilon}{2\epsilon+\epsilon^2}}\log n \left(\frac{\sqrt{n\tr[\Si_{>k}^2]}}{\tr[\Si_{>k}]}\right)$ is small.
        \end{enumerate}
        \item with high probability, $\max_{i=1,...,n} \frac{\norm{(\z_i)_{>k}}{\Si_{>k}}^2}{\tr[\Si_{>k}]} \approx 1$;
        \item with high probability, $\max_{i=1,...,n} \frac{\norm{(\z_i)_{>k}}{\Si_{>k}^2}^2}{\tr[\Si_{>k}^2]} \approx 1$;
        \item with high probability, $\max_{i=1,...,n} \frac{\norm{(\z_i)_{\leq k}}{2}^2}{k}=\bigo{k}{1}$.
    \end{enumerate}
\end{assumption}
While points 2-4 are implied by the weakened version of Assumption \ref{assumption:GF}, point 1 is a strong geometric assumption on the RKHS $\H$. According to \cite{gavrilopoulos2024geometrical}, the addition of point 1 can drop the logarithmic terms in the upper bound of test error under weak ridge, and thus strengthening some results in Table \ref{tab:over_parameterized} from $\bigthetat{}{\cdot}$ to $\bigtheta{}{\cdot}$. Examples satisfying Assumption \ref{assumption:ME} includes:
\begin{enumerate}
    \item rotational invariant kernel.
\end{enumerate}

\subsection{Between independent and dependent features: random features}

A random feature setting can be considered as a midpoint between independent and dependent features:
Let $\mathbf{W}\in\R^{p\times d}$ be a random matrix, typically with i.i.d. random entries, and $\z\in\R^d$ be the input vector, usually comprising independent entries. We define the random feature $\x\in\R^p$ as a one-hidden-layer neural network: $\x\eqdef \varphi(\mathbf{W}\z)$, where $\varphi:\R\to\R$ represents an activation function acting entry-wise on $\mathbf{W}\z$. Thus, the coordinates of the pre-activation are independent, while those of the post-activation are not.

However, it is worth noting the Gaussian Equivalent Property highlighted in \cite{goldt2022gaussian}, which asserts that in wide neural networks, the distribution of the feature vector $\x$ is approximately Gaussian. Consequently, it reduces to the case of independent features previously discussed.

\subsection{Interpolation space}

Many KRR literature \cite{li2022saturation,li2023kernel,li2023on, long2024duality} introduced the notation of interpolation space and the source condition on the $L^2$-integrable target function $f^*$. This can be easily translated to the notion of generalized vector norm.

\begin{definition}[Interpolation space]
    Let $s\geq0$ be a real number. Define the interpolation space
    \begin{equation*}
        \H^s \eqdef \{ \th\in\R^p : \norm{\th}{\Si^{1-s}} < \infty \}.
    \end{equation*}
\end{definition}
Evidently, if $s_1\leq s_2$, then $\norm{\th}{\Si^{1-s_2}} \leq \norm{\th}{\Si^{1-s_1}}$ and hence $\H_{s_1} \supset \H_{s_2}$. The intuition is that, in the KRR setting, the space $\H^s$ interpolates between the $L^2$-space $L^2_\mu(\mathcal{X})\supset\H^0$ and the RKHS $\H=\H^1$.

\subsection{Source condition} \label{subsection:SC}

Given the asymptotic of the eigen-decay $\lambda_k$ and the target coefficient $\th^*$, we can find the largest possible source coefficient $s$. For instance, if $\lambda_k=\bigtheta{k}{k^{-1-a}}$ for some $a>0$ and  $|\theta^*_k|=\bigtheta{k}{k^{-r}}$, then we can choose $s$ to be $\frac{2r+a}{1+a}$:
\begin{align*}
        \th^*\in\mathcal{H}^t 
        &\Leftrightarrow
        \norm{\th^*}{\Si^{1-t}}^2 < \infty\\
        &\Leftrightarrow
        \sum_{k=1}^p (\theta^*)^2\lambda_k^{1-t} < \infty\\
        &\Leftrightarrow 
        \sum_{k=1}^p k^{-2r}k^{(1-t)(-1-a)} < \infty\\
        &\Leftrightarrow 
        \sum_{k=1}^p k^{-1-a+t+ta-2r} < \infty\\
        &\Leftrightarrow 
        -1-a+t+ta-2r < -1\\
        &\Leftrightarrow 
        t < \frac{2r+a}{1+a}.
    \end{align*}

Suppose $\lambda_k=\bigtheta{k}{e^{-ak}}$, $|\theta^*_k|=\bigtheta{k}{e^{-rk}}$ for $a,r>0$. Then we can choose $s$ to be $\frac{2r}{a}+1$:
    \begin{align*}
         \th^*\in\mathcal{H}^t 
        &\Leftrightarrow
        \norm{\th^*}{\Si^{1-t}}^2 < \infty\\
        &\Leftrightarrow
        \sum_{k=1}^p (\theta^*)^2\lambda_k^{1-t} < \infty\\
        &\Leftrightarrow 
        \sum_{k=1}^p e^{-2rk}(e^{-ak})^{1-t} < \infty\\
        &\Leftrightarrow  
        \sum_{k=1}^p e^{-k(2r+a-at)} < \infty\\
        &\Leftrightarrow  
        2r+a-at > 0\\
        &\Leftrightarrow 
        t< \frac{2r}{a} +1.\\
    \end{align*}

\subsection{Kernel ridge regression} \label{subsection:KRR}

KRR can be regarded as ridge regression on the feature space, where the positive definite symmetric (PDS) kernel \(K\) sends each input \(x \in \mathcal{X}\) to a vector/function \(K(x,\cdot)\) in the corresponding reproducing kernel Hilbert space (RKHS) \(\mathcal{H}\), and the kernel regressor is simply a linear regressor in the RKHS \(\mathcal{H}\). Here we briefly explain how to translate the notations above in the kernel ridge regression (KRR) setting.

With abuse of notation, let $\mu$ be a (data-generating) distribution on an input space $\mathcal{X}$. Let $K:\mathcal{X}\times \mathcal{X}\to\R$ be a positive definite symmetric kernel with corresponding reproducing kernel Hilbert space (RKHS) $\mathcal{H}$. By Mercer decomposition, we have for all $x,x'\in\supp(\mu)\subset\mathcal{X}$:
\begin{equation*}
    K(x,x')
    =
    \sum_{k=1}^p \lambda_k \psi_k(x) \psi_k(x'),
\end{equation*}
where $\lambda_k$'s are the eigenvalues of $K$ indexed in decreasing order and $\psi_k:\mathcal{X}\to\R$'s are eigenfunctions of $K$ forming an orthonormal basis on $L^2_\mu(\mathcal{X})$: $\Expect{x\sim\mu}{\psi_k(x)\psi_l(x)}=\delta_{kl}$ for all $k,l\in\N$.
Hence for each $x\in\supp(\mu)\subset\mathcal{X}$, $K(x,\cdot)$ is a feature vector in the RKHS $\mathcal{H}$, which has an orthonormal basis $\left\{\phi_k=\frac{1}{\sqrt{\lambda_k}}\psi_k\right\}_{k=1}^p$. 

Let $y=f^*(x)+\epsilon$ be the labels where $f^*\in L^2_\mu(\mathcal{X})$ and $\epsilon$ is a centered random variable with finite variance independent to $x$. Let $(x_i,y_i)_{i=1}^n$ be a set of i.i.d. drawn input-output pairs. The kernel ridge regression is in the form:
\begin{equation*}
    \hat{f} \eqdef \min_{f\in\mathcal{H}} \frac{1}{n} \sum_{i=1}^n (f(x_i)-y_i)^2 + \lambda \norm{f}{\mathcal{H}}^2.
\end{equation*}
which minimum admits the form:
\begin{equation*}
    \hat{f}(x) 
    =
    \K_x^\top(\K+n\lambda\I_n)^{-1}\y,\
    \forall x\in\mathcal{X},
\end{equation*}
where $K_x = (K(x,x_i))_{i=1}^n\in\R^n$, $\K = [K(x_i,x_j)]_{i,j=1}^n\in\R^{n\times n}$ and $\y=(y_i)_{i=1}^n$. Note that if we write $\ps_x=(\psi_k(x))_{k=1}^p\in\R^p$, $\Ps=[\psi_k(x_i)]_{ik}\in\R^{n\times p}$ and $\La=\diag\{\lambda_k\}_{k=1}^p\in\R^{p\times p}$, by Mercer decomposition, we have $K_x = \Ps\La\ps_x$, $\K=\Ps\La\Ps^\top$. We are ready to translate our linear ridge regression to the KRR setting. See Table \ref{tab:translation} for details.

\newpage
\section{Smoothness and spectral decay rate}
\label{s:Spec_Decay_and_smoothness}
In this section, we will explain the reason to assume the eigen-decay to be either polynomial or exponential in the paper.

Concretely, the "polynomial decay regime" arises as the Neural Tangent Kernel (NTK) limit of Multi-Layer Perceptrons (MLPs) with an activation function of the form $\sigma_s(t) \eqdef \max{0, t}^s$, where $s \geq 1$, as studied in \cite{bach2016breaking}. In cases where $s$ is an integer, $\sigma_s$ represents the $(s-1)$-th iterated anti-derivative of the ReLU function, making it exactly $s$ times weakly differentiable.

Suppose $\mathcal{X}$ is the $d$-dimensional sphere $\mathbb{S}^d \eqdef \{u \in \mathbb{R}^d: |u| = 1\}$ (with its usual Riemannian metric). In this scenario, every neural network function $f: \mathbb{S}^d \to \mathbb{R}$ belongs to the Sobolev space $H^s(\mathbb{S}^d)$ (see Proposition~\ref{prop:regularity_translation} in Appendix~\ref{a:SobolevResult}), characterized by having $s$ continuous and integrable (weak) partial derivatives on $\mathbb{S}^d$. Additionally, the NTK limit manifests as a "radially symmetric kernel", specifically of the form $k(x, y) = \kappa(x^{\top}y)$ for some suitable function $\kappa:[-1,1]\to \mathbb{R}$.

It is worth noting that the choice of the smoothness level, and hence the decay rate, naturally emerges when approximately solving Partial Differential Equations (PDEs), as demonstrated in works such as \cite{jiang2023global, cohen2023neural}. In these scenarios, it is desired that the kernel regressor exhibits the correct level of smoothness matching that of the theoretical solution to the PDE, with its higher-order derivatives also converging. It is emphasized that the solution to several PDEs only possesses finitely many derivatives, as seen in (viscosity) solutions to PDEs arising in stochastic control \cite{pardoux1998backward, pardoux2005backward}.

In the remainder of this section, we justify the use of Assumptions \ref{assumption:PE} and \ref{assumption:EE} by establishing a connection between the spectral eigen-decay and the smoothness of the Reproducing Kernel Hilbert Space (RKHS), which is independent of the rest of the paper. Readers primarily interested in the proofs presented in Table \ref{tab:over_parameterized} may opt to skip this section and proceed to Section \ref{section:non_asymptotic_bound} during their initial reading.

\subsection{Notations}

Fix a positive integer $d\in\N$ and let $\mathcal{X}$ be a non-empty subset of $\mathbb{R}^d$.  We equip $\mathcal{X}$ with a regular Borel probability measure $\mu$, which we specify shortly.  We denote the space of (equivalence classes of) $\mu$-square-integrable on $\mathbb{R}^d$ by $L^2_{\mu}(\mathbb{R}^d)$, equipped with its usual $L^2_{\mu}$-norm $\|\cdot\|_{L^2_{\mu}}$.

Consider a kernel $\kappa:\mathcal{X}\times \mathcal{X}\to \R$ with its associated to the RKHS $(\mathcal{H},\langle\cdot,\cdot\rangle_{\kappa})$ (abbreviated by $\mathcal{H}$).  
We denote the induced norm on $\mathcal{H}$ by $\|\cdot\|_{\kappa}$.
By the Mercer decomposition theorem, see~\citep[Theorem 2.30]{SaitohSawano_RKHSBook_2016}, there is some $M\in \N \cup\{\infty\}$ such that we may write
\begin{equation}
\label{eq:Mercer}
\kappa(x,y)=\sum_{i=0}^{M}\, \lambda_i\,\psi_i(x)\psi_i(y)
\end{equation}
where $(\psi_i)_{i=0}^{M}$ are normalized eigenfunctions of the linear operator $T_{\kappa} (f)(x) = \int_{x\in \mathcal{X}}\, \kappa(x,u) f(u)\, du$ on $L^2_{\mu}(\mathcal{X})$ and each eigenvalue $(\lambda_i)_{i=0}^{\infty}$ is non-negative; which, form an orthonormal basis of $\mathcal{H}$.

We motivate our analysis, by the following connection between the spectral decay of dot-product kernels on spheres and the smoothness of functions in their RKHS.
\begin{example}[Radial Kernels and Sobolev Spaces on Spheres]
\label{ex:ii_radialkernel}
In the setting of Examples~\ref{ex:ii} and~\eqref{ex:ii_HK}, suppose that there is a function $k:[-1,1]\to \mathbb{R}$ such that
\begin{equation}
\label{eq:ex:radialkernel__radial}
\kappa(x,y)\eqdef k (\langle x,y\rangle)
\end{equation}
for each $x,y\in \mathbb{S}^d$.
As shown in \cite{schoenberg1942positive}, $\kappa$ is a kernel if and only if there is a summable sequence of non-negative real numbers $(\alpha_i)_{i=0}^{\infty}$ satisfying $k(t) = \sum_{i=0}^{\infty}\, \alpha_i\,t^i$ for each $t\in [-1,1]$.   Kernels of the form~\eqref{eq:ex:radialkernel__radial} are called \textit{radial}.
The RKHS $\mathcal{H}$ associated to $\kappa$ consists of all functions $f:\mathbb{S}^d\to \mathbb{R}$ with
\begin{equation}
\label{eq:Mercer_Representation}
        f
    =
        \sum_{j=0}^{\infty}\,
            \sqrt{\mu_j} \,
                \sum_{i=1}^{I_j}\, 
                    \alpha_{j,i}
                    \,
                    \bar{\psi}_{j,i}
\end{equation}
for a square-summable real sequence $(\alpha_{j,i})_{j,i=1}^{\infty,J_i}$; i.e.\ $\sum_{i=0}^{\infty}\,\sum_{j=1}^{J_i}\,\alpha_{j,i}^2 < \infty$ (where $(\alpha_{i,j})_{i,j=1}^{\infty,J_i}$ depends only on the radial function $k$, and thus only on the radial kernel $\kappa$; and $(\bar{\psi}_{j,i})_{j,i=1}^{\infty,J_i}$ are the spherical harmonics described in Example~\ref{ex:ii}.
As discussed in~\citep[Chapter 4]{hubbert2023sobolev}, for any $k>0$ $u\in \mathcal{H}$ belongs to the Sobolev space $H^k_0(\mathbb{S}^d)$ if and only if: for each $f\in \mathcal{H}$ the coefficients $(\mu_j)_{j=0}^{\infty}$ in the representation~\eqref{eq:Mercer_Representation} satisfy
\[
    \mu_j \in \Theta( (j+1)^{-2s} )
.
\]
As discussed in \citep[Lemma B.1]{haas2024mind} in this case the norms of $\mathcal{H}$ and on $H^s(\mathbb{S}^d)$ are equivalent%
\footnote{Meaning that, there exist constants $0<c\le C$ such that: for each $u\in H^s(\mathbb{S}^d)$ we have $c\|u\|_{\mathcal{H}}\le \|\cdot\|_{H^s(\mathbb{S}^d)}\le C\|\cdot\|_{\mathcal{H}}$.}%
.
\end{example}

\subsection{Preliminaries: regular domains and Sobolev spaces}
\label{a:SobolevResult}


We consider the case where $\mathcal{X}$ is a sufficiently regular open subset of the $d$-dimensional Euclidean space so that we may comfortably describe the smoothness of functions on $\mathcal{X}$ using classical notions of tools.  
We therefore assume the following.
\begin{assumption}{(DR)}[Domain Regularity]
\label{ass:Bounded_Domain}
Either one of the following holds:
\begin{enumerate}
    \item[(i)] \textbf{Compact manifold:} $(\mathcal{X},g)$ is a $d$-dimensional compact Riemannian manifold, with $d\ge 2$, and $\mu = V_g(\mathcal{X})^{-1}\, V_g$ is the uniform law on $(\mathcal{X},g)$ and $V_g$ is its 
    volume measure,
    \item[(ii)] \textbf{Bounded Euclidean domain:} $\mathcal{X}$ is a non-empty bounded open domain in $\R^d$ with Lipschitz boundary and $\mu$ is the normalized Lebesgue measure.
\end{enumerate}
\end{assumption}
In cases (i)-(ii), $\mathcal{X}$ can be viewed as a Riemannian manifold (possibly with boundary).

In what follows, we let $\Delta$ denote either of the following Laplacian, depending on which of assumption~\ref{ass:Bounded_Domain} we have assumed.  If (i) or (ii) holds then $\Delta$ will denote the usual Laplacian-Beltrami operator on $\mathcal{X}$, note that when (iii) holds then $\Delta = \sum_{i=1}^d\,\frac{\partial^2}{\partial x_i^2}$.  We denote the eigenfunctions of $\Delta$ by $(\bar{\psi}_i)_{i\in\mathbb{N}}$ whose respective eigenvalues $\lambda_1\le \lambda_2\le \dots$ are arranged in a non-decreasing order.
If (iii) holds, then $\Delta$ will denote the \textit{Dirichlet Laplacian} (see \citep[Section 6.1.2]{BorthwickSpectralTheorem_2020}) on $\mathcal{X}$ (with its usual Euclidean Riemannian metric) which acts on the spaces of functions vanishing on the boundary $\partial \mathcal{X}$ of $\mathcal{X}$; namely on the homogeneous Sobolev space $H_0^1(\mathcal{X})$.  

\begin{example}
\label{ex:i}
Under assumption (i), the eigenfunctions of $\Delta$ are the Hermite polynomials on $L^2_{\mu}(\mathbb{R}^d)$
\[
    \bar{\psi}_i(x)
    =
    (-1)^i\,e^{x^2/2}\, \frac{\partial^i}{\partial x^i}(e^{-x^2/2})
    \qquad\qquad 
        \mbox{ for each }i \in \mathbb{N}_+.
\]
For concreteness, the size, in $L^1_{\mu}$-norm, of the Hermite polynomials is computed in \citep[Theorem 2.1]{LarssonCohn_LpHermitePolyWienerChaos} where it it shown that
$
        \|\bar{\psi}_i\|_{L^1_{\mu}}
    \lesssim
        \frac{(i!)^{1/2}}{i^1/2}
        (1+\mathcal{O}(1/i))
    .
$
\end{example}
\begin{example}
\label{ex:ii}
Suppose that $\mathcal{X}$ is the $d$-dimensional sphere $\mathbb{S}^d\eqdef \{x\in \mathbb{R}^{d+1}:\,\|x\|_2=1\}$ equipped with the (usual) Riemannian metric induced by inclusion into the $d+1$-dimensional Euclidean space $\mathbb{R}^{d+1}$.  Let $\mu$ be the normalized (uniform) Riemannian (volume) measure on $\mathbb{R}^d$.  
Then, $(\bar{\psi}_{i,j}(x))_{i=1,j=1}^{\infty,J_i}$ are an orthogonrmal basis of $L^2_{\mu}$ consisting of \textit{spherical Harmonics} (counting multiplicities $J_i\in \mathbb{N}_+$ for each eigenvalue $i\in \mathbb{N}_+$); that is the restriction of a homogeneous polynomial $f:\mathbb{R}^{d+1}\to \mathbb{R}$ satisfying $\Delta f =0$ to there sphere $\mathbb{S}^d$.
Spherical harmonics are not very large (in uniform norm) \citep[Theorem 1.6]{Donnelly_JFA_BoundsEigenvufncionsLap} and one can show that
$
\|\bar{\psi}_{i,j}\|\lesssim \lambda_i^{(d-1)/4}
,
$
for each $i\in \mathbb{N}_+$.
See \citep[Chapter 2]{atkinson2012spherical} for further details on spherical harmonics.  
\end{example}

\subsubsection{Sobolev spaces on Riemannian manifolds}
Let $(\mathcal{X},g)$ be a compact (smooth) Riemannian manifold, possibly with a boundary $\partial \mathcal{X}$.  
For any $k\in \mathbb{N}_+$, we define $\mathcal{C}_k^2(M)$ be the set of smooth functions $f$ on $\mathcal{X}$ satisfying 
\begin{equation}
\label{eq:def_sobol}
    \underbrace{
        \int_M |\nabla^i f|_g^2\,d\mu<\infty
    }_{\text{Square-Integrable Partial Derivatives}}
    \mbox{ and }
    \underbrace{
        f(x)=0 \,\,(\forall x\in \partial \mathcal{X})
    }_{\text{Homogeneous Boundary Conditions}}
\end{equation}
where $\mu$ is the Riemannian volume measure on $(\mathcal{X},g)$, $\nabla$ is the covariant derivative thereon.  Then, the Sobolev space $H^k_0(\mathcal{X},g)$ is the Hilbert space obtained as the completion
of $\mathcal{C}_k^p(\mathcal{X})$ with respect to the norm
\[
        \|f\|_{H^k(\mathcal{X})}
    \eqdef 
        \sum_{i=1}^k\, \|\nabla^i f\|_{L^2_{\mu}}
.
\]
\begin{remark}[The Sobolev Spaces $H_0^k(\mathcal{X},g)$ and $H^k(\mathcal{X},g)$]
Observe that when $\mathcal{X}$ is a compact Riemannian manifold without boundary, i.e.\ when $\partial \mathcal{X}=\emptyset$; then the condition $f(x)=0$ for all $x$ in the boundary set $\partial \mathcal{X}$ holds vacuously.   In this case, this condition can be ignored, and the spaces $H_0^k(\mathcal{X},g)$ coincide with the usual Sobolev spaces $H^k(\mathcal{X},g)$ for Riemannian manifolds without boundary; where $H^k(\mathcal{X},g)$ is defined much the same way as $H_0^k(\mathcal{X},g)$ but without requiring the homogeneous boundary condition in~\eqref{eq:def_sobol}. 
\end{remark} 

A more convenient expression for the norm of Sobolev spaces on spheres can be arrived at by manipulating the definition of the Laplacian.  We summarize this norm in the next example.
\begin{example}[Sobolev spaces on spheres]
\label{ex:ii_HK}
Fix $k\in \mathbb{N}_+$.
Following the discussion on~\citep[page 120-121]{atkinson2012spherical}, the $f\in H^k_0(\mathbb{S}^d,g)$ if and only if the following norm
\begin{equation}
\label{eq:spec_decay}
        \|f\|^2_k
    \eqdef 
        \sum_{j=0}^{\infty}\, \sum_{i=1}^{J_i}\,
            \underbrace{
            (j+c_d)^{2k}
            }_{\text{spectral decay}}
            \underbrace{
                \Biggl|
                    \int_{x\in \mathbb{S}^d}
                        f(x)\bar{\psi}_{i,j}(x)
                    \,
                    \mu(dx)
                \Biggr|^2
            }_{\text{projection onto $(j,i)^{th}$ spherical harmonic}}
\end{equation}
is finite; where $c_d\eqdef (d-2)/2$.  That is, the norms $\|\cdot\|_k$ and $\|\cdot\|_{H^k(\mathcal{X})}$ are equivalent.
\end{example}
The key takeaway of Example~\ref{ex:ii_HK}, is that the \textit{smoothness} of a function on there sphere $(\mathbb{S}^d,g)$, i.e.\ the largest $k$ for which $f\in H^k_0(\mathbb{S}^d,g)$, can be expressed in terms of the decay rate of its projection onto the basis of spherical harmonics.  Indeed, this expression~\eqref{eq:spec_decay} implies that $f\in H^k_0(\mathbb{S}^d,g)$ only if $\big|
    \int_{x\in \mathbb{S}^d}
        f(x)\bar{\psi}_{i,j}(x)
    \,
    \mu(dx)
\big|^2 \in \mathcal{O}\big((j+c_d)^{-2k-\epsilon}\big)$ for all $\varepsilon>0$.

This observation was used in~\citep[Chapter 4]{hubbert2023sobolev} and circa \citep[Lemma B.1]{haas2024mind} to characterize the smoothness of functions in the RKHS associated to certain radial kernels, e.g.\ certain NTK limits.  We now generalize this argument, and use it to relate the \textit{spectral decay} rate of a kernel to the smoothnes/regularity of the functions in its associated RKHS.  

\subsection{Standardization operator}
We require that the $L^2_{\mu}(\mathcal{X})$ norm of the eigenfunctions are non-vanishing and non-exploding.  
\begin{assumption}{(ND)}[Non-degenerate $L^2$-Norm]
\label{ass:NonDegenL2Norms}
Suppose that:
\begin{enumerate}
    \item[(i)] \textit{Non-vanishing:} $0<\inf_{i\in [M]}\, \|\bar{\psi}_i\|_{L^2_{\mu}(\mathcal{X})}$,
    \item[(ii)] \textit{Non-exploding:} $\sup_{i\in [M]}\, \|\bar{\psi}_i\|_{L^2_{\mu}(\mathcal{X})}<\infty$.
\end{enumerate}
Again, if $(\bar{\psi}_i)_{i=0}^{\infty}$ is ortho\textit{normal} then (i) and (ii) necessarily hold and this is a non-assumption.
\end{assumption}

To generalize the discussion in Example~\ref{ex:ii_radialkernel}, we first define the linear \textit{standardization operator}
\begin{equation}
\label{eq:regularity_transfering_operator}
\begin{aligned}
A:\mathcal{H} & \to L^2_{\mu}(\mathcal{X})\\
    f = \sum_{i=0}^{M} \langle f,\psi_i\rangle_{\kappa} \psi_i & \mapsto 
        \sum_{i=0}^{M}\,
            \frac{
                \langle f,\psi_i\rangle_{\kappa}
                \,
                    \|\psi_i\|_{\kappa}
            }{
                \|\bar{\psi}_i\|_{L^2_{\mu}(\mathcal{X})}
            }
            \,
            \bar{\psi}_i
.
\end{aligned}
\end{equation}
\begin{example}[Standardization Is Inclusion for Radial Kernels on Spheres]
\label{eq:ii__identification_operator__identity}
In the setting of Example~\ref{ex:ii_radialkernel}, we have that (up to reordering) $\psi_{i,j}=\bar{\psi}_{i,j}$.  Since, in that case, for $k$ large enough on has $\mathcal{H}=H_0^k(\mathbb{S}^d,g)$ then $A$ is simply the inclusion operator of $H_0^k(\mathbb{S}^d,g)$ into $L^2_{\mu}(\mathbb{S}^d)$.
\end{example}
In what follows, when we assume Assumption~\ref{ass:Bounded_Domain} (iii), we will use $C^{\infty}(\overline{\mathcal{X}})$ to denote the set of smooth functions on the closure $\overline{\mathcal{X}}$ of $\mathcal{X}$ in the norm topology on $\R^d$.  
Under Assumption~\ref{ass:Bounded_Domain} (i)-(ii), we use $C^{\infty}(\overline{\mathcal{X}})$ to denote $C^{\infty}(\mathcal{X})$; that is, the set of smooth functions on the smooth manifold $\mathcal{X}$.
\begin{proposition}[Identification: Spectral Decay and Standardized Regularity]
\label{prop:regularity_translation}
Suppose that Assumption~\ref{ass:Bounded_Domain} holds and that $(\psi_i)_{i=0}^M$ are orthonormal in $\mathcal{H}$.

Then, the linear operator $A$, defined in~\eqref{eq:regularity_transfering_operator}, is an isometric embedding when $M\le \infty$ (resp.\ isometric isomorphism when $M=\infty$).
Moreover, it characterizes the ``regularity'' of functions in $\mathcal{H}$ via:
\begin{enumerate}
    \item \textbf{Sub-Exponential Decay:} If $\kappa$ has infinite-rank ($M=\infty$) and there exists an $r>0$ such that $|\langle f,\psi_i\rangle_{\kappa}|\lesssim i^{-r}$ and
    $
    \inf\limits_{\varepsilon>0}
    \,
        \lim\limits_{i\to\infty}\, |\langle f,\psi_i\rangle_{\kappa}|\,i^{-r+\varepsilon} = \infty
    $ then
    \[
            \underbrace{
                A(f)\in H_0^{\frac{(1+r)d}{2}}(\mathcal{X}) \setminus \bigcap_{\varepsilon>0}\, H_0^{\frac{(1+r)d}{2}+\varepsilon}(\mathcal{X})
            }_{\text{
            Sobolev Space Characterization}}
        \,\,\mbox{ and }\,\,
            \underbrace{
                A(f) \in \bigcap_{rd/2 >k} \,C^k(\mathcal{X})
            }_{C^k-\text{Space Description}}
    .
    \]
    \item \textbf{Exponential Decay:} If $\kappa$ has infinite-rank ($M=\infty$) and there exists some $r>0$ such that $|\langle f,\psi_i\rangle_{\kappa}|\lesssim e^{-r\,i}$, then 
    \[
    A(f)\in C^{\infty}(\overline{\mathcal{X}}).
    \]
    \item \textbf{Finite-Rank:} 
    \begin{enumerate}
        \item[(ii)] If Assumption~\ref{ass:Bounded_Domain} (ii) holds then: $\kappa$ has finite rank ($M<\infty$) then $A(f)$ is smooth.
        \item[(iii)] If Assumption~\ref{ass:Bounded_Domain} (iii) holds then: $\kappa$ has finite rank ($M<\infty$) then $A(f)$ is real-analytic.
    \end{enumerate}
\end{enumerate}
\end{proposition}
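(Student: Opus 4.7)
The strategy is to reduce the regularity of $A(f)$ to the decay of its coefficients in the Laplacian eigenbasis, then invoke Weyl's law together with the spectral characterization of Sobolev and $C^k$ spaces on $(\mathcal{X},g)$.

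First, I would verify that $A$ is an isometric embedding. Writing $\tilde{\psi}_i\eqdef \bar{\psi}_i/\|\bar{\psi}_i\|_{L^2_\mu}$, which is orthonormal in $L^2_\mu(\mathcal{X})$ by the spectral theorem for the self-adjoint Laplacian, the definition~\eqref{eq:regularity_transfering_operator} of $A$ simplifies to $A(f)=\sum_{i=0}^M \langle f,\psi_i\rangle_\kappa\,\tilde{\psi}_i$ because $\|\psi_i\|_\kappa=1$ under the orthonormality hypothesis. Parseval's identity then gives $\|A(f)\|_{L^2_\mu}^2=\sum_i|\langle f,\psi_i\rangle_\kappa|^2=\|f\|_\kappa^2$, establishing the isometric embedding. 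When $M=\infty$, completeness of the Laplacian eigenbasis in $L^2_\mu(\mathcal{X})$ makes $A$ surjective, hence an isometric isomorphism.

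Next, I would appeal to Weyl's law, which under Assumption~\ref{ass:Bounded_Domain} gives $\lambda_i \sim c\,i^{2/d}$ for the ordered Laplacian eigenvalues, together with the standard spectral characterization $u=\sum_i c_i\tilde{\psi}_i\in H_0^s(\mathcal{X})$ iff $\sum_i (1+\lambda_i)^s|c_i|^2<\infty$, and the Sobolev embedding $H^s\hookrightarrow C^k$ for $s>k+d/2$. Substituting $c_i=\langle f,\psi_i\rangle_\kappa$, the polynomial hypothesis $|c_i|\lesssim i^{-r}$ reduces the Sobolev-inclusion condition to convergence of $\sum_i i^{2s/d-2r}$, yielding membership of $A(f)$ in $H_0^s$ for $s$ below the threshold. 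The matching liminf assumption forces the tail to diverge past the threshold, producing the asserted non-membership. The $C^k$ conclusion then follows by composition with Sobolev embedding. For the exponential-decay case, the same estimate gives $\sum_i(1+\lambda_i)^s|c_i|^2<\infty$ for every $s\ge 0$, so $A(f)$ lies in every Sobolev space, and hence in $C^\infty(\overline{\mathcal{X}})$. In the finite-rank case $A(f)$ is a finite linear combination of Laplacian eigenfunctions; elliptic regularity yields smoothness in the compact manifold case, and interior real-analyticity of eigenfunctions of the Euclidean Dirichlet Laplacian handles the bounded domain case.

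The main obstacle will be matching the exact Sobolev exponent asserted in the statement to what Weyl's law produces. Multiplicities of Laplacian eigenspaces can grow polynomially of degree $d-1$ (as on $\mathbb{S}^d$, see~\eqref{eq:spec_decay}), so one must track carefully how the single-index decay $|\langle f,\psi_i\rangle_\kappa|\lesssim i^{-r}$ translates through Weyl counting into the double-indexed Laplacian expansion before reading off the critical Sobolev exponent. A secondary subtlety is the homogeneous boundary condition in the bounded Euclidean case: one must check that the series $\sum_i\langle f,\psi_i\rangle_\kappa\tilde{\psi}_i$ converges in $H_0^s$ and not merely in $H^s$, which is automatic because each Dirichlet eigenfunction vanishes on $\partial\mathcal{X}$ and $H_0^s$ is closed in $H^s$.
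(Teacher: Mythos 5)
Your proposal follows essentially the same route as the paper's proof: isometry of $A$ via orthonormality and Parseval, the spectral characterization of $H_0^s(\mathcal{X})$ combined with Weyl's law to translate coefficient decay into Sobolev membership, the Sobolev embedding theorem for the $C^k$ and $C^\infty$ conclusions, and elliptic regularity (smoothness/real-analyticity of Laplacian eigenfunctions) in the finite-rank case. The obstacle you flag about matching the exact critical exponent $(1+r)d/2$ through the Weyl counting is a genuine one, but the paper's own proof passes over it just as lightly, so your plan is faithful to the argument as given.
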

A direct corollary of the first statement in Proposition~\ref{prop:regularity_translation} guarantees that $A$ preserves convergence.  Consequentially, any finite-rank truncation or any sequence of learners converging to a limiting function in the RKHS do so if and only if their standardized versions do; moreover, both sequences converge at the same speed due to $A$ being an isometric embedding.
\begin{corollary}[Preservation of Convergence Rates]
If $(f_N)_{N\in \N}$ is a sequence in $\mathcal{H}$ converging to some $f\in \mathcal{H}$ then: for each $N\in \N$ on has
$
    \|f_N-f\|_{\kappa} 
    = 
    \|A(f_N)-A(f)\|_{L^2_{\mu}(\R^d)}
.
$
\end{corollary}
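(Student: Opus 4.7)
The plan is to derive this corollary as an immediate consequence of the isometry property established in Proposition \ref{prop:regularity_translation}, so no substantive new work is needed beyond invoking linearity plus the isometric embedding property of $A$.

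First, I would note that the operator $A:\mathcal{H}\to L^2_\mu(\mathcal{X})$ defined in equation~\eqref{eq:regularity_transfering_operator} is manifestly linear: it acts coordinate-wise on the expansion of $f$ in the orthonormal basis $(\psi_i)_{i=0}^M$ of $\mathcal{H}$ by rescaling each coefficient by $\|\psi_i\|_\kappa / \|\bar{\psi}_i\|_{L^2_\mu(\mathcal{X})}$ and replacing $\psi_i$ with $\bar{\psi}_i$. Consequently, for each $N\in\N$, we have $A(f_N)-A(f)=A(f_N-f)$.

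Next, I would apply the first assertion of Proposition~\ref{prop:regularity_translation}, which states that $A$ is an isometric embedding. This means that for every $g\in\mathcal{H}$, $\|A(g)\|_{L^2_\mu(\mathcal{X})}=\|g\|_\kappa$. Applying this to $g=f_N-f\in\mathcal{H}$ yields
\[
    \|A(f_N)-A(f)\|_{L^2_\mu(\mathcal{X})}
    =
    \|A(f_N-f)\|_{L^2_\mu(\mathcal{X})}
    =
    \|f_N-f\|_\kappa,
\]
which is precisely the claimed equality for every $N$.

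There is no real obstacle: the only conceptual point to verify is that the definition~\eqref{eq:regularity_transfering_operator} is well-posed on differences of elements of $\mathcal{H}$ (i.e., that $f_N-f\in\mathcal{H}$, which is immediate since $\mathcal{H}$ is a vector space) and that the isometry holds on all of $\mathcal{H}$ rather than merely on basis elements (which follows from the fact that the Mercer expansion~\eqref{eq:Mercer} gives an orthonormal basis and $A$ is defined by rescaling coefficients in this basis). Convergence of $\|f_N-f\|_\kappa\to 0$ then transfers verbatim to convergence of $\|A(f_N)-A(f)\|_{L^2_\mu(\mathcal{X})}\to 0$ at identical rate, as claimed.
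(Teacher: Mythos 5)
Your proof is correct and matches the paper's intended argument exactly: the paper presents this corollary as an immediate consequence of the isometric embedding property of $A$ established in Step 1 of the proof of Proposition~\ref{prop:regularity_translation}, combined with the linearity of $A$, which is precisely what you invoke. No gaps; the only cosmetic remark is that the target space should be written $L^2_{\mu}(\mathcal{X})$ as you do, rather than $L^2_{\mu}(\R^d)$ as in the corollary's statement.
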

We now are now prepared to prove Proposition~\ref{prop:regularity_translation}.
\begin{proof}[{Proof of Proposition~\ref{prop:regularity_translation}}]
We first show that $A$ is a linear isometric embedding (resp.\ linear isomorphism).  This allows us to relay the spectral decay rate of any function $f$ in $\mathcal{H}$ to that of its ``standardization'' $A(f)$ in $L^2_{\mu}(\mathcal{X})$.  We then use this identification, together with results from spectral theory (see~\cite{BorthwickSpectralTheorem_2020} for references) to obtain our conclusion.
\hfill\\
\textbf{Step 1 - $A$ is a Linear Isometric Embedding (Resp.\ Linear Isomorphism):}
\hfill\\
Observe that $A$ admits a linear left-inverse given by
\[
\begin{aligned}
B:L^2_{\mu}(\mathcal{X}) & \to \mathcal{H}\\
    f = \sum_{i=0}^{\infty} \langle f,\bar{\psi}_i\rangle_{L^2_{\mu}(\mathcal{X})}\, \bar{\psi}_i & \mapsto 
        \sum_{i=0}^{M}\,
            \langle f,\psi_i\rangle_{L^2_{\mu}(\mathcal{X})}
            \,
            \|\bar{\psi}_i\|_{L^2_{\mu}(\mathcal{X})}
            \,
            \psi_i
.
\end{aligned}
\]
In particular, $A$ is a linear isomorphism if and only if $M=\infty$; since otherwise $B$ is not a two-sided (linear) inverse.
Furthermore, $A$ is an isometric embedding (resp.\ isometric isomorphism when $M=\infty)$ since due to the following.  
For each $f\in \mathcal{H}$ we have that
\allowdisplaybreaks
\begin{align}
\label{eq:isometric_embedding_computation}
    \|A(f)\|_{L^2_{\mu}(\mathcal{X})}^2
    = &
    \sum_{i=0}^{M}\,
        \frac{
            |\langle f,\psi_i\rangle_{\kappa}|^2
                \,
            \|\psi_i\|_{\kappa}^2
        }{
            \|\bar{\psi}_i\|_{L^2_{\mu}(\mathcal{X})}^2
        }
        \,
        \|\bar{\psi}_i\|_{L^2_{\mu}(\mathcal{X})}^2
\\
    = &
    \sum_{i=0}^{M}\,
        \frac{
            |\langle f,\psi_i\rangle_{\kappa}|^2
            \cdot 1
        }{
            \|\bar{\psi}_i\|_{L^2_{\mu}(\mathcal{X})}^2
        }
        \,
        \|\bar{\psi}_i\|_{L^2_{\mu}(\mathcal{X})}^2
\\
\nonumber
    = &
    \sum_{i=0}^{M}\,
        \frac{
            |\langle f,\psi_i\rangle_{\kappa}|^2
        }{
            \|\bar{\psi}_i\|_{L^2_{\mu}(\mathcal{X})}^2
        }
        \,
        \|\bar{\psi}_i\|_{L^2_{\mu}(\mathcal{X})}^2
\\
\nonumber
    = &
    \sum_{i=0}^{M}\,
            |\langle f,\psi_i\rangle_{\kappa}|^2
            \,
            1
\\
\label{eq:orthonormalagain}
    = &
    \sum_{i=0}^{M}\,
            |\langle f,\psi_i\rangle_{\kappa}|^2
            \,
            \|\psi_i\|_{\kappa}^2
\\
\label{eq:isometric_embedding_computation__END}
    = &
    \|f\|_{\kappa}^2
,
\end{align}
where~\eqref{eq:isometric_embedding_computation} and~\eqref{eq:orthonormalagain} held by virtue of $(\psi_i)_{i=0}^M$ being an orthonormal basis of $\mathcal{H}$.
From~\eqref{eq:isometric_embedding_computation}-~\eqref{eq:isometric_embedding_computation__END} we conclude that for each $f\in \mathcal{H}$ we have $\|A(f)\|_{L^2_{\mu}(\mathcal{X})}=\|f\|_{\kappa}$.  Thus, $A$ is an isometric embedding.

\noindent \textbf{Finite Rank Kernel:} \hfill\\
Suppose that $M<\infty$.  We consider two cases in Assumption~\ref{ass:Bounded_Domain} separately.

Under Assumption~\ref{ass:Bounded_Domain} (ii), \citep[Corollary 9.28]{BorthwickSpectralTheorem_2020} guarantees that each $\bar{\psi}_0,\dots,\bar{\psi}_M$ is smooth.  Since any finite sum of real analytic functions is again smooth then, in this case, $A(f)$ is smooth.   

Under Assumption~\ref{ass:Bounded_Domain} (iii), \citep[Theorem 6.8]{BorthwickSpectralTheorem_2020} each $\bar{\psi}_0,\dots,\bar{\psi}_M$ is real-analytic.  Since any finite sum of real analytic functions is again real-analytic then $A(f)$ is real-analytic.

\noindent \textbf{Infinite-Finite Rank Kernel:} \hfill\\
Suppose that $M=\infty$ and fix $s>0$.  By definition of $(\psi_i)_{i=0}^{\infty}$ and of the Sobolev space $H^s(\mathcal{X})$ we have that: for each $f\in L^2_{\mu}(\mathcal{X})$
\begin{equation}
\label{eq:smoothness}
\begin{aligned}
    \|f\|^2_{H^s(\mathcal{X})} 
= &
    \sum_{i=0}^{\infty}\,
        |\langle f,\bar{\psi}_i\rangle_{L^2_{\mu}(\mathcal{X})}|^2
        \,
        \langle (1-\Delta)^s \bar{\psi}_i,\bar{\psi}_i\rangle_{L^2_{\mu}(\mathcal{X})}
\\ 
    = &
    \sum_{i=0}^{\infty}\,
        |\langle f,\bar{\psi}_i\rangle_{L^2_{\mu}(\mathcal{X})}|^2
        (1-\bar{\lambda}_i)^s
    .
\end{aligned}
\end{equation}
Since $f\in L^2_{\mu}(\mathcal{X})$ if and only if $\|f\|_{H^s(\mathcal{X})}$ is well-defined and finite.
Consequentially, the elements of the Sobolev space $H^s(\mathcal{X})$ are characterized by the summability of the right-hand side of~\eqref{eq:smoothness}.  That is, for each $f\in L^2_{\mu}(\mathcal{X})$ one has
\begin{equation}
\label{eq:smoothness:0} 
    f \in H^s(\mathcal{X})
\Leftrightarrow
    \sum_{i=0}^{\infty}
    \,
    |\langle f,\bar{\psi}_i\rangle_{L^2_{\mu}(\mathcal{X})}|^2
        (1-\bar{\lambda}_i)^s
    <
    \infty
.
\end{equation}

By Assumption~\ref{ass:NonDegenL2Norms}, both $0<\underline{C}\eqdef \inf_{i\in [M]}\, \|
\bar{\psi}_i
\|_{L^2_{\mu}(\mathcal{X})}$ and $\bar{C}\eqdef \sup_{i\in [M]}\, \|
\bar{\psi}_i
\|_{L^2_{\mu}(\mathcal{X})}<\infty$ are well-defined, non-negative real numbers.  
Therefore, for every $i\in \N$
\allowdisplaybreaks
\begin{align}
\label{eq:Riesz_Type_Bounds}
        0
    <
        \frac{
            |\langle f,\psi_i\rangle_{\kappa}|^2
        }{
            \bar{C}
        }
    \le 
        \frac{
            |\langle f,\psi_i\rangle_{\kappa}|^2
        }{
            \|\bar{\psi}_i\|_{L^2_{\mu}(\mathcal{X})}
        }
    \le 
        \frac{
            |\langle f,\psi_i\rangle_{\kappa}|^2
        }{
            \underline{C}
        }
    <
        \infty
.
\end{align}
From~\eqref{eq:Riesz_Type_Bounds} we deduce that: for each $f\in L^2_{\mu}(\mathcal{X})$ and every $s>0$ 
\[
    \frac1{\overline{C}}
    \sum_{i=0}^{\infty}
    \,
            |\langle f,\psi_i\rangle_{\kappa}|^2
        \,
        (1-\bar{\lambda}_i)^s
\le 
    \sum_{i=0}^{\infty}
    \,
        \frac{
            |\langle f,\psi_i\rangle_{\kappa}|^2
        }{
            \|\bar{\psi}_i\|_{L^2_{\mu}(\mathcal{X})}
        }
        \,
        (1-\bar{\lambda}_i)^s
\le 
    \frac1{\underline{C}}
    \sum_{i=0}^{\infty}
    \,
        |\langle f,\psi_i\rangle_{\kappa}|^2
        \,
        (1-\bar{\lambda}_i)^s
.
\]
Consequentially,~\eqref{eq:smoothness} implies that
\begin{equation}
\label{eq:smoothness__as_kappa_criterion}
    A(f) \in H^s(\mathcal{X})
\Leftrightarrow
    \sum_{i=0}^{\infty}
    \,
        |\langle f,\psi_i\rangle_{\kappa}|^2
        \,
        (1-\bar{\lambda}_i)^s
    <
        \infty
    .
\end{equation}

\textbf{Step 2 - Asymptotics of Laplacian Eigenspectrum}
Under Assumption~\ref{ass:Bounded_Domain} (ii), Weyl's law for compact Riemannian manifolds (see e.g.~\citep[Corollary 9.35]{BorthwickSpectralTheorem_2020}) implies that there are dimensional (depending on $d$, $\mathcal{X}$, and on $g$) constants $0<C_{W:1}^g\le C_{W:2}^g<\infty$ such that: for each $i\in \N$
\begin{equation}
\label{eq:Weyl_saves_the_day__manifold}
    0
< 
    C_{W:1}^g
    \,
    \Biggl(
        \frac{i}{
        v_d \,V_g(\mathcal{X})
        }
    \Biggr)^{2/d}
\le 
    \bar{\lambda}_i 
\le 
    C_{W:2}^g
    \,
    \Biggl(
        \frac{i}{
        v_d \,V_g(\mathcal{X})
        }
    \Biggr)^{2/d}
<
    \infty
,
\end{equation}
where $v_d\eqdef V(\{x\in \mathbb{R}^d:\,\|x\|_2\le 1\}) = \frac{\pi^{d/2}}{\Gamma(1+d/2)}$ and where $V$ is the $d$-dimensional Lebesgue measure on $\mathbb{R}^d$.

Under Assumption~\ref{ass:Bounded_Domain} (iii), Weyl's law (see e.g.~\citep[Theorem 6.27]{BorthwickSpectralTheorem_2020}) implies that: there are dimensional (depending on $d$ and on $\mathcal{X}$) constants $0<\tilde{C}_{W:1}\le \tilde{C}_{W:2}<\infty$ such that: for each $i\in \N$
\begin{equation}
\label{eq:Weyl_saves_the_day__EuclideanDomain}
    0
< 
    \tilde{C}_{W:1}
    \,
    \Biggl(
        \frac{i}{
        v_d \,V(\mathcal{X})
        }
    \Biggr)^{2/d}
\le 
    \bar{\lambda}_i 
\le 
    \tilde{C}_{W:2}
    \,
    \Biggl(
        \frac{i}{
        v_d \,V(\mathcal{X})
        }
    \Biggr)^{2/d}
<
    \infty
.
\end{equation}

With some abuse of notation, for $i=1,2$, we write $C_{W:i}$ for $C_{W:i}^g$ or $\tilde{C}_{W:i}$ depending on which of Assumptions~\ref{ass:Bounded_Domain} (ii) or (iii) are being used.  We therefore, wild a mild abuse of notation, we collect~\eqref{eq:Weyl_saves_the_day__manifold} and~\eqref{eq:Weyl_saves_the_day__EuclideanDomain} into the following single expression:
\begin{equation}
\label{eq:Weyl_saves_the_day}
    0
< 
    \tilde{C}_{W:1}
    \,
    \Big(
        c
        \, i
        \frac1{
        v_d \,V(\mathcal{X})
        }
    \Big)^{2/d}
\le 
    \bar{\lambda}_i 
\le 
    \tilde{C}_{W:2}
    \,
    \Big(
        c\, 
        i
    \Big)^{2/d}
<
    \infty
,
\end{equation}
where the constant $c>0$ is defined by
\[
    c
    \eqdef 
    \begin{cases}
         \frac{\pi^{d/2}}{
            \Gamma(1+d/2)
            \,
            V_g(\mathcal{X})
         }
         & \mbox{: if Assumptions~\ref{ass:Bounded_Domain} (ii) holds}
        \\
        \frac{\pi^{d/2}}{
            \Gamma(1+d/2)
            \,
            V(\mathcal{X})
         }
         & \mbox{: if Assumptions~\ref{ass:Bounded_Domain} (iii) holds} 
    \end{cases}
.
\]

Combining~\eqref{eq:smoothness__as_kappa_criterion} with~\eqref{eq:Weyl_saves_the_day}, we find that
\begin{equation}
\label{eq:smoothness__as_kappa_criterion___withWeylsLaw___pre}
    A(f) \in H^s(\mathcal{X})
\Leftrightarrow
    \sum_{i=0}^{\infty}
    \,
        |\langle f,\psi_i\rangle_{\kappa}|^2
        \,
        (c\,i)^{2s/d}
    <
        \infty
    .
\end{equation}
Pulling out the common $c^{2s/d}$ factor in all summands on the right-hand side of~\eqref{eq:smoothness__as_kappa_criterion___withWeylsLaw___pre} we find that: for any $s>0$ any $f\in L^2_{\mu}(\mathcal{X})$ belongs to $H^s(\mathcal{X})$ if and only if 
\begin{equation}
\label{eq:smoothness__as_kappa_criterion___withWeylsLaw}
    A(f) \in H^s(\mathcal{X})
\Leftrightarrow
    \sum_{i=0}^{\infty}
    \,
        |\langle f,\psi_i\rangle_{\kappa}|^2
        \,
        i^{2s/d}
    <
        \infty
    .
\end{equation}
\begin{enumerate}
    \item \textbf{Sub-Exponential Decay:} Suppose that there exists an $r>0$ such that $|\langle f,\psi_i\rangle_{\kappa}|\lesssim i^{-r}$ and for every $\varepsilon>0$ we have $
    \lim\limits_{i\to\infty}\, |\langle f,\psi_i\rangle_{\kappa}|\,i^{-r+\varepsilon} = \infty
    $ then~\eqref{eq:smoothness__as_kappa_criterion___withWeylsLaw} implies that
    \[
    A(f)\in H_0^{\frac{(1+r)d}{2}}(\mathcal{X}) \setminus \bigcap_{\varepsilon>0}\, H_0^{\frac{(1+r)d}{2}+\varepsilon}(\mathcal{X}).
    \]
    If Assumption~\ref{ass:Bounded_Domain} (ii) or (iii) holds the Sobolev embedding theorem for Riemannian manifolds with boundary, as formulated in~\citep[Theorem 9.26]{BorthwickSpectralTheorem_2020}, implies that $H_0^{s_r}(\mathcal{X})\subseteq C^{k_r}(\mathcal{X})$ where $s_r-d/2>k_r$ for some $k_r\ge 0$ to be determined shortly; where for us $s_r\eqdef (1+r)d/2$.  Therefore, $
        dr/2>k_r.
    $  Consequentially, for each $f\in \mathcal{H}$ we have that
    \[
        A(f) \in \bigcap_{rd/2 >k} \,C^k(\mathcal{X})
        .
    \]
    \item \textbf{Exponential Decay:} Suppose that there exists some $r>0$ such that $|\langle f,\psi_i\rangle_{\kappa}|\lesssim e^{-r\,i}$.  Then, then~\eqref{eq:smoothness__as_kappa_criterion___withWeylsLaw} implies that
    $
    A(f)\in \cap_{s>0}\, H_0^{s}(\mathcal{X})
    .
    $  By the Sobolev Embedding Theorem, in the bounded case (\citep[Theorem 9.2.6]{BorthwickSpectralTheorem_2020}) 
    \[
    A(f)\in C^{\infty}(\overline{\mathcal{X}})
    .
    \]
    \qedhere
\end{enumerate}
\end{proof}

\newpage
\section{Master inequalities} \label{section:non_asymptotic_bound}
In this section, we will present and proof the Master inequalities, which is crucial in our argument.

Our approach bounds each term in the bias-variance decomposition in line (\ref{line:bias_variance_decomposition}):
\begin{equation*}
        \mathcal{R} 
        = \Expect{\ep}{\|\hat{\th}(\X\th^*+\ep)-\th^*\|_{\Si}^2}
        =
        \B + \V,
\end{equation*}
under minimal assumptions. 

\subsection{More notations}
We first introduce some more notations.
We define the empirical covariance matrix:
\begin{equation*}
    \hat{\Si} \eqdef \frac{1}{n} \X^\top\X = \frac{1}{n} \sum_{i=1}^n \x_i\x_i^\top \in \R^{p\times p},
\end{equation*}
and denote by 
\begin{equation*}
    \z\eqdef \Si^{-1/2}\x\in\R^{p}
\end{equation*} 
the whitened/isotropic feature vector where $\Expect{\mu}{\z\z^\top}=\I_p\in\R^{p\times p}$.
Denote the whitened input matrix by 
$
    \Z\eqdef\X\Si^{-1/2}\in\R^{n\times p},
$
then the rows of $\Z$ are isotropic centered i.i.d. random vectors.

We introduce the notation of a generalized vector norm, which will be frequently used in this paper:
\begin{definition}[Generalized vector norm] \label{definition:norm}
    We denote $\|\mathbf{v}\|_\mathbf{M}\eqdef \sqrt{\mathbf{v}^\top\mathbf{M}\mathbf{v}}$ for any positive definite matrix $\mathbf{M}$ and vector $\mathbf{v}$.
\end{definition}
\begin{remark}[Examples]
    Let $\v\in\R^p$ and we have $\eunorm{\v}^2=\v^\top\I_p\v=\norm{\v}{\I_p}^2$, where $\I_p$ is the identity matrix. Another example is that the euclidean norm of $\x$ equals to the $\Si$-norm of its whitened counterpart $\z$:
    \begin{equation*}
        \eunorm{\x}^2=\x^\top\x=\z^\top\Si\z=\norm{\z}{\Si}^2.
    \end{equation*}
    Furthermore, if $\M_1\preccurlyeq\M_2$, we have $\norm{\v}{\M_1} \leq \norm{\v}{\M_2}$.
    In particular, since $\Si\preccurlyeq\lambda_1\I_p$, we have
    \begin{equation*}
        \norm{\v}{\Si} \leq \sqrt{\lambda_1} \norm{\v}{2}. 
    \end{equation*}
    Lastly, it is convenient to write the expected values in a generalized vector norm: fix a distribution $\mu$, an input block $\X\in\R^{n \times p}$, where its rows are drawn i.i.d. by $\mu$, and denote by $\hat{\mu}=\frac{1}{n}\sum_{i=1}^n \delta_{\x_i} $ the empirical distribution of $\mu$, then for any $\v\in\R^p$, we have
    \begin{equation*}
        \Expect{\x\sim\mu}{(\x^\top\v)^2}
        =
        \norm{\v}{\Si}^2;\ \quad
        \Expect{\x\sim\hat{\mu}}{(\x^\top\v)^2} 
        =
        \norm{\v}{\hat{\Si}}^2.
    \end{equation*}
\end{remark}

\begin{definition}[Truncated Terms]
\label{def:truncation}
    For any vector $\v\in\R^M$, write $\v = \v_{\leq k} \dirsum \v_{>k}\in\R^k\dirsum\R^{M-k}$. 
    For any square matrix $\M\in\R^{M\times M}$, write $\M
    =
    \begin{pmatrix}
        \M_{\leq k} & * \\
        * & \M_{>k}
    \end{pmatrix}
    $ with $\M_{\leq k}\in\R^{k\times k}$ and $\M_{> k}\in\R^{(M-k)\times (M-k)}$.
    Analogously, for any non-square matrix $\M\in\R^{N\times M}$, we write $\M = \M_{\leq k} \dirsum \M_{>k}\in\R^{N\times k}\dirsum\R^{N\times(M-k)}$. Also write $\M_{-k} \eqdef \M_{\leq k-1} \dirsum \M_{>k}\in\R^{N\times (M-1)}$ and $\M_{l:j}=[\M_{i,k}]_{k=l+1}^j\in\R^{N\times{(j-l)}}$.
\end{definition}

Denote $\A\eqdef\X\X^\top + n\lambda\I_n\in\R^{n\times n}$. For any $k\in\N$, denote 
\begin{equation*}
    \A_k = \X_{>k}\X_{>k}^\top + n\lambda\I_n\in\R^{n\times n}.
\end{equation*} 
The symmetric matrix $\A_k$ plays an important role in the analysis. Intuitively, when $k$ is chosen appropriately, $\A_k$ is approximately equal to a scaled identity, with spectrum bounded by some constants from below and from above.

We introduce a few more definitions that have been used in the previous literature.
\begin{definition}[Effective ranks \citep{bartlett2020benign}]
    Let $k\in\N$. Define two quantities:
    \begin{equation*}
        r_k \eqdef \frac{\tr[\Si_{>k}]}{\opnorm{\Si_{>k}}} = \frac{\sum_{l=k+1}^p \lambda_l}{\lambda_{k+1}} ,\ \quad 
        R_k \eqdef \frac{\tr[\Si_{>k}]^2}{\tr[\Si_{>k}^2]} = \frac{\left(\sum_{l=k+1}^p \lambda_l\right)^2}{\sum_{l=k+1}^p \lambda_l^2}.
    \end{equation*}
    Note that we have $r_k\leq R_k\leq r_k^2$.
    \footnote{See \cite{bartlett2020benign} for details.}
\end{definition}

\begin{definition}[Concentration coefficients \cite{barzilai2023generalization}] \label{definition:concentration_coefficients}
    Define the quantities:
    \footnote{
        Alternatively, one can define a smaller
        $\rho_{n,k} 
        \eqdef
        \frac{\max\{n\opnorm{\Si_{>k}},s_1(\A_k)\}}{s_n(\A_k)}
        \leq
        \frac{n\opnorm{\Si_{>k}}+s_1(\A_k)}{s_n(\A_k)}$. 
        If $s_1(\A_k)=\Theta(n\lambda_k)=\Theta(n\opnorm{\Si_{>k}})$, the two definitions differs only by a constant. 
        }
    \begin{equation*}
        \rho_{n,k} 
        \eqdef
        \frac{n\opnorm{\Si_{>k}}+s_1(\A_k)}{s_n(\A_k)};\
        \quad        
        \zeta_{n,k} 
        \eqdef 
        \frac{s_1(\Z_{\leq k}^\top \Z_{\leq k})}{s_k(\Z_{\leq k}^\top \Z_{\leq k})};\ 
        \quad
        \xi_{n,k}
        \eqdef 
        \frac{s_1(\Z_{\leq k}^\top \Z_{\leq k})}{n}.
    \end{equation*}
    where $s_i(\cdot)$ denotes the $i$-th largest singular value of the matrix.
\end{definition}
 To enhance readability, we will omit the subscript $\cdot_{n,k}$ when the context is clear. The intuition of the concentration coefficients is that for fixed $k\in\N$ and $n$ large enough, $\zeta$ and $\xi$ concentrate around some constants; when $k$ is smaller but scales with $n$, $\rho$ converges to a constant, as it essentially represents the condition number of $\mathbf{A}_k$. As mentioned earlier, the spectrum of $\mathbf{A}_k$ is bounded. Therefore, selecting an appropriate $k$ ensures that all three concentration coefficients converge to certain constants.

\subsection{Statements and proofs}
Now we are ready to present the Master inequalities, which are proved in \cite{tsigler2023benign, barzilai2023generalization}:
\begin{proposition}[Master inequality for $\B$, \cite{tsigler2023benign, barzilai2023generalization}] \label{proposition:bias:ub}
    Let $k\in\N$ be an integer.
    For any $\delta\in(0,1)$, with probability at least $1-\delta$,
    \begin{equation*} 
            \mathcal{B} 
            \leq 
            \left(\frac{1+\rho^2 \zeta^2 \xi^{-1}+\rho}{\delta}\right)\|\th_{>k}^*\|_{\Si_{>k}}^2 
            + 
            (\zeta^2\xi^{-2}+\rho \zeta^2\xi^{-1}) \frac{s_1(\A_k)^2}{n^2} \norm{\th^*_{\leq k}}{\Si_{\leq k}^{-1}}^2.
    \end{equation*}
\end{proposition}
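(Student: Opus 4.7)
The strategy is to rewrite the bias compactly as $\B = \|(\X^\top\A^{-1}\X - \I_p)\th^*\|_\Si^2$, where $\A = \X\X^\top + n\lambda\I_n$, and then split $\th^* = \th^*_{\leq k} \oplus \th^*_{>k}$. Using $(a+b)^2 \leq 2a^2 + 2b^2$ on the $\Si$-norm, the problem reduces to bounding separately a \emph{tail} contribution involving only $\th^*_{>k}$ and a \emph{head} contribution involving only $\th^*_{\leq k}$. This matches the structure of the conclusion, whose two summands correspond exactly to these two pieces. The decomposition $\A = \A_k + \X_{\leq k}\X_{\leq k}^\top$ is the bridge between the two regimes: the tail part will be handled by discarding $\X_{\leq k}\X_{\leq k}^\top$ (using $\A \succeq \A_k$), while the head part will require retaining it via Woodbury.

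For the tail, I would split $\|(\X^\top\A^{-1}\X - \I_p)\th^*_{>k}\|_\Si^2$ by triangle into $\|\X^\top\A^{-1}\X\th^*_{>k}\|_\Si^2$ and $\|\th^*_{>k}\|_\Si^2 = \|\th^*_{>k}\|_{\Si_{>k}}^2$; the latter reproduces the leading $1$ in the prefactor. For the former, I would pass to $\A^{-1} \preceq \A_k^{-1}$, then bound operator norms by the extreme singular values of $\A_k$ and of $\Z_{\leq k}^\top\Z_{\leq k}$ so that the quantity $\rho = (n\|\Si_{>k}\|+s_1(\A_k))/s_n(\A_k)$ emerges, together with $\zeta, \xi$ when passing through $\Z_{\leq k}$. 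The $\delta^{-1}$ inflation arises from a Markov step on an auxiliary random quantity (controlling a $\Si_{>k}$-weighted norm of $\X_{>k}\th^*_{>k}$ in expectation over the design), yielding the full prefactor $\delta^{-1}(1 + \rho + \rho^2\zeta^2\xi^{-1})$.

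For the head part $\|(\X^\top\A^{-1}\X - \I_p)\th^*_{\leq k}\|_\Si^2$, the central tool is the Sherman--Morrison--Woodbury identity
\[
    \A^{-1} \;=\; \A_k^{-1} - \A_k^{-1}\X_{\leq k}\bigl(\I_k + \X_{\leq k}^\top \A_k^{-1}\X_{\leq k}\bigr)^{-1}\X_{\leq k}^\top\A_k^{-1}.
\]
Substituting this into $\X^\top \A^{-1}\X\th^*_{\leq k} - \th^*_{\leq k}$ and using the whitening $\X_{\leq k} = \Z_{\leq k}\Si_{\leq k}^{1/2}$, the relevant inverse becomes $\bigl(\I_k + \Si_{\leq k}^{1/2}\Z_{\leq k}^\top\A_k^{-1}\Z_{\leq k}\Si_{\leq k}^{1/2}\bigr)^{-1}$. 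Its spectrum is sandwiched by combining $\rho$ (from $\A_k$) with the singular values of $\Z_{\leq k}^\top\Z_{\leq k}/n$, which are exactly $\zeta\xi$ and $\xi$. The outer $\Si_{\leq k}^{1/2}$ factors can then be absorbed into the norm $\|\th^*_{\leq k}\|_{\Si_{\leq k}^{-1}}$, and collecting the powers produces the prefactor $(\zeta^2\xi^{-2} + \rho\zeta^2\xi^{-1})\cdot s_1(\A_k)^2/n^2$ advertised in the statement.

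The main obstacle will be the algebraic bookkeeping in the head-term computation: after Woodbury expansion, several terms must be combined so that every factor of $\rho, \zeta, \xi$ appears with exactly the right exponent, and the $s_1(\A_k)^2/n^2$ scaling lands on the correct side. A secondary subtlety is isolating the Markov step so that $\delta^{-1}$ inflates only the tail summand and not the head, which forces the expectation to be taken over a quantity that depends on $\th^*_{>k}$ but not on $\th^*_{\leq k}$.
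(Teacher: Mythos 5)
Your plan is a from-scratch reconstruction of a deterministic five-term bound that the paper itself does not reprove: the paper imports it wholesale as Lemma~\ref{lemma:tsigler:bias:ub} (Lemma 28 of \cite{tsigler2023benign}), and its entire proof of Proposition~\ref{proposition:bias:ub} then consists of (i) rewriting the singular-value ratios in that lemma in terms of $\rho,\zeta,\xi$, (ii) regrouping, and (iii) a single Markov step (Lemma~\ref{lemma:barzilai:3}) replacing $\frac1n\eunorm{\X_{>k}\th^*_{>k}}^2$ by $\frac{1}{\delta}\norm{\th^*_{>k}}{\Si_{>k}}^2$, which is why only the tail summand acquires the $1/\delta$ --- a point you identify correctly. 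Your route is therefore genuinely different in that it attempts to rederive the imported lemma, and its skeleton (split $\th^*$ at $k$, use $\A=\A_k+\X_{\leq k}\X_{\leq k}^\top$, Woodbury for the head, $\A\succcurlyeq\A_k$ for the tail, Markov for the probability) does match the proof of that lemma in the cited reference.

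As a proof of the stated proposition, however, the proposal has concrete gaps. First, the opening step $(a+b)^2\leq 2a^2+2b^2$, followed by a second triangle-and-square split on the tail, cannot reproduce the stated constants: the coefficient of $\norm{\th^*_{>k}}{\Si_{>k}}^2$ contributed by the deterministic part is exactly $1$, and the underlying lemma achieves this without a crude quadratic split. Second, and more importantly, the head-term computation --- where essentially all of the work lies --- is deferred to ``algebraic bookkeeping''. In particular, after Woodbury the factor $\Si_{\leq k}^{1/2}$ sits \emph{inside} the inverse $(\I_k+\Si_{\leq k}^{1/2}\Z_{\leq k}^\top\A_k^{-1}\Z_{\leq k}\Si_{\leq k}^{1/2})^{-1}$ and cannot simply be ``absorbed into the norm''; one needs the additional monotonicity step $(\I_k+\M)^{-1}\preccurlyeq\M^{-1}$ for positive definite $\M$ to extract $\Si_{\leq k}^{-1/2}$ and land on $\norm{\th^*_{\leq k}}{\Si_{\leq k}^{-1}}^2$. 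Relatedly, the extreme eigenvalues of $\frac1n\Z_{\leq k}^\top\Z_{\leq k}$ are $\xi$ and $\xi/\zeta$, not $\zeta\xi$ and $\xi$. Finally, the split is not as clean as ``each summand of the conclusion comes from one block of $\th^*$'': in the imported lemma the tail target acquires the coefficient $\rho^2\zeta^2\xi^{-1}$ through the head design $\Z_{\leq k}$, and the head target acquires a term carrying $\opnorm{\Si_{>k}}$; these cross-interactions are exactly the bookkeeping you postpone, so the key inequality is asserted rather than established.
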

\begin{proof}
    We begin by bounding the terms that appear in Lemma \ref{lemma:tsigler:bias:ub}:
    plug in Definition \ref{definition:concentration_coefficients} of $\rho,\zeta,\xi$, since
    \begin{align*}
        &\frac{s_1(\A_k^{-1})^2}{s_n(\A_k^{-1})^2} \frac{s_1(\Z_{\leq k}^\top \Z_{\leq k})}{s_k(\Z_{\leq k}^\top \Z_{\leq k})^2}
        = 
        \frac{s_1(\A_k)^2}{s_n(\A_k)^2} 
        \frac{s_1(\Z_{\leq k}^\top \Z_{\leq k})^2}{s_k(\Z_{\leq k}^\top \Z_{\leq k})^2}
        \frac{n}{s_1(\Z_{\leq k}^\top \Z_{\leq k})} \frac{1}{n}
        \leq \rho^2 \zeta^2 \xi^{-1}\cdot\frac{1}{n}\\
        &\frac{1}{s_n(\A_k^{-1})^2s_k(\Z_{\leq k}^\top \Z_{\leq k})^2}
        =
        \frac{s_1(\A_k)^2}{s_k(\Z_{\leq k}^\top \Z_{\leq k})^2}
        =
        \frac{s_1(\A_k)^2}{n^2}
        \frac{s_1(\Z_{\leq k}^\top \Z_{\leq k})^2}{s_k(\Z_{\leq k}^\top \Z_{\leq k})^2}
        \frac{n^2}{s_1(\Z_{\leq k}^\top \Z_{\leq k})^2}
        =
        \frac{s_1(\A_k)^2}{n^2} \cdot \zeta^2 \xi^{-2}\\
        &\opnorm{\Si_{>k}} s_1(\A_k^{-1})
        =
        \frac{\opnorm{\Si_{>k}}}{s_n(\A_k)}
        =
        \frac{n\opnorm{\Si_{>k}}}{s_n(\A_k)} 
        \frac{1}{n}
        \leq
        \rho \cdot \frac{1}{n}\\
        &\opnorm{\Si_{>k}} \frac{s_1(\A_k^{-1})}{s_n(\A_k^{-1})^2} \frac{s_1(\Z_{\leq k}^\top \Z_{\leq k})}{s_k(\Z_{\leq k}^\top \Z_{\leq k})^2}
        =
        \frac{n\opnorm{\Si_{>k}}}{s_n(\A_k)}
        \frac{s_1(\A_k)^2}{n^2}
        \frac{s_1(\Z_{\leq k}^\top \Z_{\leq k})^2}{s_k(\Z_{\leq k}^\top \Z_{\leq k})^2}
        \frac{n}{s_1(\Z_{\leq k}^\top \Z_{\leq k})}
        \leq
        \rho \zeta^2 \xi^{-1} \frac{s_1(\A_k)^2}{n^2},
    \end{align*}
    we have 
    \begin{align} 
        \begin{split}
            \mathcal{B} 
            &\leq 
            \|\th_{>k}^*\|_{\Si_{>k}}^2 
            + \rho^2 \zeta^2 \xi^{-1} \frac{\eunorm{\X_{>k}\th_{>k}^*}^2}{n}\\
            &\quad+ 
            \zeta^2\xi^{-2} \frac{s_1(\A_k)^2}{n^2} \norm{\th^*_{\leq k}}{\Si_{\leq k}^{-1}}^2\\
            &\quad+
            \rho\frac{\eunorm{\X_{>k}\th_{>k}^*}^2}{n}\\
            &\quad+
            \rho \zeta^2\xi^{-1} \frac{s_1(\A_k)^2}{n^2}  \norm{\th^*_{\leq k}}{\Si_{\leq k}^{-1}}^2.
        \end{split} 
    \end{align}
    We regroup the terms into:
    \begin{equation} 
            \mathcal{B} 
            \leq 
            \|\th_{>k}^*\|_{\Si_{>k}}^2 
            + 
            (\rho^2 \zeta^2 \xi^{-1}+\rho) \frac{\eunorm{\X_{>k}\th_{>k}^*}^2}{n}
            + 
            (\zeta^2\xi^{-2}+\rho \zeta^2\xi^{-1}) \frac{s_1(\A_k)^2}{n^2} \norm{\th^*_{\leq k}}{\Si_{\leq k}^{-1}}^2.
    \end{equation}
    By Lemma \ref{lemma:barzilai:3}, we can further simplify the expression into a high probability bound: for any $\delta\in(0,1)$, with a probability at least $1-\delta$, it holds that
    \begin{equation} 
            \mathcal{B} 
            \leq 
            \left(1+\frac{\rho^2 \zeta^2 \xi^{-1}+\rho}{\delta}\right)\|\th_{>k}^*\|_{\Si_{>k}}^2 
            + 
            (\zeta^2\xi^{-2}+\rho \zeta^2\xi^{-1}) \frac{s_1(\A_k)^2}{n^2} \norm{\th^*_{\leq k}}{\Si_{\leq k}^{-1}}^2.
    \end{equation}
    Note that $1/\delta>1$ and we obtain the result.
\end{proof}
\begin{proposition}[Master inequality for $\V$, \cite{tsigler2023benign, barzilai2023generalization}] \label{proposition:variance:ub}
    Let $k\in\N$ be an integer. Then the variance is bounded by:
    \begin{equation*}
        \V / \sigma^2
        \leq
        \rho^2\left(
        \zeta^2\xi^{-1}\frac{k}{n}
        +
        \frac{\tr[\Z_{>k}\Si_{>k}^2\Z_{>k}^\top]}{n\tr[\Si_{>k}^2]} \frac{r_k(\Si)^2}{nR_k(\Si)}
        \right),
    \end{equation*}
    where $\sigma^2\eqdef\Expect{}{\epsilon^2}\geq0$ is the noise level.
\end{proposition}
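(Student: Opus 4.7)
My plan is to mirror the proof of Proposition~\ref{proposition:bias:ub}: begin from a deterministic ``raw'' inequality for $\V/\sigma^2$ expressed in terms of the singular values of $\A_k$ and $\Z_{\leq k}^\top\Z_{\leq k}$, which is already the content of the variance lemma in \cite{tsigler2023benign} (restated in \cite{barzilai2023generalization}), and then substitute in the definitions of $\rho,\zeta,\xi,r_k,R_k$ and regroup. The first step is to evaluate the inner expectations by writing $\V/\sigma^2 = \tr\bigl[\X\Si\X^\top \A^{-2}\bigr]$, which follows directly from $\hat{\th}(\ep)=\X^\top\A^{-1}\ep$, the independence of $\ep$ and $\X$, and $\Expect{\ep}{\ep\ep^\top}=\sigma^2\I_n$. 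Splitting $\X\Si\X^\top = \X_{\leq k}\Si_{\leq k}\X_{\leq k}^\top + \X_{>k}\Si_{>k}\X_{>k}^\top$ along the truncation level $k$ then partitions $\V/\sigma^2$ into a low-frequency and a high-frequency piece that will each produce one of the two summands in the target.

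For the high-frequency piece I would use $\A \succeq \A_k$ (since $\X_{\leq k}\X_{\leq k}^\top \succeq 0$), so that $\A^{-2}\preceq \A_k^{-2}\preceq s_n(\A_k)^{-2}\I_n$, and combine with the identity $\X_{>k}\Si_{>k}\X_{>k}^\top = \Z_{>k}\Si_{>k}^2\Z_{>k}^\top$ to obtain a bound of $s_n(\A_k)^{-2}\tr\bigl[\Z_{>k}\Si_{>k}^2\Z_{>k}^\top\bigr]$. Invoking the inequality $1/s_n(\A_k)^2 \leq \rho^2/(n\opnorm{\Si_{>k}})^2$ (immediate from Definition~\ref{definition:concentration_coefficients}) together with the elementary identity $r_k^2/R_k = \tr[\Si_{>k}^2]/\opnorm{\Si_{>k}}^2$ converts this into exactly $\rho^2 \cdot \frac{\tr[\Z_{>k}\Si_{>k}^2\Z_{>k}^\top]}{n\tr[\Si_{>k}^2]} \cdot \frac{r_k^2}{nR_k}$, which is the second summand in the statement. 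For the low-frequency piece, the block-inverse / Sherman--Morrison expansion of $\A^{-1}$ around $\A_k^{-1}$ (the same calculation that underlies the lemma already used for Proposition~\ref{proposition:bias:ub}) produces an expression of the form $s_1(\A_k^{-1})^2 \cdot k \cdot s_1(\Z_{\leq k}^\top\Z_{\leq k})/s_k(\Z_{\leq k}^\top\Z_{\leq k})^2$; multiplying and dividing by $n$ and invoking Definition~\ref{definition:concentration_coefficients} rewrites this as $\rho^2\zeta^2\xi^{-1}\,(k/n)$, the first summand.

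The principal obstacle is the raw low-frequency bound itself: one must check, via the Sherman--Morrison identity $(\A_k+\X_{\leq k}\X_{\leq k}^\top)^{-1} = \A_k^{-1} - \A_k^{-1}\X_{\leq k}(\I_k + \X_{\leq k}^\top\A_k^{-1}\X_{\leq k})^{-1}\X_{\leq k}^\top\A_k^{-1}$, that the induced quadratic form in $\A^{-2}$ decomposes into a factor controlled by the conditioning of $\A_k$ on the high-frequency side and a factor controlled by the conditioning of $\Z_{\leq k}^\top\Z_{\leq k}$ on the low-frequency side, with no mixed cross-terms surviving; this is the combinatorial heart of the argument and is precisely what \cite{tsigler2023benign} settled in their appendix. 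Note that, in contrast to Proposition~\ref{proposition:bias:ub}, no probabilistic Markov-type step is needed: the statement is already deterministic in $\X$ because the expectations over $\ep$ and $\x$ were computed exactly at the outset.
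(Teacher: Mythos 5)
Your proposal is correct and follows essentially the same route as the paper: the paper's proof also starts from the raw variance bound of \cite{tsigler2023benign} (restated as Lemma~\ref{lemma:tsigler:27}) and then substitutes the definitions of $\rho,\zeta,\xi$ together with the identities $\tr[\Z_{\leq k}\Z_{\leq k}^\top]\leq k\,s_1(\Z_{\leq k}^\top\Z_{\leq k})$, $\X_{>k}\Si_{>k}\X_{>k}^\top=\Z_{>k}\Si_{>k}^2\Z_{>k}^\top$, and $r_k^2/R_k=\tr[\Si_{>k}^2]/\opnorm{\Si_{>k}}^2$. Your additional sketch of how the underlying lemma itself is obtained (exact evaluation over $\ep$, the split along $k$, and the Sherman--Morrison step) goes slightly beyond what the paper writes out, but you correctly defer the hard cross-term bookkeeping to the cited reference, exactly as the paper does.
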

\begin{proof}
    We start from Lemma \ref{lemma:tsigler:27}:
    \begin{align*}
        \V / \sigma^2 
        &\leq
        \frac{s_1(\A_k^{-1})^2\tr[\X_{\leq k}\Si_{\leq k}^{-1}\X_{\leq k}^\top]}{s_n(\A_k^{-1})^2s_k(\Si_{\leq k}^{-1/2}\X_{\leq k}^\top\X_{\leq k}\Si_{\leq k}^{-1/2})^2}
        +
        s_1(\A_k^{-1})^2 \tr[\X_{>k}\Si_{>k}\X_{>k}^\top]\\
        &=
        \frac{s_1(\A_k)^2}{s_n(\A_k)^2} \frac{\tr[\Z_{\leq k}\Z_{\leq k}^\top]}{s_k(\Z_{\leq k}^\top\Z_{\leq k})^2}
        +
        \frac{n^2\opnorm{\Si_{>k}}^2}{s_n(\A_k)^2} \frac{\tr[\X_{>k}\Si_{>k}\X_{>k}^\top]}{n^2\opnorm{\Si_{>k}}^2} \\
        &\leq
        \rho^2\left(
        \frac{\tr[\Z_{\leq k}\Z_{\leq k}^\top]}{s_k(\Z_{\leq k}^\top\Z_{\leq k})^2}
        +
        \frac{\tr[\X_{>k}\Si_{>k}\X_{>k}^\top]}{n^2\opnorm{\Si_{>k}}^2}
        \right)\\
        &\leq 
        \rho^2\left(
        \frac{ks_1(\Z_{\leq k}^\top\Z_{\leq k})}{s_k(\Z_{\leq k}^\top\Z_{\leq k})^2}
        +
        \frac{\tr[\X_{>k}\Si_{>k}\X_{>k}^\top]}{n\tr[\Si_{>k}^2]} \frac{\tr[\Si_{>k}]^2}{n\opnorm{\Si_{>k}}^2} 
        \frac{\tr[\Si_{>k}^2]}{\tr[\Si_{>k}]^2}
        \right)\\
        &\leq
        \rho^2\left(
        \zeta^2\xi^{-1}\frac{k}{n}
        +
        \frac{\tr[\X_{>k}\Si_{>k}\X_{>k}^\top]}{n\tr[\Si_{>k}^2]} \frac{r_k(\Si)^2}{nR_k(\Si)}
        \right)\\
        &=
        \rho^2\left(
        \zeta^2\xi^{-1}\frac{k}{n}
        +
        \frac{\tr[\Z_{>k}\Si_{>k}^2\Z_{>k}^\top]}{n\tr[\Si_{>k}^2]} \frac{r_k(\Si)^2}{nR_k(\Si)}
        \right)
    \end{align*}
    where $\xi_{n,k}\eqdef\frac{s_1(\Z_{\leq k}^\top\Z_{\leq k})}{n}$,  $\zeta_{n,k}\eqdef\frac{s_1(\Z_{\leq k}^\top\Z_{\leq k})}{s_k(\Z_{\leq k}^\top\Z_{\leq k})}$, $\rho_{n,k}\eqdef \frac{n\opnorm{\Si_{>k}}+s_1(\A_k)}{s_n(\A_k)}$, $r_k\eqdef\frac{\tr[\Si_{>k}]}{\opnorm{\Si_{>k}}}$, $R_k\eqdef\frac{\tr[\Si_{>k}]^2}{\tr[\Si_{>k}^2]}$.
\end{proof}

\subsection{Sanity check}

Recall that $\B \eqdef \|\hat{\th}(\X\th^*)-\th^*\|_{\Si}^2$.
Let $k\in\N$ be an integer. For any $\delta\in(0,1)$, with probability at least $1-\delta$, we have
    \begin{equation} \label{line:bias}
            \mathcal{B} 
            \leq 
            \underbrace{\textcolor{gray}{\left(\frac{1+\rho^2 \zeta^2 \xi^{-1}+\rho}{\delta}\right)}}_{\text{constant}}
            \overbrace{\|\th_{>k}^*\|_{\Si_{>k}}^2}^{\text{target's tail}}
            + 
            \underbrace{\textcolor{gray}{(\zeta^2\xi^{-2}+\rho \zeta^2\xi^{-1})}}_{\text{constant}}
            \overbrace{\frac{s_1(\A_k)^2}{n^2} }^{\approx (\lambda_k +\lambda)^2} 
            \underbrace{\norm{\th^*_{\leq k}}{\Si_{\leq k}^{-1}}^2}_{\text{target's head}}.
    \end{equation}
We see that:
\begin{enumerate}
    \item the regressor barely learns the tail of the target (the eigenfunctions with small eigenvalues), thus the first term $\|\th_{>k}^*\|_{\Si_{>k}}^2$ in the upper bound is just a multiple of the $\Si$-norm square (or $L_\mu^2$-norm square in KRR setting) of the target's tail;
    \item the second term indicates that the regressor learns well if the ridge $\lambda$ is small, showing the trad-off between (noiseless) interpolation and regularization;
    \item the term $\norm{\th^*_{\leq k}}{\Si_{\leq k}^{-1}}^2$ corresponds to norm square of the target's head in the interpolation space $\mathcal{H}^2$. 
    {This term will explain the so-called saturation effect reported in \cite{li2022saturation} where the bias ceases to improve once the source coefficient $s$ in Assumption \ref{assumption:SC} surpasses 2. Consequently, the bias term $\mathcal{B}$ bound in Table \ref{tab:over_parameterized} is closely related to the constant $\min{s,2}$. }
\end{enumerate}

\cite{tsigler2023benign, barzilai2023generalization} also derived a non-asymptotic bound of $\V$. Recall that $\V \eqdef \Expect{\ep}{\|\hat{\th}(\ep)\|_{\Si}^2}.$
Let $k\in\N$ be an integer. Then
\begin{equation} \label{line:variance}
        \V
        \leq
        \underbrace{\sigma^2}_{\text{noise level}} 
        \textcolor{gray}{\rho^2}
        \left(
        \underbrace{\textcolor{gray}{\zeta^2\xi^{-1}}}_{\text{constant}}
        \frac{k}{n}
        +
        \underbrace{\frac{\tr[\Z_{>k}\Si_{>k}^2\Z_{>k}^\top]}{n\tr[\Si_{>k}^2]} \frac{r_k(\Si)^2}{nR_k(\Si)}}_{=\bigo{}{\frac{k}{n}}}
        \right).
\end{equation}
We see that:
\begin{enumerate}
    \item the upper bound is proportional to noise level $\sigma^2=\Expect{}{\epsilon^2}\geq0$;
    \item we will show later that for $n$ large enough, the second term is approximately $\frac{k}{n}$, hence the decay rate of $\V$ is at most $\bigo{n,k}{n^{-1}}$.
\end{enumerate}

The derivation of Eq (\ref{line:bias}) and (\ref{line:variance}), which we refer to as the \emph{Master inequalities} in this paper, is purely algebraic, and the statement holds regardless of the assumptions on the eigen-decay and the features. One can optimally choose the integer $k\in\N$ to achieve a tight upper bound, as we will demonstrate later in this paper. Surprisingly, the Master inequalities alone are sufficient to establish tight bounds for most cases under various settings.

\newpage
\section{Over-parameterized regime} \label{section:proof}
In this section, we provide the proof for the upper bounds presented in Table \ref{tab:over_parameterized}. For all the statements in this section, the big O notation $\bigot{}{\cdot}$ can be replaced by $\bigo{}{\cdot}$ if $a+2\neq 2r$.

A brief summary of the propositions can be found in Table \ref{tab:overparameterized:propositions}.

\begin{table}[ht]
\centering
\begin{tabular}{cc|cc|cc}
\hline
\multicolumn{2}{c}{Ridge}    & \multicolumn{2}{c}{strong} & \multicolumn{2}{c}{weak} \\ \hline
\multicolumn{2}{c}{Feature}  & \ref{assumption:IF}  & \ref{assumption:GF}   & \ref{assumption:IF}  & \ref{assumption:GF}\\
\multirow{2}{*}{Poly  \ref{assumption:PE}} & $\B$ & \multicolumn{2}{c|}{Proposition \ref{proposition:bias:ub:asymtpotic:ridge}}  & Proposition \ref{proposition:bias:ub:asymptotic:ridgeless:0} & Proposition \ref{proposition:bias:ub:asymptotic:ridgeless}+\ref{proposition:bias:ub:asymptotic:ridgeless:0} \\
                      & $\V$ &   \multicolumn{2}{c|}{Proposition \ref{proposition:variance:ub:asymptotic:ridge}}   &  \multicolumn{2}{c}{Proposition \ref{proposition:variance:ub:asymptotic:ridgeless}}\\
\multirow{2}{*}{Exp  \ref{assumption:EE}} & $\B$ & \multicolumn{2}{c|}{Proposition \ref{proposition:bias:ub:asymtpotic:ridge:exp}} &  \multicolumn{2}{c}{Proposition \ref{proposition:bias:ub:asymptotic:ridgeless}}\\
                      & $\V$ & \multicolumn{2}{c|}{Proposition \ref{proposition:variance:asymptotic:ridge:exp}} &  \multicolumn{2}{c}{-}  \\ 
                      \cline{1-6} 
\end{tabular}
\caption{Brief summary of propositions derived in Section \ref{section:proof} used for proving the upper bounds on Table \ref{tab:over_parameterized}.}
\label{tab:overparameterized:propositions}
\end{table}

\subsection{Bias under strong ridge} \label{subsection:bias:ridge}

\begin{proposition}[Asymptotic upper bound of the bias term with polynomial decay] \label{proposition:bias:ub:asymptotic}
    Suppose: $\lambda_k=\bigtheta{k}{k^{-1-a}}$,    $|\theta^*_k|=\bigo{k}{k^{-r}}$ and $\lambda=\bigtheta{k}{k^{-b}}$ for some $a,b,r>0$. 
    Furthermore, suppose Assumption \ref{assumption:GF} (or resp. \ref{assumption:IF}) holds, and $\delta,\xi,\zeta = \bigtheta{n,k}{1}$ with probability at least $1-\varphi$. Then, for any $t\in(0,1]$, with probability at least $1-\delta-\varphi$ (or resp. $1-\delta-\varphi-e^{-n}$), 
     the bias is bounded by:
     \begin{equation*}
         \mathcal{B}
         =
         \bigot{n}{\rho^3n^{-C}}
     \end{equation*}
     for some constant 
     \begin{equation} \label{line:C:definition}
         C
         \eqdef
         \min\{t(2r+a), 2\min\{b,1+ta\}-t(2+a-2r)_+\}>0
     \end{equation} 
     with with following lower bound:
    \begin{equation} \label{line:C:lb}
        C
        \geq
        \begin{cases}
            \min\{t(2r+a)-2(ta+1-b),2(b-1+t)\} &,\ b<1+ta;\\
            \min\{ t(2r+a),2t(1+a) \} &,\  b\geq1+ta. 
        \end{cases}
    \end{equation}
\end{proposition}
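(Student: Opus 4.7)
The plan is to plug in the polynomial decay rates into the Master inequality (Proposition~\ref{proposition:bias:ub}) and optimize over the truncation level $k=\lfloor n^t\rfloor$. Under the hypothesis that $\delta^{-1},\xi,\zeta=\bigtheta{n,k}{1}$ with high probability, the Master inequality collapses to
\[
\mathcal{B} \leq c\left(\rho^2\,\|\th_{>k}^*\|_{\Si_{>k}}^2 + \rho\,\frac{s_1(\A_k)^2}{n^2}\,\norm{\th_{\leq k}^*}{\Si_{\leq k}^{-1}}^2\right),
\]
for an absolute constant $c>0$, holding with the same probability under Assumption~\ref{assumption:GF} (and with an additional $e^{-n}$ slack under Assumption~\ref{assumption:IF} coming from the sub-Gaussian concentration of $\xi,\zeta$, see Section~\ref{section:concentration}).

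The two $\th^*$-factors are read off directly from the assumed polynomial decays: $\|\th_{>k}^*\|_{\Si_{>k}}^2 = \bigo{k}{\sum_{j>k} j^{-1-a-2r}} = \bigo{k}{k^{-(a+2r)}}$ and $\norm{\th_{\leq k}^*}{\Si_{\leq k}^{-1}}^2 = \bigot{k}{k^{(2+a-2r)_+}}$, with a logarithmic correction only at the boundary $a+2=2r$ that $\bigot{}{\cdot}$ absorbs. For the operator-norm factor we exploit the very definition of $\rho$:
\[
s_1(\A_k) \;\leq\; \rho\,s_n(\A_k) \;\leq\; \rho\cdot\frac{\tr[\A_k]}{n} \;=\; \rho\left(\frac{1}{n}\sum_{i=1}^n \eunorm{(\x_i)_{>k}}^2 + n\lambda\right),
\]
where the second inequality uses that the smallest eigenvalue of a PSD $n\times n$ matrix is at most its normalized trace. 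Assumption~\ref{assumption:GF} (or \ref{assumption:IF} via Section~\ref{section:concentration}) gives $\sum_i \eunorm{(\x_i)_{>k}}^2 = \bigo{n}{n\tr[\Si_{>k}]}$, so that choosing $k=\lfloor n^t\rfloor$ yields $s_1(\A_k)/n = \bigo{n}{\rho\,(n^{-(1+ta)}+n^{-b})} = \bigo{n}{\rho\,n^{-\min\{1+ta,b\}}}$.

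Combining these estimates,
\[
\mathcal{B} \;\leq\; \bigot{n}{\rho^2\,n^{-t(2r+a)} + \rho^3\,n^{-2\min\{1+ta,b\}+t(2+a-2r)_+}} \;\leq\; \bigot{n}{\rho^3\,n^{-C}},
\]
using $\rho\geq 1$ and the definition of $C$ in~(\ref{line:C:definition}). The lower bound~(\ref{line:C:lb}) on $C$ then follows by elementary case analysis on the signs of $1+ta-b$ and $2+a-2r$; for instance, if $b<1+ta$ and $2r<2+a$, one verifies the inequality $2b-t(2+a-2r)\geq t(2r+a)-2(ta+1-b)$, which reduces to $t\leq 1$, and the three remaining cases are analogous. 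The main obstacle is not the algebra but rather controlling $s_1(\A_k)$ uniformly in $k=n^t$ under the weaker Assumption~\ref{assumption:GF}; this is precisely what the trace-concentration results of Section~\ref{section:concentration} provide when they justify treating $\xi,\zeta,\delta$ as constants.
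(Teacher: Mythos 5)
Your proposal is correct and follows essentially the same route as the paper: it invokes the Master inequality with $\delta^{-1},\xi,\zeta$ treated as constants, bounds $s_1(\A_k)$ by $\rho\cdot\tr[\A_k]/n$ (which is exactly the content of Lemma~\ref{lemma:barzilai:theorem:2}), plugs in the polynomial rates for $\|\th^*_{>k}\|_{\Si_{>k}}^2$ and $\norm{\th^*_{\leq k}}{\Si_{\leq k}^{-1}}^2$ with $k=\lfloor n^t\rfloor$, and reduces the lower bound on $C$ to the same $t\leq 1$ case analysis. The only minor quibble is that the extra $e^{-n}$ slack under Assumption~\ref{assumption:IF} is more precisely attributed to the concentration of $\frac1n\sum_i\eunorm{(\x_i)_{>k}}^2$ around $\tr[\Si_{>k}]$ (Lemma~\ref{lemma:A}) rather than to $\xi,\zeta$, but this does not affect the argument.
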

\begin{proof}
    For simplicity, we only prove the statement for all $a,r$ except when $a+2= 2r$. In such case, the argument follows similarly with $\bigo{}{\cdot}$ replaced by $\bigot{}{\cdot}$.
    
    For any $k,n\in\N$, pick $k=\bigthetat{n,k}{n^{t}}$ for some $t\in(0,1]$. Suppose the event $\delta,\xi,\zeta = \bigtheta{n,k}{1}$ happens, then:
    \begin{align}
        &\mathcal{B} \nonumber \\ 
        &=
        \bigo{n,k}{
        \left(\frac{1+\rho^2 \zeta^2 \xi^{-1}+\rho}{\delta}\right)\norm{\th^*_{>k}}{\Si_{>k}}^2 
        + 
        (\rho^2\zeta^2\xi^{-2}+\rho^3\zeta^2\xi^{-1}) \left(n^{-b}+\frac{ c_2\tr[\Si_{>k}]}{n}\right)^2 \norm{\th^*_{\leq k}}{\Si_{\leq k}^{-1}}^2}\label{line:bias:ub:asymptotic:1}\\
        &=
        \bigo{n,k}{
        \rho^3 \left(k^{-(2r+a)} + \left(n^{-b}+\frac{k^{-a}}{n}\right)^2 k^{(2+a-2r)_+}\right)
         }\label{line:bias:ub:asymptotic:2} \\
         &= 
        \bigo{n,k}{
        \rho^3 \left(n^{-t(2r+a)} + \left(n^{-b}+n^{-(1+at)}\right)^2 n^{t(2+a-2r)_+}\right)
         } \label{line:bias:ub:asymptotic:3}\\ 
         &= 
        \bigo{n,k}{ \rho^3\left(n^{-t(2r+a)}  
        + 
        n^{-2\min\{b,1+ta\}} n^{t(2+a-2r)_+}\right)}\nonumber\\
        &=
        \bigo{n,k}{ \rho^3\left(n^{-t(2r+a)}  
        + 
        n^{-2\min\{b,1+ta\}+t(2+a-2r)_+}\right)}\nonumber\\
        &= 
        \bigo{n,k}{ \rho^3\left(n^{-\min\{t(2r+a), 2\min\{b,1+ta\}-t(2+a-2r)_+\}} \right)} \nonumber
    \end{align}
    where we apply Proposition \ref{proposition:bias:ub}, Lemma \ref{lemma:barzilai:theorem:2} in line (\ref{line:bias:ub:asymptotic:1}); we apply Lemma \ref{lemma:barzilai:18}: $\norm{\th^*_{\leq k}}{\Si_{\leq k}^{-1}}^2=\bigtheta{k}{k^{(2+a-2r)_+}}$
    where ${x}_+ \eqdef \max\{x,0\}$, as well as plug in $\tr[\Si_{>k}]=\bigtheta{k}{k^{-a}}$ in line (\ref{line:bias:ub:asymptotic:2}); we plug in $k=\bigthetat{n,k}{n^{t}}$ for some $t\in(0,1]$ in line (\ref{line:bias:ub:asymptotic:3}). 
    
    Hence the bias $\mathcal{B}$ has a decay in form of $\bigo{n}{\rho^3n^{-C}}$ where 
    \begin{equation*}
        C
        \eqdef 
        \min\{t(2r+a), 2\min\{b,1+ta\}-t(2+a-2r)_+\}\geq 0
    \end{equation*} 
    is a constant defined in nested minima and maxima. To bound $C$ from below, we make two case distinctions: $b<1+ta$ or $b\geq 1+ta$.
    
    For $b<1+ta$, we have
    \begin{align*}
        C
        &=
        \min\{t(2r+a), 2b-t(2+a-2r)_+\} \\
        &=
        \begin{cases}
            \min\{t(2r+a), 2b-t(2+a-2r)\} &,\ 2+a\geq 2r\\
            \min\{t(2r+a), 2b\} &,\ 2+a<2r
        \end{cases}\\
        &=
        \begin{cases}
            \min\{t(2r+a), t(2r+a)+2(\overbrace{1-t}^{\geq0})-2(ta+1-b)\} &,\ 2+a\geq 2r\\
            \min\{t(2r+a), 2b\} &,\ 2+a<2r
        \end{cases}\\
        &\geq 
        \begin{cases}
            \min\{t(2r+a), t(2r+a)-2(\overbrace{ta+1-b}^{\geq0})\} &,\ 2+a\geq 2r\\
            \min\{t(2+a+a), 2b\} &,\ 2+a<2r
        \end{cases}\\
        &=
        \begin{cases} 
            t(2r+a)-2(ta+1-b) &,\ 2+a\geq 2r\\
            2\min\{ta+t, b\} &,\ 2+a<2r
        \end{cases}\\
        &\geq 
        \begin{cases} 
            t(2r+a)-2(ta+1-b) &,\ 2+a\geq 2r\\
            2\min\{b-1+t, b\} &,\ 2+a<2r
        \end{cases}\\
        &=
        \begin{cases}  
            t(2r+a)-2(ta+1-b) &,\ 2+a\geq 2r\\
            2(b-1+t) &,\ 2+a<2r
        \end{cases}\\
        &\geq
        \min\{t(2r+a)-2(ta+1-b),2(b-1+t)\}.
    \end{align*}

    For $b\geq 1+ta$, we have
    \begin{align*}
        C
        &=
        \min\{t(2r+a), 2(1+ta)-t(2+a-2r)_+\} \\
        &=
        \begin{cases}
            \min\{t(2r+a), 2(1+ta)-t(2+a-2r)\} &,\ 2+a\geq 2r\\
            \min\{t(2r+a), 2(1+ta)\} &,\ 2+a<2r
        \end{cases}\\
        &\geq
        \begin{cases}
            \min\{t(2r+a), t(2r+a)+2(1-t)\} &,\ 2+a\geq 2r\\
            \min\{t(2+a+a), 2(1+ta)\} &,\ 2+a<2r
        \end{cases}\\
        &= 
        \begin{cases}
            t(2r+a) &,\ 2+a\geq 2r\\
            \min\{2(t+ta), 2(1+ta)\} &,\ 2+a<2r
        \end{cases}\\
        &=
        \begin{cases}
            t(2r+a) &,\ 2+a\geq 2r\\
            2(t+ta)&,\ 2+a<2r
        \end{cases}\\
        &\geq
        \min\{ t(2r+a),2t(1+a) \}.
    \end{align*}
    Replacing $C$ by its lower bound, we obtain the claimed decay.
\end{proof}

\begin{proposition}[Asymptotic upper bound of bias under strong ridge and polynomial eigen-decay]
\label{proposition:bias:ub:asymtpotic:ridge}
    Suppose: $\lambda_k=\bigtheta{k}{k^{-1-a}}$, $|\theta^*_k|=\bigo{k}{k^{-r}}$ and $\lambda=\bigtheta{k}{k^{-b}}$ for some $a,r>0$ and $b\in(0,1+a]$. 
    Fix a constant $\delta\in(0,1)$.
    If, additionally, Assumption \ref{assumption:GF} (or resp. \ref{assumption:IF}) holds, then, with probability at least $1-\delta-o\left(n^{-1}\right)$ (or resp. $1-\delta-3e^{-c_1n}$), 
     the bias is bounded by:
     \begin{equation*}
         \mathcal{B}
         =
         \bigot{n}{n^{-\min\{sb,2b\}}}
     \end{equation*}
     with $s\eqdef\frac{2r+a}{1+a}$.
\end{proposition}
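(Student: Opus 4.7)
The plan is to invoke Proposition \ref{proposition:bias:ub:asymptotic} with a carefully chosen truncation exponent $t \in (0,1]$, and to combine it with the fact that, under a strong ridge, the coefficient $\rho$ stays of constant order with high probability. The natural choice is $t = \frac{b}{1+a}$, which makes $n^{-t(1+a)} = n^{-b}$, i.e.\ the effective per-sample spectral tail $\lambda_{k+1} \asymp k^{-1-a}$ balances the ridge $\lambda \asymp n^{-b}$ when $k \asymp n^{t}$. Since strong ridge means $b \leq 1+a$, one checks that $b \leq 1 + ta = \frac{1+a+ab}{1+a}$, placing us in the first branch of equation (\ref{line:C:definition}).

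Plugging $t = \frac{b}{1+a}$ into (\ref{line:C:definition}) yields $t(2r+a) = bs$ for $s = \frac{2r+a}{1+a}$, while $2\min\{b,1+ta\} - t(2+a-2r)_+ = 2b - t(2+a-2r)_+$ equals $bs$ when $s \leq 2$ (a direct algebraic check: $\frac{2b(1+a) - b(2+a-2r)}{1+a} = \frac{b(a+2r)}{1+a} = bs$), and equals $2b$ when $s > 2$. Either way $C = b\min\{s,2\}$, which is exactly the claimed exponent.

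The remaining ingredient is control of the $\rho^3$ prefactor. Under strong ridge, $s_n(\A_k) \geq n\lambda$ deterministically, and the concentration results of Section \ref{section:concentration} give $s_1(\X_{>k}\X_{>k}^\top) \lesssim n(\lambda + \|\Si_{>k}\|_{\mathrm{op}})$ with high probability. Since $\|\Si_{>k}\|_{\mathrm{op}} \asymp k^{-1-a} \asymp n^{-b}$ at our choice of $k$, we get $\rho = \frac{n\|\Si_{>k}\|_{\mathrm{op}} + s_1(\A_k)}{s_n(\A_k)} = O(1)$ with the probability budget stated: $o(n^{-1})$ loss under Assumption \ref{assumption:GF}, and $3e^{-c_1 n}$ loss under Assumption \ref{assumption:IF}. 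Likewise $\zeta, \xi = \Theta(1)$ under the same concentration. Combining these facts with Proposition \ref{proposition:bias:ub:asymptotic} gives $\mathcal{B} = \bigot{n}{n^{-\min\{sb, 2b\}}}$, as desired.

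The main obstacle is really not the algebra (which is routine optimization in $t$ once the right branch is identified) but the probabilistic control of $\rho$: one must ensure that under the weaker generic-features assumption \ref{assumption:GF}, the extremal singular values of the random matrix $\A_k$ concentrate tightly enough to yield a constant-order $\rho$ despite the lack of independence between coordinates. This is the content of the auxiliary concentration and conditioning results of Section \ref{section:concentration}, which the present proof treats as a black box.
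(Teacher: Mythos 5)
Your proposal is correct and follows essentially the same route as the paper: invoke Proposition \ref{proposition:bias:ub:asymptotic} with $k\asymp n^{t}$, $t=\frac{b}{1+a}$, control $\rho,\zeta,\xi$ via the concentration results of Section \ref{section:concentration} (Propositions \ref{proposition:xi}, \ref{proposition:zeta}, \ref{proposition:rho}), and evaluate the exponent $C$ from equation (\ref{line:C:definition}). Your only deviation is cosmetic — you compute $C=b\min\{s,2\}$ exactly in one algebraic step and handle $b=1+a$ uniformly (since $b\le 1+ta$ still holds with equality there), whereas the paper lower-bounds $C$ through a nested case analysis and treats $b=1+a$ separately; both yield the identical exponent $\min\{sb,2b\}$.
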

\begin{proof}
    Use Proposition \ref{proposition:bias:ub:asymptotic} as the backbone, and use Propositions \ref{proposition:xi} and \ref{proposition:zeta} to obtain the high probability guarantee for $\xi,\zeta=\bigtheta{n,k}{1}$. Then for different value of $b$, we plug different values of $t\in(0,1]$ and set $k=\bigtheta{n,k}{n^t}$, such that $\rho=\rho_{n,k}=\bigot{n,k}{1}$ by Proposition \ref{proposition:rho}, in order to obtain different decay rates of bias.

    Assume $b\in[0,a+1)$. Pick $k=\bigtheta{k,n}{n^t}$ with $t=\frac{b}{1+a}$. As above, it holds that $\rho=\bigthetat{n,k}{1}$ with high probability.  
    This time, we plug in $t=\frac{b}{1+a}$ in the definition of $C$. By the choice of $b$, we have $b< 1+ta$ and hence
    \begin{align*}
        C
        &\eqdef
        \min\{t(2r+a), 2\min\{b,1+ta\}-t(2+a-2r)_+\} \\
        &=
        \min\left\{t(2r+a), 2b-t(2+a-2r)_+\right\} \\
        &= 
        \begin{cases}
            \min\{t(2r+a), 2b-t(2+a-2r)\} &,\ 2+a\geq 2r\\
            \min\{t(2r+a), 2b\} &,\ 2+a<2r
        \end{cases}\\
        &=
        \begin{cases}
            \min\{t(2r+a), t(2r+a)+2(\overbrace{b-t(1+a)}^{\geq0})\} &,\ 2+a\geq 2r\\
            \min\{t(2r+a), 2b\} &,\ 2+a<2r
        \end{cases}\\
        &\geq
        \begin{cases}
            \min\{t(2r+a), t(2r+a)\} &,\ 2+a\geq 2r\\
            \min\{t(2r+a), 2b\} &,\ 2+a<2r
        \end{cases}\\
        &\geq
        \min\{t(2r+a), 2b\}. 
    \end{align*}

     Finally, assume $b=a+1$. Pick $k=\bigtheta{k,n}{n^t}$ with $t=\frac{b}{1+a}=1$. But this time, since $b\geq1+ta$, we plug in $t=\frac{b}{1+a}=1$ into the second lower bound of $C$ in line (\ref{line:C:lb}):
     \begin{align}
         C
        \geq 
        \min\{ t(2r+a),2t(1+a) \}
        =
        \min\left\{ \frac{b(2r+a)}{1+a},\frac{2b(1+a)}{1+a} \right\} 
        =
        \min\{ sb,2b \}. 
     \end{align}
\end{proof}

The argument for exponential decay follows analogously as above.
Recall Assumption \ref{assumption:EE}: $\lambda_k=\bigtheta{k}{e^{-ak}}$ for some $a>0$. Suppose $|\theta^*_k|=\bigtheta{k}{e^{-rk}}$ and $\lambda=\bigtheta{n}{\frac{1}{n}e^{-bn}}$
\footnote{The extra $\frac{1}{n}$ factor serves only for simplification purpose and has no effect on the choice of $b$.}
for some $b,r>0$. Then the asymptotic bounds follows similarly to the polynomial case. For sake of completeness, in this section, we list out some propositions for illustration. 

\begin{proposition}[Asymptotic upper bound of bias under strong ridge and exponential eigen-decay] \label{proposition:bias:ub:asymtpotic:ridge:exp}
    Suppose $\lambda_k=\bigtheta{k}{e^{-ak}}$, $|\theta^*_k|=\bigtheta{k}{e^{-rk}}$ and $\lambda=\bigtheta{n}{e^{-bn}}$ for some $a,r>0$ and $b\in(0,a)$.  If Assumption \ref{assumption:GF} (or resp. \ref{assumption:IF}) holds, then with probability at least $1-\delta-\smallo{n}{\frac{1}{n}}$ (or resp. $1-\delta-3e^{-c_1n}$), the bias is bounded by:
    \begin{equation}
        \B
        =
        \begin{cases}
            \bigo{n,k}{e^{-\min\{s,2\}bn}}&,\ a\neq2r\\
            \bigo{n,k}{e^{-sbn} + ne^{-2bn}}&,\ a=2r. 
        \end{cases}
    \end{equation}
    for some constants $c_1>0$, where $s=\frac{2r}{a}+1>0$ is the source coefficient.
\end{proposition}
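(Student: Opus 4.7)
The plan is to mirror the polynomial strong-ridge argument of Proposition \ref{proposition:bias:ub:asymtpotic:ridge}, substituting geometric-sum estimates for the polynomial ones. Starting from the Master inequality (Proposition \ref{proposition:bias:ub}), I would invoke the concentration propositions (Propositions \ref{proposition:xi}, \ref{proposition:zeta}, \ref{proposition:rho}) together with Lemma \ref{lemma:barzilai:theorem:2} to guarantee $\xi, \zeta = \bigtheta{n,k}{1}$, $\rho = \bigthetat{n,k}{1}$, and $s_1(\A_k) \leq n\lambda + c_2 \tr[\Si_{>k}]$ with probability at least $1-\delta-\smallo{n}{n^{-1}}$ under Assumption \ref{assumption:GF} (respectively $1-\delta-3e^{-c_1 n}$ under Assumption \ref{assumption:IF}). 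On this event, the Master inequality simplifies to
\begin{equation*}
    \mathcal{B} = \bigot{n,k}{\|\th^*_{>k}\|_{\Si_{>k}}^2 + \left(\lambda + \tfrac{\tr[\Si_{>k}]}{n}\right)^2 \norm{\th^*_{\leq k}}{\Si_{\leq k}^{-1}}^2}.
\end{equation*}

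Next, I would pick $k = \lceil bn/a \rceil$, the exponential counterpart of the polynomial choice $t = b/(1+a)$: this equates the scales $\lambda = \bigtheta{n}{e^{-bn}}$ and $\lambda_{k+1} = \bigtheta{n,k}{e^{-ak}}$ up to constants. With this choice, $\tr[\Si_{>k}] = \bigtheta{n,k}{e^{-ak}} = \bigtheta{n,k}{e^{-bn}}$, so $\tr[\Si_{>k}]/n$ is dominated by $\lambda$, yielding $(\lambda + \tr[\Si_{>k}]/n)^2 = \bigtheta{n,k}{e^{-2bn}}$. A direct geometric summation gives $\|\th^*_{>k}\|_{\Si_{>k}}^2 = \bigtheta{n,k}{e^{-(2r+a)k}} = \bigtheta{n,k}{e^{-sbn}}$, using $s = 2r/a + 1$.

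The head norm splits into three cases according to the sign of $a - 2r$:
\begin{equation*}
    \norm{\th^*_{\leq k}}{\Si_{\leq k}^{-1}}^2 = \sum_{j=1}^k \bigtheta{j}{e^{(a-2r)j}} =
    \begin{cases}
        \bigtheta{n,k}{e^{(2-s)bn}} & \text{if } s<2, \\
        \bigo{n,k}{1} & \text{if } s>2, \\
        \bigtheta{n,k}{n} & \text{if } s=2.
    \end{cases}
\end{equation*}
Multiplying by $(\lambda + \tr[\Si_{>k}]/n)^2 = \bigtheta{n,k}{e^{-2bn}}$, the second summand in the bias bound becomes $\bigtheta{n,k}{e^{-sbn}}$, $\bigtheta{n,k}{e^{-2bn}}$, or $\bigtheta{n,k}{n e^{-2bn}}$, respectively. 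Comparing with the tail $\bigtheta{n,k}{e^{-sbn}}$, the dominant summand in each case reproduces the claimed bound: $\mathcal{B} = \bigo{n,k}{e^{-\min\{s,2\}bn}}$ when $a \neq 2r$ and $\mathcal{B} = \bigo{n,k}{e^{-sbn} + n e^{-2bn}}$ when $a = 2r$.

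The main technical obstacle is ensuring that the concentration propositions retain their probability guarantees when $k$ scales linearly as $bn/a$, rather than as a sub-linear power $n^t$ as in the polynomial case. Under Assumption \ref{assumption:IF} this requires the over-parameterization ratio $\eta$ to exceed $b/a$, which is compatible with the standing hypothesis $b < a$. Rounding $k = \lceil bn/a\rceil$ to an integer is routine since $e^{-a}$ is a bounded constant, and any logarithmic slack in $\rho = \bigthetat{n,k}{1}$ is absorbed into the leading big-$O$ constant without affecting the exponential rate.
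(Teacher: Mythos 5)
Your proposal is correct and follows essentially the same route as the paper's proof: the Master inequality combined with the concentration of $\xi,\zeta,\rho$, the choice $k=\lceil bn/a\rceil$ to match $\lambda_{k+1}$ with $\lambda$, and a geometric-sum evaluation of $\norm{\th^*_{\leq k}}{\Si_{\leq k}^{-1}}^2$ split on the sign of $a-2r$. The only nit is that the control $\rho=\bigo{n,k}{1}$ for exponential eigen-decay with $k$ scaling linearly in $n$ is supplied by Proposition \ref{proposition:rho:exp} rather than Proposition \ref{proposition:rho}, which is stated under the polynomial-decay assumption.
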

\begin{remark}
    Note that the factor $n$ in the case of $a=2r$ does not play a big role asymptotically: by taking a slightly smaller $\tilde{b}<b$, the decay is exponential in $\tilde{b}n$.
\end{remark}
\begin{proof}
    By Proposition \ref{proposition:bias:ub}, with probability at least $1-\delta$ (or resp. $1-\delta-e^{-n}$), the bias is bounded by:
    \begin{equation}
        \mathcal{B} 
        \leq 
        \left(\frac{1+\rho^2 \zeta^2 \xi^{-1}+\rho}{\delta}\right)\|\th_{>k}^*\|_{\Si_{>k}}^2 
        + 
        c(\zeta^2\xi^{-2}+\rho\zeta^2\xi^{-1}) \left(\frac{s_1(\A_k)}{n}\right)^2 \norm{\th^*_{\leq k}}{\Si_{\leq k}^{-1}}^2.
    \end{equation}
    By Proposition \ref{proposition:xi} and \ref{proposition:zeta}, 
    there exists some $c>1$ such that, if $c\beta_kk\log k\leq n$ (or resp. $k/c \leq n$), then with probability at least $1-2\Exp{-\frac{c}{\beta_k}\frac{n}{k}}$ (or resp. $1-2\Exp{-c_1n}$), it holds that 
    \begin{equation*}
        \xi_{k,n}\geq\half ;\ \zeta_{k,n} \leq c_2.
    \end{equation*}
    Fix a $\delta\in(0,1)$. With probability at least $1-\delta-2\Exp{-\frac{c}{\beta_k}\frac{n}{k}}$ (or resp. $1-\delta-e^{-n}-2e^{-c_1n}$), it holds that:
    \begin{align*}
        \B
        &=
        \bigo{n,k}{\rho\left( \|\th_{>k}^*\|_{\Si_{>k}}^2 + \left(\frac{s_1(\A_k)}{n}\right)^2\norm{\th^*_{\leq k}}{\Si_{\leq k}^{-1}}^2 \right)}\\
        &=
        \bigo{n,k}{\rho\left( k^{-(2r+a)} + n^{-2}\left(e^{-bn}\right)^2\norm{\th^*_{\leq k}}{\Si_{\leq k}^{-1}}^2 \right)}.
    \end{align*}
    where the last inequality comes from Proposition \ref{proposition:rho:exp}.
    For $b< a$, set $t\eqdef\frac{b}{a}<1$.By Proposition \ref{proposition:rho:exp} again, we have $\rho=\bigtheta{k,n}{1}$ with probability at least $1-\smallo{n}{\frac{1}{n}}$ (or resp. $1-e^{-n}$). Set $s\eqdef\frac{2r+a}{a}$ be the source coefficient and we write:
    \begin{align*}
        \B
        &=
        \bigo{n,k}{ e^{-\frac{b}{a}(2r+a)n} + \left(e^{-bn}\right)^2\sum_{l=1}^ke^{(a-2r)l}}\\
        &=
        \bigo{n,k}{ e^{-sbn} + e^{-2bn}\sum_{l=1}^{\frac{b}{a}n}e^{(a-2r)l}}\\ 
        &=
        \begin{cases}
            \bigo{n,k}{e^{-sbn} + e^{-2bn}\cdot e^{(a-2r)\frac{b}{a}n}}&,\ a>2r\\ 
            \bigo{n,k}{e^{-sbn} + e^{-2bn}\cdot\frac{b}{a}n}&,\ a=2r\\  
            \bigo{n,k}{e^{-sbn} + e^{-2bn}}&,\ a<2r
        \end{cases}\\
        &=
        \begin{cases} 
            \bigo{n,k}{e^{-sbn} + e^{-sbn}}&,\ a>2r\\ 
            \bigo{n,k}{e^{-sbn} + ne^{-2bn}}&,\ a=2r\\  
            \bigo{n,k}{e^{-sbn} + e^{-2bn}}&,\ a<2r
        \end{cases}\\
        &=  
        \begin{cases}
            \bigo{n,k}{e^{-\min\{s,2\}bn}}&,\ a\neq2r\\
            \bigo{n,k}{e^{-sbn} + ne^{-2bn}}&,\ a=2r. 
        \end{cases}
    \end{align*}
\end{proof}

\subsection{Bias under weak ridge} \label{subsection:bias:ridgeless}

\begin{proposition}[Asymptotic upper bound of bias under weak/effectively no ridge for $s>1$] \label{proposition:bias:ub:asymptotic:ridgeless} 
    Suppose Assumption \ref{assumption:GF} (or resp. \ref{assumption:IF}) holds, and the source coefficient $s>1$. Then with high probability, the bias has decay:
    \begin{equation*}
        \mathcal{B}
        =
        \bigot{n}{\lambda_n^{\min\{s,2\}}} ,\ \text{ for } \lambda=\bigomega{n}{\lambda_n}. 
    \end{equation*}
\end{proposition}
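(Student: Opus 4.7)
The plan is to apply the Master inequality for the bias (Proposition~\ref{proposition:bias:ub}) at a truncation level $k$ of order $n$, so that $\lambda_{k+1}$ is of the same order as $\lambda_n$. Under either Assumption~\ref{assumption:GF} or~\ref{assumption:IF}, the concentration results in Section~\ref{section:concentration} guarantee that the concentration coefficients $\xi_{n,k}$, $\zeta_{n,k}$, and $\rho_{n,k}$ are all of order $\Theta(1)$ with high probability at this scale (up to logarithmic factors). In particular, $s_1(\A_k)/n \leq C(\lambda + \lambda_{k+1}) = O(\lambda_n)$ in the relevant regime $\lambda = \Theta(\lambda_n)$ implied by the hypothesis $\lambda = \bigomega{n}{\lambda_n}$.

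Next, I would bound the tail $\norm{\th^*_{>k}}{\Si_{>k}}^2$ via the source condition $\norm{\th^*}{\Si^{1-s}}^2 < \infty$. Writing $\lambda_l = \lambda_l^{1-s}\lambda_l^s$ and using $\lambda_l \leq \lambda_{k+1}$ for $l>k$ together with $s>0$,
\begin{equation*}
\norm{\th^*_{>k}}{\Si_{>k}}^2
= \sum_{l>k}(\theta^*_l)^2\lambda_l
\leq \lambda_{k+1}^{s}\sum_{l>k}(\theta^*_l)^2\lambda_l^{1-s}
\leq \lambda_{k+1}^{s}\,\norm{\th^*}{\Si^{1-s}}^2
= O(\lambda_n^{s}).
\end{equation*}
For the head $\norm{\th^*_{\leq k}}{\Si_{\leq k}^{-1}}^2 = \sum_{l\leq k}(\theta^*_l)^2\lambda_l^{-1}$, I would write $(\theta^*_l)^2\lambda_l^{-1} = (\theta^*_l)^2\lambda_l^{1-s}\cdot\lambda_l^{s-2}$ and split on $s$. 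If $s\geq 2$, the bound $\lambda_l^{s-2}\leq\lambda_1^{s-2}=O(1)$ (the saturation regime of \cite{li2022saturation}) yields head $=O(1)$, so its contribution $(s_1(\A_k)/n)^2\cdot O(1) = O(\lambda_n^2)$. If $1<s<2$, since $s-2<0$ and $\lambda_l\geq\lambda_k$ one has $\lambda_l^{s-2}\leq\lambda_k^{s-2}=O(\lambda_n^{s-2})$, yielding head $=O(\lambda_n^{s-2})$ and contribution $O(\lambda_n^{s})$. Combining the tail and head bounds gives $\mathcal{B} = \tilde{O}(\lambda_n^{\min\{s,2\}})$.

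The main obstacle is to ensure that $\xi_{n,k}, \zeta_{n,k}, \rho_{n,k}$ are simultaneously of order $\Theta(1)$ at truncation $k$ of order $n$, particularly the lower bound $s_n(\A_k)\geq c\,n(\lambda+\lambda_{k+1})$ needed for $\rho$; the standard concentration for whitened Gram matrices weakens as $k$ approaches $n$, so the full strength of Assumption~\ref{assumption:GF} or~\ref{assumption:IF} via Section~\ref{section:concentration} must be used. A secondary subtlety is the saturation split at $s=2$: the head bound changes qualitatively, but the combined estimate $\lambda_n^{\min\{s,2\}}$ automatically dominates both contributions since $\lambda_n<1$ eventually, so $\lambda_n^s\leq \lambda_n^2$ whenever $s\geq 2$.
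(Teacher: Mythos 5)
Your strategy --- apply the Master inequality at a truncation $k=\Theta(n)$ and argue all concentration coefficients are $\Theta(1)$ --- works under Assumption \ref{assumption:IF}, where it essentially reproduces the paper's Proposition \ref{proposition:bias:ub:asymptotic:ridgeless:0} (there $k=n/c$, Lemma \ref{lemma:A} gives $s_n(\A_k)=\Omega(n\lambda_{n+k}+n\lambda)$ via over-parameterization and independence, hence $\rho=O(1)$, and the resulting exponent $\min\{2r+a,2(1+a)\}=(1+a)\min\{s,2\}$ matches $\lambda_n^{\min\{s,2\}}$). But the proposition is also claimed under Assumption \ref{assumption:GF}, and there the step you flag as the ``main obstacle'' is not a technicality you can push through --- it is genuinely false. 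Under weak ridge ($\lambda\preccurlyeq\lambda_n$, possibly $\lambda=0$) and generic features, Proposition \ref{proposition:rho} only gives $\rho_{n,k}=\bigot{n}{n^a}$ at $k=\Theta(n/\log n)$, not $\Theta(1)$: the lower bound on $s_n(\A_k)$ available under \ref{assumption:GF} is of order $\tr[\Si_{>k}]\approx n^{-a}$ rather than $n\lambda_{k+1}\approx n^{-a}\cdot(n/k)^{1+a}$... and more importantly this looseness is real, since for some kernels satisfying \ref{assumption:GF} (e.g.\ the NTK in Figure \ref{fig:overfitting}) $s_n(\A_k)$ genuinely degenerates and the variance diverges. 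Feeding $\rho=\bigot{}{n^a}$ into the Master inequality produces exactly the weaker exponent $(\min\{2(r-a),2-a\})_+$ of Proposition \ref{proposition:bias:ub:asymptotic:ridgeless:0}, not $(1+a)\min\{s,2\}$. So your proof does not establish the stated bound in the \ref{assumption:GF} case, which is the case the proposition exists to cover.

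The paper's proof takes a completely different route precisely to dodge $\rho$ under weak ridge: it writes $\I_p-\mathbf{P}_\lambda=\lambda(\lambda\I_p+\hat\Si)^{-1}$, notes this is operator-monotone in $\lambda$, and uses Cordes' inequality to interpolate, obtaining for every $t\in[1,\min\{s,2\})$
\begin{equation*}
\mathcal{B}\le \opnorm{\Si^{1/2}(\I_p-\mathbf{P}_{\tilde\lambda})\Si^{1/2}}^{t}\,\norm{\th^*}{\Si^{1-t}}^2,
\end{equation*}
where $\tilde\lambda=\Theta(\lambda_n)$ is a \emph{strong} ridge dominating $\lambda$. The operator norm is then recognized as the bias of an auxiliary task with source coefficient $2$ under strong ridge, where Propositions \ref{proposition:bias:ub:asymtpotic:ridge} and \ref{proposition:bias:ub:asymtpotic:ridge:exp} apply and $\rho=O(1)$ holds even under \ref{assumption:GF} because $s_n(\A_k)\ge n\tilde\lambda$. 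This is also where the hypothesis $s>1$ enters (one needs $t\ge1$ for the Cordes step), a restriction your argument never uses and cannot explain. A minor secondary point: since $s$ is defined as an infimum, $\norm{\th^*}{\Si^{1-s}}$ may be infinite (under \ref{assumption:PE} the sum diverges logarithmically at $t=s$), so your tail and head bounds should be run at $t<s$ and a limit taken, as the paper does.
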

\begin{proof}
    By definition, rewrite the bias into:
    \begin{align}
        \mathcal{B}
        &=
        \norm{\th^*-\hat{\th}(\X\th^*)}{\Si}^2\nonumber \\
        &=
        \norm{\th^*-\X^\top(\X\X^\top+n\lambda\I_n)^{-1}(\X\th^*)}{\Si}^2 \nonumber \\
        &=
        \norm{\left(\I_p-\underbrace{\X^\top(\X\X^\top+n\lambda\I_n)^{-1}\X}_{\mathbf{P}_\lambda}\right)\th^*}{\Si}^2. \label{bias:expression}
    \end{align}
    Denote $
        \mathbf{P}_\lambda
        \eqdef
        \X^\top(\X\X^\top+n\lambda\I_n)^{-1}\X\in\R^{p\times p}$.
    By Sherman-Morrison-Woodbury formula,
    \begin{equation*}
        \I_p - \mathbf{P}_\lambda
        =
        \I_p - (n\lambda)^{-1}\X^\top\left(\I_n+(n\lambda)^{-1}\X\X^\top\right)^{-1}\X  
        =
        \lambda\left(\lambda I_p+\frac{1}{n}\X^\top\X\right)^{-1}
        =
        \lambda\left(\lambda I_p+\hat{\Si}\right)^{-1} 
    \end{equation*} 
    is a positive definite matrix such that the matrix $\I_p - \mathbf{P}_\lambda$ is monotonic increasing in $\lambda$, that is:
    \begin{equation} \label{line:monoton}
        0
        \preccurlyeq
        \I_p - \mathbf{P}_\lambda
        \preccurlyeq
        \I_p - \mathbf{P}_{\tilde{\lambda}}
        \preccurlyeq
        \I_p,
    \end{equation}
    for all $\tilde{\lambda}\geq \lambda >0$, since the map $\lambda\mapsto\frac{\lambda}{\lambda+x}$ is monotone.
    Hence, for $t\in[1,\min\{2,s\})$
    \footnote{Recall that $s\eqdef 
    \begin{cases}
        \frac{2r+a}{1+a} &,\ \text{Assumption \ref{assumption:PE} holds}\\
        \frac{2r}{a}+1 &,\ \text{Assumption \ref{assumption:EE} holds} 
    \end{cases}
    $ is the source coefficient.},
    \begin{align}
        \mathcal{B}
        &=
        \norm{(\I_p-\mathbf{P}_\lambda) \th^*}{\Si}^2 \label{line:bias:ub:asymptotic:ridgeless:1}\\
        &=
        \norm{\Si^{1/2}(\I_p-\mathbf{P}_\lambda)\th^*}{2}^2 \nonumber \\
        &\leq
        \opnorm{\Si^{1/2}(\I_p-\mathbf{P}_\lambda)\Si^{(t-1)/2}}^2 \cdot \eunorm{\Si^{-(t-1)/2}\th^*}^2 \nonumber \\
        &=
        \opnorm{\Si^{1/2}(\I_p-\mathbf{P}_\lambda)^{1/2}(\I_p-\mathbf{P}_\lambda)^{(2-t)/2}(\I_p-\mathbf{P}_\lambda)^{(t-1)/2}\Si^{(t-1)/2}}^2 \cdot \norm{\th^*}{\Si^{1-t}}^2 \nonumber \\ 
        &\leq
        \opnorm{\Si^{1/2}(\I_p-\mathbf{P}_\lambda)^{1/2}}^2 \cdot
        \opnorm{(\I_p-\mathbf{P}_\lambda)^{(2-t)/2}}^2 \cdot 
        \opnorm{(\I_p-\mathbf{P}_\lambda)^{(t-1)/2}\Si^{(t-1)/2}}^2\cdot 
        \norm{\th^*}{\Si^{1-t}}^2 \nonumber \\
        &\leq 
        \opnorm{\Si^{1/2}(\I_p-\mathbf{P}_\lambda)^{1/2}}^2 \cdot
        \opnorm{\I_p^{(2-t)/2}}^2 \cdot 
        \opnorm{(\I_p-\mathbf{P}_\lambda)^{(t-1)/2}\Si^{(t-1)/2}}^2\cdot 
        \norm{\th^*}{\Si^{1-t}}^2 \label{line:bias:ub:asymptotic:ridgeless:2}\\
        &\leq
        \opnorm{\Si^{1/2}(\I_p-\mathbf{P}_\lambda)^{1/2}}^2 \cdot
        \opnorm{\Si^{(t-1)/2}(\I_p-\mathbf{P}_\lambda)^{(t-1)/2}}^2\cdot 
        \norm{\th^*}{\Si^{1-t}}^2 \label{line:bias:ub:asymptotic:ridgeless:6} \\
        &\leq
        \opnorm{\Si^{1/2}(\I_p-\mathbf{P}_\lambda)^{1/2}}^2 \cdot
        \opnorm{\Si^{1/2}(\I_p-\mathbf{P}_\lambda)^{1/2}}^{2(t-1)}\cdot 
        \norm{\th^*}{\Si^{1-t}}^2 \label{line:bias:ub:asymptotic:ridgeless:3}
        \\
        &=
        \opnorm{\Si^{1/2}(\I_p-\mathbf{P}_\lambda)^{1/2}}^{2t}\cdot 
        \norm{\th^*}{\Si^{1-t}}^2\nonumber \\
        &=
        \opnorm{\Si^{1/2}(\I_p-\mathbf{P}_\lambda)\Si^{1/2}}^{t}\cdot 
        \norm{\th^*}{\Si^{1-t}}^2 \label{line:bias:ub:asymptotic:ridgeless:4} \\
        &\leq 
        \opnorm{\Si^{1/2}(\I_p-\mathbf{P}_{\tilde{\lambda}})\Si^{1/2}}^{t}\cdot 
        \norm{\th^*}{\Si^{1-t}}^2, \label{line:bias:ub:asymptotic:ridgeless:5} 
    \end{align}
    where we rewrite line (\ref{bias:expression}) in line (\ref{line:bias:ub:asymptotic:ridgeless:1}); we apply the monotonicity from line (\ref{line:monoton}) in line (\ref{line:bias:ub:asymptotic:ridgeless:2}); we use the fact that $\opnorm{\M_1\M_2}=\opnorm{\M_2\M_1}$ for symmetric matrices $\M_1,\M_2$ in line (\ref{line:bias:ub:asymptotic:ridgeless:6}); we use Lemma \ref{lemma:cordes} to pull out the power $(t-1)$ from the matrix to its operator norm in line (\ref{line:bias:ub:asymptotic:ridgeless:3}); we use the fact that $\opnorm{\M_1^{1/2}\M_2^{1/2}}^2=\opnorm{\M_1^{1/2}\M_2^{1/2}(\M_1^{1/2}\M_2^{1/2})^\top}=\opnorm{\M_1^{1/2}\M_2\M_1^{1/2}}$ for any positive definite symmetric matrices
    \footnote{Same as above.}
    $\M_1,\M_2$ in line (\ref{line:bias:ub:asymptotic:ridgeless:4}); by the monotonicity from line (\ref{line:monoton}) again and the fact that $\Si^{1/2}$ is positive definite, we choose $\Tilde{\lambda} \geq \lambda \succcurlyeq \lambda_n$ to be strong in the sense of polynomial/exponential eigen-decay in line (\ref{line:bias:ub:asymptotic:ridgeless:5}): $\tilde{\lambda}=\bigtheta{n}{n^{-1-a}}$ for polynomial case and $\tilde{\lambda}=\bigtheta{n}{e^{-an}}$ for exponential case.

    Let $\v\in\mathbb{S}^{p-1}\subset\R^p$ be a unit vector such that $\opnorm{\Si^{1/2}(\I_p-\mathbf{P}_{\tilde{\lambda}})\Si^{1/2}}^{2}=\eunorm{\Si^{1/2}(\I_p-\mathbf{P}_{\tilde{\lambda}})\Si^{1/2}\v}^2$. By definition, we have 
    \begin{equation*}
        \norm{\Si^{1/2}\v}{\Si^{-1}}^2 
        =
        \eunorm{\v}^2
        =
        1
        <
        \infty,
    \end{equation*}
    hence the vector $\Si^{1/2}\v$ lies on the interpolation space $\mathcal{H}^{2}$.
    Note that the expression 
    \begin{equation*}
        \opnorm{\Si^{1/2}(\I_p-\mathbf{P}_{\tilde{\lambda}})\Si^{1/2}}^{2}
        =
        \eunorm{\Si^{1/2}(\I_p-\mathbf{P}_{\tilde{\lambda}})\Si^{1/2}\v}^2 
        =
        \norm{(\I_p-\mathbf{P}_{\tilde{\lambda}})\left(\Si^{1/2}\v\right)}{\Si}^2
    \end{equation*}
    is just the bias term of another task $\Si^{1/2}\v$ with source coefficient $2$ and ridge $\tilde{\lambda}=\bigomega{n}{\lambda_n}$ on the same dataset $\X$. Apply Proposition \ref{proposition:bias:ub:asymtpotic:ridge} with polynomial decay and Proposition \ref{proposition:bias:ub:asymtpotic:ridge:exp} with exponential decay on that new task with strong ridge $\tilde{\lambda}$, with high probability, the above term has a decay rate:
    \begin{equation} \label{line:bias:another:decay}
        \opnorm{\Si^{1/2}(\I_p-\mathbf{P}_{\tilde{\lambda}})\Si^{1/2}}^{2} 
        =
        \norm{(\I_p-\mathbf{P}_{\tilde{\lambda}})\left(\Si^{1/2}\v\right)}{\Si}^2 
        =
        \bigot{n}{\lambda_n^{2}}. 
    \end{equation}
    Plug in line (\ref{line:bias:another:decay}) into line (\ref{line:bias:ub:asymptotic:ridgeless:5}) to obtain the decay rate of the original bias: with high probability:
    \begin{equation*}
        \mathcal{B}
        =
        \bigo{n}{ \left(\opnorm{\Si^{1/2}(\I_p-\mathbf{P}_{\tilde{\lambda}})\Si^{1/2}}^{2}\right)^{t/2}}
        =
        \bigot{n}{ \left(\lambda_n^{2}\right)^{t/2}} 
        =
        \bigot{n}{ \lambda_n^t}.
    \end{equation*}
    This holds for any $t\in[1,\min\{s,2\})$, hence we conclude that
    \begin{equation*}
        \mathcal{B}
        =
        \bigot{n}{ \lambda_n^{\min\{s,2\}}}. 
    \end{equation*}
\end{proof}

For polynomial eigen-decay, we can still apply the Master inequality for $\B$ to obtain another upper bound:

\begin{proposition}[Asymptotic upper bound of bias under weak/no ridge for any $s>0$]\label{proposition:bias:ub:asymptotic:ridgeless:0}
    Suppose: $\lambda_k=\bigtheta{k}{k^{-1-a}}$, $|\theta^*_k|=\bigo{k}{k^{-r}}$ and $\lambda=\bigtheta{k}{k^{-b}}$ for some $a,b,r>0$. 
    Fix a constant $\delta\in(0,1)$.
    If, additionally, Assumption \ref{assumption:GF} (or resp. \ref{assumption:IF}) holds, then, with probability at least $1-\delta-\bigo{n}{\frac{1}{\log n}}$ (or resp. $1-\delta-3e^{-c_1n}$), 
     the bias is bounded by:
     \begin{equation*}
         \mathcal{B}
         =
         \bigot{n}{n^{-C}}
     \end{equation*}
     where 
     \begin{equation} 
         C
         =
         \left(\min\{2(r-a),2-a\}\right)_+
         ,\ 
         b\in[1+a,\infty]
     \end{equation}
    (or resp.
     \begin{equation} 
         C
         =
         \min\{2r+a,2(1+a)\}
         ,\
         b\in[1+a,\infty]. )
     \end{equation}
\end{proposition}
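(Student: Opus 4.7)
The plan is to invoke the Master inequality from Proposition \ref{proposition:bias:ub} with a carefully chosen truncation level $k=\lceil n^{t}\rceil$, and then to split on which feature assumption is active, since the concentration coefficient $\rho_{n,k}$ behaves very differently in the two cases. Since $b\geq 1+a$ throughout, we always lie in the regime $b\geq 1+ta$ for every $t\in(0,1]$, so whenever the concentration coefficients satisfy $\xi,\zeta,\rho=\tilde{O}(1)$ the second clause of the lower bound on $C$ established in Proposition \ref{proposition:bias:ub:asymptotic} collapses to
\[
C \;\geq\; \min\{t(2r+a),\; 2t(1+a)\}.
\]

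\textbf{Independent features.} Under Assumption \ref{assumption:IF}, Propositions \ref{proposition:xi}, \ref{proposition:zeta} and \ref{proposition:rho} applied to sub-Gaussian independent coordinates give $\xi,\zeta,\rho=\tilde{O}_{n,k}(1)$ with probability at least $1-3e^{-c_{1}n}$, \emph{uniformly for $k$ up to the order of $n$}. I would therefore plug $t=1$ into the above inequality, obtaining $C\geq\min\{2r+a,\,2(1+a)\}$, which is exactly the claimed IF exponent. This case is essentially identical to Proposition \ref{proposition:bias:ub:asymtpotic:ridge} with the ridge exponent $b$ formally replaced by its weak-ridge threshold $1+a$.

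\textbf{Generic features.} Under Assumption \ref{assumption:GF} the key difficulty is that, with only the implicit regularization from $\Si_{>k}$ at our disposal, one cannot preserve $\rho=\tilde O(1)$ all the way to $k\asymp n$; one has to truncate earlier. Following the implicit-regularization strategy of \cite{barzilai2023generalization}, I would choose $k$ just at the scale where $n\opnorm{\Si_{>k}}$ concentrates as a lower bound on $s_n(\A_k)$, so that $\rho_{n,k}=\tilde{O}(1)$ remains valid. At this scale Lemma \ref{lemma:barzilai:18} gives $\norm{\th^{*}_{\leq k}}{\Si_{\leq k}^{-1}}^{2}=O(k^{(2+a-2r)_{+}})$ and $s_{1}(\A_k)/n = O(k^{-(1+a)})$, while the tail term satisfies $\norm{\th^{*}_{>k}}{\Si_{>k}}^{2}=O(k^{-(2r+a)})$. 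Balancing the head and tail contributions in the Master inequality and taking the worst case in $t$ leads to the stated exponent $C=(\min\{2(r-a),\,2-a\})_{+}$, which is strictly worse than the IF rate and can even vanish, consistent with the tempered/catastrophic overfitting examples in Figure \ref{fig:overfitting}.

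\textbf{Main obstacle.} The principal difficulty is controlling $\rho_{n,k}$ under Assumption \ref{assumption:GF} in the absence of a strong ridge: $s_{n}(\A_k)$ depends delicately on how the bulk of $\Si_{>k}$ concentrates around $\tr[\Si_{>k}]\I_{n}$, and this is exactly where the boundedness hypothesis on $\norm{\z_{>k}}{\Si_{>k}}^{2}/\tr[\Si_{>k}]$ built into Assumption \ref{assumption:GF} must be exploited. All remaining ingredients---truncation, bias--variance splitting, and the head/tail estimates above---are purely analytic and follow the template of Proposition \ref{proposition:bias:ub:asymptotic}.
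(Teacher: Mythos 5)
Your treatment of the independent-feature case is essentially the paper's argument: take $k=\Theta(n)$ (i.e.\ $t=1$), use Propositions \ref{proposition:xi}, \ref{proposition:zeta} and \ref{proposition:rho} to get $\xi,\zeta,\rho=\bigo{}{1}$, and read off $C\geq\min\{2r+a,2(1+a)\}$ from Proposition \ref{proposition:bias:ub:asymptotic}. That half is fine.

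The generic-feature case has a genuine gap. You propose to ``truncate earlier'' at a scale where $\rho_{n,k}=\bigot{}{1}$ still holds, and then to balance head and tail to obtain $C=(\min\{2(r-a),2-a\})_+$. But in the weak-ridge/ridgeless regime no such truncation scale exists under Assumption \ref{assumption:GF}: one needs $s_n(\A_k)\gtrsim n\opnorm{\Si_{>k}}=\Theta(nk^{-1-a})$, while the GF concentration only yields $s_n(\A_k)\gtrsim\tr[\Si_{>k'}]=\Theta((k')^{-a})$ for $k'\lesssim n/\log n$; requiring $(k')^{-a}\gtrsim nk^{-1-a}$ with $k'\geq k$ forces $k\gtrsim n$, which violates the concentration constraint. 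This is exactly why Proposition \ref{proposition:rho} only gives $\rho_{n,k}=\bigot{}{n^{a}}$ at $k=n/(c\log n)$ in this regime. The paper's proof does \emph{not} keep $\rho$ bounded; it accepts $\rho=\bigot{}{n^{a}}$, applies Proposition \ref{proposition:bias:ub:asymptotic} with $t=1$ to get $\B=\bigot{}{\rho^{3}n^{-\min\{2r+a,2(1+a)\}}}$, and the $\rho^{3}=n^{3a}$ penalty is precisely what degrades the exponent to $\min\{2r+a,2+2a\}-3a=\min\{2(r-a),2-a\}$; the positive part is then supplied by intersecting with the trivial bound $\B\leq\norm{\th^*}{\Si}^2=\bigo{}{1}$ coming from $\I_p-\mathbf{P}_\lambda\preccurlyeq\I_p$, a step you also omit. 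Note that if your premise $\rho=\bigot{}{1}$ were attainable, your own argument would deliver the \emph{better} IF exponent $\min\{2r+a,2(1+a)\}$ for generic features, which is stronger than the statement and inconsistent with the reason the GF weak-ridge bound is worse in the first place. (A minor further slip: $s_1(\A_k)/n=\bigo{}{\lambda+\tr[\Si_{>k}]/n}=\bigo{}{k^{-a}/n}$, not $\bigo{}{k^{-(1+a)}}$, unless $k\asymp n$.)
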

\begin{proof}
    The proof is similar to Proposition \ref{proposition:bias:ub:asymtpotic:ridge}. Take $k=\bigtheta{k,n}{\frac{n}{\log n}}$ (or resp. $k=n/c$ for some constant $c>1$) and control $\xi,\zeta$ by Propositions \ref{proposition:xi} and \ref{proposition:zeta}. 
    Now, by Proposition \ref{proposition:rho}, with probability at least $1-\frac{c_1}{\log n}$, since $b\geq 1+a$,
    \begin{equation*}
        \rho 
        =
        \bigot{n,k}{n^a}. 
    \end{equation*}
    (or resp. $\rho=\bigo{n,k}{1}$.)
    Now by Proposition \ref{proposition:bias:ub:asymptotic}, since $b\in[1+a,\infty]$ and we choose $t=1$, with high probability,
    \begin{equation} 
        \mathcal{B}
        =
        \bigot{n,k}{\rho^3n^{-\min\{t(2r+a),2(1+at)\}}}.
    \end{equation}
    Plug in $\rho=\bigot{n,k}{n^a}$ and $t=1$ to obtain:
    \begin{equation}
        \mathcal{B}
        =
        \bigot{n,k}{n^{-\min\{2(r-a),2-a\}}}.
    \end{equation}
    The index in the above bound can be negative, causing the upper bound vacuous. By line (\ref{line:monoton}), since $\B=\norm{(\I_p-\mathbf{P}_\lambda)\th^*}{\Si}^2 \leq \norm{\th^*}{\Si}^2$, there is a trivial bound that $\mathcal{B}=\bigo{n}{1}$. Combining both results, we have
    \begin{equation} 
        \mathcal{B}
        =
        \bigot{n,k}{n^{-(\min\{2(r-a),2-a\})_+}}.
    \end{equation}
\end{proof}

\subsection{Variance with polynomial eigen-decay} \label{subsection:variance:poly}

\begin{proposition}[Asymptotic upper bound of the variance term with strong ridge] \label{proposition:variance:ub:asymptotic:ridge}
    Suppose $\lambda_k=\bigtheta{k}{k^{-1-a}}$, $|\theta^*_k|=\bigtheta{k}{k^{-r}}$, and $\lambda=\bigtheta{n}{n^{-b}}$ for some $a,r>0$ and $b\in(0,a+1)$. Furthermore, suppose Assumption \ref{assumption:GF} (or resp. \ref{assumption:IF}) holds. With probability at least $1-\smallo{n}{\frac{1}{n}}$ (or resp. $1-3e^{-c_1n}$), the variance term is bounded:
    \begin{equation*}
        \V
        =
        \bigo{n}{n^{\frac{b}{(1+a)}-1}}
    \end{equation*}
    for $n\in\N$ large enough.
\end{proposition}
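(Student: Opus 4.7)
The plan is to apply the Master inequality for variance (Proposition~\ref{proposition:variance:ub}) with a carefully chosen truncation level $k$, and then use the concentration results on $\rho, \zeta, \xi$ established earlier (Propositions~\ref{proposition:xi}, \ref{proposition:zeta}, \ref{proposition:rho}) to argue that all ``probably constant'' factors are $\bigtheta{n,k}{1}$ with high probability. Concretely, I would pick
$k = \bigtheta{n}{n^{b/(1+a)}}$,
which is precisely the scale at which $n\opnorm{\Si_{>k}} = \bigtheta{n}{n \cdot k^{-1-a}} = \bigtheta{n}{n^{1-b}} = \bigtheta{n}{n\lambda}$. This matches the ridge to the truncated spectral tail, yielding $\rho=\bigtheta{n,k}{1}$ via Proposition~\ref{proposition:rho}. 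Since $b < 1+a$ we have $k = \smallo{n}{n}$, so Propositions~\ref{proposition:xi} and \ref{proposition:zeta} also deliver $\xi,\zeta=\bigtheta{n,k}{1}$ with the stated high-probability guarantees under either \ref{assumption:GF} or \ref{assumption:IF}.

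Next I would compute the deterministic decay part of the Master inequality under \ref{assumption:PE}. Using $\lambda_k = \bigtheta{k}{k^{-1-a}}$ yields
$\tr[\Si_{>k}] = \bigtheta{k}{k^{-a}}$, $\tr[\Si_{>k}^2] = \bigtheta{k}{k^{-1-2a}}$,
so that $r_k = \bigtheta{k}{k}$ and $R_k = \bigtheta{k}{k}$, and therefore $r_k^2/(nR_k) = \bigtheta{k,n}{k/n}$. For the remaining term $\tr[\Z_{>k}\Si_{>k}^2\Z_{>k}^\top]/(n\tr[\Si_{>k}^2])$, note that its expectation equals one, and a union bound over $n$ i.i.d.\ samples combined with Assumption~\ref{assumption:GF} (which directly gives $\norm{\z_{>k}}{\Si_{>k}^2}^2/\tr[\Si_{>k}^2] = \bigtheta{k}{1}$ almost surely), respectively the sub-Gaussian concentration under Assumption~\ref{assumption:IF}, shows this factor is $\bigo{n,k}{1}$ with the required probability.

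Plugging everything into Proposition~\ref{proposition:variance:ub} then gives
\begin{equation*}
    \V/\sigma^2
    \leq
    \rho^2\left(\zeta^2\xi^{-1}\frac{k}{n} + \bigo{n,k}{1}\cdot \frac{k}{n}\right)
    =
    \bigo{n,k}{\frac{k}{n}}
    =
    \bigo{n}{n^{\frac{b}{1+a}-1}},
\end{equation*}
which is the advertised rate.

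The main technical obstacle is showing that $\rho = \bigtheta{n,k}{1}$ at the chosen scale $k \asymp n^{b/(1+a)}$ under a merely strong (rather than critical) ridge; but this has already been carried over to both feature regimes through Proposition~\ref{proposition:rho} in Section~\ref{section:concentration}, so the proof essentially reduces to a careful bookkeeping exercise on the two deterministic summands and a trace concentration argument for $\Z_{>k}\Si_{>k}^2\Z_{>k}^\top$. Note also that only $b<1+a$ (strict) is used, which is exactly the ``strong ridge'' regime; the borderline case $b=1+a$ already falls under Proposition~\ref{proposition:bias:ub:asymptotic:ridgeless:0}-type reasoning and is not needed here.
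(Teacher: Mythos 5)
Your proposal is correct and follows essentially the same route as the paper: choose $k=\bigtheta{n}{n^{b/(1+a)}}$ so that $n\lambda_k\asymp n\lambda$, invoke Propositions~\ref{proposition:xi}, \ref{proposition:zeta} and \ref{proposition:rho} to make $\xi,\zeta,\rho=\bigtheta{n,k}{1}$, bound the trace factor via the essential-supremum in Assumption~\ref{assumption:GF} (resp.\ sub-Gaussian concentration under \ref{assumption:IF}), and use $r_k,R_k=\bigtheta{k}{k}$ from Lemma~\ref{lemma:r} to reduce the Master inequality to $\bigo{n,k}{k/n}$. Your explicit choice $k\asymp n^{b/(1+a)}$ is in fact the intended one (the paper's written choice $\lfloor\frac{b}{1+a}n\rfloor$ appears to be a typo, as confirmed by the second part of Proposition~\ref{proposition:rho}).
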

\begin{proof}
    We divide the case where Assumption \ref{assumption:GF} or \ref{assumption:IF} holds.

    Suppose Assumption \ref{assumption:GF} holds. By Proposition \ref{proposition:zeta}, there exists constants $c>1$, $c_1>0$ such that, for $k\leq\frac{n}{c\log n}$, with probability at least $1-8e^{-c_1n/k}$, it holds that
    \begin{equation*}
        \xi_{n,k},\ \zeta_{n,k} = \bigtheta{n,k}{1}.
    \end{equation*}
    Also, by definition, it always holds that
    \begin{equation*}
        \frac{\tr[\X_{>k}\Si_{>k}\X_{>k}^\top]}{n\tr[\Si_{>k}^2]}
        =
        \frac{\frac{1}{n}\sum_{i=1}^n\eunorm{\Si^{1/2}\x_i}^2}{\tr[\Si_{>k}^2]}
        \leq
        \sup_{\x} \frac{\eunorm{\Si^{1/2}\x}^2}{\tr[\Si_{>k}^2]}
        =\beta_k
        =\bigtheta{n,k}{1}.
    \end{equation*}
    By Proposition \ref{proposition:variance:ub},
    \begin{equation*}
         \V/\sigma^2
         =
         \bigo{n,k}{\rho^2\left( \frac{k}{n} + \frac{r_k(\Si)^2}{nR_k(\Si)} \right)}
    \end{equation*}
    Let $c>1$ be the constant in Proposition \ref{proposition:rho}.
    Choose $k=\lfloor \frac{b}{1+a}n \rfloor \leq \frac{n}{c\log n}$ for $n\in\N$ large enough. By Proposition \ref{proposition:rho}, with probability at least $1-\smallo{n}{\frac{1}{n}}$, 
    \begin{equation*}
        \rho 
        \eqdef 
        \frac{n\opnorm{\Si_{>k}}+s_1(\A_k)}{s_n(\A_k)}
        =
        \bigo{n,k}{
        \frac{n\lambda_k + n\lambda}{n\lambda}
        }
        =\bigo{n,k}{1}
    \end{equation*}
    by the choice of $b\in(0,1+a)$.
    Combine the result with Lemma \ref{lemma:r}, with probability at least $1-\smallo{n}{\frac{1}{n}}$, we have
    \begin{equation*}
        \V
        =
        \bigo{n,k}{\frac{k}{n} + \frac{k^2}{kn}}
        =
        \bigo{n,k}{\frac{n^{\frac{b}{1+a}}}{n}}
        =
        \bigo{n,k}{n^{\frac{b}{1+a}-1}}
    \end{equation*}
    to conclude the claim.
    
    Suppose Assumption \ref{assumption:IF} holds. The argument follows analogously. As in Proposition \ref{proposition:zeta}, choose $k=\lfloor \frac{b}{1+a} \rfloor < n/c$ for $n\in\N$ large enough.
    With probability at least $1-2e^{-c_1n}$, we have
    \begin{equation*}
            \xi_{n,k},\ \zeta_{n,k} = \bigtheta{n,k}{1}. 
    \end{equation*}
    By Proposition \ref{proposition:rho}, with probability at least $1-e^{-n}$, 
    \begin{equation*}
        \rho 
        =
        \bigtheta{n,k}{1}
    \end{equation*}
    by the choice of $b\in(0,a+1)$.
    By possibly choosing a new constant $c_1>0$, with probability at least $1-3e^{-c_1n}$, we have $\xi,\zeta,\rho$ bounded. The rest of the argument follows similarly.
\end{proof}

\begin{proposition}[Asymptotic upper bound of variance term with weak/no ridge] \label{proposition:variance:ub:asymptotic:ridgeless}
    Suppose $\lambda_k=\bigtheta{k}{k^{-1-a}}$, $|\theta^*_k|=\bigtheta{k}{k^{-r}}$, and $\lambda=\bigtheta{n}{n^{-b}}$ for some $a,r>0$ and $b\in[a+1,\infty]$. Furthermore, suppose Assumption \ref{assumption:GF} (or resp. \ref{assumption:IF}) holds. With probability at least $1-\smallo{n}{\frac{1}{\log n}}$ (or resp. $1-3e^{-c_1n}$), the variance term is bounded:
    \begin{equation*}
        \V
        =
        \bigo{n}{n^{2a}}.\
        \quad\text{(or resp. }
        \V 
        =
        \bigo{n}{1}.)
    \end{equation*}
\end{proposition}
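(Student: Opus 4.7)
The plan is to mirror the argument used for Proposition \ref{proposition:variance:ub:asymptotic:ridge} (strong ridge): apply the Master inequality for variance (Proposition \ref{proposition:variance:ub}), choose a suitable truncation level $k$, and then control each of $\rho,\zeta,\xi$ and the deterministic tail ratios. The only substantive change is that in the weak/no-ridge regime the ridge is too small to damp $\rho$, so under Assumption \ref{assumption:GF} we must accept a polynomially growing $\rho$, while under Assumption \ref{assumption:IF} the isotropic sub-Gaussian structure still pins $\rho$ to $\Theta(1)$.

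Concretely, I would begin with
\[
\V/\sigma^2 \le \rho^2\left(\zeta^2 \xi^{-1}\frac{k}{n} + \frac{\tr[\Z_{>k}\Si_{>k}^2\Z_{>k}^\top]}{n\,\tr[\Si_{>k}^2]}\cdot \frac{r_k^2}{nR_k}\right).
\]
Under Assumption \ref{assumption:GF}, the deterministic ratio $\tr[\Z_{>k}\Si_{>k}^2\Z_{>k}^\top]/(n\tr[\Si_{>k}^2])$ is bounded by $\beta_k=\Theta(1)$ by definition, and Lemma \ref{lemma:r} applied to the polynomial decay gives $r_k^2/R_k = O(k)$. Taking $k=\lfloor n/(c\log n)\rfloor$ with $c$ the constant appearing in Proposition \ref{proposition:rho}, Propositions \ref{proposition:xi} and \ref{proposition:zeta} yield $\xi,\zeta=\Theta(1)$ with probability $1-o(1/\log n)$, so that the bracket is $O(k/n)=O(1/\log n)$.

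The new ingredient is the control of $\rho$ in this regime. Since $b\ge 1+a$, the ridge $\lambda=\Theta(n^{-b})$ is no larger than $\lambda_n$, and Proposition \ref{proposition:rho} with the above choice of $k$ delivers $\rho=O(n^a)$ with probability $1-o(1/\log n)$, so $\rho^2=O(n^{2a})$. Combining, $\V = O(n^{2a})\cdot O(1/\log n) = O(n^{2a})$, which is the claimed GF bound. For Assumption \ref{assumption:IF}, Proposition \ref{proposition:rho} gives the sharper $\rho=\Theta(1)$ with probability $1-e^{-c_1 n}$ (the sub-Gaussian independent-coordinate structure yields tight bounds $s_1(\A_k),s_n(\A_k)=\Theta(n\lambda_k)$ even with $\lambda$ negligible), and one can afford the simpler choice $k=\lfloor n/c\rfloor$, which combined with the analogous bounds on $\xi,\zeta$ from Propositions \ref{proposition:xi} and \ref{proposition:zeta} produces $\V=O(1)$.

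The main obstacle is the $\rho=O(n^a)$ estimate under \ref{assumption:GF} in the weak-ridge regime, which is precisely what Proposition \ref{proposition:rho} is designed to handle: with $\lambda\preccurlyeq\lambda_n$ the denominator $s_n(\A_k)$ can be as small as $n\lambda \asymp n^{1-b}$ while the numerator is of order $n\lambda_k\asymp n^{1-(1+a)}$ up to $\log n$ factors, yielding the $n^a$ inflation and forcing the $\log n$ truncation. Beyond invoking this, the remainder of the proof is a direct substitution into the Master inequality, after which the two stated bounds follow by a union bound over the relevant high-probability events.
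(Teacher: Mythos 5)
Your proposal is correct and follows essentially the same route as the paper: apply the Master inequality of Proposition \ref{proposition:variance:ub}, take $k=\lfloor n/(c\log n)\rfloor$ under Assumption \ref{assumption:GF} (resp.\ $k=\lfloor n/c\rfloor$ under Assumption \ref{assumption:IF}), control $\xi,\zeta$ via Propositions \ref{proposition:xi} and \ref{proposition:zeta}, bound $\rho=\bigot{n}{n^a}$ (resp.\ $\bigo{n}{1}$) via Proposition \ref{proposition:rho}, and use Lemma \ref{lemma:r} for the effective-rank term. No substantive differences from the paper's argument.
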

\begin{proof}
     We divide the case where Assumption \ref{assumption:GF} or \ref{assumption:IF} holds.

    Suppose Assumption \ref{assumption:GF} holds. By Proposition \ref{proposition:zeta}, there exists constants $c>1$, $c_1>0$ such that, for $k\leq\frac{n}{c\log n}$, with probability at least $1-8e^{-c_1n/k}$, it holds that
    \begin{equation*}
        \xi_{n,k},\ \zeta_{n,k} = \bigtheta{n,k}{1}.
    \end{equation*}
    Choose $k=\lfloor\frac{n}{c\log n}\rfloor$. Then by Proposition \ref{proposition:rho}, with probability at least $1-\smallo{n}{\frac{1}{\log n}}$, it holds that
    \begin{equation*}
        \rho 
        = 
        \bigot{n,k}{\frac{n^{-a}+n^{-a}+n^{-b+1}}{n^{-2a}+n^{-b+1}}}
        =
        \bigot{n,k}{n^a}
    \end{equation*}
    by the choice of $b$.
    Moreover, since $k=\frac{n}{c\log n}$, by Proposition \ref{proposition:variance:ub}, we have 
    \begin{equation*}
        \V/\sigma^2
         =
         \bigo{n,k}{\rho^2\left( \frac{k}{n} + \frac{r_k(\Si)^2}{nR_k(\Si)} \right)}
         =
         \bigot{n,k}{n^{2a} \frac{k}{n}} 
         =
          \bigot{n,k}{n^{2a}}. 
    \end{equation*}

    Suppose Assumption \ref{assumption:IF} holds, choose $k=\lfloor n/c \rfloor $ for some constant $c>1$ as in Propositions \ref{proposition:rho}. with probability at least $1-3e^{-c_1n}$, it holds that 
    \begin{equation*}
        \xi,\ \zeta,\ \rho = \bigtheta{n,k}{1}.
    \end{equation*}
    Then by Proposition \ref{proposition:variance:ub}, we have
    \begin{equation*}
        \V/\sigma^2
         =
         \bigo{n,k}{\rho^2\left( \frac{k}{n} + \frac{r_k(\Si)^2}{nR_k(\Si)} \right)}
         =
         \bigot{n,k}{\frac{k}{n}} 
         =
          \bigot{n,k}{1}. 
    \end{equation*}
\end{proof}

\subsection{Variance with exponential decay} \label{subsection:variance:exp}

\begin{proposition}[Asymptotic upper bound of the variance term with strong ridge] \label{proposition:variance:asymptotic:ridge:exp}
    Suppose $\lambda_k =\bigtheta{k}{e^{-ak}}$, $|\theta^*_k|=\bigtheta{k}{e^{-rk}}$, and $\lambda=\bigtheta{n}{\frac{1}{n}e^{-bn}}$ for some $a,r>0$ and $b\in(0,a)$. Furthermore, suppose Assumption \ref{assumption:GF} (or resp. \ref{assumption:IF}) holds. With probability at least $1-\smallo{n}{\frac{1}{n}}$ (or resp. $1-3e^{-c_1n}$), the variance term is bounded:
    \begin{equation*}
        \V
        =
        \bigo{n}{n^{\frac{b}{a}-1}}
    \end{equation*}
    for $n\in\N$ large enough.
\end{proposition}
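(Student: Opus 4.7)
The plan is to mirror the proof of Proposition \ref{proposition:variance:ub:asymptotic:ridge} (the polynomial-decay analog), translating each polynomial quantity into its exponential counterpart. The three building blocks are (i) the Master inequality for the variance (Proposition \ref{proposition:variance:ub}), (ii) the concentration statements $\xi_{n,k},\zeta_{n,k}=\bigtheta{n,k}{1}$ from Propositions \ref{proposition:xi}--\ref{proposition:zeta}, and (iii) a condition-number bound $\rho_{n,k}=\bigtheta{n,k}{1}$ obtained from Proposition \ref{proposition:rho:exp}. The whole argument reduces to a careful choice of the truncation index $k$.

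First, I would choose $k=k^\star$ at the \emph{matching index}, i.e.\ the smallest $k$ with $\lambda_{k+1}\lesssim \lambda$. Since $\lambda_k=\bigtheta{k}{e^{-ak}}$ and $\lambda=\bigtheta{n}{n^{-1}e^{-bn}}$, solving $e^{-a(k+1)}\asymp n^{-1}e^{-bn}$ yields $k^\star=\lceil(bn+\log n)/a\rceil$. For this $k^\star$ one has $n\|\Si_{>k^\star}\|\asymp n\lambda\asymp s_n(\A_{k^\star})$, so Proposition \ref{proposition:rho:exp} gives $\rho=\bigtheta{n,k}{1}$ with probability $1-\smallo{n}{1/n}$ under Assumption \ref{assumption:GF}, or $1-e^{-n}$ under Assumption \ref{assumption:IF}. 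The regime $b\in(0,a)$ is exactly what keeps $k^\star<n$ so that Propositions \ref{proposition:xi} and \ref{proposition:zeta} apply and give $\xi,\zeta=\bigtheta{n,k}{1}$ on a further high-probability event.

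Next, I would compute the effective ranks under exponential decay. Summing two geometric series gives $r_k=\tr[\Si_{>k}]/\lambda_{k+1}=\Theta(1/(1-e^{-a}))$ and $R_k=\tr[\Si_{>k}]^2/\tr[\Si_{>k}^2]=\Theta((1+e^{-a})/(1-e^{-a}))$, uniformly in $k$. Together with the trivial bound $\tr[\Z_{>k}\Si_{>k}^2\Z_{>k}^\top]/(n\tr[\Si_{>k}^2])\le \beta_k=\bigtheta{n,k}{1}$, the high-frequency term in Proposition \ref{proposition:variance:ub} is $O(\rho^2/n)$, dominated by the ``low-frequency'' $\rho^2\zeta^2\xi^{-1}k/n$. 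Collecting everything on the intersection of the good events above,
\[
\V/\sigma^2
\;\le\; \rho^2\Bigl(\zeta^2\xi^{-1}\tfrac{k^\star}{n}+\tfrac{r_{k^\star}^2}{nR_{k^\star}}\Bigr)
\;=\;\bigo{n}{\tfrac{k^\star}{n}}
\;=\;\bigo{n}{n^{b/a-1}},
\]
which is the claimed rate (the subleading $\log n/(an)$ from the extra $1/n$ factor in $\lambda$ is absorbed into the $\bigo{n}{\cdot}$).

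The main obstacle I anticipate is the $\rho$ bound: exponential decay makes $\A_k$ extremely sensitive to the precise choice of $k$ (a shift of $k$ by $1$ multiplies $\lambda_{k+1}$ by a constant factor), so one must verify Proposition \ref{proposition:rho:exp} applies to the slightly non-standard ridge $\lambda=\Theta(n^{-1}e^{-bn})$ rather than the plain $\Theta(e^{-bn})$ of Assumption \ref{assumption:EE}; the $\log n$ summand in $k^\star$ is exactly what absorbs this discrepancy. A secondary bookkeeping issue is checking that the probability budget of $1-\smallo{n}{1/n}$ (resp.\ $1-3e^{-c_1 n}$) is preserved when intersecting the $\xi,\zeta$ event with the $\rho$ event, which as in the polynomial case requires combining the three concentration propositions on the same $k=k^\star$.
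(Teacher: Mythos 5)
Your strategy is the same as the paper's (Master inequality for $\V$, the $\xi,\zeta$ controls, $r_k,R_k=\bigtheta{k}{1}$ from Lemma \ref{lemma:r}, and Proposition \ref{proposition:rho:exp} for $\rho$), and everything up to your final display is sound. The gap is the last equality. With $k^\star=\lceil(bn+\log n)/a\rceil$ you have $\frac{k^\star}{n}=\frac{b}{a}+\frac{\log n}{an}\to\frac{b}{a}>0$, so your chain actually yields
\begin{equation*}
\V/\sigma^2=\bigo{n}{\tfrac{k^\star}{n}}=\bigo{n}{1},
\end{equation*}
not $\bigo{n}{n^{b/a-1}}$. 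You appear to have conflated the exponential matching index $k^\star\asymp\frac{b}{a}n$, which is \emph{linear} in $n$, with the quantity $n^{b/a}$; only the latter would give $k/n=n^{b/a-1}$, and it is the polynomial case ($k^\star\asymp n^{b/(1+a)}$) where the matching index is a fractional power of $n$.

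This is not a bookkeeping slip that a different choice of $k$ repairs. Since $s_n(\A_k)\le\tr[\A_k]/n\lesssim\beta_k\tr[\Si_{>k}]+n\lambda=\bigo{k}{\lambda_{k+1}}+n\lambda$ while the numerator of $\rho$ is at least $n\lambda_{k+1}$, one gets $\rho\gtrsim\min\{n,\lambda_{k+1}/\lambda\}$; hence $\rho=\bigo{n}{1}$ forces $\lambda_{k+1}\lesssim\lambda$, i.e.\ $k\gtrsim\frac{b}{a}n$, and then $k/n\gtrsim\frac{b}{a}$. Taking instead $k\asymp n^{b/a}$ so that $k/n=n^{b/a-1}$ gives $\lambda_{k+1}\asymp e^{-an^{b/a}}\gg\lambda$, and $\rho$ diverges, so Proposition \ref{proposition:rho:exp} is unavailable. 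In short, the effective dimension at ridge level $e^{-bn}$ under $\lambda_k=\bigtheta{k}{e^{-ak}}$ is $\Theta(n)$, and the Master-inequality route can only deliver $\V=\bigo{n}{1}$ here. For comparison, the paper's own proof writes ``choose $k=\lfloor b/a\rfloor<1$'' and then cites Proposition \ref{proposition:rho:exp}, whose stated choice is $k=\lceil\frac{b}{a}n\rceil$; it runs into exactly the same tension, so what you need to resolve is the stated rate itself rather than your reading of the toolbox.
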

\begin{proof}
    The argument is similar to Proposition \ref{proposition:variance:ub:asymptotic:ridge}. The difference is the bound of $\rho$ and the second term in Proposition \ref{proposition:variance:ub}: if $\xi,\zeta=\bigtheta{n,k}{1}$, by Lemma \ref{lemma:r}, 
    \begin{align*}
        \V / \sigma^2
        &\leq
        \rho^2\left(
        \zeta^2\xi^{-1}\frac{k}{n}
        +
        \frac{\tr[\Z_{>k}\Si_{>k}^2\Z_{>k}^\top]}{n\tr[\Si_{>k}^2]} \frac{r_k(\Si)^2}{nR_k(\Si)}
        \right)\\
        &=
        \bigo{n,k}{
        \rho^2\left(
        \frac{k}{n}
        +
        \frac{1}{n^2}
        \right) }\\
        &=
        \bigo{n,k}{
        \rho^2
        \frac{k}{n} }. 
    \end{align*}
    Choose $k=\lfloor \frac{b}{a} \rfloor <1$. Combine Proposition \ref{proposition:rho:exp} and the above inequality to obtain the result.
\end{proof}

\newpage
\section{Matching lower bound} \label{section:matching_lower_bound}
In this section, we provide the proof for the matching lower bounds presented on Table \ref{tab:over_parameterized}. A quick summary of propositions can be found in Table \ref{tab:overparameterized:lb}.

\begin{table}[ht]
\centering
\resizebox{\textwidth}{!}{%
\begin{tabular}{cc|cc|cc}
\hline
\multicolumn{2}{c}{Ridge}    & \multicolumn{2}{c}{strong} & \multicolumn{2}{c}{weak} \\ \hline
\multicolumn{2}{c}{Feature}  & \ref{assumption:IF} & \ref{assumption:GF} & \ref{assumption:IF} & \ref{assumption:GF} \\
\multirow{2}{*}{Poly \ref{assumption:PE}} & $\B$ & Proposition \ref{proposition:bias:lb:asymptotic:poly} & - & Proposition \ref{proposition:bias:lb:asymptotic:poly} & Proposition \ref{proposition:bias:lb:asymptotic} for $s\in[1,2]$\\
                      & $\V$ & Proposition \ref{proposition:variance:lb:asymptotic} & - & Proposition \ref{proposition:variance:lb:asymptotic} & - \\
\multirow{2}{*}{Exp \ref{assumption:EE}}  & $\B$ & Proposition \ref{proposition:bias:lb:asymptotic:exp} & - & \multicolumn{2}{c}{Proposition \ref{proposition:bias:lb:asymptotic} for $s\in[1,2]$}\\
                      & $\V$ & Proposition \ref{proposition:variance:lb:asymptotic:exp} & - &  \multicolumn{2}{c}{-}  \\ 
                      \cline{1-6} 
\end{tabular}
}
\caption{Quick summary of propositions used for proving the matching lower bounds on Table \ref{tab:over_parameterized}.}
\label{tab:overparameterized:lb}
\end{table}

\subsection{Bias under weak ridge and source coefficient between 1 and 2}

\begin{proposition}[Asymptotic lower bound of bias, Proposition 4.4 in \cite{li2023on}, Theorem 3.4 in \cite{long2024duality}] \label{proposition:bias:lb:asymptotic}
    Suppose Assumption \ref{assumption:PE}/\ref{assumption:EE} holds. Then
    \begin{equation*}
        \sup_{\norm{\th^*}{\Si^{1-s}}\leq 1} {\mathcal{B}} 
        =
        \bigomega{n}{\lambda_n^{s}}.
    \end{equation*}
    where $s$ is the source coefficient.
\end{proposition}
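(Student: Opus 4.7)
The plan is to construct an explicit worst-case target lying in the nullspace of the design matrix, on which the ridge regressor identically vanishes. For any $\th^* \in \ker(\X)$, the explicit formula for the hat-matrix gives
\[
    \mathbf{P}_\lambda \th^*
    = \X^\top(\X\X^\top + n\lambda\I_n)^{-1}\X\th^*
    = 0,
\]
so $(\I_p - \mathbf{P}_\lambda)\th^* = \th^*$ and the bias collapses to $\mathcal{B}(\th^*) = \norm{\th^*}{\Si}^2$, independently of $\lambda \geq 0$. In the over-parameterized regime ($p > n$) the subspace $\ker(\X)$ has codimension at most $n$, so many such "invisible" directions are available for testing.

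To extract the sharpest such target under the source-condition constraint, I would perform the change of variables $\th^* = \Si^{(s-1)/2} w$. The constraint $\norm{\th^*}{\Si^{1-s}} \leq 1$ becomes $\eunorm{w} \leq 1$; the objective $\norm{\th^*}{\Si}^2$ becomes $w^\top \Si^s w$; and the condition $\th^* \in \ker(\X)$ becomes $w \in V \eqdef \ker(\X\Si^{(s-1)/2})$, a linear subspace of $\R^p$ of codimension at most $n$. Therefore
\[
    \sup_{\norm{\th^*}{\Si^{1-s}} \leq 1} \mathcal{B}
    \;\geq\;
    \max_{w \in V,\ \eunorm{w} \leq 1} w^\top \Si^s w.
\]
The min-max (Courant--Fischer) characterization of eigenvalues of the symmetric operator $\Si^s$ implies that its largest eigenvalue, restricted to any subspace of dimension at least $p - n$, is bounded below by its $(n{+}1)$-th largest global eigenvalue, namely $\lambda_{n+1}^{s}$. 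Hence the supremum is at least $\lambda_{n+1}^{s}$.

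Finally, under Assumption \ref{assumption:PE} we have $\lambda_{n+1} = \Theta(n^{-1-a}) = \Theta(\lambda_n)$, and under Assumption \ref{assumption:EE}, $\lambda_{n+1} = e^{-a}\lambda_n = \Theta(\lambda_n)$; in either case $\lambda_{n+1}^{s} = \bigomega{n}{\lambda_n^{s}}$, which is the claim. The argument is entirely deterministic --- no random-matrix concentration is required --- because the only probabilistic input is the trivial rank bound that $\X$ has at most $n$ rows. The one point requiring minor care is invoking Courant--Fischer on the subspace $\ker(\X\Si^{(s-1)/2})$ rather than on a coordinate subspace, but this is the standard variational form of the principle and not a true obstacle.
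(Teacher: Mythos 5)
Your argument is correct and is essentially the paper's proof: the paper writes $\sup_{\norm{\th^*}{\Si^{1-s}}\leq 1}\mathcal{B}=\opnorm{\Si^{s/2}-\Si^{1/2}\mathbf{P}_\lambda\Si^{(s-1)/2}}^2$ and invokes the operator-theoretic fact that the distance from $\Si^{s/2}$ to any rank-$n$ operator is at least $s_{n+1}(\Si^{s/2})=\lambda_{n+1}^{s/2}$, which is exactly your Courant--Fischer argument on the codimension-$\le n$ subspace $\ker(\X\Si^{(s-1)/2})$ unpacked into an explicit witness. Both routes are deterministic and conclude identically via $\lambda_{n+1}=\Theta(\lambda_n)$ under either eigen-decay assumption.
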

\begin{remark}
    Note that for both polynomial (with $2r\neq2+a$) and exponential decays (with $2r\neq a$), by Proposition \ref{proposition:bias:ub:asymptotic:ridgeless},  
    we have $\B=\bigo{}{\lambda_n^{\max\{s,2\}}}$ for source coefficient $s\geq1$. Hence when $s\in[1,2]$, the upper and lower bound matches.
\end{remark}
\begin{proof}
    Recall the expression of the bias in line (\ref{bias:expression}):
    \begin{equation*}
        \mathcal{B}
        =
        \norm{\hat{\th}(\X\th^*)-\th^*}{\Si}^2
        =
        \norm{(\I_p-\mathbf{P}_\lambda)\th^*}{\Si}^2, 
    \end{equation*}
    where $\mathbf{P}\eqdef\X^\top(\X\X^\top + n\lambda\I_p)^{-1}\X$. 
    Recall the source coefficient $s$ satisfies: $\norm{\th^*}{\Si^{1-s}}<\infty$.
    By definition of operator norm, we have:
    \begin{align*}
        \sup_{\norm{\th^*}{\Si^{1-s}}\leq 1} {\mathcal{B}}
        &=
         \sup_{\norm{\th^*}{\Si^{1-s}}\leq 1} \norm{(\I_p-\mathbf{P}_\lambda)\th^*}{\Si}^2\\
         &=
         \sup_{\norm{\th^*}{\Si^{1-s}}\leq 1} \norm{\Si^{1/2}(\I_p-\mathbf{P}_\lambda)\Si^{(s-1)/2}\Si^{(1-s)/2}\th^*}{2}^2\\ 
         &=
         \opnorm{\Si^{1/2}(\I_p-\mathbf{P}_\lambda)\Si^{(s-1)/2}}^2\\  
         &=
         \opnorm{\Si^{s/2}-\Si^{1/2}\mathbf{P}_\lambda\Si^{(s-1)/2}}^2.
    \end{align*}
    By \cite{simon2015operator}, $\opnorm{\M_1-\M_2}\geq s_{n+1}(\M_1)$ for any operator $\M_2$ with rank at most $n$. Note that $\mathbf{P}_\lambda\in\R^{p\times p}$ is of rank $n$, hence
    \begin{equation*}
        \sup_{\norm{\th^*}{\Si^{1-s}}\leq 1} {\mathcal{B}}
        =
        \opnorm{\Si^{s/2}-\Si^{1/2}\mathbf{P}_\lambda\Si^{(s-1)/2}}^2 
        \geq
       \left(\lambda_{n+1}^{s/2}\right)^2
       =
       \bigomega{n}{\lambda_{n}^{s}}.
    \end{equation*}
\end{proof}

\subsection{Independent features and prior signs}

Instead of bounding the bias from below in the worst case scenario, we can also bound it in an average sense.
In this case, we need to pose an assumption on the target coefficient $\th^*$. 
\begin{assumption}{(PS)}[Prior Signs] \label{assumption:PS}
    Assume target coefficient $\th^*\in\R^p$ is drawn from a distribution ${\vartheta}$ where its entries $\theta^*_k$'s are drawn independently of each other and have the same distributions as the random variables $-\theta^*_k$'s.
\end{assumption}

\begin{proposition}[Asymptotic lower bound of bias with polynomial decay and independent features] \label{proposition:bias:lb:asymptotic:poly}
    Suppose Assumption \ref{assumption:PS} holds and $\lambda_k=\bigtheta{k}{k^{-1-a}}$, $\Expect{\th^*}{(\theta^*_k)^2}=\bigo{k}{k^{-2r}}$ and $\lambda=\bigtheta{n}{n^{-b}}$ for some $a,b,r>0$.
    Fix a constant $\delta\in(0,1)$.
    If, additionally, Assumption \ref{assumption:IF} holds, then, with probability at least  $1-\delta-11e^{-c_1n}$, 
     the bias is bounded by:
     \begin{equation*}
         \Expect{\th^*}{\mathcal{B}} 
         =
         \bigomegat{n}{n^{-b\min\{s,2\}}}
     \end{equation*}
     for $s\eqdef\frac{2r+a}{1+a}$.
\end{proposition}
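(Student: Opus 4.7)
The strategy combines a matching (two-sided) non-asymptotic bound on $\B$ for sub-Gaussian features with independent coordinates, available from \cite{tsigler2023benign}, with an averaging argument based on Assumption \ref{assumption:PS}. Under Assumption \ref{assumption:IF}, the whitened features $\z$ have independent sub-Gaussian entries, so the Tsigler--Bartlett machinery provides a lower-bound counterpart to the Master inequality of Proposition \ref{proposition:bias:ub}: for a suitably chosen truncation $k$ (to be optimized) and on a high-probability event over $\X$,
\[
\B \;\gtrsim\; c_1\,\|\th^*_{>k}\|_{\Si_{>k}}^2 \;+\; c_2\,\left(\tfrac{s_n(\A_k)}{n}\right)^{2}\|\th^*_{\leq k}\|_{\Si_{\leq k}^{-1}}^2,
\]
where the constants $c_1,c_2>0$ depend only on the high-probability values of $\rho,\zeta,\xi$ (Propositions \ref{proposition:rho}, \ref{proposition:xi}, \ref{proposition:zeta}) and on the smallest eigenvalue of $\A_k$. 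This is why the probability guarantee looks like $1-\delta-11e^{-c_1n}$: it aggregates $\delta$ from Proposition \ref{proposition:bias:ub}-type events with several concentration terms of the form $e^{-c_1n}$ coming from the upper and lower bounds on $\rho,\zeta,\xi$ and on $s_n(\A_k)$.

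The second step is to take expectation over $\th^*$. Under Assumption \ref{assumption:PS} the coordinates $\theta^*_k$ are independent and symmetric, so all cross terms vanish, and
\begin{align*}
\Expect{\th^*}{\|\th^*_{>k}\|_{\Si_{>k}}^2}
&= \sum_{l>k}\lambda_l\,\Expect{}{(\theta^*_l)^2}
= \bigtheta{k}{k^{-(a+2r)}},\\
\Expect{\th^*}{\|\th^*_{\leq k}\|_{\Si_{\leq k}^{-1}}^2}
&= \sum_{l\leq k}\lambda_l^{-1}\,\Expect{}{(\theta^*_l)^2}
= \bigtheta{k}{k^{(2+a-2r)_+}},
\end{align*}
the $(\cdot)_+$ exponent coming from whether the tail/head sum converges. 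These are precisely the expressions that appear in the upper-bound analysis of Proposition \ref{proposition:bias:ub:asymptotic}, now serving as \emph{lower} bounds because of the matching Tsigler--Bartlett inequality. Picking $k=\bigtheta{}{n^{b/(1+a)}}$ in the strong-ridge regime $b\leq 1+a$ (so that $s_n(\A_k)/n \asymp \lambda = \bigtheta{n}{n^{-b}}$ on the concentration event), both the tail and, when $s\leq 2$, the scaled head contribute $\bigtheta{}{n^{-bs}}$. When $s>2$ the head term saturates and yields $\bigtheta{}{n^{-2b}}$, giving the combined rate $n^{-b\min\{s,2\}}$ announced in the statement; for the weak-ridge regime $b>1+a$, a similar optimization with $k=\bigtheta{}{n}$ yields a lower bound dominated by the tail.

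The main obstacle is to justify the applicability of the two-sided bound under the minimalist Assumption \ref{assumption:IF} rather than the stricter Gaussian Design; this amounts to tracking which lower-bound statements in \cite{tsigler2023benign} rely only on sub-Gaussian independent-entry structure and combining them with our concentration of $\rho,\zeta,\xi$. The remaining technical point is the edge case $2r=2+a$, in which the head sum $\sum_{l\leq k}l^{1+a-2r}$ acquires a $\log k$ factor; this is the source of the $\bigomegat{}{\cdot}$ (rather than $\bigomega{}{\cdot}$) in the statement and is absorbed without altering the polynomial rate.
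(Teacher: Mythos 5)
There is a genuine gap in your first step. The display
\[
\mathcal{B} \;\gtrsim\; c_1\,\norm{\th^*_{>k}}{\Si_{>k}}^2 \;+\; c_2\,\Bigl(\tfrac{s_n(\A_k)}{n}\Bigr)^{2}\norm{\th^*_{\leq k}}{\Si_{\leq k}^{-1}}^2
\]
is asserted as a pointwise (fixed $\th^*$), high-probability-over-$\X$ lower bound, but no such bound exists in \cite{tsigler2023benign}, and it is false in general. Writing $\mathcal{B}=\norm{(\I_p-\mathbf{P}_\lambda)\th^*}{\Si}^2$ and splitting $\th^*=\th^*_{\leq k}\dirsum\th^*_{>k}$, the squared $\Si$-norm contains a cross term $2\langle (\I_p-\mathbf{P}_\lambda)\th^*_{\leq k},(\I_p-\mathbf{P}_\lambda)\th^*_{>k}\rangle_{\Si}$ that can be large and negative for an adversarially aligned $\th^*$ (the tail can be partially ``aliased'' onto the head by the random projection $\mathbf{P}_\lambda$), so the bias is not bounded below by the sum of the two separate contributions for every $\th^*$. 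This is precisely why Assumption (PS) is in the hypotheses: the sign-symmetry must be invoked \emph{before} any such decomposition, to kill these cross terms in expectation. Your later remark that ``all cross terms vanish'' is applied only to the already-diagonal quadratic forms $\Expect{\th^*}{\norm{\th^*_{>k}}{\Si_{>k}}^2}$ and $\Expect{\th^*}{\norm{\th^*_{\leq k}}{\Si_{\leq k}^{-1}}^2}$, which have no cross terms to begin with; the cross terms that actually threaten the lower bound are never addressed.

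The paper's proof reverses the order of operations. It first averages over $\th^*$ via the leave-one-out identity (Lemma \ref{lemma:tsigler:8}), obtaining
\[
\Expect{\th^*}{\mathcal{B}}
\;\geq\;
\sum_{l=1}^p \frac{\lambda_l\,\Expect{\th^*}{(\theta^*_l)^2}}{\bigl(1+\lambda_l\,s_n(\A_{-l})^{-1}\eunorm{\z^{(l)}}^2\bigr)^2},
\]
then controls $s_n(\A_{-l})$ and $\eunorm{\z^{(l)}}^2$ with high probability (Lemmas \ref{lemma:A} and \ref{lemma:sub_exponential_deviation}), and finally matches the resulting coordinate-wise sum $\underline{B}$ to the upper-bound expression $\overline{B}$ via Theorem \ref{theorem:tsigler:10}; it never proves, nor needs, a pointwise two-term lower bound. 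Your expected-norm computations, choice of $k$, and rate extraction are consistent with the paper's upper-bound analysis and would be fine once a correct averaged lower bound is in hand, so the fix is to replace your first display with the leave-one-out averaged inequality and carry the expectation over $\th^*$ from the start.
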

\begin{proof}
    By Lemma \ref{lemma:tsigler:8}, it holds that:
    \begin{equation*}
        \Expect{\th^*}{\mathcal{B}}
        \geq
        \sum_{l=1}^p \frac{\lambda_l\Expect{\th^*}{(\theta^*_l)^2}}{\left(1+\lambda_ls_n(\A_{-l})^{-1}\eunorm{\z^{(l)}}^2\right)^2}.
    \end{equation*}
    Hence, we want to bound the term $s_n(\A_{-l})^{-1}$ and $\eunorm{\z^{(l)}}^2$ from above.

    If Assumption \ref{assumption:IF} holds, then apply Lemma \ref{lemma:A} on $\A_{-l}$ instead of $\A$, $l=1,...,p$, with probability at least $1-2e^{-n}$, 
    \begin{equation*}
        s_n(\A_{-l}) 
        =
        \begin{cases}
            \bigomega{n}{n\lambda_n} ,& \text{ if } l>n\\
            \bigomega{n}{n\lambda_{n+1}} ,& \text{ if } l\leq n
        \end{cases}
        =
        \bigomega{n}{n\lambda_n}
        =
        \bigomega{n}{\tr[\Si_{>n}]}. 
    \end{equation*}
    By Lemma \ref{lemma:sub_exponential_deviation}, with probability at least $1-2e^{-n}$, we have,
    \begin{equation*}
        \eunorm{\z_l}^2 \leq c_1 n, \forall l=1,...,p.
    \end{equation*}
    Plug in the above two inequalities, by union bound, with probability at least $1-4e^{-n}$,
    \begin{equation*}
        \frac{\lambda_l\Expect{}{(\theta^*_l)^2}}{(1+\lambda_ls_n(\A_{-l})^{-1}\eunorm{\z_l}^2)^2} 
        =
        \bigomega{n}{
        \frac{\lambda_l\Expect{}{(\theta^*_l)^2}}{\left(1+\frac{n\lambda_l}{\lambda_{n+1}r_n}\right)^2}
        }
    \end{equation*}
    where $r_k = \frac{\tr[\Si_{>k}]}{\opnorm{\Si_{>k}}}$ for any $k$.
    If we choose $k=n/c$ for some constant $c>1$ such that $\A_k = \bigtheta{n}{n\lambda_n}$, then $\tr[\Si_{>k}]=\bigtheta{n}{\tr[\Si_{>n}]}$. Hence
    \begin{equation*}
        \frac{\lambda_l\Expect{}{(\theta^*_l)^2}}{(1+\lambda_ls_n(\A_{-l})^{-1}\eunorm{\z_l}^2)^2} 
        =
        \bigomega{n}{
        \frac{\lambda_l\Expect{}{(\theta^*_l)^2}}{\left(1+\frac{n\lambda_l}{\lambda_{k+1}r_k}\right)^2}
        }
    \end{equation*}
    By Lemma \ref{lemma:bartlett:9}, with probability at least $1-8e^{-n}$,
    \begin{equation*}
        \Expect{\th^*}{\mathcal{B}}
        =
        \bigomega{n}{\underbrace{\sum_{l=1}^p \frac{\lambda_l\Expect{}{(\theta^*_l)^2}}{\left(1+\frac{n\lambda_l}{\lambda_{k+1}r_k}\right)^2}}_{\underline{B}}}, 
    \end{equation*}
    where $\underline{B}$ is defined as in Theorem \ref{theorem:tsigler:10}.
     since $r_k=\frac{\tr[\Si_{>k}]}{\opnorm{\Si_{>k}}}=\bigtheta{k}{\frac{k\lambda_k}{\lambda_k}}=\bigtheta{k}{k}$, hence the fraction $\frac{r_k}{n}=\bigtheta{n,k}{1}$ and $r_k$ satisfies the condition of Theorem \ref{theorem:tsigler:10}. By Theorem \ref{theorem:tsigler:10}, we have upper bound $\overline{\B}$ matching the lower bound $\underline{\B}$:
    \begin{equation*}
        \underline{B}
        =
        \bigtheta{n}{
        \underbrace{\norm{\th^*_{>k}}{\Si_{>k}}^2 + \left(\frac{n\lambda+\tr[\Si_{>k}]}{n}\right)^2 \norm{\th^*_{\leq k}}{\Si_{\leq k}^{-1}}^2}_{\overline{B}} 
        }.
    \end{equation*}
    By Propositions \ref{proposition:bias:ub}, \ref{proposition:bias:ub:asymtpotic:ridge} and \ref{proposition:bias:ub:asymptotic:ridgeless:0}, we have, with probability at least $1-\delta-3e^{c_1n}$, 
    \begin{equation*}
        \mathcal{B} 
        = 
        \bigo{n}{\overline{B}} 
        = 
        \bigo{n}{n^{-b\min\{s,2\}}}.
    \end{equation*}
    
    All together, with probability at least $1-\delta-3e^{-c_1n}-8e^{-n}$,
    \begin{equation*}
         \Expect{\th^*}{\mathcal{B}}
         =
         \bigomega{n}{\overline{B}}
         =
         \bigomega{n}{n^{-b\min\{s,2\}}}.
    \end{equation*}
\end{proof}

\begin{proposition}[Asymptotic lower bound of bias with exponential decay and independent features under strong ridge] \label{proposition:bias:lb:asymptotic:exp}
    Suppose Assumption \ref{assumption:PS} holds and $\lambda_k=\bigtheta{k}{e^{-ak}}$, $\Expect{\th^*}{(\theta^*_k)^2}=\bigo{k}{e^{-2rk}}$ and $\lambda=\bigtheta{n}{e^{-bn}}$ for some $a,r>0$ and $b\in(0,a)$.
    Fix a constant $\delta\in(0,1)$.
    If, additionally, Assumption \ref{assumption:IF} holds, then, with probability at least  $1-\delta-7e^{-c_1n}$, 
     the bias is bounded by:
     \begin{equation*}
         \Expect{\th^*}{\mathcal{B}} 
         =
         \bigomegat{n}{e^{-bn\min\{s,2\}}}
     \end{equation*}
     for $s\eqdef\frac{2r}{a}+1$.
\end{proposition}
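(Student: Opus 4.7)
The plan is to adapt the proof of Proposition \ref{proposition:bias:lb:asymptotic:poly} to the exponential setting, swapping only the specific spectral computations. First I will apply Lemma \ref{lemma:tsigler:8}, which under Assumption \ref{assumption:PS} yields
$$\Expect{\th^*}{\mathcal{B}} \;\geq\; \sum_{l=1}^p \frac{\lambda_l\,\Expect{\th^*}{(\theta^*_l)^2}}{\bigl(1+\lambda_l\, s_n(\A_{-l})^{-1}\,\eunorm{\z^{(l)}}^2\bigr)^2}.$$
Because the ridge is strong ($b<a$), the trivial bound $s_n(\A_{-l})\geq n\lambda=\Theta(ne^{-bn})$ dominates the bound $\Omega(n\lambda_n)=\Omega(ne^{-an})$ supplied by Lemma \ref{lemma:A}, so the effective regularization is $\lambda$ itself. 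Combined with $\eunorm{\z^{(l)}}^2\lesssim n$ uniformly in $l$, obtained from Lemma \ref{lemma:sub_exponential_deviation} applied to the independent sub-Gaussian features, this gives $\lambda_l\, s_n(\A_{-l})^{-1}\eunorm{\z^{(l)}}^2 \lesssim \lambda_l/\lambda$ with probability at least $1-4e^{-c_1n}$ after a union bound.

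Next I will choose the truncation index $k=\lfloor bn/a\rfloor$, as in Proposition \ref{proposition:variance:asymptotic:ridge:exp}. This satisfies $\lambda_{k+1}\asymp\lambda\asymp e^{-bn}$ and $\tr[\Si_{>k}]=\Theta(e^{-bn})$; in particular $r_k=\Theta(1)$, which is the key qualitative difference from the polynomial regime where $r_k\asymp k$. Applying Lemma \ref{lemma:bartlett:9} followed by Theorem \ref{theorem:tsigler:10} at this $k$ then matches the implicit lower bound $\underline{B}$, up to absolute constants, with the deterministic equivalent
$$\overline{B} \;=\; \norm{\th^*_{>k}}{\Si_{>k}}^2 + \Bigl(\frac{n\lambda + \tr[\Si_{>k}]}{n}\Bigr)^2 \norm{\th^*_{\leq k}}{\Si_{\leq k}^{-1}}^2.$$

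The last step is arithmetic: substitute $\lambda_l=\Theta(e^{-al})$ and $\Expect{\th^*}{(\theta^*_l)^2}\asymp e^{-2rl}$. The tail contributes $\sum_{l>k} e^{-(a+2r)l} = \Theta(e^{-(a+2r)k}) = \Theta(e^{-bsn})$. The head factors as $\Theta(e^{-2bn})\cdot \sum_{l\leq k} e^{(a-2r)l}$; the geometric sum equals $\Theta(e^{(a-2r)k})$ when $a>2r$, $\Theta(k)=\Theta(n)$ when $a=2r$, and $\Theta(1)$ when $a<2r$, giving total contributions $\Theta(e^{-bsn})$, $\Theta(ne^{-2bn})$, and $\Theta(e^{-2bn})$ respectively. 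In each case $\overline{B}=\bigthetat{n}{e^{-b\min\{s,2\}n}}$, matching the upper bound from Proposition \ref{proposition:bias:ub:asymtpotic:ridge:exp} and yielding the claimed $\bigomegat{n}{\cdot}$ lower bound.

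The main obstacle is verifying that Theorem \ref{theorem:tsigler:10} remains applicable when $r_k=\Theta(1)$ rather than diverging with $n$: in the polynomial proof the condition $r_k/n=\Theta(1)$ made the deterministic equivalent tractable, whereas here one must instead use the fact that $n\lambda$ dominates $\tr[\Si_{>k}]$ throughout, so the effective regularization entering the deterministic equivalent is $n\lambda$, and the sample count $n$ remains effectively larger than the bounded $r_k$. A secondary subtlety is the saturation case $a=2r$ where $s=2$: the geometric sum picks up an extra factor of $n$, which is precisely why the statement is phrased with $\bigomegat{n}{\cdot}$ rather than $\bigomega{n}{\cdot}$.
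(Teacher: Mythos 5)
Your proposal follows essentially the same route as the paper's proof: Lemma \ref{lemma:tsigler:8} for the pointwise lower bound, the bounds $s_n(\A_{-l})=\Omega(n\lambda)$ and $\eunorm{\z^{(l)}}^2\lesssim n$ from Lemma \ref{lemma:sub_exponential_deviation}, then Lemma \ref{lemma:bartlett:9} and Theorem \ref{theorem:tsigler:10} to match $\underline{B}$ with $\overline{B}$, and finally the same geometric-sum evaluation of $\overline{B}$ giving $\Theta(e^{-b\min\{s,2\}n})$ (up to the factor $n$ when $a=2r$). Your explicit choice $k=\lfloor bn/a\rfloor$ and your flagged concern about applying Theorem \ref{theorem:tsigler:10} when $r_k=\Theta(1)$ are, if anything, slightly more careful than the paper, which simply picks $k$ with $r_k>1$ and asserts the theorem's hypothesis.
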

\begin{proof}
    The proof is similar to Proposition \ref{proposition:bias:lb:asymptotic:poly}. 
    By Lemma \ref{lemma:tsigler:8}, it holds that:
    \begin{equation*}
        \Expect{\th^*}{\mathcal{B}}
        \geq
        \sum_{l=1}^p \frac{\lambda_l\Expect{\th^*}{(\theta^*_l)^2}}{\left(1+\lambda_ls_n(\A_{-l})^{-1}\eunorm{\z^{(l)}}^2\right)^2}.
    \end{equation*}

    By Lemma \ref{lemma:sub_exponential_deviation}, with probability at least $1-2e^{-n}$, we have,
    \begin{equation*}
        \eunorm{\z_l}^2 \leq c_1 n, \forall l=1,...,p.
    \end{equation*}
    By the choice of $b\in(0,a)$:
    \begin{equation*}
        s_n(\A_{-l}) 
        =
        \bigomega{n}{n\lambda}
        =
        \bigomega{n}{ne^{-bn}}
        = 
        \bigomega{n}{n\lambda_{n+1}}.
    \end{equation*}
    
    Plug in the above two inequalities, with probability at least $1-2e^{-n}$,
    \begin{equation*}
        \frac{\lambda_l\Expect{}{(\theta^*_l)^2}}{(1+\lambda_ls_n(\A_{-l})^{-1}\eunorm{\z_l}^2)^2} 
        =
        \bigomega{n}{
        \frac{\lambda_l\Expect{}{(\theta^*_l)^2}}{\left(1+\frac{\lambda_l}{\lambda_{n+1}}\right)^2}
        }
        =
        \bigomega{n}{
        \frac{\lambda_l\Expect{}{(\theta^*_l)^2}}{\left(1+\frac{n\lambda_l}{\lambda_{n+1}r_n}\right)^2}
        }
    \end{equation*}
    where $r_k = \frac{\tr[\Si_{>k}]}{\opnorm{\Si_{>k}}}=\bigtheta{k}{1}$.
    By Lemma \ref{lemma:bartlett:9}, with probability at least $1-4e^{-n}$,
    \begin{equation*}
        \Expect{\th^*}{\mathcal{B}}
        =
        \bigomega{n}{\underbrace{\sum_{l=1}^p \frac{\lambda_l\Expect{}{(\theta^*_l)^2}}{\left(1+\frac{n\lambda_l}{\lambda_{k+1}r_k}\right)^2}}_{\underline{B}}}, 
    \end{equation*}
    where $\underline{B}$ is defined as in Theorem \ref{theorem:tsigler:10}.
     since $r_k=\frac{\tr[\Si_{>k}]}{\opnorm{\Si_{>k}}}=\bigtheta{k}{1}$, pick $k$ such that $r_k>1$. This satisfies the condition of Theorem \ref{theorem:tsigler:10}. By Theorem \ref{theorem:tsigler:10}, we have upper bound $\overline{\B}$ matching the lower bound $\underline{\B}$:
    \begin{equation*}
        \underline{B}
        =
        \bigtheta{n}{
        \underbrace{\norm{\th^*_{>k}}{\Si_{>k}}^2 + \left(\frac{n\lambda+\tr[\Si_{>k}]}{n}\right)^2 \norm{\th^*_{\leq k}}{\Si_{\leq k}^{-1}}^2}_{\overline{B}} 
        }.
    \end{equation*}
    By Propositions \ref{proposition:bias:ub}, and \ref{proposition:bias:ub:asymtpotic:ridge:exp}, we have, with probability at least $1-\delta-3e^{c_1n}$, 
    \begin{equation*}
        \mathcal{B} 
        = 
        \bigo{n}{\overline{B}} 
        = 
        \bigo{n}{e^{-bn\min\{s,2\}}}.
    \end{equation*}
    
    All together, with probability at least $1-\delta-3e^{-c_1n}-4e^{-n}$,
    \begin{equation*}
         \Expect{\th^*}{\mathcal{B}}
         =
         \bigomega{n}{\overline{B}}
         =
         \bigomega{n}{e^{-bn\min\{s,2\}}}.
    \end{equation*}
\end{proof}

\begin{lemma}[Asymptotic lower bound of variance with polynomial eigen-decay and independent features] \label{proposition:variance:lb:asymptotic}
     Suppose $\lambda_k=\bigtheta{k}{k^{-1-a}}$, $\lambda = \bigtheta{n}{n^{-b}}$. Additionally, suppose Assumption \ref{assumption:IF} holds. Then with probability at least $1-ce^{-n/c}$, it holds that 
     \begin{equation*}
         \V = 
         \begin{cases}
            \bigomega{n}{n^{-1+\frac{b}{1+a}}} &,\ \lambda \text{ strong}\\ 
            \bigomega{n}{1} &,\ \lambda \text{ weak}
        \end{cases}
     \end{equation*}
\end{lemma}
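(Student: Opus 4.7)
The plan is to mirror the upper-bound proofs (Propositions \ref{proposition:variance:ub:asymptotic:ridge} and \ref{proposition:variance:ub:asymptotic:ridgeless}) and invoke the matching variance lower bound from \cite{tsigler2023benign}, which applies precisely under Assumption \ref{assumption:IF}. The key point is that sub-Gaussian \emph{independent} coordinates provide both the upper concentration of $\rho, \zeta, \xi$ (used for the upper bound) \emph{and} the anti-concentration needed to invert the Master inequality; hence the lower bound from \cite{tsigler2023benign} differs from the upper bound only by constants with exponentially high probability, via Propositions \ref{proposition:xi}, \ref{proposition:zeta}, \ref{proposition:rho}.

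Concretely, I would proceed in two cases, each using the same truncation level as the corresponding upper bound. Under polynomial eigen-decay, Lemma \ref{lemma:r} gives $r_k, R_k = \bigtheta{k}{k}$, so both the $k/n$ term and the $r_k^2/(nR_k)$ term in the Master inequality are of order $k/n$. For the strong-ridge regime $b \in (0, 1+a)$, I would take $k = \lfloor n^{b/(1+a)} \rfloor$ exactly as in Proposition \ref{proposition:variance:ub:asymptotic:ridge}; Proposition \ref{proposition:rho} yields $\rho = \bigtheta{n,k}{1}$ with probability at least $1 - e^{-n/c}$, and the Tsigler--Bartlett lower bound then gives $\V \geq c\sigma^2 (k/n) = \bigomega{n}{n^{-1+b/(1+a)}}$. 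For the weak-ridge regime $b \in [1+a, \infty]$, I would choose $k = \lfloor n/c \rfloor$ for some constant $c > 1$, matching Proposition \ref{proposition:variance:ub:asymptotic:ridgeless}; here $k/n$ is itself a positive constant, so the same lower bound reduces directly to $\V = \bigomega{n}{1}$.

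The main obstacle is verifying that the quantitative hypotheses of the Tsigler--Bartlett lower-bound theorem are met at the chosen $k$. In particular, one needs a two-sided control $s_n(\A_k) = \bigtheta{}{n\lambda + \tr[\Si_{>k}]}$ so that the resolvent $(\A_k)^{-1}$ is sandwiched from both sides, and one needs the independence structure of $\z$ to yield an anti-concentration statement for the quadratic forms appearing in $\tr[(\A_k)^{-2}\X\Si\X^\top]$ (specifically, matching lower bounds on $\tr[\Z_{\leq k}\Z_{\leq k}^\top]$ and on $\tr[\Z_{>k}\Si_{>k}^2\Z_{>k}^\top]$). Both requirements follow from the same Hanson--Wright / sub-Gaussian covariance concentration arguments that underlie Propositions \ref{proposition:xi}--\ref{proposition:rho}, applied in the reverse direction; the independence of coordinates is essential here and is exactly why the analogous lower bound fails under Assumption \ref{assumption:GF} (cf.\ Figure \ref{fig:overfitting} and the right column of Table \ref{tab:lower_bound}).
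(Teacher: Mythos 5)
Your proposal is correct and follows essentially the same route as the paper: the paper also picks $k=\lceil n^{b/(1+a)}\rceil$ (strong ridge) or $k=n/c$ (weak ridge), notes that under Assumption \ref{assumption:IF} the Tsigler--Bartlett variance lower bound (Lemma \ref{lemma:tsigler:7}) holds with probability $1-ce^{-n/c}$, and uses Theorem \ref{theorem:tsigler:10} (whose hypothesis $r_k/n$ bounded is supplied by Lemma \ref{lemma:r}) to conclude $\underline{V}=\Theta(\overline{V})=\Theta(k/n)$, matching the upper bound.
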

\begin{proof}
    This is a consequence of Propositions \ref{proposition:variance:ub}, \ref{proposition:variance:ub:asymptotic:ridge} Lemma \ref{lemma:tsigler:7} and Theorem \ref{theorem:tsigler:10}: with probability at least $1-3e^{-c_1n}$, by choosing $k=
    \begin{cases}
        \lceil n^{\frac{b}{1+a}} \rceil &,\ \lambda \text{ strong}\\
        n/c &,\ \lambda \text{ weak}
    \end{cases}
   $, we have
    \begin{equation*}
        \V
        =
        \begin{cases}
            \bigo{n}{n^{-1+\frac{b}{1+a}}} &,\ \lambda \text{ strong}\\ 
            \bigo{n}{1} &,\ \lambda \text{ weak}
        \end{cases}
        =
        \bigo{n}{\overline{V}}, 
    \end{equation*}
    where $\overline{V}\eqdef \frac{k}{n} + \frac{n\tr[\Si_{>k}^2]}{(n\lambda+\tr[\Si_{>k}])^2}$ is defined as in Theorem \ref{theorem:tsigler:10}. Since the polynomial eigen-decay $\lambda_k$ satisfies the condition in Theorem \ref{theorem:tsigler:10}, we have
    \begin{equation*}
        \underline{V}
        =
        \bigtheta{n,k}{\overline{V}}.
    \end{equation*}
    By Lemma \ref{lemma:tsigler:7}, $\V=
    \begin{cases}
            \bigomega{n}{n^{-1+\frac{b}{1+a}}} &,\ \lambda \text{ strong}\\ 
            \bigomega{n}{1} &,\ \lambda \text{ weak}
        \end{cases}$.
    By taking a larger constant $c>0$, the above events hold with probability at least $1-ce^{-n/c}$.
\end{proof}

\begin{lemma}[Asymptotic lower bound of variance with exponential eigen-decay and independent features] \label{proposition:variance:lb:asymptotic:exp}
     Suppose $\lambda_k=\bigtheta{k}{e^{-ak}}$, $\lambda = \bigtheta{n}{e^{-bn}}$. Additionally, suppose Assumption \ref{assumption:IF} holds. Then with probability at least $1-ce^{-n/c}$, it holds that 
     \begin{equation*}
         \V = 
         \begin{cases}
            \bigo{n}{n^{-1+\frac{b}{1+a}}} &,\ \lambda \text{ strong}\\ 
            \bigo{n}{1} &,\ \lambda \text{ weak}
        \end{cases}
     \end{equation*}
\end{lemma}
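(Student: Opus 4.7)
The plan is to mirror Proposition~\ref{proposition:variance:lb:asymptotic} in the polynomial case, adapting all the bookkeeping to exponential eigen-decay. The three ingredients are the Master upper bound (Proposition~\ref{proposition:variance:ub}), the deterministic two-sided envelope of \cite{tsigler2023benign} (Theorem~\ref{theorem:tsigler:10}), and the high-probability variance lower bound (Lemma~\ref{lemma:tsigler:7}), which is already suited to Assumption~\ref{assumption:IF}. Together with the matching upper bound of Proposition~\ref{proposition:variance:asymptotic:ridge:exp}, applying these to a single well-chosen truncation index $k$ will pin down $\V$ up to constants with exponentially high probability.

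First I would fix $k$ according to the ridge regime: take $k = \lceil bn/a \rceil$ when $b \in (0,a)$, so that $\lambda_{k+1} \asymp \lambda \asymp e^{-bn}$, and $k = \lfloor n/c \rfloor$ for some constant $c>1$ in the weak-ridge case. Under Assumption~\ref{assumption:IF}, Propositions~\ref{proposition:xi}, \ref{proposition:zeta}, and~\ref{proposition:rho:exp} ensure that $\xi_{n,k}, \zeta_{n,k}, \rho_{n,k} = \bigtheta{n,k}{1}$ simultaneously on an event of probability at least $1 - c e^{-n/c}$, so the random prefactors appearing both in the Master inequality and in Lemma~\ref{lemma:tsigler:7} are constants on this event.

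Second I would verify the eigenspectrum hypotheses of Theorem~\ref{theorem:tsigler:10}. Under exponential decay, $\tr[\Si_{>k}] \asymp \lambda_{k+1}$ and $\tr[\Si_{>k}^2] \asymp \lambda_{k+1}^2$, hence $r_k, R_k = \bigtheta{k}{1}$, while consecutive eigenvalue ratios $\lambda_{k+1}/\lambda_{k+2} \asymp e^{a}$ are bounded; these suffice to invoke the theorem, which yields $\underline{V} = \bigtheta{n,k}{\overline{V}}$ with
\begin{equation*}
\overline{V} \;=\; \frac{k}{n} \;+\; \frac{n\,\tr[\Si_{>k}^2]}{\bigl(n\lambda + \tr[\Si_{>k}]\bigr)^{2}}.
\end{equation*}
Plugging in the exponential asymptotics identifies $\overline{V}$ in closed form for each regime, and Lemma~\ref{lemma:tsigler:7} then converts the deterministic envelope into the high-probability bound $\V = \bigomega{n}{\underline{V}}$, matching the upper bound from Proposition~\ref{proposition:variance:asymptotic:ridge:exp}.

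The main obstacle is verifying that Theorem~\ref{theorem:tsigler:10} genuinely applies with exponential decay: unlike the polynomial case, the effective rank $r_k$ is only $\bigtheta{k}{1}$, which sits at the boundary of the regime typically considered in \cite{tsigler2023benign}. If the theorem as stated requires $r_k$ to grow with $n$, the remedy is to bypass it and directly lower-bound $\V$ from Lemma~\ref{lemma:tsigler:7} via the Sherman--Morrison--Woodbury identity (mirroring the manipulation used for the bias in Lemma~\ref{lemma:tsigler:8}), extracting the diagonal contributions $\lambda_l / (1 + \lambda_l s_n(\A_{-l})^{-1} \eunorm{\z^{(l)}}^2)^2$ with $l = k+1, \dots, n$ to recover the term $k/n$ on the same event.
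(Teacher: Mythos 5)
Your proposal follows essentially the same route as the paper's own proof, which is itself just a deferral to the polynomial case (Proposition~\ref{proposition:variance:lb:asymptotic}): the Master inequality and Proposition~\ref{proposition:variance:asymptotic:ridge:exp} for the upper envelope, Theorem~\ref{theorem:tsigler:10} to identify $\underline{V}=\bigtheta{n,k}{\overline{V}}$, and Lemma~\ref{lemma:tsigler:7} for the high-probability lower bound. Your worry about $r_k=\bigtheta{k}{1}$ is precisely the point the paper glosses over — the first hypothesis of Theorem~\ref{theorem:tsigler:10} ($r_k/n$ bounded below) indeed fails under exponential decay, and the paper's resolution (as in its bias analogue, Proposition~\ref{proposition:bias:lb:asymptotic:exp}) is to invoke the second alternative hypothesis $k=\min\{\kappa: r_\kappa>bn\}$ with $bn$ taken close to $1$, rather than your Sherman--Morrison fallback; either device closes the same gap.
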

\begin{proof}
    The proof follows a similar pattern to that of Proposition \ref{proposition:variance:lb:asymptotic}. Ensure that the exponential eigen-decay also meets the condition outlined in Theorem \ref{theorem:tsigler:10}. Consequently, $\underline{V}=\bigtheta{n,k}{\overline{V}}$, and the remaining argument proceeds accordingly.
\end{proof}

\newpage
\section{Under-parameterized regime} \label{section:under_parameterized}
For the under-parameterized regime, one can clearly use the Master inequalities in Section \ref{section:non_asymptotic_bound} to bound the test error from above by setting $k=p<n$. However, one can even prove an asymptotic convergence in the following way.

we first prove the convergence of the empirical covariance matrix.

\begin{lemma}[Convergence of $\hat{\Si}$]
    Suppose Assumption \ref{assumption:GF} (or resp. \ref{assumption:IF}) holds. Fix an $\epsilon\in(0,1)$. Then there exists a constant $c>1$ such that, for $p<\frac{n^{1-\epsilon}}{c\log n}$ (or resp. $p<n^{1-\epsilon}/c$), with probability at least $1-\smallo{n}{n^{-\epsilon}}$, it holds that 
    \begin{equation*}
        \opnorm{\I_p-\frac{1}{n}\Z^\top\Z} 
        =
        \opnorm{\Si-\hat{\Si}} 
        = 
        \smallo{n,p}{n^{-\epsilon/2}},
    \end{equation*}
    where $\Si\eqdef\Expect{}{\x\x^\top}\in\R^{p\times p}$ and $\hat{\Si}\eqdef \frac{1}{n}\sum_{i=1}^n\x_i\x_i^\top\in\R^{p\times p}$.
\end{lemma}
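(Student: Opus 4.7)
The plan is to reduce the statement to a matrix concentration inequality applied to the whitened feature matrix $\Z$. Write $\hat{\Si}-\Si = \Si^{1/2}\bigl(\tfrac{1}{n}\Z^\top\Z - \I_p\bigr)\Si^{1/2}$, so $\opnorm{\Si-\hat{\Si}} \le \opnorm{\Si}\cdot\opnorm{\tfrac{1}{n}\Z^\top\Z-\I_p}$, and since $\opnorm{\Si}=\lambda_1$ is a constant, it suffices to prove $\opnorm{\I_p-\tfrac{1}{n}\Z^\top\Z}=\smallo{n,p}{n^{-\epsilon/2}}$ with the claimed probability in each of the two regimes. Rewrite $\tfrac{1}{n}\Z^\top\Z-\I_p = \tfrac{1}{n}\sum_{i=1}^n\bigl(\z_i\z_i^\top - \I_p\bigr)$ as a sum of i.i.d.\ centered self-adjoint random matrices, which is the standard starting point for matrix concentration.

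Under Assumption \ref{assumption:IF}, the rows $\z_i$ are isotropic sub-Gaussian vectors with bounded $\psi_2$-norm, so I would invoke the standard sub-Gaussian covariance bound (e.g.\ the Bai--Yin/Vershynin-type inequality, Theorem~5.39 in \cite{vershynin2012introduction}): for a universal constant $C>0$, with probability at least $1-2e^{-t^2}$,
\begin{equation*}
    \opnorm{\tfrac{1}{n}\Z^\top\Z-\I_p}
    \le C\,G^2\,\Bigl(\sqrt{\tfrac{p}{n}}+\tfrac{p}{n}+\tfrac{t}{\sqrt n}+\tfrac{t^2}{n}\Bigr).
\end{equation*}
Choosing $t = n^{\epsilon/4}$ and using the hypothesis $p<n^{1-\epsilon}/c$ gives $\sqrt{p/n}\lesssim n^{-\epsilon/2}$ and $p/n\lesssim n^{-\epsilon}$, so the whole bound is $\smallo{n,p}{n^{-\epsilon/2}}$ with probability at least $1-2e^{-n^{\epsilon/2}} = 1-\smallo{n}{n^{-\epsilon}}$.

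Under Assumption \ref{assumption:GF}, the coordinates are not independent, but taking $k=p$ in the assumption yields the almost-sure bound $\eunorm{\z_i}^2/p \le \beta_p = \bigtheta{p}{1}$, hence $\opnorm{\z_i\z_i^\top-\I_p}\lesssim p$ a.s. Since $\Expect{}{(\z_i\z_i^\top-\I_p)^2}\preccurlyeq \Expect{}{\eunorm{\z_i}^2\z_i\z_i^\top}\preccurlyeq \beta_p\,p\,\I_p$, the matrix variance is $\opnorm{\sum_i\Expect{}{(\z_i\z_i^\top-\I_p)^2}}\lesssim n\,p$. Applying the matrix Bernstein inequality then gives, for any $t>0$,
\begin{equation*}
    \Prob{\opnorm{\tfrac{1}{n}\Z^\top\Z-\I_p}\ge t}
    \le 2p\,\exp\!\Bigl(-c\,\min\!\bigl\{\tfrac{n t^2}{p},\tfrac{n t}{p}\bigr\}\Bigr).
\end{equation*}
With $t=n^{-\epsilon/2}$ and $p\le n^{1-\epsilon}/(c\log n)$, the exponent is $\gtrsim n^{\epsilon/2}\log n$, so the failure probability is $\smallo{n}{n^{-\epsilon}}$, which is precisely the claimed bound. (An alternative that avoids the $\log n$ overhead would be to use the dimension-free bounds of \cite{zhivotovskiy2024dimension} or \cite{koltchinskii2017concentration}, but matrix Bernstein already suffices here.)

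The main obstacle is the \ref{assumption:GF} case: independence is unavailable, so the bound on the variance proxy must be derived purely from the essential-supremum control of $\eunorm{\z}^2$ at the truncation $k=p$, and the extra $\log n$ factor in the threshold on $p$ must be traced back to the dimensional factor $2p$ in matrix Bernstein. Once this is handled, the final step converting $\opnorm{\I_p-\tfrac{1}{n}\Z^\top\Z}$ to $\opnorm{\Si-\hat{\Si}}$ is a one-line sandwich using $\opnorm{\Si}=\lambda_1$.
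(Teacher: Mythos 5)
Your overall strategy is sound and is essentially the route the paper takes: both proofs reduce the claim to concentration of the whitened Gram matrix $\frac{1}{n}\Z^\top\Z$ around $\I_p$ via the sandwich $\opnorm{\Si-\hat{\Si}}\le\opnorm{\Si}\,\opnorm{\frac{1}{n}\Z^\top\Z-\I_p}$, and both extract the needed almost-sure bound $\eunorm{\z}^2\le\beta_p p$ from Assumption \ref{assumption:GF} at the truncation $k=p$. The paper then invokes the singular-value bound for matrices with independent isotropic rows of a.s.\ bounded norm (Theorem \ref{theorem:vershynin:5.41}) in the \ref{assumption:GF} case, and handles \ref{assumption:IF} by first establishing $\eunorm{\z_i}^2\le 2p$ for all $i$ with high probability and then reusing the same theorem; you instead use matrix Bernstein for \ref{assumption:GF} and the sub-Gaussian-rows bound directly for \ref{assumption:IF}. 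Both routes are legitimate and of comparable difficulty.

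However, two of your parameter choices do not deliver what you claim. First, in the \ref{assumption:IF} case, $t=n^{\epsilon/4}$ gives $t/\sqrt{n}=n^{\epsilon/4-1/2}$, which is \emph{not} $\bigo{}{n^{-\epsilon/2}}$ once $\epsilon>2/3$ (for $\epsilon=0.9$ this term is $n^{-0.275}$ against a target of $n^{-0.45}$); taking $t=\log n$ instead keeps $2e^{-t^2}=\smallo{n}{n^{-\epsilon}}$ while making $t/\sqrt{n}=\smallo{n}{n^{-\epsilon/2}}$ for every $\epsilon\in(0,1)$. Second, in the \ref{assumption:GF} case, with $t=n^{-\epsilon/2}$ and $p\le n^{1-\epsilon}/(c\log n)$ the exponent is $nt^2/p\ge c\log n$, not $n^{\epsilon/2}\log n$: the factor $nt^2=n^{1-\epsilon}$ exactly cancels the $n^{1-\epsilon}$ in the bound on $p$ and only $c\log n$ survives. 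The failure probability is therefore of order $2p\,n^{-c'c}\le 2n^{1-c'c}$, where $c'$ is the absolute constant from matrix Bernstein, so you must explicitly choose $c$ large enough (which the lemma statement permits) to beat the dimensional prefactor $2p$ and fall below $n^{-\epsilon}$. Neither issue is conceptual, but as written both computations are incorrect and need to be repaired.
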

\begin{proof}
    We argue similarly as in Proposition \ref{proposition:zeta}. Fix an $\epsilon\in(0,1)$. 
    
    Suppose Assumption \ref{assumption:GF} holds. We have $\norm{\z}{2}^2\leq \beta_pp$ by definition of $\beta_k$. Apply Theorem \ref{theorem:vershynin:5.41} on the whitened input block $\Z\in\R{n\times p}$: there exists some constant $c_1>0$ such that, with probability at least $1-2pe^{-c_1t^2}$, we have:
    \begin{equation*}
        \sqrt{n} - t\sqrt{\beta_pp}
        \leq
        s_p(\Z)
        \leq
        s_1(\Z)
        \leq
        \sqrt{n} + t\sqrt{\beta_pp}. 
    \end{equation*}
     By setting $t=\sqrt{\frac{\log n}{c_1}}$ and the choice of $p<\frac{n^{1-\epsilon}}{c\log n}$, with probability at least $1-\frac{2}{cn^\epsilon\log n}$, we have
     \begin{equation*}
        \sqrt{n}\left(1 - \sqrt{\frac{\beta_p}{2c_1}n^{-\epsilon}} \right)
        \leq
        s_p(\Z)
        \leq
        s_1(\Z)
        \leq
        \sqrt{n}\left(1 + \sqrt{\frac{\beta_p}{2c_1}n^{-\epsilon}} \right).
     \end{equation*}
     Hence, for $n$ large enough, with probability at least $1-\frac{2}{cn^\epsilon\log n}$, it holds that:
     \begin{equation*}
         \opnorm{\I_p-\frac{1}{n}\Z^\top\Z}
         \leq
         \frac{2\beta_p}{2c_1}n^{-\epsilon}
         =\smallo{n}{n^{-\epsilon/2}}.
     \end{equation*}
     and hence:
     \begin{align*}
         \opnorm{\Si-\hat{\Si}} 
         &\leq
         \opnorm{\Si^{1/2}}\opnorm{\I_p-\frac{1}{n}\Z^\top\Z}\opnorm{\Si^{1/2}} \\
         &\leq 
         \opnorm{\Si^{1/2}}\cdot {\frac{2\beta_p}{2c_1}n^{-\epsilon}} \cdot\opnorm{\Si^{1/2}} \\
         &=
         {\frac{\lambda_1\beta_p}{c_1}n^{-\epsilon}} \\
         &=
         \smallo{n}{n^{-\epsilon/2}}.
     \end{align*}
     Suppose Assumption \ref{assumption:GF} holds. By Lemma \ref{lemma:sub_exponential_deviation} and union bound, we have $\norm{\z_i}{2}^2\leq 2p$ for all $i=1,...,n$ with probability $1-2ne^{-c_1n}$. When this event happens, apply Theorem \ref{theorem:vershynin:5.41} on the whitened input block $\Z\in\R^{n\times p}$, the rest of the argument follows similarly.
\end{proof}

\begin{proposition}[Convergence of the bias and variance terms] \label{proposition:bias:under}
    Suppose Assumption \ref{assumption:GF} (or resp. \ref{assumption:IF}) holds. Then there exists an $\epsilon\in(0,1)$ and a constant $c>1$ such that, for $p<\frac{n^{1-\epsilon}}{c\log n}$ (or resp. $p<n^{1-\epsilon}/c$), with probability at least $1-\smallo{n}{n^{-\epsilon}}$, it holds that 
    \begin{equation*}
        \B = \bigtheta{n,p}{\lambda^2\sum_{k=1}^p \frac{\lambda_k(\theta^*_k)^2}{(\lambda_k+\lambda)^2}},\
        \V 
        =
        \bigtheta{n,p}{\frac{\sigma^2}{n}\sum_{k=1}^p \frac{\lambda_k^2}{(\lambda_k+\lambda)^2}}.
    \end{equation*}
\end{proposition}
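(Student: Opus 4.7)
The plan is to bypass the Master inequalities and exploit the sharp concentration $\opnorm{\Si-\hat\Si}=\smallo{n,p}{n^{-\epsilon/2}}$ from the preceding lemma directly. In the under-parameterized regime ($p<n$) the matrix $\X^\top\X+n\lambda\I_p$ is invertible, and the algebraic identity $\I_p-\X^\top(\X\X^\top+n\lambda\I_n)^{-1}\X=\lambda(\hat\Si+\lambda\I_p)^{-1}$, together with $\Expect{\ep}{\ep\ep^\top}=\sigma^2\I_n$, rewrites the two terms as
\begin{equation*}
    \B=\lambda^2\norm{(\hat\Si+\lambda\I_p)^{-1}\th^*}{\Si}^2,\qquad \V=\tfrac{\sigma^2}{n}\tr\!\bigl[\Si(\hat\Si+\lambda\I_p)^{-1}\hat\Si(\hat\Si+\lambda\I_p)^{-1}\bigr].
\end{equation*}
Setting $\mathbf{M}\eqdef\hat\Si+\lambda\I_p$, $\mathbf{M}_0\eqdef\Si+\lambda\I_p$, and $\mathbf{S}^{1/2}\eqdef\Si^{1/2}\mathbf{M}_0^{-1/2}=\diag\!\bigl(\sqrt{\lambda_k/(\lambda_k+\lambda)}\bigr)$ (well defined since $\Si$ and $\mathbf{M}_0$ are diagonal and hence commute), the target quantities read $B_0\eqdef\lambda^2\sum_k\lambda_k(\theta^*_k)^2/(\lambda_k+\lambda)^2$ and $V_0\eqdef(\sigma^2/n)\tr[\mathbf{S}^2]$.

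I would then establish the following key factorization. With $\mathbf{T}\eqdef n^{-1}\Z^\top\Z$ and $\mathbf{E}\eqdef\mathbf{T}-\I_p$, the preceding lemma gives $\opnorm{\mathbf{E}}\le\delta_n=\smallo{n}{n^{-\epsilon/2}}$ on an event of the desired probability. Using $\hat\Si=\Si^{1/2}\mathbf{T}\Si^{1/2}$ and commutativity, a direct computation yields
\begin{equation*}
    \mathbf{M}_0^{-1/2}\mathbf{M}\mathbf{M}_0^{-1/2}=\I_p+\mathbf{Q},\qquad \mathbf{Q}\eqdef\mathbf{S}^{1/2}\mathbf{E}\mathbf{S}^{1/2},\qquad \opnorm{\mathbf{Q}}\le\opnorm{\mathbf{E}}\le\delta_n,
\end{equation*}
hence $\mathbf{M}^{-1}=\mathbf{M}_0^{-1/2}(\I_p+\mathbf{Q})^{-1}\mathbf{M}_0^{-1/2}$. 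Plugging this into $\B$ with $\th_0\eqdef\mathbf{M}_0^{-1/2}\th^*$ gives $\sqrt{\B}/\lambda=\eunorm{\mathbf{S}^{1/2}(\I_p+\mathbf{Q})^{-1}\th_0}$ versus $\sqrt{B_0}/\lambda=\eunorm{\mathbf{S}^{1/2}\th_0}$; the reverse triangle inequality combined with $(\I_p+\mathbf{Q})^{-1}-\I_p=-(\I_p+\mathbf{Q})^{-1}\mathbf{Q}$ bounds the difference by $\opnorm{\mathbf{S}^{1/2}(\I_p+\mathbf{Q})^{-1}\mathbf{S}^{1/2}}\opnorm{\mathbf{E}}\eunorm{\mathbf{S}^{1/2}\th_0}\le 2\delta_n\eunorm{\mathbf{S}^{1/2}\th_0}$, so $\B=(1+\bigo{n}{\delta_n})B_0$. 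For the variance, the parallel computation yields $\tr[\Si\mathbf{M}^{-1}\hat\Si\mathbf{M}^{-1}]=\tr[\mathbf{S}(\I_p+\mathbf{Q})^{-1}(\mathbf{S}+\mathbf{Q})(\I_p+\mathbf{Q})^{-1}]$ with leading term $\tr[\mathbf{S}^2]$, and every correction is of the form $\tr[\mathbf{S}^\alpha\mathbf{E}\mathbf{S}^\beta]$ or higher order in $\mathbf{E}$; each such term is controlled by $\bigo{n}{\delta_n}\tr[\mathbf{S}^2]$ using the Lieb--Thirring inequality $\tr[(\mathbf{S}(\I_p+\mathbf{Q})^{-1})^2]\le\opnorm{(\I_p+\mathbf{Q})^{-1}}^2\tr[\mathbf{S}^2]$ together with $|\tr[\mathbf{A}\mathbf{E}]|\le\opnorm{\mathbf{E}}\tr[\mathbf{A}]$ for PSD $\mathbf{A}$, giving $\V=(1+\bigo{n}{\delta_n})V_0$.

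The main obstacle is precisely this third step: a naive estimate $\opnorm{\mathbf{M}^{-1}-\mathbf{M}_0^{-1}}\le\delta_n/\lambda^2$ is inadequate, because the resulting relative error behaves like $\delta_n/\lambda$ or $\delta_n/\lambda_p$ and can blow up when $\lambda$ is small or when $p$ grows with $n$ (so that $\lambda_p\to 0$ under polynomial decay). The factorization $\mathbf{Q}=\mathbf{S}^{1/2}\mathbf{E}\mathbf{S}^{1/2}$ is the key device: it absorbs the dangerous $\lambda_k^{-1}$ factors into the bounded diagonal matrix $\mathbf{S}^{1/2}\preccurlyeq\I_p$ on both sides of $\mathbf{E}$, producing a relative error of order $\delta_n$ uniformly in the spectrum of $\Si$ and in $\lambda>0$, which is what delivers matching upper and lower bounds ($\bigtheta{n,p}{\cdot}$ rather than merely $\bigo{n,p}{\cdot}$).
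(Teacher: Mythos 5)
Your proposal is correct, and it reaches the same closed-form targets by the same overall route as the paper: both proofs start from $\B=\lambda^2\norm{(\hat{\Si}+\lambda\I_p)^{-1}\th^*}{\Si}^2$ and $\V=\frac{\sigma^2}{n}\tr[\Si(\hat{\Si}+\lambda\I_p)^{-1}\hat{\Si}(\hat{\Si}+\lambda\I_p)^{-1}]$ and perturb the empirical resolvent around the population one using $\opnorm{\frac{1}{n}\Z^\top\Z-\I_p}=\smallo{n}{n^{-\epsilon/2}}$. The perturbation device, however, is genuinely different, and yours is the more robust of the two. The paper expands $(\hat{\Si}+\lambda\I_p)^{-1}=(\Si+\lambda\I_p)^{-1}\sum_{t\ge0}\De^t$ with the one-sided perturbation $\De=(\hat{\Si}-\Si)(\Si+\lambda\I_p)^{-1}$ and asserts $\opnorm{\De}=\smallo{n}{n^{-\epsilon/2}}$ via a chain of operator-norm factorizations that implicitly commutes $\Si^{\pm1/2}$ past $\frac{1}{n}\Z^\top\Z-\I_p$; a careful bound on that non-symmetric product only yields $\opnorm{\De}\lesssim\bigl(\max_k\sqrt{\lambda_1\lambda_k}/(\lambda+\lambda_k)\bigr)\opnorm{\frac{1}{n}\Z^\top\Z-\I_p}$, which degrades as $\lambda$ and $\lambda_p$ shrink --- exactly the failure mode you identify. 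Your symmetric conjugation $\mathbf{M}_0^{-1/2}\mathbf{M}\mathbf{M}_0^{-1/2}=\I_p+\mathbf{S}^{1/2}\mathbf{E}\mathbf{S}^{1/2}$ with $\mathbf{S}^{1/2}\preccurlyeq\I_p$ gives $\opnorm{\mathbf{Q}}\le\opnorm{\mathbf{E}}$ unconditionally and makes the two-sided $\bigtheta{n,p}{\cdot}$ conclusion (which the paper needs but argues only loosely in the lower direction) immediate. The one point to write out carefully is the first-order trace correction for $\V$: it must be reduced by cyclicity to $\tr[\mathbf{C}\mathbf{E}]$ with $\mathbf{C}$ positive semi-definite and $\tr[\mathbf{C}]\lesssim\tr[\mathbf{S}^2]$ (e.g.\ $\mathbf{C}=(\mathbf{S}^{1/2}(\I_p+\mathbf{Q})^{-1}\mathbf{S}^{1/2})^2$), or estimated by Schatten--H\"older against the trace norm $\tr[\mathbf{S}^2]$; a naive Frobenius Cauchy--Schwarz split would leave a stray $\sqrt{\tr[\mathbf{S}]/\tr[\mathbf{S}^2]}$ factor that is not bounded without an eigen-decay assumption. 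The inequalities you cite are precisely the right ones, so the argument closes.
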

\begin{proof}
    We begin with $\B$.
    By line (\ref{bias:expression}) and the formula $\M_1^{-1}-\M_2^{-1}=\M_1^{-1}(\M_1-\M_2)\M_2^{-1}$, rewrite the bias term $\B$ as:
    \begin{align*}
        \B
        &=
        \norm{(\I_p-\mathbf{P}_\lambda)\th^*}{\Si}^2\\
        &=
        \norm{\lambda(\lambda\I_p+\hat{\Si})^{-1}\th^*}{\Si}^2 \\
        &=
        \norm{\lambda\left( (\lambda\I_p+{\Si})^{-1}+(\lambda\I_p+\hat{\Si})^{-1}(\hat{\Si}-\Si)(\lambda\I_p+{\Si})^{-1}\right)\th^*}{\Si}^2. 
    \end{align*}
    Denote $\De=(\hat{\Si}-\Si)(\lambda\I_p+{\Si})^{-1}$. Note that 
    \begin{align*}
        \opnorm{\De}
        &=
        \opnorm{\Si^{1/2}\left(\frac{1}{n}\Z^\top\Z-\I_p\right)\Si^{1/2}\Si^{-1/2}(\lambda\Si^{-1}+\I_p)^{-1}\Si^{-1/2}}\\
        &\leq
        \opnorm{\frac{1}{n}\Z^\top\Z-\I_p} \cdot 
        \opnorm{(\lambda\Si^{-1}+\I_p)^{-1}} \cdot 
        \opnorm{\Si^{1/2}\Si^{1/2}\Si^{-1/2}\Si^{-1/2}}\\
        &=
        \smallo{n}{n^{-\epsilon/2}}.
    \end{align*}
    We apply the matrix difference formula iteratively:
    \begin{align*}
        (\hat{\Si}+\lambda \I_p)^{-1}
        &=
        ({\Si}+\lambda \I_p)^{-1} + (\hat{\Si}+\lambda \I_p)^{-1}(\hat{\Si}-\Si)({\Si}+\lambda \I_p)^{-1}\\
        &=
        ({\Si}+\lambda \I_p)^{-1} + (\hat{\Si}+\lambda \I_p)^{-1}\De\\
        &=({\Si}+\lambda \I_p)^{-1} + \left( ({\Si}+\lambda \I_p)^{-1} + (\hat{\Si}+\lambda \I_p)^{-1}\De\right)\De\\
        &=
        ...\\
        &=
        ({\Si}+\lambda \I_p)^{-1}\sum_{t=0}^\infty \De^t,
    \end{align*}
    we write:
    \begin{align*}
        \B
        &=
        \norm{\lambda(\hat{\Si}+\lambda \I_p)^{-1}\th^*}{\Si}^2\\
        &=
        \norm{\lambda\left( (\lambda\I_p+{\Si})^{-1}+(\lambda\I_p+\hat{\Si})^{-1}(\hat{\Si}-\Si)(\lambda\I_p+{\Si})^{-1}\right)\th^*}{\Si}^2\\
        &=
        \norm{\lambda(\lambda\I_p+{\Si})^{-1}\th^*+\lambda(\hat{\Si}+\lambda \I_p)^{-1}\De\th^*}{\Si}^2\\
        &=
        \norm{\lambda(\lambda\I_p+{\Si})^{-1}\left(\sum_{t=0}^\infty \De^t\right)\th^*}{\Si}^2 \\
        &=
        \bigtheta{n,p}{\norm{\lambda(\lambda\I_p+{\Si})^{-1}\th^*}{\Si}^2}.
    \end{align*}
    Finally, write
    \begin{equation*}
        \norm{\lambda(\lambda\I_p+{\Si})^{-1}\th^*}{\Si}^2 
        =
        \lambda^2\sum_{k=1}^p \frac{\lambda_k(\theta^*_k)^2}{(\lambda_k+\lambda)^2}.
    \end{equation*}

    Now we argue similarly for $\V$. 
    By Lemma \ref{proposition:variance:expression}:
    \begin{align*}
        \mathcal{V}
        &=
        \frac{\sigma^2}{n}\Expect{\x\sim\mu}{ \norm{\left(\hat{\Si} + \lambda \I_p\right)^{-1}\x}{\hat{\Si}}^2 }\\
        &=
        \frac{\sigma^2}{n}\Expect{\x\sim\mu}{ \norm{ ({\Si}+\lambda \I_p)^{-1}\sum_{t=0}^\infty \De^t \x}{\hat{\Si}}^2 }\\
        &=
        \frac{\sigma^2}{n} \Expect{\x}{\x^\top \left(\sum_{t=0}^\infty\De^t\right)^\top ({\Si}+\lambda \I_p)^{-1}\hat{\Si}({\Si}+\lambda \I_p)^{-1}\sum_{t=0}^\infty \De^t \x}\\
        &= 
        \frac{\sigma^2}{n} \Expect{\x}{\x^\top \left(\sum_{t=0}^\infty\De^t\right)^\top ({\Si}+\lambda \I_p)^{-1}\left(\De+\Si(\Si+\lambda\I_p)^{-1}\right)\sum_{t=0}^\infty \De^t \x}\\
        &=
        \frac{\sigma^2}{n} \tr[\Si^2(\Si+\lambda\I_p)^{-2}] \left(1+ \bigo{n}{\opnorm{\De}} \right).
    \end{align*}
    We argue similarly as in Proposition \ref{proposition:bias:under}: with probability at least $1-\smallo{n}{n^{-\epsilon}}$, we have
    \begin{equation*}
        \V
        =
        \bigtheta{n,p}{\frac{\sigma^2}{n}\sum_{k=1}^p \frac{\lambda_k^2}{(\lambda_k+\lambda)^2}}.
    \end{equation*}
\end{proof}

Plugging in the values of $n,p$ and other metrics $a,b,r$ in Proposition \ref{proposition:bias:under}, we can obtain the 

\newpage
\section{Concentration of features} \label{section:concentration}
In the following, we shall give sufficient conditions to control the concentration coefficients $ \zeta_{n,k}, \xi_{n,k}$ and $\rho_{n,k}$. Note that $\rho_{n,k},$ and $\xi_{n,k}$ depends on the whitened/isotropic features $\z$ but not on the spectrum $\Si$. We start from the easiest to the most difficult.

\begin{proposition}[Control on $\xi_{n,k}$] \label{proposition:xi}
    Let $k\in\N$ be an integer.
    Recall that $\xi_{n,k} \eqdef \frac{s_1(\Z_{\leq k}^\top\Z_{\leq k})}{n}$. If Assumption \ref{assumption:GF} (or resp. \ref{assumption:IF}) holds, then with probability at least $1-2\Exp{-\frac{1}{2\beta_k^2}n}$ (or resp. $1-2\Exp{-c_1kn}$), it holds that 
    \begin{equation*}
     \xi_{n,k} \geq \half.
    \end{equation*}
\end{proposition}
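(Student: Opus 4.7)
The key observation is that for any positive semi-definite matrix $\mathbf{M}\in\R^{k\times k}$, we have $s_1(\mathbf{M})\geq \tr[\mathbf{M}]/k$, so
\[
    \xi_{n,k}
    \;=\;
    \frac{s_1(\Z_{\leq k}^\top\Z_{\leq k})}{n}
    \;\geq\;
    \frac{\tr[\Z_{\leq k}^\top\Z_{\leq k}]}{nk}
    \;=\;
    \frac{1}{nk}\sum_{i=1}^n \eunorm{\z_{i,\leq k}}^2.
\]
Since the rows of $\Z$ are isotropic, $\Expect{}{\eunorm{\z_{i,\leq k}}^2}=k$, and hence $\Expect{}{\frac{1}{nk}\sum_{i=1}^n \eunorm{\z_{i,\leq k}}^2}=1$. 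Thus it suffices to show the empirical average concentrates at $1$ and is at least $1/2$ with the prescribed probability under either assumption.

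Under Assumption \ref{assumption:GF}, the definition of $\beta_k$ gives $\eunorm{\z_{i,\leq k}}^2/k \in [0,\beta_k]$ almost surely, so the summands are bounded i.i.d.\ random variables with mean $1$. A one-sided Hoeffding inequality applied to $X_i \eqdef \eunorm{\z_{i,\leq k}}^2/k \in [0,\beta_k]$ yields
\[
    \Pr\!\left[\tfrac{1}{n}\sum_{i=1}^n X_i \leq \tfrac{1}{2}\right]
    \;\leq\;
    \exp\!\left(-\tfrac{2(n/2)^2}{n\beta_k^2}\right)
    \;=\;
    \exp\!\left(-\tfrac{n}{2\beta_k^2}\right),
\]
which matches the claimed probability (the factor $2$ absorbs the symmetric two-sided version if one prefers).

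Under Assumption \ref{assumption:IF}, the entries $z_{ij}$ of $\Z_{\leq k}$ are independent, centered, unit-variance, and sub-Gaussian with norm bounded by $G$. Then the variables $z_{ij}^2 - 1$ are independent, centered, and sub-exponential with parameter $\lesssim G^2$. Applying Bernstein's inequality to the sum of $nk$ such variables,
\[
    \Pr\!\left[\sum_{i,j}(z_{ij}^2-1) \leq -\tfrac{nk}{2}\right]
    \;\leq\;
    \exp\!\big(-c_1\, nk\big),
\]
for some constant $c_1>0$ depending only on $G$ (the quadratic regime of Bernstein applies since the deviation $nk/2$ is of the same order as the variance proxy $\asymp nk$). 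This gives $\frac{1}{nk}\sum_i \eunorm{\z_{i,\leq k}}^2 \geq 1/2$ with probability at least $1 - 2\exp(-c_1 kn)$, as stated.

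The only subtlety is the trace-to-top-eigenvalue bound in the first display; everything else is a direct application of standard concentration inequalities (Hoeffding in the bounded case, Bernstein in the sub-exponential case), so there is no substantive obstacle.
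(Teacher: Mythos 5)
Your proof is correct and follows essentially the same route as the paper: the paper likewise lower-bounds $s_1(\Z_{\leq k}^\top\Z_{\leq k})$ by $\tr[\Z_{\leq k}^\top\Z_{\leq k}]/k$ and then invokes its Lemma on the concentration of the trace, whose proof is exactly your Hoeffding argument (with $X_i\in[0,\beta_k]$) under Assumption \ref{assumption:GF} and your sub-exponential/Bernstein argument on $z_{ij}^2-1$ under Assumption \ref{assumption:IF}. The probability bounds you obtain, $2\exp(-n/(2\beta_k^2))$ and $2\exp(-c_1 kn)$, match the statement.
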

\begin{proof}
    Since the largest singular value is larger than the average of the singular values,
    \begin{equation*}
    \xi_{n,k} 
    \eqdef 
    \frac{s_1(\Z_{\leq k}^\top\Z_{\leq k})}{n}
    \geq
    \frac{\frac{1}{k}\tr[\Z_{\leq k}^\top\Z_{\leq k}]}{n}
    =
    \frac{\tr[\Z_{\leq k}^\top\Z_{\leq k}]}{kn}.
    \end{equation*}
    By Lemma \ref{lemma:barzilai:1}, with respective probability,
    \begin{equation*}
        \xi_{n,k} \geq \half.
    \end{equation*}  
\end{proof}

\begin{proposition}[Control on $\zeta_{n,k}$] \label{proposition:zeta}
    Let $k\leq n$ be an integer. 
    Recall that $\zeta_{n,k} \eqdef \frac{s_1(\Z_{\leq k}^\top\Z_{\leq k})}{s_n(\Z_{\leq k}^\top\Z_{\leq k})}$. If Assumption \ref{assumption:GF} (or resp. \ref{assumption:IF}) holds, then there exists some $c>1$ such that, if $c\beta_kk\log k\leq n$ (or resp. $k/c \leq n$), then with probability at least $1-2\Exp{-\frac{c}{\beta_k}\frac{n}{k}}$ (or resp. $1-2\Exp{-c_1n}$), it holds that 
    \begin{equation*}
        \zeta_{n,k} \leq c_2.
    \end{equation*}
\end{proposition}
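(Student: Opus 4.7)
The strategy is to recognize that $\zeta_{n,k}$ is (equivalent to) the condition number of the empirical second moment matrix $\frac{1}{n}\Z_{\leq k}^\top\Z_{\leq k}$, since rescaling by $n$ cancels in the ratio. Because the rows of $\Z_{\leq k}$ are isotropic with $\mathbb{E}[\z_i\z_i^\top]=\I_k$, the task reduces to showing that $\frac{1}{n}\Z_{\leq k}^\top\Z_{\leq k}$ concentrates around $\I_k$ in operator norm with high probability, so that both its largest and smallest singular values lie within a constant factor of $1$. Concretely, if one can establish $\|\frac{1}{n}\Z_{\leq k}^\top\Z_{\leq k}-\I_k\|_{\mathrm{op}}\le 1/2$, then $s_1\le \tfrac{3n}{2}$ and $s_k\ge \tfrac{n}{2}$, hence $\zeta_{n,k}\le 3$, and any constant $c_2>1$ can be attained by adjusting the deviation level.

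Under Assumption~\ref{assumption:GF}, the rows of $\Z_{\leq k}$ are a.s.\ bounded in Euclidean norm by $\sqrt{\beta_k k}$ (by definition of $\beta_k$). I would then invoke a matrix Bernstein / Rudelson–Vershynin-type inequality for sums of independent bounded rank-one matrices (e.g.\ the one already collected as Lemma~\ref{lemma:barzilai:1} or a close variant; see also Koltchinskii–Lounici \cite{koltchinskii2017concentration}): for $t\in(0,1)$,
\[
\Pr\Bigl[\bigl\|\tfrac{1}{n}\Z_{\leq k}^\top\Z_{\leq k}-\I_k\bigr\|_{\mathrm{op}}>t\Bigr]
\le 2k\exp\!\Bigl(-\tfrac{c\,n\,t^2}{\beta_k k}\Bigr).
\]
Setting $t=1/2$ yields a tail of $2k\exp(-c'n/(\beta_k k))$. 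The hypothesis $c\beta_k k\log k\le n$ is then used precisely to absorb the dimensional prefactor $2k=2e^{\log k}$ into the exponent, yielding the stated $2\exp(-\tfrac{c}{\beta_k}\tfrac{n}{k})$ bound after possibly shrinking the constant.

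Under Assumption~\ref{assumption:IF}, the rows are independent, isotropic, and uniformly sub-Gaussian, so I would instead apply Vershynin's non-asymptotic singular value bound (Theorem~\ref{theorem:vershynin:5.41}, already used elsewhere in the paper): there exist $C,c_1>0$ such that, with probability at least $1-2\exp(-c_1 n)$,
\[
\sqrt{n}-C\sqrt{k}\;\le\;s_k(\Z_{\leq k})\;\le\;s_1(\Z_{\leq k})\;\le\;\sqrt{n}+C\sqrt{k}.
\]
Squaring and taking the ratio gives $\zeta_{n,k}\le\bigl((\sqrt{n}+C\sqrt{k})/(\sqrt{n}-C\sqrt{k})\bigr)^{2}$, which is bounded by some constant $c_2$ as soon as $k\le n/c$ for $c>1$ sufficiently large (ensuring $C\sqrt{k/n}$ is bounded away from $1$).

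The main (minor) obstacle is the bookkeeping of constants in the Assumption~\ref{assumption:GF} case: one must choose the constant $c$ in $c\beta_k k\log k\le n$ large enough to (i) make the matrix Bernstein deviation $t=1/2$ effective, and (ii) absorb the dimensional factor $k$ into the exponential tail without losing the $1/(\beta_k k)$ rate. Both are quantitative adjustments of constants rather than structural difficulties; once done, the same threshold $c_2$ works simultaneously for both assumptions and the proof is complete.
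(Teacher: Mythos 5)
Your proposal is correct and follows essentially the same route as the paper: for Assumption \ref{assumption:IF} the paper likewise applies Vershynin's sub-Gaussian singular-value bound with $t\asymp\sqrt{n}$ (note it is Theorem \ref{theorem:vershynin:sub_gaussian:rows}, not Theorem \ref{theorem:vershynin:5.41}, since sub-Gaussian coordinates need not be a.s.\ bounded), and for Assumption \ref{assumption:GF} the paper simply cites Lemma 2 of \cite{barzilai2023generalization}, whose content is the bounded-row operator-norm concentration you spell out, with the $\log k$ in the hypothesis absorbing the dimensional prefactor exactly as you describe.
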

\begin{proof}
    For the case where Assumption \ref{assumption:GF} holds, see Lemma 2 in \cite{barzilai2023generalization}. Suppose Assumption \ref{assumption:IF} holds, then by Theorem \ref{theorem:vershynin:sub_gaussian:rows}, there exists constants $C_1,C_2>0$, such that, with probability at least $1-2\Exp{-C_1t^2}$, the spectrum of random matrix $\Z_{\leq k}\in\R^{n\times k}$ is bounded:
    \begin{equation*}
        \sqrt{n} - \sqrt{C_2k} - t
        \leq
        s_k(\Z_{\leq k})
        \leq 
        s_1(\Z_{\leq k})
        \leq
        \sqrt{n} + \sqrt{C_2k} + t.
    \end{equation*}
    Set $t=\sqrt{n}/4$ and $c = \frac{1}{16C_2}$ so that if $k/c\leq n$, the bound becomes:
    \begin{equation*}
        \quarter\sqrt{n} 
        \leq
        s_k(\Z_{\leq k})
        \leq 
        s_1(\Z_{\leq k})
        \leq
        \onehalf\sqrt{n},
    \end{equation*}
    and hence $\zeta_{n,k} \leq (\onehalf\sqrt{n}) / (\quarter\sqrt{n}) = 6 $ with a probability at least $1-2\Exp{-\frac{C_1}{16}n}$. Set $c_1 = \frac{C_1}{16}$ and $c_2=6$ to conclude the statement.
\end{proof}

\begin{proposition}[Control on $\rho_{n,k}$] \label{proposition:rho}
    Let $k\leq n$ be an integer. 
    Recall that $\rho_{n,k}\eqdef \frac{n\opnorm{\Si_{>k}} + s_1(\A_k)}{s_n(\A_k)}$. If Assumption \ref{assumption:PE} holds and Assumption \ref{assumption:GF} (or resp. \ref{assumption:IF}) holds, then there exists some $c>1$ such that, if $k=\frac{n}{c\log n}$ (or resp. $k = n/c$), then with probability at least $1-\bigo{n}{\frac{1}{\log n}}$ (or resp. $1-3e^{-n}$), it holds that 
    \begin{equation*}
        \rho_{n,k} = \bigot{n}{n^a}. \quad (\text{or resp. } \rho_{n,k} = \bigo{n}{1}.)
    \end{equation*}
\end{proposition}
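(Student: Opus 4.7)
The plan is to bound $\rho_{n,k}$ by controlling its numerator $n\|\Si_{>k}\|_{\op}+s_1(\A_k)$ and denominator $s_n(\A_k)$ separately. Since $\A_k=\X_{>k}\X_{>k}^\top+n\lambda\I_n$, the whole task reduces to two-sided control of the spectrum of $\X_{>k}\X_{>k}^\top$ plus the additive shift $n\lambda$.

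For the numerator, I would first invoke a matrix concentration lemma in the spirit of Lemma~\ref{lemma:barzilai:theorem:2} (whose hypotheses are met under either Assumption~\ref{assumption:GF} or the stronger Assumption~\ref{assumption:IF}) to obtain, with the advertised probability,
\[
s_1(\X_{>k}\X_{>k}^\top)\;\le\; C\bigl(n\|\Si_{>k}\|_{\op}+\tr[\Si_{>k}]\bigr),
\]
so that $s_1(\A_k)\le C(n\|\Si_{>k}\|_{\op}+\tr[\Si_{>k}]+n\lambda)$. For the denominator, under Assumption~\ref{assumption:IF} with $k=\lfloor n/c\rfloor$, the effective rank $r_k=\Theta(k)=\Theta(n)$ is sufficient (for $c$ large enough) to invoke the matching lower-tail concentration, yielding $s_n(\A_k)\ge c'(\tr[\Si_{>k}]+n\lambda)$. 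Under the weaker Assumption~\ref{assumption:GF}, the analogous lemma from \cite{barzilai2023generalization} only holds for $k\le n/(c\log n)$ and only with probability $1-O(1/\log n)$, and it produces a qualitatively looser lower bound on $s_n(\A_k)$; in the worst case one falls back on the deterministic bound $s_n(\A_k)\ge n\lambda$.

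With the spectral bounds in hand, the polynomial decay $\lambda_k=\Theta(k^{-1-a})$, $\lambda=\Theta(n^{-b})$ is plugged in. Under \ref{assumption:IF} with $k=n/c$, one checks $n\|\Si_{>k}\|_{\op}=\Theta(n^{-a})$ and $\tr[\Si_{>k}]=\Theta(n^{-a})$, so numerator and denominator are both $\Theta(n^{-a}+n\lambda)$, delivering $\rho_{n,k}=O(1)$. Under \ref{assumption:GF} with $k=n/(c\log n)$, the same computation up to logarithmic factors gives numerator $\tilde O(n^{-a}+n\lambda)$, while the denominator in the worst case (weak ridge, so $n\lambda\le n^{-a}$) is only guaranteed to exceed $n\lambda$; comparing against the numerator worst case and tracking the $n^{-a}$ gap yields $\rho_{n,k}=\tilde O(n^a)$.

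The main obstacle is the lower bound $s_n(\A_k)$ under the generic-feature Assumption~\ref{assumption:GF}. Because $r_k=\Theta(k)\lesssim n/\log n$ is not large enough to drive the two-sided concentration of $\X_{>k}\X_{>k}^\top$ around $\tr[\Si_{>k}]\,\I_n$, the concentration coefficient $\rho_{n,k}$ inherits a polynomial loss of order $n^a$ relative to the independent-feature regime, and this precisely explains the gap between the $O(1)$ and $\tilde O(n^a)$ claims. The remaining care is choosing $c$ compatibly with the concentration lemma, Proposition~\ref{proposition:xi} and Proposition~\ref{proposition:zeta}, so that all three events co-occur with the stated failure probabilities after a union bound.
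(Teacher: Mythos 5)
Your overall architecture — bound the numerator $n\opnorm{\Si_{>k}}+s_1(\A_k)$ and the denominator $s_n(\A_k)$ separately and then plug in the polynomial decay — is the same as the paper's, and your numerator bound and the final arithmetic in the \ref{assumption:IF} case are fine. The gaps are both in the lower bound on $s_n(\A_k)$.

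Under \ref{assumption:IF} with $k=\lfloor n/c\rfloor$, your appeal to "matching lower-tail concentration" via the effective rank does not go through: $r_k=\tr[\Si_{>k}]/\lambda_{k+1}=\Theta(k/a)=\Theta(n/(ac))$, so \emph{increasing} $c$ makes $r_k$ \emph{smaller} relative to $n$, the opposite of what the two-sided concentration of $\X_{>k}\X_{>k}^\top$ around $\tr[\Si_{>k}]\I_n$ requires (that result needs $r_k\ge bn$ for a sufficiently \emph{large} constant $b$). The paper's Lemma~\ref{lemma:A} instead lower-bounds $s_n(\X_{>k}\X_{>k}^\top)\ge \lambda_{\eta n}\,s_{\min}(\Z_{k:\eta n})^2\gtrsim n\lambda_{\eta n}=\Theta(n^{-a})$ by discarding all columns beyond index $\eta n$ and applying sub-Gaussian singular-value concentration to the $n\times(\eta n-k)$ block; this is where the over-parameterization $p>\eta n$ built into Assumption~\ref{assumption:IF} is used. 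The resulting order $\Omega(n\lambda_{n+k}+n\lambda)=\Omega(n^{-a}+n\lambda)$ happens to coincide with your claimed $\Omega(\tr[\Si_{>k}]+n\lambda)$, but your proposed mechanism for proving it fails.

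Under \ref{assumption:GF} the gap is more serious. Falling back on the deterministic bound $s_n(\A_k)\ge n\lambda$ gives $\rho_{n,k}=\tilde O\bigl(n^{-a}/(n\lambda)+1\bigr)$, which is $\tilde O(n^a)$ only when $n\lambda\gtrsim n^{-2a}$, i.e.\ $b\le 1+2a$; for weaker ridge, and in particular for the ridgeless case $\lambda=0$ where this proposition is actually invoked (Proposition~\ref{proposition:variance:ub:asymptotic:ridgeless}), your bound is vacuous. The substance of the generic-feature case is precisely a non-vacuous high-probability lower bound $s_n(\A_k)=\tilde\Omega(n^{-2a}+n^{1-b})$ that holds \emph{without} any ridge; the paper obtains this by citing Theorems~4 and~5 of \cite{barzilai2023generalization}, which exploit the concentration coefficients $\alpha_k,\beta_k$ of Assumption~\ref{assumption:GF} to control $s_n(\X_{>k}\X_{>k}^\top)$ directly. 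Without that ingredient the claimed $\rho_{n,k}=\bigot{n}{n^a}$ does not follow from your argument.
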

    If $\lambda=\bigtheta{n}{n^{-b}}$ with $b\in(0,1+a]$ and Assumption \ref{assumption:GF} holds. If $k=\lceil n^{\frac{b}{1+a}} \rceil$, with probability at least $1-\bigo{n}{\frac{1}{n}}$ , it holds that 
    \begin{equation*}
        \rho_{n,k} = \bigo{n}{1}.
    \end{equation*}
\begin{proof}
    The statement where Assumption \ref{assumption:GF} holds is proved in Theorems 4 and 5 in \cite{barzilai2023generalization}. The statement where Assumption \ref{assumption:IF} holds can be proved proved via Lemma \ref{lemma:A}:
    since:
    \begin{align*}
        \rho
        &=
        \frac{n\opnorm{\Si_{>k}} + s_1(\A_k)}{s_n(\A_k)}\\
        &=
        \bigo{k,n}{\frac{n\lambda_k+n\lambda}{n\lambda_{n+k}+n\lambda}}\\
        &=
        \bigo{k,n}{\frac{n\lambda_k}{n\lambda_{n+k}}}\\
        &=
        \bigo{k,n}{\frac{k^{-1-a}}{(n+k)^{-1-a}}}\\
        &=
        \bigo{k,n}{(1+n/k)^{1+a}}\\
        &=
        \bigo{k,n}{(1+c)^{1+a}}\\
        &=
        \bigo{k,n}{1}.
    \end{align*}
    with a probability of at least $1-3e^{-n}$.
\end{proof}

\begin{proposition}[Control on $\rho_{n,k}$ under exponential eigen-decay] \label{proposition:rho:exp}
    Let $k\leq n$ be an integer. 
    If $\lambda_k=\bigtheta{k}{e^{-ak}}$, $\lambda=\bigtheta{n}{e^{-bn}}$ for some $a>0$ and $b\in(0,a)$, and Assumption \ref{assumption:GF} (or resp. \ref{assumption:IF}) holds, choose $k=\lceil \frac{b}{a}n \rceil$, then with probability at least $1-\smallo{n}{\frac{1}{n}}$ (or resp. $1-\Exp{-n}$), it holds that 
    \begin{equation*}
        s_1(\A_k) = \bigo{n}{ne^{-bn}} ,\ \rho_{n,k} = \bigo{n}{1}.
    \end{equation*}
\end{proposition}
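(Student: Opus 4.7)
The plan is to carry out the same kind of ratio argument used for the polynomial case in Proposition \ref{proposition:rho}, but calibrating the truncation index to the exponential decay. With the choice $k=\lceil \tfrac{b}{a}n \rceil$ we have $\lambda_{k+1}=\Theta(e^{-a\lceil (b/a)n\rceil})=\Theta(e^{-bn})$, and since $\sum_{l>k}e^{-al}=\Theta(e^{-ak}/(1-e^{-a}))$, also $\tr[\Si_{>k}]=\Theta(e^{-bn})$ and $\tr[\Si_{>k}^2]=\Theta(e^{-2bn})$. Meanwhile $n\lambda=\Theta(ne^{-bn})$. Thus the numerator and denominator of $\rho_{n,k}$ should both be of the same order $ne^{-bn}$, which is the content of the claim.

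First I would lower bound the denominator. Since $\A_k=\X_{>k}\X_{>k}^\top+n\lambda\I_n\succcurlyeq n\lambda\I_n$, we have $s_n(\A_k)\geq n\lambda=\Omega(ne^{-bn})$ deterministically. Next, for the numerator, the trivial bound $s_1(\A_k)\leq \tr[\X_{>k}\X_{>k}^\top]+n\lambda$ combined with $\tr[\X_{>k}\X_{>k}^\top]=\sum_{i=1}^n\|(\z_i)_{>k}\|_{\Si_{>k}}^2$ reduces the problem to controlling the sum of the $n$ independent quadratic forms $\|(\z_i)_{>k}\|_{\Si_{>k}}^2$, each with mean $\tr[\Si_{>k}]=\Theta(e^{-bn})$.

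Under Assumption \ref{assumption:GF}, by definition of $\beta_k$ one has $\|(\z_i)_{>k}\|_{\Si_{>k}}^2\leq\beta_k\tr[\Si_{>k}]$ almost surely, so $\tr[\X_{>k}\X_{>k}^\top]\leq n\beta_k\tr[\Si_{>k}]=O(ne^{-bn})$ deterministically, and the $o(1/n)$ slack in the probability statement is absorbed into the control that has already been established (for instance via the high-probability events needed to apply Propositions \ref{proposition:xi} and \ref{proposition:zeta} at this $k$, if one wants them simultaneously). Under Assumption \ref{assumption:IF}, the $n$ terms $\|(\z_i)_{>k}\|_{\Si_{>k}}^2$ are independent subexponential random variables with mean $\tr[\Si_{>k}]$ and subexponential norm controlled by $\|\Si_{>k}\|=\lambda_{k+1}=\Theta(e^{-bn})$ via the Hanson–Wright inequality (or equivalently Lemma \ref{lemma:sub_exponential_deviation} applied coordinate-wise to $\Si_{>k}^{1/2}\z_i$). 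Bernstein's inequality then gives $\sum_i\|(\z_i)_{>k}\|_{\Si_{>k}}^2\leq 2n\tr[\Si_{>k}]=O(ne^{-bn})$ with probability at least $1-e^{-n}$.

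Combining these three estimates with $n\|\Si_{>k}\|=n\lambda_{k+1}=O(ne^{-bn})$ yields $s_1(\A_k)=O(ne^{-bn})$ and
\[
\rho_{n,k}
=\frac{n\|\Si_{>k}\|+s_1(\A_k)}{s_n(\A_k)}
=\frac{O(ne^{-bn})}{\Omega(ne^{-bn})}
=O(1),
\]
with the claimed probability in each case. The only mild obstacle is ensuring that the subexponential concentration under Assumption \ref{assumption:IF} produces the rate $1-O(e^{-n})$ rather than a weaker tail; but since the number of summands is $n$ and the per-term subexponential scale is $\Theta(e^{-bn})$, the Bernstein deviation $n\tr[\Si_{>k}]$ is exactly at the subgaussian regime, so the $e^{-n}$ tail is standard and no extra sharp tool is required.
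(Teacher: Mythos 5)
Your argument is correct and reaches the same two estimates the paper needs, namely $s_1(\A_k)=\bigo{n}{ne^{-bn}}$ and $s_n(\A_k)\geq n\lambda=\bigtheta{n}{ne^{-bn}}$, with the ratio then trivially $\bigo{n}{1}$; the denominator bound via the ridge is exactly what the paper does. Where you diverge is the numerator: the paper imports $s_1(\A_k)=\bigo{n,k}{n(\lambda_{k+1}+\log k\,\tr[\Si_{>k}]/n+\lambda)}$ from Corollary 1 of \cite{barzilai2023generalization} under Assumption \ref{assumption:GF}, and under Assumption \ref{assumption:IF} it invokes Lemma \ref{lemma:A}, i.e.\ the Koltchinskii--Lounici operator-norm concentration of the sample covariance, to get $s_1(\A_k)=\bigo{n,k}{n\lambda_k+n\lambda}$. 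You instead use the elementary bound $s_1(\X_{>k}\X_{>k}^\top)\leq\tr[\X_{>k}\X_{>k}^\top]=\sum_i\norm{(\z_i)_{>k}}{\Si_{>k}}^2$ and then only need scalar control: the a.s.\ bound by $\beta_k\tr[\Si_{>k}]$ under \ref{assumption:GF} (which in fact gives a deterministic bound, stronger than the stated $1-\smallo{n}{1/n}$), and Bernstein for the weighted sum of sub-exponential variables under \ref{assumption:IF}. This works, and is lossless at the level of rates, precisely because the exponential decay makes the effective rank $r_k=\bigtheta{k}{1}$, so $\tr[\Si_{>k}]$ and $\opnorm{\Si_{>k}}$ coincide up to constants and the trace bound costs nothing; in the polynomial-decay analogue (Proposition \ref{proposition:rho}) the same trick would lose a factor of $r_k=\bigtheta{k}{k}$ and the matrix concentration is genuinely needed. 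The only cosmetic point is that your Bernstein tail is $1-2e^{-cn}$ for a $G$-dependent constant $c$ rather than literally $1-e^{-n}$, but the paper's own derivation via Theorem \ref{theorem:koltchinskii:9} with $t=n$ is no sharper on this point, so this is not a gap.
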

\begin{proof}
    For the case where Assumption \ref{assumption:GF} holds, by Corollary 1 in \cite{barzilai2023generalization}, with probability at least $1-\smallo{n}{\frac{1}{n}}$, it holds that
    \begin{align*}
        s_1(\A_k)
        &=
        \bigo{n,k}{n\left( \lambda_{k+1} + \frac{\log k \tr[\Si_{>k}]}{n} + \lambda \right)}\\
        &=
        \bigo{n,k}{n\left( e^{-ak} + e^{-bn} \right)}\\
        &=
        \bigo{n,k}{n\left( e^{-a\cdot\frac{b}{a}n} + e^{-bn} \right)}\\
        &= 
        \bigo{n}{ne^{-bn}}. 
    \end{align*}
    For the case where Assumption \ref{assumption:IF} holds, the upper bound of $s_1(\A_k)$ is proven in Lemma \ref{lemma:A}. In both cases, the derivation of bounding $\rho$ is the same:
    \begin{equation*}
        \rho 
        =
        \frac{n\opnorm{\Si_{>k}} + s_1(\A_k)}{s_n(\A_k)}
        \leq
        \frac{n\opnorm{\Si_{>k}} + s_1(\A_k)}{n\lambda}
        =
        \bigo{n,k}{\frac{ne^{-bn}}{ne^{-bn}}}
        =
        \bigo{n,k}{1}.
    \end{equation*}
\end{proof}

\begin{lemma}[Lemma 1 in \cite{barzilai2023generalization}] \label{lemma:barzilai:1}
    Let $k$ be an integer.
     Suppose Assumption \ref{assumption:GF} (or resp. \ref{assumption:IF}) holds. Then with probability at least $1-2\Exp{-\frac{1}{2\beta_k}n}$ (or. resp. $1-2\Exp{-c_1kn}$),
     it holds that
    \begin{equation*}
        \half kn
        \leq
        \tr[\Z_{\leq k}^\top\Z_{\leq k}]
        \leq
        \onehalf kn.
    \end{equation*}
\end{lemma}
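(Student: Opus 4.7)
The plan is to write the trace as a sum of i.i.d.\ nonnegative random variables with mean $k$, and then apply a concentration inequality calibrated to the assumption at hand. Specifically,
\[
\tr[\Z_{\leq k}^\top \Z_{\leq k}] \;=\; \sum_{i=1}^n \eunorm{\z_{i,\leq k}}^2,
\]
where $\z_1,\ldots,\z_n$ are i.i.d.\ copies of the whitened feature $\z$. Since $\Expect{}{\z\z^\top}=\I_p$, each summand $W_i \eqdef \eunorm{\z_{i,\leq k}}^2$ has mean $k$, so the sum has mean $nk$, and I want to show that a deviation of $nk/2$ is unlikely.

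Under Assumption \ref{assumption:GF}, the quantity $W_i/k$ is almost surely bounded in $[0,\beta_k]$ by definition of $\beta_k$. Thus I would apply Bernstein's inequality to the i.i.d.\ sum $\sum_i W_i$, using the variance bound $\Expect{}{W_i^2}\le \beta_k k \cdot \Expect{}{W_i}=\beta_k k^2$ (obtained from the almost sure bound $W_i\le \beta_k k$) and the range bound $|W_i - k|\le \beta_k k$. Setting the deviation threshold to $nk/2$, Bernstein gives a tail probability of the form $2\exp(-cn/\beta_k)$ with an absolute constant $c$, which matches the stated rate after possibly adjusting the numerical constant.

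Under Assumption \ref{assumption:IF}, the coordinates $z_{i,j}$ for $1\le j\le k$ are \emph{independent} centered sub-Gaussians with $\norm{z_{i,j}}{\psi_2}\le G$ and $\Expect{}{z_{i,j}^2}=1$. Consequently the trace decomposes into a sum of $nk$ independent sub-exponential variables $z_{i,j}^2$, each with mean $1$ and sub-exponential norm bounded by a constant multiple of $G^2$. By Bernstein's inequality for sub-exponentials (e.g.\ \cite{vershynin2012introduction}),
\[
\Prob{\Bigl|\sum_{i,j}(z_{i,j}^2-1)\Bigr|\ge nk/2} \;\le\; 2\exp\!\left(-c\,\min\!\left(\tfrac{(nk/2)^2}{G^4\,nk},\tfrac{nk/2}{G^2}\right)\right) \;=\; 2\exp(-c_1 nk),
\]
for some $c_1>0$ depending only on $G$, which is the claimed rate.

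The only conceptually non-trivial point is the mismatch in the two probability exponents ($n$ versus $nk$). It arises because Assumption \ref{assumption:GF} only provides independence across the $n$ samples, giving $n$ bounded summands, whereas Assumption \ref{assumption:IF} additionally provides independence across the $k$ coordinates within a row, yielding $nk$ independent scalar summands and hence a sharper concentration. No step is truly delicate; the main care needed is to match the numerical constant in the Bernstein bound to the constant $1/(2\beta_k)$ stated in the lemma, which is purely a matter of choosing the right form of Bernstein's inequality.
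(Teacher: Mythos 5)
Your proposal is correct and follows essentially the same route as the paper: both cases decompose the trace as $\sum_{i=1}^n \eunorm{(\z_i)_{\leq k}}^2$ (equivalently $\sum_{i,l}(z_i^{(l)})^2$ under independence), and the only difference is that in the (GF) case the paper applies Hoeffding's inequality to the $n$ bounded summands where you invoke Bernstein — an immaterial variation, since either yields a tail of the form $2\exp(-cn/\beta_k^{O(1)})$. Your identification of the source of the $n$-versus-$nk$ exponent gap (row-level versus entry-level independence) is exactly the point of the paper's case distinction.
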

\begin{proof}
    If Assumption \ref{assumption:GF} holds, then
    \begin{equation*}
        \tr[\Z_{\leq k}^\top\Z_{\leq k}]
        =
        \tr[\Z_{\leq k}\Z_{\leq k}^\top]
        =
        \sum_{i=1}^n \eunorm{(\z_i)_{\leq k}}^2 
        \leq 
        \beta_k kn.
    \end{equation*}
    Set $M=\beta_k k$ and by Hoeffding's inequality, the above trace concentrates:
    \begin{equation*}
        \Prob{\left| \tr[\Z_{\leq k}\Z_{\leq k}^\top] -kn \right|\geq t} 
        \leq 2\Exp{-\frac{2t^2}{nM^2}}
    \end{equation*}
    Set $t=nk/2$ to conclude the statement.\\
    Analogously, if Assumption \ref{assumption:IF} holds, for $i=1,...,n$ and $l=1,...,k$, $(z_i^{(l)})^2-1$ is centered sub-exponential variable with sub-exponential norm $\norm{(z_i^{(l)})^2-1}{\psi_1}\lesssim G^2$. By Lemma \ref{lemma:sub_exponential_deviation}, with probability at least $1-2\Exp{-c_1kn}$, 
    \begin{equation*}
        \left| \tr[\Z_{\leq k}^\top\Z_{\leq k}] -kn \right|
        =
        \left| \sum_{i=1}^n\sum_{l=1}^k (z_i^{(l)})^2 -kn \right|
        \leq 
        \half kn. 
    \end{equation*}
\end{proof}

\begin{lemma}[Lemma 3 in \cite{barzilai2023generalization}] \label{lemma:barzilai:3}
    For any $k\leq n$ and $\delta\in(0,1)$, with probability at least $1-\delta$, it holds that
    \begin{equation*}
        \frac{1}{n}\norm{\X_{>k}\th^*_{>k}}{2}^2
        \leq
        \frac{1}{\delta}\norm{\th^*_{>k}}{\Si_{>k}}^2.
    \end{equation*}
\end{lemma}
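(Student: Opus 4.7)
The claim is a one-line application of Markov's inequality once one identifies the expectation of the quadratic form on the left-hand side. Concretely, my plan is to first compute the expectation of $\tfrac{1}{n}\|\X_{>k}\th^*_{>k}\|_2^2$ exactly, and then invoke Markov's inequality since the random variable is non-negative.

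For the expectation step, I would expand
\begin{equation*}
    \frac{1}{n}\|\X_{>k}\th^*_{>k}\|_2^2
    =
    \frac{1}{n}\sum_{i=1}^n \bigl((\x_i)_{>k}^\top \th^*_{>k}\bigr)^2,
\end{equation*}
and use that the rows $(\x_i)_{>k}$ are i.i.d.\ copies of $\x_{>k}$ with covariance $\Si_{>k}$. Taking expectations term by term gives
\begin{equation*}
    \Expect{}{\frac{1}{n}\|\X_{>k}\th^*_{>k}\|_2^2}
    =
    \Expect{}{(\x_{>k}^\top \th^*_{>k})^2}
    =
    (\th^*_{>k})^\top\Si_{>k}\th^*_{>k}
    =
    \|\th^*_{>k}\|_{\Si_{>k}}^2.
\end{equation*}

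For the concentration step, since $\tfrac{1}{n}\|\X_{>k}\th^*_{>k}\|_2^2\geq 0$, Markov's inequality gives for any $\delta\in(0,1)$,
\begin{equation*}
    \Prob{\frac{1}{n}\|\X_{>k}\th^*_{>k}\|_2^2 > \frac{1}{\delta}\|\th^*_{>k}\|_{\Si_{>k}}^2}
    \leq
    \delta,
\end{equation*}
which yields the claimed high-probability inequality on the complementary event.

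There is no real obstacle here: the only subtlety is recognizing that one does \emph{not} need any concentration assumption on $\z$ (no Assumption \ref{assumption:GF} or \ref{assumption:IF}), because the price of avoiding a concentration inequality is precisely the extra factor $1/\delta$ that Markov's inequality produces. This is why the lemma holds in full generality and can be freely plugged into the high-probability step of the proof of Proposition \ref{proposition:bias:ub}.
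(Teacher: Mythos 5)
Your proof is correct and is essentially identical to the paper's: the paper likewise observes that $\Expect{}{\tfrac{1}{n}\|\X_{>k}\th^*_{>k}\|_2^2}=\|\th^*_{>k}\|_{\Si_{>k}}^2$ and then applies Markov's inequality. You have simply spelled out the expectation computation in more detail, which is fine.
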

\begin{proof}
    Since $\Expect{}{\frac{1}{n}\norm{\X_{>k}\th^*_{>k}}{2}^2} = \norm{\th^*_{>k}}{\Si_{>k}}^2$,
    we use Markov's inequality to obtain the result.
\end{proof}

\begin{lemma} \label{lemma:A}
    Suppose Assumption \ref{assumption:IF} hold. Then for any integer $k\leq n$, 
    \begin{enumerate}
        \item If Assumption \ref{assumption:PE}/\ref{assumption:EE} holds, then with probability at least $1-e^{-n}$, it holds that
    \begin{equation*}
        s_1(\A_k) 
        =
        \bigo{n,k}{n\lambda_k+n\lambda}.
    \end{equation*}
    \item If Assumption \ref{assumption:PE} holds, then with probability at least $1-2e^{-n}$, it holds that
    \begin{equation*}
        s_n(\A_k) 
        =
        \bigomega{n,k}{n\lambda_{n+k}+n\lambda},
    \end{equation*}
    for $p$ large enough.
    \end{enumerate}
\end{lemma}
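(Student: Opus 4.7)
}
The strategy is to rewrite $\X_{>k}\X_{>k}^\top = \Z_{>k}\Si_{>k}\Z_{>k}^\top$, where under Assumption \ref{assumption:IF} the rows of $\Z_{>k}$ are independent isotropic sub-Gaussian vectors, and then apply standard non-asymptotic random matrix bounds for such matrices. Since $\A_k = \X_{>k}\X_{>k}^\top + n\lambda\I_n$ and $n\lambda\I_n$ commutes with everything, the bounds on $s_1(\A_k)$ and $s_n(\A_k)$ reduce to corresponding extremal singular value bounds on $\X_{>k}\X_{>k}^\top$, with the $n\lambda$ term added separately.

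\paragraph{Step 1 (upper bound on $s_1(\A_k)$).} I would use a sub-Gaussian covariance concentration bound (e.g.\ Koltchinskii--Lounici \cite{koltchinskii2017concentration} or Zhivotovskiy \cite{zhivotovskiy2024dimension}) which gives, with probability at least $1-e^{-n}$,
\[
    \opnorm{\tfrac{1}{n}\X_{>k}^\top\X_{>k} - \Si_{>k}}
    \;\lesssim\;
    \opnorm{\Si_{>k}}\!\left(\sqrt{\tfrac{r(\Si_{>k})}{n}} + \tfrac{r(\Si_{>k})}{n} + \sqrt{\tfrac{1}{n}}\cdot\! \sqrt{n}\cdot \tfrac{1}{\sqrt{n}} \right),
\]
where $r(\Si_{>k})=\tr[\Si_{>k}]/\opnorm{\Si_{>k}}$. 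Under \ref{assumption:PE}, $\opnorm{\Si_{>k}}=\Theta(k^{-1-a})=\Theta(\lambda_k)$ and $r(\Si_{>k})=\Theta(k)\le n$; under \ref{assumption:EE}, $r(\Si_{>k})=\Theta(1)$. In both cases the right-hand side is $O(\opnorm{\Si_{>k}})=O(\lambda_k)$. Since $s_1(\X_{>k}\X_{>k}^\top)=n\,\opnorm{\tfrac{1}{n}\X_{>k}^\top\X_{>k}}\le n\opnorm{\Si_{>k}}+n\cdot O(\lambda_k)$, we obtain $s_1(\A_k)=O(n\lambda_k+n\lambda)$.

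\paragraph{Step 2 (lower bound on $s_n(\A_k)$, under \ref{assumption:PE}).} The key idea is to restrict attention to a block of ``middle'' coordinates. Fix a constant $C>1$ to be chosen, and assume $p\ge k+Cn$. Writing $\X_{k+1:k+Cn}=\Z_{k+1:k+Cn}\diag(\lambda_{k+1},\dots,\lambda_{k+Cn})^{1/2}$, monotonicity gives
\[
    \X_{>k}\X_{>k}^\top \;\succeq\; \X_{k+1:k+Cn}\X_{k+1:k+Cn}^\top \;\succeq\; \lambda_{k+Cn}\,\Z_{k+1:k+Cn}\Z_{k+1:k+Cn}^\top.
\]
The matrix $\Z_{k+1:k+Cn}\in\mathbb{R}^{n\times Cn}$ has i.i.d.\ isotropic sub-Gaussian rows, so by Theorem~\ref{theorem:vershynin:5.41} (with $t=\sqrt{n}$ and $C$ chosen large enough that $\sqrt{Cn}-c'\sqrt{n}-\sqrt{n}\ge c''\sqrt{n}$), with probability at least $1-2e^{-n}$ one has $s_n(\Z_{k+1:k+Cn})\ge c''\sqrt{n}$. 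Combining gives $s_n(\X_{>k}\X_{>k}^\top)\ge c''^2 n\lambda_{k+Cn}$. Since $k\le n$, under polynomial decay $\lambda_{k+Cn}=\Theta((k+Cn)^{-1-a})=\Theta(n^{-1-a})=\Theta(\lambda_{n+k})$, completing the bound $s_n(\A_k)=\Omega(n\lambda_{n+k}+n\lambda)$.

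\paragraph{Main obstacle.} The easier direction is the upper bound, which follows from off-the-shelf sub-Gaussian covariance concentration once one observes $r(\Si_{>k})\lesssim n$. The subtler direction is the lower bound: a naive bound $s_n(\X_{>k}\X_{>k}^\top)\ge \lambda_p s_n(\Z_{>k}\Z_{>k}^\top)$ fails because for square sub-Gaussian matrices the smallest singular value is only $\Omega(1/\sqrt{n})$, not $\Omega(\sqrt{n})$. The trick is to trade off: discard most of the tail and use a rectangular block of width $Cn$, where the smallest singular value is $\Theta(\sqrt{n})$, at the cost of replacing $\lambda_p$ by $\lambda_{k+Cn}$. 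The polynomial decay assumption is precisely what ensures $\lambda_{k+Cn}=\Theta(\lambda_{n+k})$, so no information is lost in this trade; the ``$p$ large enough'' hypothesis is exactly the requirement $p\ge k+Cn$ needed to enable the truncation.
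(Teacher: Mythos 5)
Your proposal is essentially the paper's own proof: the upper bound via sub-Gaussian covariance concentration with effective rank $r(\Si_{>k})$ (Theorem~\ref{theorem:koltchinskii:9} with $t=n$), and the lower bound by restricting to a rectangular block of width $\Theta(n)$ past index $k$ and invoking extremal singular value bounds, with polynomial decay giving $\lambda_{k+Cn}=\Theta(\lambda_{n+k})$. One point to fix in Step 2: for the $n\times Cn$ block the bound $s_n(\Z_{k+1:k+Cn})\gtrsim\sqrt{n}$ comes from the independence of its \emph{columns} (equivalently, applying the tall-matrix theorem to the transpose), not its rows — applying the independent-rows version directly to an $n\times Cn$ matrix yields a vacuous lower bound, and this column independence is exactly where Assumption~\ref{assumption:IF} is used; also the relevant result is Theorem~\ref{theorem:vershynin:sub_gaussian:rows} (sub-Gaussian rows), not Theorem~\ref{theorem:vershynin:5.41} (almost surely bounded rows), since \ref{assumption:IF} does not guarantee boundedness.
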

\begin{proof}
    First we prove statement 1.
    For any $k\in\N$, 
    \begin{equation*}
        r_k
        \eqdef
        \frac{\tr[\Si_{>k}]}{\opnorm{\Si_{>k}}}\\
        = 
        \begin{cases}
            \bigtheta{k}{\frac{\sum_{l=k+1}^pl^{-1-a}}{(k+1)^{-1-a}}}\\
            \bigtheta{k}{\frac{\sum_{l=k+1}^pe^{-al}}{e^{-ak}}}
        \end{cases}
        =
        \begin{cases} 
            \bigtheta{k}{\frac{k^{-a}}{k^{-1-a}}}\\
            \bigtheta{k}{\frac{e^{-ak}}{e^{-ak}}}
        \end{cases} 
        =
        \begin{cases}  
            \bigtheta{k}{k} &\, \text{Assumption \ref{assumption:PE} holds}\\
            \bigtheta{k}{1} &\, \text{Assumption \ref{assumption:EE} holds}
        \end{cases}. 
    \end{equation*}
    Hence, set $t=n$ in Theorem \ref{theorem:koltchinskii:9}, with probability at least $1-e^{-n}$, it holds that
    \begin{equation*}
        \opnorm{\frac{1}{n}\X_{>k}^\top\X_{>k} - \Si_{>k}} 
        = 
         \bigo{n,k}{\opnorm{\Si_{>k}} \max\left\{\frac{r_k}{n},\sqrt{\frac{r_k}{n}},\frac{t}{n},\sqrt{\frac{t}{n}}\right\}}
        =
        \bigo{n,k}{\lambda_k}
    \end{equation*}
    By triangle inequality, $\opnorm{\X_{>k}^\top\X_{>k}}\leq n\opnorm{\frac{1}{n}\X_{>k}^\top\X_{>k} - \Si_{>k}} + n\opnorm{\Si_{>k}}=\bigo{n,k}{n\lambda_k}$.
    Hence, with the same probability, 
    \begin{equation*}
        s_1(\A_k)
        =
        s_1(\X_{>k}\X_{>k}^\top + n\lambda\I_n)
        =
        s_1(\X_{>k}^\top\X_{>k}) + n\lambda
        =
        \bigo{n,k}{n\lambda_k+n\lambda}. 
    \end{equation*}

    Now we prove statement 2. Note that the smallest singular value of a matrix does not increase after discarding a column: for some constant $\eta > 1$ to be determined, it holds that
    \begin{equation*}
        s_n(\X_{>k}\X_{>k}^\top)
        =
        s_n(\Z_{>k}\Si_{>k}^{1/2})^2
        \geq
        s_n(\Z_{k:\eta n}\Si_{k:\eta n}^{1/2})^2
        \geq
        \lambda_{\eta n}s_n(\Z_{k:\eta n})^2.
    \end{equation*}
    By Assumption \ref{assumption:IF}, the columns of $\Z_{k:\eta n}$ are independent to each other. By Theorem \ref{theorem:vershynin:sub_gaussian:rows}, with probability at least $1-2e^{-c_1t^2}$, it holds that
    \begin{equation*}
        s_{\min}(\Z_{k:\eta n}) 
        = 
        s_{\min\{n,\eta n - k\}}(\Z_{k:\eta n})
        \geq
        \sqrt{\eta n} - \sqrt{c_2n} -t.
    \end{equation*}
    Choose $t=\sqrt{n/c_1}$ and $\eta > \left(2+ \frac{1}{c_1} + c_2\right)^2 $ (given that $p>\eta n$ large enough), then 
    with probability at least $1-2e^{-n}$, it holds that
    \begin{equation*}
        s_{n}(\Z_{k:\eta n})
        =
        s_{\min}(\Z_{k:\eta n}) 
        \geq
        \sqrt{n}.
    \end{equation*}
    Hence with the same probability, by Assumption \ref{assumption:PE},
    \begin{equation*}
        s_n(\A_k)
        =
        s_n(\X_{>k}\X_{>k}^\top+n\lambda\I_n)
        \geq
        n\lambda_{\eta n}  + n\lambda
        =
        \bigomega{n,k}{n\lambda_{n}+ n\lambda}.
    \end{equation*}
\end{proof}

\begin{lemma}[Theorem 2 in \cite{barzilai2023generalization}] \label{lemma:barzilai:theorem:2}
    Suppose Assumption \ref{assumption:GF} (or resp. \ref{assumption:IF}) holds, then with probability at least $1$ (or resp. $1-2pe^{-c_1n}$), it holds that
    \begin{equation*}
        \frac{s_1(\A_k)^2}{n^2} 
        \leq 
        \rho_{n,k}^2 \left(\lambda +\frac{ c_2\tr[\Si_{>k}]}{n}\right)^2.
    \end{equation*}
    Suppose Assumptions \ref{assumption:IF} holds. Furthermore, if Assumptions \ref{assumption:PE} or \ref{assumption:EE} holds, then one obtains a probability bound which allows arbitrary large $p$: with probability at least $1-e^{-n}$, it holds that 
    \begin{equation*}
        \frac{s_1(\A_k)^2}{n^2}  
        \leq 
        c_1  \left(\lambda + \frac{\tr[\Si_{>k}]}{n}\right)^2.
    \end{equation*}
\end{lemma}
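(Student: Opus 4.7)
The key observation is that the bound on $s_1(\A_k)$ follows almost trivially from an upper bound on $s_n(\A_k)$: by Definition \ref{definition:concentration_coefficients},
\[
\rho_{n,k}\,s_n(\A_k) \;=\; n\opnorm{\Si_{>k}} + s_1(\A_k) \;\geq\; s_1(\A_k),
\]
so it suffices to prove $s_n(\A_k) \leq n\lambda + c_2\tr[\Si_{>k}]$ with the claimed probability; dividing by $n$ and squaring will then yield the first assertion. Since $\A_k = \X_{>k}\X_{>k}^\top + n\lambda\I_n$ and the smallest eigenvalue of any $n\times n$ positive semi-definite matrix is dominated by its mean eigenvalue,
\[
s_n(\X_{>k}\X_{>k}^\top) \;\leq\; \frac{1}{n}\tr[\X_{>k}\X_{>k}^\top] \;=\; \frac{1}{n}\sum_{i=1}^n \norm{\z_{i,>k}}{\Si_{>k}}^2,
\]
which reduces the problem to controlling the individual summands on the right.

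Under Assumption \ref{assumption:GF}, each summand is bounded almost surely by $\beta_k\tr[\Si_{>k}]$ by definition of $\beta_k$, yielding $s_n(\A_k) \leq n\lambda + \beta_k\tr[\Si_{>k}]$ deterministically; chaining this with the inequality $s_1(\A_k)\leq \rho_{n,k}s_n(\A_k)$ and taking $c_2 = \beta_k$ gives the first statement with probability one. Under Assumption \ref{assumption:IF}, each $\norm{\z_{i,>k}}{\Si_{>k}}^2 = \sum_{l>k}\lambda_l z_{i,l}^2$ is a weighted sum of independent centered sub-exponential random variables, so I would invoke the sub-exponential deviation estimate (Lemma \ref{lemma:sub_exponential_deviation}) combined with a union bound over the $n$ rows to obtain $\norm{\z_{i,>k}}{\Si_{>k}}^2 \leq c_2\tr[\Si_{>k}]$ uniformly in $i$ with probability at least $1 - 2pe^{-c_1 n}$; the remainder of the argument proceeds as in the GF case.

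For the refined, $\rho_{n,k}$-free bound under IF together with PE or EE, I would bypass $\rho_{n,k}$ entirely and appeal directly to part 1 of Lemma \ref{lemma:A}, which states that with probability $\geq 1 - e^{-n}$ one has $s_1(\A_k) = \bigo{n,k}{n\lambda_k + n\lambda}$, via Koltchinskii–Lounici concentration (Theorem \ref{theorem:koltchinskii:9}) on the empirical covariance of the tail features. Assumptions \ref{assumption:PE} and \ref{assumption:EE} allow one to compare $\lambda_k$ with $\tr[\Si_{>k}]/n$ up to a constant depending only on the eigen-decay exponent $a$, so after dividing by $n$ and squaring the bound $s_1(\A_k)/n \leq C(\lambda_k + \lambda)$ absorbs into $c_1\bigl(\lambda + \tr[\Si_{>k}]/n\bigr)^2$.

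The most delicate step is the IF case in the first assertion, where one must extract a constant $c_2$ independent of $k$ and $p$ while still paying only the stated probability $1 - 2pe^{-c_1 n}$. A naive coordinate-wise deviation bound $z_{i,l}^2 \leq C$ followed by a union bound over all $np$ pairs forces $C = \bigo{}{\log(np)}$, which would leak a logarithmic factor into $c_2$; the correct route is to apply Bernstein's inequality directly to the aggregated quadratic form $\norm{\z_{i,>k}}{\Si_{>k}}^2$, whose sub-exponential parameters depend only on $\tr[\Si_{>k}]$ and $\opnorm{\Si_{>k}}$, and then union-bound only over the $n$ rows, with the factor of $p$ in the exponent arising from tuning the Bernstein deviation level rather than from coordinate-wise bookkeeping.
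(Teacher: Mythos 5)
Your overall skeleton coincides with the paper's: the reduction $s_1(\A_k)\le \rho_{n,k}\,s_n(\A_k)$, the bound $s_n(\A_k)\le \tfrac1n\tr[\A_k] = n\lambda + \tfrac1n\sum_{i=1}^n\eunorm{(\x_i)_{>k}}^2$, the almost-sure bound via $\beta_k$ under \ref{assumption:GF}, and the appeal to Lemma \ref{lemma:A} (Koltchinskii--Lounici plus the eigen-decay) for the second assertion are all exactly the paper's steps. The gap is in the \ref{assumption:IF} case of the first assertion, where you propose to control each row's quadratic form $\norm{\z_{i,>k}}{\Si_{>k}}^2=\sum_{l>k}\lambda_l z_{i,l}^2$ individually by Bernstein and then union over the $n$ rows. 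For a fixed $i$ this is a single weighted sum of independent sub-exponential variables whose concentration has nothing to do with $n$: Bernstein at deviation level $t=\tr[\Si_{>k}]$ gives failure probability
\begin{equation*}
2\exp\!\left(-c\min\left\{\frac{\tr[\Si_{>k}]^2}{\tr[\Si_{>k}^2]},\,\frac{\tr[\Si_{>k}]}{\opnorm{\Si_{>k}}}\right\}\right)
=2e^{-c\,r_k},
\end{equation*}
governed by the effective rank $r_k$, not by the sample size. Under exponential eigen-decay $r_k=\bigtheta{k}{1}$, so this per-row probability does not even tend to zero; under polynomial decay it is $e^{-ck}$ with $k$ possibly much smaller than $n$. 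No tuning of the deviation level can manufacture an $e^{-c_1 n}$ per-row rate, so the claimed probability $1-2pe^{-c_1n}$ is unreachable along this route, and your closing remark that the factor of $p$ "arises from tuning the Bernstein deviation level" is not correct.

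The fix — and what the paper does — is to note that only the \emph{average} over $i$ is needed, not a uniform bound, and to swap the order of summation:
\begin{equation*}
\frac1n\sum_{i=1}^n\eunorm{(\x_i)_{>k}}^2=\sum_{l>k}\lambda_l\cdot\frac1n\sum_{i=1}^n (z_i^{(l)})^2 .
\end{equation*}
For each fixed coordinate $l$, $\frac1n\sum_i(z_i^{(l)})^2$ is an average of $n$ i.i.d.\ sub-exponential variables, so it lies in $[\tfrac12,\tfrac32]$ with failure probability $2e^{-c_1n}$ — here the $n$ in the exponent comes from averaging over the $n$ samples — and a union bound over the $p-k$ coordinates yields the stated $1-2(p-k)e^{-c_1n}$ and the constant $c_2=\tfrac32$. (Alternatively, a single Bernstein bound on the full double sum $\sum_{i,l}\lambda_l z_{i,l}^2$ gives an exponent of order $n\,r_k$ and avoids the factor $p$ altogether, but that is a strengthening, not what you wrote.)
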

\begin{proof}
    Since the trace of a matrix is the sum of its eigenvalues, we obtain:
    \begin{align*}
        \frac{s_1(\A_k)^2}{n^2}
        &=
        \frac{s_1(\A_k)^2}{s_n(\A_k)^2}
        \frac{s_n(\A_k)^2}{n^2}\\
        &\leq
        \rho_{n,k}^2
        \left(\frac{1}{n}\tr\left[\frac{1}{n}\A_k\right]\right)^2\\
        &\leq
        \rho_{n,k}^2
        \left(\frac{1}{n}\left(\tr\left[\lambda\I_n\right]+\tr\left[\frac{1}{n}\sum_{i=1}^n(\x_i)_{>k}(\x_i)_{>k}^\top\right]\right)\right)^2 \\
        &\leq 
        \rho_{n,k}^2
        \left(\frac{1}{n}\left(n\lambda+\frac{1}{n}\sum_{i=1}^n\eunorm{(\x_i)_{>k}}^2\right)\right)^2
    \end{align*}
    By Assumption \ref{assumption:GF}, since $ \esssup_\x \frac{\eunorm{\x_{>k}}^2}{\tr[\Si_{>k}]}\leq\beta_k$, we have 
    \begin{equation*}
        \frac{s_1(\A_k)^2}{n^2}
        \leq 
        \rho_{n,k}^2
        \left(\frac{1}{n}\left(n\lambda+\sup_\x\eunorm{\x_{>k}}^2\right)\right)^2
        \leq 
        \rho_{n,k}^2\left(\lambda +\frac{ \beta_k \tr[\Si_{>k}]}{n}\right)^2.
    \end{equation*}
    If Assumption \ref{assumption:IF} holds, then
    for each $l>k$, the random variable $(z^{(l)})^2-1$ is centered sub-exponential, hence with probability at least $1-2\Exp{-c_1n}$,
    it holds that
    \begin{equation}
        \left| \sum_{i=1}^n (z^{(l)}_i)^2 - n \right| \leq \half n.
    \end{equation}
    By union bound, with probability at least $1-2(p-k)\Exp{-c_1n}$,
    it holds that
    \begin{equation}
        \frac{1}{n}\sum_{i=1}^n\eunorm{(\x_i)_{>k}}^2
        =
        \sum_{l>k} \lambda_l \frac{1}{n}\sum_{i=1}^n(z^{(l)}_i)^2
        \leq 
        \sum_{l>k} \lambda_l \cdot \onehalf
        =
        \onehalf\tr[\Si_{>k}].
    \end{equation}
    If Assumptions \ref{assumption:IF} and \ref{assumption:PE}/\ref{assumption:EE} hold, then by Lemma \ref{lemma:A}: with probability at least $1-e^{-n}$, it holds that 
    \begin{equation*}
        s_1(\A_k)  
        = 
        \bigo{n,k}{n\lambda_n+n\lambda}
    \end{equation*}
    for any $k\in\N$. Hence
    with the same probability,
        \begin{equation*}
        \frac{s_1(\A_k)^2}{n^2}  
        = 
        \bigo{n}{\left(\frac{n\lambda + n\lambda_n}{n}\right)^2}
        =
        \bigo{n}{\left(\lambda + \frac{\tr[\Si_k]}{n}\right)^2}. 
    \end{equation*}
\end{proof}

\newpage
\section{Technical lemmata} \label{section:technical_lemmata}

\begin{theorem}[Theorem 1 and Example 1 in \cite{zhivotovskiy2024dimension}] \label{theorem:zhivotovskiy}
    Assume that $\M_1,\M_2,..,\M_n$ are independent copies of a $d\times d$ positive semi-definite symmetric random matrix $\M$ with $\Expect{}{\M}=\Si$, which satisfies:
    \begin{equation}
        \|\v^\top\M\v\|_{\psi_1} \leq \kappa^2 \v^\top\M\v \label{line:sub_exponential_norm}
    \end{equation}
    for some constant $\kappa \geq 1$ and for any $\v\in\R^d$. Then, for any $t>0$, with probability at least $1-e^{-t}$, the inequality holds:
    \begin{equation*}
        \opnorm{\frac{1}{n}\sum_{i=1}\M_i -\Si}
        \leq
        20\kappa^2\opnorm{\Si}\sqrt{\frac{4r+t}{n}}
    \end{equation*}
    whenever $n\geq4r+t$, and $r\eqdef\frac{\tr[\Si]}{\opnorm{\Si}}$ is the effective rank. 
\end{theorem}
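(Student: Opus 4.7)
The plan is to reduce the matrix concentration to scalar Bernstein inequalities evaluated on a carefully chosen family of directions, and then to invoke the variational (PAC--Bayes) principle of \cite{zhivotovskiy2024dimension} so that the supremum over the unit sphere $S^{d-1}$ translates into an effective-rank dependence $r=\tr[\Si]/\opnorm{\Si}$ rather than an ambient-dimension dependence $d$.

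First I would fix $\v\in S^{d-1}$ and examine the scalar i.i.d.\ non-negative random variables $X_i = \v^\top\M_i\v$, whose mean is $\v^\top\Si\v \le \opnorm{\Si}$. Hypothesis~\eqref{line:sub_exponential_norm} gives $\|X_i\|_{\psi_1}\le \kappa^2\v^\top\Si\v$, and a standard Bernstein bound for centered sub-exponential sums then yields, for every $u>0$ and a universal $c>0$,
\[
\Prob{\left|\frac{1}{n}\sum_{i=1}^n X_i - \v^\top\Si\v\right| \ge u}
\le
2\exp\left(-c\,n \min\left\{\frac{u^2}{\kappa^4(\v^\top\Si\v)^2},\ \frac{u}{\kappa^2\v^\top\Si\v}\right\}\right).
\]

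Second, to upgrade this pointwise control to an estimate on $\opnorm{\frac{1}{n}\sum_i\M_i-\Si}=\sup_{\v\in S^{d-1}}|\v^\top(\frac{1}{n}\sum_i\M_i-\Si)\v|$ without paying a factor of $d$ (or $\log d$), I would apply the variational principle of \cite{zhivotovskiy2024dimension}: the operator norm, or rather its moment generating function, is bounded by an expectation of quadratic forms against a Gaussian reference measure whose covariance is aligned with $\Si$, e.g.\ of the form $\mathcal{N}(0,(\Si+\beta\I)^{-1})$ for a suitable $\beta>0$. Integrating the pointwise Bernstein bound above against this Gaussian produces an MGF estimate in which the effective dimensionality that appears is $\tr[\Si]/\opnorm{\Si} = r$, not $d$. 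Markov's inequality together with optimisation over the PAC--Bayes tuning parameter recovers the sub-Gaussian deviation $\kappa^2\opnorm{\Si}\sqrt{(r+t)/n}$ with probability $1-e^{-t}$; the explicit constant $20$ is absorbed from the Bernstein and PAC--Bayes constants.

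The role of the hypothesis $n\ge 4r+t$ is to guarantee that we are in the sub-Gaussian (square-root) regime of Bernstein, so that the minimum in the display above is realised by its first argument; absent this condition the bound would degrade to the heavier linear-in-$(r+t)/n$ tail. The main obstacle is step two: a naive $\epsilon$-net over $S^{d-1}$ would introduce a dimension factor that is unacceptable in the possibly infinite-dimensional Hilbert space settings where this lemma is applied in the main text (e.g.\ in Lemma~\ref{lemma:A}). The dimension-free variational principle is precisely the device that replaces $d$ by $r$, and reproducing that argument cleanly is the essential technical work.
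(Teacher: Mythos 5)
The paper does not prove Theorem~\ref{theorem:zhivotovskiy} at all: it is imported verbatim from \cite{zhivotovskiy2024dimension} (Theorem 1 together with Example 1 there) and used as a black-box technical lemma, e.g.\ inside Lemma~\ref{lemma:A}. There is therefore no internal proof to compare against; the only meaningful question is whether your sketch would reconstruct the argument of the cited reference.

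Your outline identifies the right two ingredients: a directional sub-exponential Bernstein bound for $X_i=\v^\top\M_i\v$ (note that you silently, and correctly, read the hypothesis as $\|\v^\top\M\v\|_{\psi_1}\le\kappa^2\,\v^\top\Si\v$; the right-hand side $\v^\top\M\v$ as printed in the statement is a typo, since a deterministic norm cannot be bounded by a random variable), and a PAC--Bayes/variational step with a Gaussian reference measure aligned with $\Si$ that converts the supremum over the sphere into an effective-rank penalty. As a proof, however, the sketch is circular at exactly the point that matters: the ``variational principle of \cite{zhivotovskiy2024dimension}'' that you invoke to bypass the $\epsilon$-net \emph{is} the content of the theorem being proved. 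To close the gap you would need to (i) state and apply the Donsker--Varadhan duality to the log-moment generating function of the quadratic form $\v\mapsto \v^\top\bigl(\frac{1}{n}\sum_{i}\M_i-\Si\bigr)\v$ with an explicit Gaussian prior and a posterior shifted to $\v$; (ii) control the smoothing bias, since the posterior average of the quadratic form differs from its value at the posterior mean $\v$ by a trace term that must be absorbed into the final bound; and (iii) compute the KL divergence between posterior and prior, which is where $r=\tr[\Si]/\opnorm{\Si}$ rather than $d$ actually enters, and which, after optimizing the temperature parameter, produces the explicit constant $20$ and the threshold $n\ge 4r+t$. Your reading of that threshold as keeping the bound in the sub-Gaussian branch is consistent with the statement, but none of steps (i)--(iii) is routine, and without them the proposal is a correct table of contents for the proof rather than a proof.
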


\begin{theorem}[Theorem 5.39 and Remark 5.40 in \cite{vershynin2012introduction}] \label{theorem:vershynin:sub_gaussian:rows}
    Let $\mathbf{A}$ be an $N\times n$ matrix with independent rows $\mathbf{A}_i$ of sub-Gaussian random vector with covariance $\bm{\Sigma}\eqdef\Expect{}{\mathbf{A_i}\mathbf{A_i}^\top}\in\R^{n\times n}$. Then there exists constants $C_5,C_6>0$
    \footnote{To be precise, we set $C_5=(8e^2)^{-1} \cdot G^{-4}$ and $C_6=2e\sqrt{ 2\log 9}\cdot G^2$.}
    (depending only on the sub-Gaussian norm of entries of $\mathbf{A}$), such that for any $t\geq0$, with probability at least $1-2e^{-C_5t^2}$, we have 
    \begin{equation*}
        \opnorm{\frac{1}{N}\mathbf{A}^\top\mathbf{A}-\bm{\Sigma}} \leq \max\{\delta,\delta^2\}\opnorm{\Sigma}.
    \end{equation*}
    where $\delta=C_6\sqrt{\frac{n}{N}}+\frac{t}{N}$. In particular, if $\bm{\Sigma}=\mathbf{I}_n$, we have 
    \begin{equation*}
        \sqrt{N}-\sqrt{C_6n} -t \leq s_{\min}(\mathbf{A}) \leq s_{\max}(\mathbf{A}) \leq \sqrt{N}+\sqrt{C_6n} +t.
    \end{equation*}
\end{theorem}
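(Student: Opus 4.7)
The plan is to follow the classical two-stage strategy for concentration of sub-Gaussian sample covariance matrices: (i) reduce to the isotropic case and turn the operator-norm bound into a uniform control of a quadratic form over the sphere; (ii) combine a sub-exponential Bernstein inequality with an $\varepsilon$-net argument, then convert the covariance bound to singular-value bounds.

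First I would reduce to $\bm{\Sigma}=\mathbf{I}_n$ by the whitening $\mathbf{A}_i = \bm{\Sigma}^{1/2}\mathbf{B}_i$, which yields isotropic sub-Gaussian rows whose sub-Gaussian norm is controlled in terms of the original constant. Writing $\mathbf{M}\eqdef \frac{1}{N}\mathbf{A}^\top\mathbf{A} - \bm{\Sigma}$, note that after whitening $\opnorm{\mathbf{M}}\le \opnorm{\bm{\Sigma}}\,\opnorm{\frac{1}{N}\mathbf{B}^\top\mathbf{B}-\mathbf{I}_n}$, so it suffices to bound the isotropic quantity. By the variational characterization,
\begin{equation*}
\opnorm*{\tfrac{1}{N}\mathbf{B}^\top\mathbf{B} - \mathbf{I}_n}
= \sup_{\mathbf{v}\in S^{n-1}}\left|\tfrac{1}{N}\sum_{i=1}^N (\mathbf{B}_i^\top\mathbf{v})^2 - 1\right|.
\end{equation*}

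Next, I would fix $\mathbf{v}\in S^{n-1}$ and exploit the fact that $\mathbf{B}_i^\top\mathbf{v}$ is mean-zero, unit-variance, sub-Gaussian with norm bounded by a constant depending only on $G$. Therefore $X_i\eqdef (\mathbf{B}_i^\top\mathbf{v})^2 - 1$ is centered sub-exponential. A standard Bernstein inequality for sums of independent centered sub-exponential variables (see e.g. Lemma~\ref{lemma:sub_exponential_deviation}) gives
\begin{equation*}
\Prob*{\left|\tfrac{1}{N}\sum_{i=1}^N X_i\right| \ge \varepsilon}
\le 2\exp\!\left(-cN\min\{\varepsilon^2,\varepsilon\}\right),
\end{equation*}
for some constant $c>0$ depending only on $G$. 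This is exactly what produces the $\max\{\delta,\delta^2\}$ asymmetry in the final bound: the quadratic regime $\varepsilon^2$ handles small deviations and the linear regime $\varepsilon$ handles large ones.

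Then I would run the $\varepsilon$-net argument: pick a $(1/4)$-net $\mathcal{N}\subset S^{n-1}$ of cardinality $|\mathcal{N}|\le 9^n$, apply the Bernstein bound at each $\mathbf{v}\in\mathcal{N}$, and take a union bound. Using the standard fact that, for any symmetric $\mathbf{W}$, $\opnorm{\mathbf{W}}\le 2\max_{\mathbf{v}\in\mathcal{N}}|\mathbf{v}^\top\mathbf{W}\mathbf{v}|$, this bounds the full operator norm up to absolute constants. Setting $\varepsilon = \delta = C_6\sqrt{n/N} + t/N$ with $C_6$ large enough to absorb the $\log 9$ factor from the net, the exponent $cN\min\{\delta^2,\delta\}$ dominates $n\log 9 + C_5 t^2$, so the union bound leaves a failure probability $\le 2e^{-C_5 t^2}$. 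This yields the first inequality of the theorem.

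For the particular case $\bm{\Sigma}=\mathbf{I}_n$, I would translate the covariance bound into singular-value bounds: the inequality $\opnorm{\frac{1}{N}\mathbf{A}^\top\mathbf{A} - \mathbf{I}_n} \le \max\{\delta,\delta^2\}$ means every eigenvalue of $\frac{1}{N}\mathbf{A}^\top\mathbf{A}$ lies in $[1-\max\{\delta,\delta^2\},\,1+\max\{\delta,\delta^2\}]$, hence every singular value of $\mathbf{A}/\sqrt{N}$ lies in the same interval after taking square roots. Using $|\sqrt{1+x}-1|\le |x|$ for $|x|\le 1$, and the elementary inequality $\sqrt{\max\{\delta,\delta^2\}}\le \max\{\sqrt{\delta},\delta\}$ combined with a rescaling of $C_6$, produces $\sqrt{N} - \sqrt{C_6 n} - t \le s_{\min}(\mathbf{A})\le s_{\max}(\mathbf{A})\le \sqrt{N}+\sqrt{C_6 n} + t$.

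The main obstacle I expect is the bookkeeping around the two Bernstein regimes: one must carefully pick $\delta$ so that the sub-exponential deviation gives $\delta$ in the linear regime and $\delta^2$ in the quadratic regime simultaneously, and then verify that the $\varepsilon$-net cost $9^n$ is absorbed by the $\sqrt{n/N}$ contribution in $\delta$ (which is why $C_6$ must be taken large). The conversion to singular values is then mechanical.
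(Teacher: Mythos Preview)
The paper does not give its own proof of this statement: it is quoted verbatim as a technical lemma from \cite{vershynin2012introduction} (Theorem~5.39 and Remark~5.40) in Section~\ref{section:technical_lemmata}, with no accompanying argument. Your sketch is precisely the standard proof from Vershynin's text --- Bernstein for the sub-exponential quadratic form, a $1/4$-net on $S^{n-1}$ of size $9^n$, and the algebraic conversion to singular-value bounds --- so there is nothing to compare against and your outline is correct.

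One small caveat on your whitening reduction: writing $\mathbf{A}_i=\bm{\Sigma}^{1/2}\mathbf{B}_i$ does not in general guarantee that the sub-Gaussian norm of $\mathbf{B}_i$ is controlled solely by that of $\mathbf{A}_i$ (it can pick up a factor of the condition number of $\bm{\Sigma}$). In Vershynin's formulation and in this paper's Assumption~\ref{assumption:IF}, the hypothesis is placed directly on the whitened/isotropic vectors, so the reduction is trivial; just be careful not to claim it as a free step for arbitrary anisotropic sub-Gaussian rows.
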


\begin{theorem}[Theorem 5.41 in \cite{vershynin2012introduction}] \label{theorem:vershynin:5.41}
    Let $\A$ be an $N\times n$ matrix whose rows $\A_i$ are independent isotropic random vectors in $\R^n$. Let $m>0$ be a number such that $\eunorm{\A_i}\leq \sqrt{m}$ a.s. for all $i$. Then for every $t\geq0$, it holds that:
    \begin{equation*}
        \sqrt{N} - t \sqrt{m} \leq s_{\min}(\A) \leq s_{\max} (\A) \leq \sqrt{N} + t \sqrt{m}. 
    \end{equation*}
\end{theorem}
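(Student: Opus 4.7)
The plan is to reduce the statement to controlling the spectral deviation of the normalized Gram matrix $\frac{1}{N}\mathbf{A}^\top \mathbf{A}$ from the identity. Writing $\mathbf{A}^\top\mathbf{A}=\sum_{i=1}^N \mathbf{A}_i\mathbf{A}_i^\top$, where $\mathbf{A}_i\in\R^n$ is the $i$-th row of $\mathbf{A}$, the isotropy assumption gives $\Expect{}{\mathbf{A}_i\mathbf{A}_i^\top}=\mathbf{I}_n$, so the summands are independent, mean-$\mathbf{I}_n$, positive semi-definite matrices, each of operator norm $\eunorm{\mathbf{A}_i}^2\leq m$ almost surely. Squaring the desired conclusion and dividing by $N$ reduces the claim to an estimate of the form $\opnorm{\frac{1}{N}\mathbf{A}^\top\mathbf{A}-\mathbf{I}_n}\leq \max\{\delta,\delta^2\}$ with $\delta = t\sqrt{m/N}$, which is exactly the conclusion in Theorem \ref{theorem:vershynin:sub_gaussian:rows} but with the sub-Gaussian assumption replaced by the almost-sure bound $\eunorm{\mathbf{A}_i}\le \sqrt{m}$.

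For the concentration step, I would use the classical $\epsilon$-net argument rather than matrix Bernstein directly. By Lemma 5.4 of \cite{vershynin2012introduction}, there exists a $\tfrac{1}{4}$-net $\mathcal{N}\subset S^{n-1}$ of cardinality at most $9^n$ such that
\[
    \opnorm{\tfrac{1}{N}\mathbf{A}^\top\mathbf{A}-\mathbf{I}_n}
    \;\le\; 2\,\max_{x\in\mathcal{N}}\,\left|\tfrac{1}{N}\eunorm{\mathbf{A}x}^2-1\right|.
\]
For each fixed $x\in S^{n-1}$, the random variables $Z_i\eqdef\langle \mathbf{A}_i,x\rangle^2$ are independent, satisfy $\Expect{}{Z_i}=1$ by isotropy, and are bounded by $|Z_i|\le \eunorm{\mathbf{A}_i}^2\le m$ by Cauchy-Schwarz. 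A scalar Bernstein inequality then yields a tail of order $2\exp(-c\,s^2/(Nm))$ for the deviation $|\sum_i Z_i - N|>s$, valid in the sub-Gaussian regime $s\le cNm/\max_i\sqrt{\mathrm{Var}(Z_i)}$. A union bound over $\mathcal{N}$ absorbs the $9^n$ factor once $s\gtrsim \sqrt{Nm\,n}$, which is precisely the regime where the additive term $t\sqrt{m}$ on the right-hand side dominates after translating back.

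Finally, I would translate the bound on $\opnorm{\frac{1}{N}\mathbf{A}^\top\mathbf{A}-\mathbf{I}_n}$ into extreme singular value bounds via the variational identities $s_{\max}(\mathbf{A})^2=s_{\max}(\mathbf{A}^\top\mathbf{A})$, $s_{\min}(\mathbf{A})^2=s_{\min}(\mathbf{A}^\top\mathbf{A})$, together with the elementary inequality $|s-\sqrt{N}|\le |s^2-N|/\sqrt{N}$ for $s\ge 0$ (used only when $s^2\ge N/4$, handled separately via a direct lower bound argument). Combined with $|s^2-N|\le 2t\sqrt{Nm}+t^2m$, this yields $\sqrt{N}-t\sqrt{m}\le s_{\min}(\mathbf{A})\le s_{\max}(\mathbf{A})\le \sqrt{N}+t\sqrt{m}$ as claimed.

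The main obstacle is bookkeeping of the probability estimate: the statement as transcribed omits the probability qualifier, but the conclusion is genuinely high-probability, holding with probability at least $1-2\cdot 9^n\exp(-ct^2)$ for an absolute constant $c>0$. Getting the correct exponent requires care in applying Bernstein: one must verify that the variance proxy $\sum_i \Expect{}{Z_i^2}\le m\sum_i \Expect{}{Z_i}=Nm$ controls the sub-Gaussian regime correctly, and that the coefficient of $t^2m$ (rather than $t^2m\log n$) on the right-hand side is obtained only after the union bound is dominated by the $t^2$ term, which in turn requires $t\gtrsim \sqrt{n}$. Outside that regime, the bound is vacuous (since $\sqrt{N}-t\sqrt{m}\le 0$ already) and the statement is trivial, so no separate argument is needed.
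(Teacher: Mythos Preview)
The paper does not prove this statement: it appears in Section~\ref{section:technical_lemmata} as a cited technical lemma from \cite{vershynin2012introduction} and is invoked as a black box (for instance in the proof of the convergence of $\hat{\Si}$ in Section~\ref{section:under_parameterized}). There is therefore no proof in the paper to compare your proposal against.

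That said, your sketch is essentially Vershynin's own argument: reduce to controlling $\opnorm{\tfrac{1}{N}\A^\top\A-\I_n}$, discretize $S^{n-1}$ with a $\tfrac14$-net of cardinality at most $9^n$, apply scalar Bernstein in each fixed direction using the almost-sure bound $\langle \A_i,x\rangle^2\le m$ and the variance proxy $\sum_i\Expect{}{Z_i^2}\le Nm$, take a union bound over the net, and translate back to extreme singular values. You are also right that the paper's transcription omits the high-probability qualifier; when the paper actually invokes the result it supplies one (of the form $1-2p\,e^{-c_1t^2}$ with $p$ the ambient dimension), consistent with Vershynin's stated bound.
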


\begin{theorem}[Theorem 5.58 in \cite{vershynin2012introduction}] \label{theorem:vershynin:sub_gaussian:columns}
    Let $\mathbf{A}$ be an $N\times n$ matrix ($N\geq n$) with independent columns $\mathbf{A}_i\in\R^N$ of sub-Gaussian isotropic random vector with with $\eunorm{\A_i}=\sqrt{N}$ almost surely. Then there exists constants $C_8,C_9>0$ (depending only on the sub-Gaussian norm of entries of $\mathbf{A}$), such that for any $t\geq0$, with probability at least $1-2e^{-C_8t^2}$, we have 
    \begin{equation*}
        \sqrt{N}-C_9\sqrt{n} -t \leq s_{\min}(\mathbf{A}) \leq s_{\max}(\mathbf{A}) \leq \sqrt{N}+C_9\sqrt{n} +t.
    \end{equation*}
\end{theorem}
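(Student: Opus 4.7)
The plan is to reduce the singular-value bounds on $\mathbf{A}$ to operator-norm control on the normalized Gram matrix $\frac{1}{N}\mathbf{A}^\top \mathbf{A} - \mathbf{I}_n$, and then to estimate that operator norm via an $\varepsilon$-net argument combined with a decoupling / Bernstein step for the resulting quadratic form. The equivalence is standard: the bounds $\sqrt{N}-C_9\sqrt{n}-t \le s_{\min}(\mathbf{A}) \le s_{\max}(\mathbf{A}) \le \sqrt{N}+C_9\sqrt{n}+t$ follow from $\|\tfrac{1}{N}\mathbf{A}^\top \mathbf{A} - \mathbf{I}_n\| \le \max\{\delta,\delta^2\}$ with $\delta = C_9\sqrt{n/N} + t/\sqrt{N}$, exactly as in the row-based companion Theorem~\ref{theorem:vershynin:sub_gaussian:rows}; so I would spend the remainder of the argument establishing this operator-norm bound.

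First, I would exploit the almost-sure constraint $\|\mathbf{A}_j\|_2^2 = N$ to rewrite the quantity of interest cleanly. The diagonal of $\frac{1}{N}\mathbf{A}^\top \mathbf{A}$ is identically $1$, so
$$\bigl\langle \bigl(\tfrac{1}{N}\mathbf{A}^\top\mathbf{A} - \mathbf{I}_n\bigr)\mathbf{x},\mathbf{x}\bigr\rangle \;=\; \frac{1}{N}\sum_{j\neq k} x_j x_k \,\langle \mathbf{A}_j,\mathbf{A}_k\rangle \;=:\; S(\mathbf{x}),$$
and it suffices to bound $\sup_{\mathbf{x}\in S^{n-1}} |S(\mathbf{x})|$. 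Using the symmetrization/decoupling inequality (e.g.\ Bourgain--Tzafriri or the Bernoulli-selector argument in Vershynin's book), I would pass from $S(\mathbf{x})$ to a decoupled version $\tilde S(\mathbf{x})$ in which the two copies of the columns appearing in the bilinear sum are independent. Conditionally on one copy, $\tilde S(\mathbf{x})$ is then the inner product of a deterministic vector $v \in \mathbb{R}^N$ (built from the first copy) with a sub-Gaussian isotropic random vector (built from the second copy); by standard sub-Gaussian concentration its tail is controlled by $\|v\|_2$, and a second application of sub-exponential concentration (Bernstein) to $\|v\|_2^2$ yields the two-regime tail
$$\Pr\bigl(|S(\mathbf{x})| > \delta\bigr) \;\le\; 2\exp\!\bigl(-c\,\min\{\delta^2 N/K^4,\ \delta N/K^2\}\bigr),$$
where $K$ is an upper bound on the sub-Gaussian norm of the entries.

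Second, I would discretize: fix a $\tfrac14$-net $\mathcal{N}\subset S^{n-1}$ of cardinality at most $9^n$ and use the standard inequality $\|M\| \le 2\sup_{\mathbf{x}\in\mathcal{N}}|\langle M\mathbf{x},\mathbf{x}\rangle|$ for symmetric $M$. Applying the pointwise tail bound above with $\delta = C_9\sqrt{n/N} + t/\sqrt{N}$ (whose square satisfies $\delta^2 N \gtrsim n + t^2$) and taking a union bound over $\mathcal{N}$, the $9^n$ entropy factor is absorbed by the $\exp(-c\delta^2 N)$ term, yielding
$$\Pr\!\bigl(\|\tfrac{1}{N}\mathbf{A}^\top\mathbf{A} - \mathbf{I}_n\| > \max\{\delta,\delta^2\}\bigr) \;\le\; 2e^{-C_8 t^2},$$
which is the desired conclusion. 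Transferring back to singular values completes the proof.

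The main obstacle I anticipate is the second paragraph: extracting genuinely independent sub-exponential summands from $S(\mathbf{x})$ despite the almost-sure norm constraint $\|\mathbf{A}_j\|_2 = \sqrt{N}$, which couples the coordinates within each column. The decoupling step and the careful bookkeeping of sub-exponential/sub-Gaussian norms at each conditioning layer are exactly what make the constants $C_8,C_9$ depend only on $K$ and not on $N$ or $n$; this is where all the real work lies.
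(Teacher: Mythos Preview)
The paper does not prove this statement at all: it is listed in Section~\ref{section:technical_lemmata} as a technical lemma quoted verbatim from \cite{vershynin2012introduction} (Theorem~5.58), with no accompanying argument. So there is no ``paper's own proof'' to compare against.

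That said, your sketch is essentially the proof Vershynin gives in the cited reference. The reduction to $\bigl\|\tfrac{1}{N}\mathbf{A}^\top\mathbf{A}-\mathbf{I}_n\bigr\|$, the observation that the diagonal vanishes because $\|\mathbf{A}_j\|_2^2=N$ almost surely, the decoupling of the off-diagonal bilinear form, the conditional sub-Gaussian/Bernstein tail, and the $\tfrac14$-net union bound are exactly the ingredients of his argument, and you have correctly flagged the decoupling step as the place where the norm constraint forces extra care (one cannot treat coordinates of $\mathbf{A}_j$ as independent). One small point: the doubling constant in the net inequality for symmetric matrices and the precise form of the decoupling constant need to be tracked to get clean $C_8,C_9$, but this is bookkeeping rather than a gap.
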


\begin{theorem}[Theorem 9 in \cite{koltchinskii2017concentration}] \label{theorem:koltchinskii:9}
    Let $\A$ be an $N\times n$ matrix with i.i.d. columns $\mathbf{C}_i\in\R^N$ of sub-Gaussian random vector with covariance $\Si\eqdef\Expect{}{\mathbf{C}_i\mathbf{C}_i^\top}\in\R^{N\times N}$. Then there exists a constant $c_1>0$, such that for any $t\geq0$, with probability at least $1-e^{-t}$, we have 
    \begin{equation*}
        \opnorm{\frac{1}{n}\mathbf{A}\mathbf{A}^\top-\bm{\Sigma}} 
        \leq 
        c_1\opnorm{\Si}\max\left\{\frac{r}{n},\sqrt{\frac{r}{n}},\frac{t}{n},\sqrt{\frac{t}{n}}\right\},
    \end{equation*}
    where $r=\frac{\tr[\Si]}{\opnorm{\Si}}$ is the effective rank of the covariance $\Si$.
\end{theorem}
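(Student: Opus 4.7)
The plan is to establish this bound via a standard generic-chaining/Bernstein argument for quadratic forms of sub-Gaussian random vectors, exploiting the fact that the operator norm of a symmetric matrix is the supremum of a quadratic process on the unit sphere.

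First I would reduce to a supremum over the unit sphere $S^{N-1}$: writing $\hat{\Si}_n \eqdef \frac{1}{n}\A\A^\top = \frac{1}{n}\sum_{i=1}^n \mathbf{C}_i\mathbf{C}_i^\top$, since $\hat{\Si}_n - \Si$ is self-adjoint, we have
\[
\opnorm{\hat{\Si}_n - \Si}
=
\sup_{\v\in S^{N-1}} \left| \frac{1}{n}\sum_{i=1}^n \langle \mathbf{C}_i,\v\rangle^2 - \v^\top\Si\v \right|.
\]
For fixed $\v$, each summand $\langle \mathbf{C}_i,\v\rangle^2 - \v^\top\Si\v$ is a centered sub-exponential variable with sub-exponential norm of order $\v^\top\Si\v$ (Hanson--Wright type control for sub-Gaussian quadratic forms). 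Bernstein's inequality then gives a mixed sub-Gaussian/sub-exponential one-point bound with variance proxy $\v^\top\Si\v$.

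Next I would promote this pointwise control to a uniform bound over $S^{N-1}$ via generic chaining (Talagrand's majorizing measure theorem for processes with mixed tails). The key is to measure complexity in the natural metric $d(\v,\mathbf{u}) = \|\Si^{1/2}(\v-\mathbf{u})\|_2$ rather than the Euclidean metric, so that the $\gamma_2$ and $\gamma_1$ functionals of $(S^{N-1},d)$ are controlled by $\sqrt{\tr[\Si]}$ and $\tr[\Si]$ respectively, independently of the ambient dimension $N$. A standard ellipsoid-volume computation (or equivalently, a Dudley-type entropy integral for the $\Si^{1/2}$-ellipsoid) yields
\[
\gamma_2(S^{N-1},d) \lesssim \sqrt{\tr[\Si]} = \sqrt{r\,\opnorm{\Si}},
\qquad
\gamma_1(S^{N-1},d) \lesssim \tr[\Si] = r\,\opnorm{\Si}.
\]
Combining these with Talagrand's tail bound for mixed-type processes produces, with probability at least $1-e^{-t}$,
\[
\opnorm{\hat{\Si}_n - \Si}
\lesssim
\opnorm{\Si}\left( \sqrt{\frac{r}{n}} + \frac{r}{n} + \sqrt{\frac{t}{n}} + \frac{t}{n}\right),
\]
which is exactly the stated bound after collecting terms into the $\max$.

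The main obstacle is the chaining step: a naive $\varepsilon$-net over the Euclidean sphere gives a $\sqrt{N/n}$ dependence, which is not dimension-free. The effective rank $r$ only appears after one switches to the $\Si^{1/2}$-induced pseudometric, which shrinks the sphere onto an ellipsoid whose complexity depends only on the eigenvalue tail of $\Si$. The sub-exponential nature of squared sub-Gaussian variables forces the use of generic chaining with both $\gamma_1$ and $\gamma_2$ functionals (rather than the Gaussian-only $\gamma_2$), which is the technical heart of \cite{koltchinskii2017concentration}; I would quote their chaining lemma verbatim and plug in the ellipsoid complexity estimates above.
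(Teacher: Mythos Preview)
The paper does not actually prove this statement: it is quoted verbatim as Theorem~9 of \cite{koltchinskii2017concentration} and placed in the ``Technical lemmata'' section precisely because it is an imported black-box result. There is therefore no in-paper proof to compare against.

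That said, your sketch is a faithful outline of one standard route to the result (and close in spirit to what Koltchinskii--Lounici do). The reduction to $\sup_{\v\in S^{N-1}}$, the sub-exponential pointwise control via Bernstein, and the use of generic chaining with mixed $\gamma_1/\gamma_2$ tails in the $\Si^{1/2}$-induced pseudometric are all correct ingredients, and you correctly identify that the dimension-free dependence on $r=\tr[\Si]/\opnorm{\Si}$ rather than $N$ comes from measuring complexity on the $\Si^{1/2}$-ellipsoid instead of the Euclidean sphere. One point to be careful about: the bound $\gamma_1(S^{N-1},d)\lesssim \tr[\Si]$ is not immediate from a volume/entropy argument alone; in Koltchinskii--Lounici this is handled via a decoupling-and-comparison step rather than a direct $\gamma_1$ estimate, so if you were to carry out the details you would either need to invoke their lemma or control $\gamma_1$ more carefully (e.g.\ via Talagrand's majorizing-measure bound for the ellipsoid in the $\ell_1$-type metric). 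Apart from that, the plan is sound.
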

\begin{remark}
    Theorem \ref{theorem:koltchinskii:9} is different from Theorem \ref{theorem:vershynin:sub_gaussian:rows} in a few ways: first, the upper bound in the former one contains the term $\frac{n}{N}$, which requires $n<N$ in order to obtain a good concentration, while that in the latter one contains the term $\frac{r}{n}=\frac{\sum_{k=1}^N\Si_{kk}}{n}$, which can still be bounded if $N>n$ and the decay of $\Si_{kk}$ is fast enough; second, former one requires i.i.d. columns while the latter one requires only independent rows. 
\end{remark}

\begin{lemma}[Sub-Exponential Deviation, see Corollary 5.17 in \cite{vershynin2012introduction}] \label{lemma:sub_exponential_deviation}
    Let $N\in\N$. Let $X_1,...,X_N$ be independent centered random variables with sub-exponential norms bounded by $B$. Then for any $\delta>0$,
    \begin{equation*}
        \Prob{|\sum_{i=1}^NX_i|>\delta N} \leq 2\exp\left(-C_7\min\left\{\frac{\delta^2}{B^2},\frac{\delta}{B}\right\}N\right),
    \end{equation*}
    where $C_7>0$ is an absolute constant.
    
    In particular, if $X\sim\chi(N)$ is the Chi-square distribution, then
    $\Prob{|\frac{X}{N}-1|>t}\leq 2e^{-Nt^2/8},\ \forall t\in(0,1)$.
\end{lemma}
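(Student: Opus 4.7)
The plan is to establish this Bernstein-type bound via the classical Cramér–Chernoff argument specialized to sub-exponential variables, then derive the Chi-square corollary as a direct application.

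First, I would recall the equivalent characterizations of sub-exponentiality: for a centered random variable $X$ with $\|X\|_{\psi_1}\le B$, there exist absolute constants $c_1,c_2>0$ such that the moment generating function satisfies
\[
    \Expect{}{e^{\lambda X}} \le e^{c_1\lambda^2 B^2}
    \quad\text{for all } |\lambda|\le c_2/B.
\]
This follows from expanding the exponential series and invoking the moment bound $\Expect{}{|X|^k}^{1/k} \le B\,k$ implied by the $\psi_1$-norm control. I would state this as the input lemma and cite Vershynin for the detailed derivation.

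Next, I would apply the standard Chernoff argument. For $\lambda \in (0, c_2/B]$, independence gives
\[
    \Prob{\sum_{i=1}^N X_i > \delta N}
    \le e^{-\lambda \delta N}\prod_{i=1}^N \Expect{}{e^{\lambda X_i}}
    \le \exp\bigl(-\lambda \delta N + c_1 \lambda^2 B^2 N\bigr).
\]
Now optimize over $\lambda \in (0, c_2/B]$. The unconstrained minimum occurs at $\lambda^*=\delta/(2c_1 B^2)$, which lies in the feasible region precisely when $\delta \le 2c_1 c_2 B$. This produces the two regimes: in the ``Gaussian'' regime ($\delta$ small), plugging $\lambda^*$ yields the exponent $-\tfrac{\delta^2}{4c_1 B^2}\,N$; in the ``exponential'' regime ($\delta$ large), the optimum is at the boundary $\lambda=c_2/B$, producing an exponent proportional to $-\tfrac{\delta}{B}\,N$. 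Combining both and absorbing constants into a single $C_7$ yields
\[
    \Prob{\sum_{i=1}^N X_i > \delta N}
    \le \exp\!\left(-C_7 \min\!\left\{\tfrac{\delta^2}{B^2},\tfrac{\delta}{B}\right\} N\right).
\]
A symmetric argument on $-\sum X_i$ and a union bound produce the factor of $2$ in the two-sided inequality.

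For the Chi-square corollary, if $X\sim\chi^2(N)$ then $X=\sum_{i=1}^N Z_i^2$ with $Z_i \stackrel{\text{iid}}{\sim} \mathcal{N}(0,1)$. The variables $Y_i \eqdef Z_i^2 - 1$ are centered, independent, and sub-exponential with an absolute constant bound $B_0$ on $\|Y_i\|_{\psi_1}$ (since $Z_i$ is sub-Gaussian, hence $Z_i^2$ is sub-exponential). Applying the main inequality to $\sum Y_i = X - N$ with threshold $t$ in place of $\delta$ yields the claim; the stated explicit constant $1/8$ corresponds to the Gaussian regime, which is the active branch when $t\in(0,1)$ so that $t^2/B_0^2 \le t/B_0$ up to absolute constants.

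The only genuinely subtle point is the careful bookkeeping between the two regimes in the optimization over $\lambda$, and the verification that the threshold $\delta \le 2c_1c_2 B$ correctly demarcates them; this is a mechanical but easy-to-slip computation. Everything else is a direct application of standard tools, so I would present the proof compactly, citing \cite{vershynin2012introduction} for the equivalent sub-exponential characterizations to avoid re-deriving them.
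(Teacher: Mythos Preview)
Your proof is correct and follows the standard Cram\'er--Chernoff approach. However, the paper does not actually give its own proof of this lemma: it is stated in the technical lemmata section with a direct citation to Corollary~5.17 of \cite{vershynin2012introduction} and no proof environment follows. So there is nothing to compare against---your argument is essentially the proof one finds in the cited reference, and supplying it would simply be expanding what the paper leaves as a citation.
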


\begin{lemma}[Lemma 28 in \cite{tsigler2023benign}; Lemma 14 in \cite{barzilai2023generalization}] \label{lemma:tsigler:bias:ub}
    Suppose that for some $k<N$, the matrix $\A_k$ is positive definite, then the following inequality holds:
    \begin{align} 
        \begin{split}
            \mathcal{B} 
            &\leq 
            \|\th_{>k}^*\|_{\Si_{>k}}^2 
            + \frac{s_1(\A_k^{-1})^2}{s_n(\A_k^{-1})^2} \frac{s_1(\Z_{\leq k}^\top \Z_{\leq k})}{s_k(\Z_{\leq k}^\top \Z_{\leq k})^2} \eunorm{\X_{>k}\th_{>k}^*}^2\\
            &\quad+ 
            \frac{\norm{\th^*_{\leq k}}{\Si_{\leq k}^{-1}}^2}{s_n(\A_k^{-1})^2s_k(\Z_{\leq k}^\top \Z_{\leq k})^2}\\
            &\quad+
            \opnorm{\Si_{>k}} s_1(\A_k^{-1})\eunorm{\X_{>k}\th^*_{>k}}^2\\
            &\quad+
            \opnorm{\Si_{>k}} \frac{s_1(\A_k^{-1})}{s_n(\A_k^{-1})^2} \frac{s_1(\Z_{\leq k}^\top \Z_{\leq k})}{s_k(\Z_{\leq k}^\top \Z_{\leq k})^2} \norm{\th^*_{\leq k}}{\Si_{\leq k}^{-1}}^2.
        \end{split} 
    \end{align}
\end{lemma}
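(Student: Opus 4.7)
The plan is to piece the three cases together by invoking the two propositions that immediately precede the theorem, namely Proposition \ref{proposition:bias:ub:asymptotic:ridgeless} (which gives the $\lambda_n^{\min\{s,2\}}$-rate under a source condition $s>1$) and Proposition \ref{proposition:bias:ub:asymptotic:ridgeless:0} (which gives a bound valid without any source assumption by plugging into the Master inequality with $b\in[1+a,\infty]$), and then to match the upper bound by Proposition \ref{proposition:bias:lb:asymptotic} in the intermediate regime where a matching lower bound is available.

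The first step is the middle regime $1\le s\le 2$. Under \ref{assumption:PE} we have $\lambda_n=\bigtheta{n}{n^{-1-a}}$, and the ridgeless setting corresponds to $\lambda\succcurlyeq\lambda_n$ (in fact $\lambda=0$ is admissible). Invoking Proposition \ref{proposition:bias:ub:asymptotic:ridgeless} gives, with high probability,
\[
    \mathcal{B}=\bigot{n}{\lambda_n^{\min\{s,2\}}}=\bigot{n}{n^{-(1+a)s}}=\bigot{n}{n^{-(2r+a)}},
\]
where the last equality uses the identity $(1+a)s=2r+a$. For the matching lower bound, Proposition \ref{proposition:bias:lb:asymptotic} shows that $\sup_{\|\th^*\|_{\Sigma^{1-s}}\le 1}\mathcal{B}=\bigomega{n}{\lambda_n^s}=\bigomega{n}{n^{-(2r+a)}}$, producing the $\bigtheta{n}{\cdot}$ claim. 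One minor technicality is that Proposition \ref{proposition:bias:ub:asymptotic:ridgeless} is stated for $s>1$ strictly (its proof uses $t\in[1,\min\{s,2\})$); the boundary case $s=1$ is recovered by applying the same argument with $t=1$ exactly, which turns the interpolation inequality into the trivial estimate $\mathcal{B}\le\opnorm{\Sigma^{1/2}(\I-\mathbf{P}_\lambda)\Sigma^{1/2}}\cdot\|\th^*\|_{\Sigma^0}^2$, and Proposition \ref{proposition:bias:ub:asymtpotic:ridge} still controls the spectral factor at the desired rate.

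Next, for the upper regime $s>2$, Proposition \ref{proposition:bias:ub:asymptotic:ridgeless} caps the exponent at $\min\{s,2\}=2$, yielding
\[
    \mathcal{B}=\bigot{n}{\lambda_n^2}=\bigot{n}{n^{-2(1+a)}}.
\]
This is the saturation effect inherited from the fact that the interpolation-space norm $\|\th^*\|_{\Sigma^{1-t}}$ used to factor the bias stops improving past $t=2$, and no refinement beyond $\lambda_n^2$ is available without a stronger source assumption.

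For the lower regime $s<1$, Proposition \ref{proposition:bias:ub:asymptotic:ridgeless} no longer applies, so we fall back on Proposition \ref{proposition:bias:ub:asymptotic:ridgeless:0}, which plugs the polynomial eigen-decay and the choice $k=\bigtheta{n,k}{n/\log n}$ directly into the Master inequality of Proposition \ref{proposition:bias:ub} and uses the weak-ridge bound $\rho=\bigot{n,k}{n^a}$ from Proposition \ref{proposition:rho}. The resulting rate is $\mathcal{B}=\bigot{n}{n^{-(\min\{2(r-a),2-a\})_+}}$; in the subregime $s<1$ (equivalently $2r<1$) one has $2-a>2(r-a)$, and after clipping by the trivial bound $\mathcal{B}=\bigo{n}{1}$ one can verify that the exponent reduces to $(r-a)_+$ as claimed, giving $\mathcal{B}=\bigo{n}{n^{-(r-a)_+}}$. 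The main obstacle is bookkeeping around the several cases of the $\max/\min$ that appear when combining Propositions \ref{proposition:bias:ub:asymptotic:ridgeless} and \ref{proposition:bias:ub:asymptotic:ridgeless:0}, and in particular checking the $(s=1,2)$ boundaries, where the two propositions have to agree on overlapping parameter regions; the rest is bookkeeping on $\rho$ and on the $\zeta,\xi=\bigtheta{n,k}{1}$ events supplied by Propositions \ref{proposition:xi}–\ref{proposition:zeta}.
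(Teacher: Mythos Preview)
Your proposal addresses the wrong statement. The lemma you are asked to prove is a \emph{deterministic, non-asymptotic} inequality (quoted from \cite{tsigler2023benign,barzilai2023generalization}) that bounds $\mathcal{B}$ by a sum of five explicit terms involving the singular values of $\A_k$ and of $\Z_{\le k}^\top\Z_{\le k}$, the norms $\|\th^*_{>k}\|_{\Si_{>k}}$, $\|\th^*_{\le k}\|_{\Si_{\le k}^{-1}}$, and $\|\X_{>k}\th^*_{>k}\|_2$. It carries no eigen-decay assumption, no source coefficient $s$, no ridge-strength hypothesis, and no asymptotic $n\to\infty$ rate. What you wrote is instead a sketch of the ``Improved upper bound'' theorem in Subsection~\ref{subsection:discussion}, which gives rates like $\mathcal{B}=\bigo{n}{n^{-(2r+a)}}$ under Assumption~\ref{assumption:PE} and weak ridge. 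That theorem is a \emph{downstream consequence} of the present lemma (through the Master inequality, Proposition~\ref{proposition:bias:ub}), not the lemma itself; every proposition you invoke (\ref{proposition:bias:ub:asymptotic:ridgeless}, \ref{proposition:bias:ub:asymptotic:ridgeless:0}, \ref{proposition:bias:lb:asymptotic}, \ref{proposition:rho}, \ref{proposition:xi}, \ref{proposition:zeta}) already takes this lemma as an input.

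A correct argument has to do the actual linear algebra: write $\mathcal{B}=\|(\I_p-\mathbf{P}_\lambda)\th^*\|_{\Si}^2$ with $\mathbf{P}_\lambda=\X^\top\A^{-1}\X$, split $\th^*=\th^*_{\le k}\oplus\th^*_{>k}$ and $\X=[\X_{\le k}\,|\,\X_{>k}]$, use the Woodbury identity to express $\A^{-1}$ in terms of $\A_k^{-1}$ and the low-index block $\X_{\le k}$, and then bound each of the resulting cross-terms by operator-norm and singular-value inequalities. The paper does not reproduce this derivation (it simply cites the two references), so any valid proof must carry out that matrix manipulation directly; nothing in your proposal touches it.
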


\begin{lemma}[Lemma 27 in \cite{tsigler2023benign}; Lemma 13 in \cite{barzilai2023generalization}] \label{lemma:tsigler:27}
For any $k\in\N$, we have
    \begin{equation*}
        \V / \sigma^2 
        \leq
        \frac{s_1(\A_k^{-1})^2\tr[\X_{\leq k}\Si_{\leq k}^{-1}\X_{\leq k}^\top]}{s_n(\A_k^{-1})^2s_k(\Si_{\leq k}^{-1/2}\X_{\leq k}^\top\X_{\leq k}\Si_{\leq k}^{-1/2})^2}
        +
        s_1(\A_k^{-1})^2 \tr[\X_{>k}\Si_{>k}\X_{>k}^\top]
    \end{equation*},
    where $\sigma^2\eqdef\Expect{}{\epsilon^2}\geq0$.
\end{lemma}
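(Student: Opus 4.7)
The plan is to compute $\V$ explicitly as a trace, decompose it along the block structure of $\Si$, and bound the resulting head and tail contributions by two different techniques. Since $\hat{\th}(\ep) = \X^\top \A^{-1}\ep$ with $\A = \X\X^\top + n\lambda\I_n$, the definition of $\V$ together with $\Expect{\ep}{\ep\ep^\top} = \sigma^2\I_n$ gives $\V = \sigma^2\,\tr[\Si\,\X^\top\A^{-2}\X]$. Because $\Si$ is block-diagonal with blocks $\Si_{\leq k}$ and $\Si_{>k}$, the trace splits as $\V/\sigma^2 = T_1 + T_2$, where $T_1 \eqdef \tr[\Si_{\leq k}\X_{\leq k}^\top\A^{-2}\X_{\leq k}]$ and $T_2 \eqdef \tr[\Si_{>k}\X_{>k}^\top\A^{-2}\X_{>k}]$.

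For the tail $T_2$, the key observation is that $\A = \A_k + \X_{\leq k}\X_{\leq k}^\top \succcurlyeq \A_k$, so $\A^{-1}\preccurlyeq \A_k^{-1}$ and hence $\A^{-2}\preccurlyeq s_1(\A_k^{-1})^2\I_n$. Inserting this operator bound into $T_2$ together with cyclicity of the trace yields $T_2 \leq s_1(\A_k^{-1})^2\,\tr[\Si_{>k}\X_{>k}^\top\X_{>k}] = s_1(\A_k^{-1})^2\,\tr[\X_{>k}\Si_{>k}\X_{>k}^\top]$, which matches the second summand of the statement.

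The head term $T_1$ requires a finer analysis, since the crude bound $\A^{-2}\preccurlyeq s_1(\A_k^{-1})^2\I_n$ alone would miss the crucial denominator $s_k(\cdot)^2$. My approach is to apply the Sherman-Morrison-Woodbury identity to $\A = \A_k + \X_{\leq k}\X_{\leq k}^\top$ and simplify, obtaining the clean formula $\A^{-1}\X_{\leq k} = \A_k^{-1}\X_{\leq k}M^{-1}$ with $M \eqdef \I_k + \X_{\leq k}^\top\A_k^{-1}\X_{\leq k}$. Setting $\Z_{\leq k} \eqdef \X_{\leq k}\Si_{\leq k}^{-1/2}$ and the symmetrised matrix $N \eqdef \Si_{\leq k}^{1/2}M^{-1}\Si_{\leq k}^{1/2}$, substitution combined with $\A_k^{-2}\preccurlyeq s_1(\A_k^{-1})^2\I_n$ reduces $T_1$ to $s_1(\A_k^{-1})^2\,\tr[N^2\,\Z_{\leq k}^\top\Z_{\leq k}]$, which (using $\tr[AB]\leq s_1(A)\tr[B]$ for PSD $A,B$) is at most $s_1(\A_k^{-1})^2\,s_1(N)^2\,\tr[\Z_{\leq k}^\top\Z_{\leq k}]$.

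Finally I estimate $s_1(N)$ by inverting: $N^{-1} = \Si_{\leq k}^{-1} + \Z_{\leq k}^\top\A_k^{-1}\Z_{\leq k} \succcurlyeq s_n(\A_k^{-1})\,\Z_{\leq k}^\top\Z_{\leq k}$, whence $s_1(N) \leq [s_n(\A_k^{-1})\,s_k(\Z_{\leq k}^\top\Z_{\leq k})]^{-1}$. Combined with the immediate identifications $\tr[\Z_{\leq k}^\top\Z_{\leq k}] = \tr[\X_{\leq k}\Si_{\leq k}^{-1}\X_{\leq k}^\top]$ and $s_k(\Z_{\leq k}^\top\Z_{\leq k}) = s_k(\Si_{\leq k}^{-1/2}\X_{\leq k}^\top\X_{\leq k}\Si_{\leq k}^{-1/2})$, this recovers the first summand of the statement. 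The main technical point is deriving the clean SMW identity for $\A^{-1}\X_{\leq k}$ and carefully tracking the symmetrisation by $\Si_{\leq k}^{1/2}$; after that the remaining steps are a mechanical chain of operator-norm and trace inequalities.
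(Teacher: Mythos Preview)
Your proof is correct. The paper does not supply its own proof of this lemma; it simply cites it from \cite{tsigler2023benign} and \cite{barzilai2023generalization} and uses it as a black-box input to the Master inequality for $\V$ (Proposition~\ref{proposition:variance:ub}). Your argument---splitting $\tr[\Si\,\X^\top\A^{-2}\X]$ along the head/tail blocks, bounding the tail crudely via $\A^{-2}\preccurlyeq s_1(\A_k^{-1})^2\I_n$, and handling the head via the Woodbury identity $\A^{-1}\X_{\leq k}=\A_k^{-1}\X_{\leq k}M^{-1}$ followed by the spectral lower bound $N^{-1}\succcurlyeq s_n(\A_k^{-1})\,\Z_{\leq k}^\top\Z_{\leq k}$---is exactly the route taken in those references, so there is nothing to contrast.
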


\begin{lemma}[Lemma 18 in \cite{barzilai2023generalization}] \label{lemma:barzilai:18}
    Assume $\lambda_k=\bigtheta{k}{k^{-1-a}}$ and $|\theta^*_k|=\bigo{k}{k^{-r}}$ for some $a,r>0$. Then the square norms $\norm{\th^*_{>k}}{\Si_{>k}}^2$ and $\norm{\th^*_{\leq k}}{\Si_{\leq k}^{-1}}^2$ have bounds:
    \begin{equation*}
        \norm{\th^*_{>k}}{\Si_{>k}}^2
        =
        \bigtheta{k}{k^{-2r-a}}, \quad\quad
        \norm{\th^*_{\leq k}}{\Si_{\leq k}^{-1}}^2
        =
        \begin{cases}
            \bigtheta{k}{k^{2-2r+a}},\ & 2r < 2+a\\
            \bigtheta{k}{\log k},\ & 2r = 2+a\\
            \bigtheta{k}{1},\ & 2r>2+a
        \end{cases}\\
        =
        \bigthetat{k}{k^{(2+a-2r)_+}},
    \end{equation*}
    where $(x)_+\eqdef \max\{x,0\}$.
\end{lemma}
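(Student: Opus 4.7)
The plan is to evaluate both square norms by substituting the assumed asymptotic expressions for $\lambda_l$ and $|\theta^*_l|$ directly into the definitions and then applying $p$-series / integral comparison, with a three-way case split for the head norm.

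First, for the tail norm I expand $\|\theta^*_{>k}\|_{\Sigma_{>k}}^2 = \sum_{l=k+1}^p \lambda_l (\theta^*_l)^2$. Substituting $\lambda_l = \Theta(l^{-1-a})$ and $(\theta^*_l)^2 = \Theta(l^{-2r})$, the $l$-th summand is of order $l^{-(1+a+2r)}$. Since $a,r>0$ the exponent is strictly less than $-1$, so an integral comparison gives
\[
    \sum_{l=k+1}^{p} l^{-(1+a+2r)} \;\asymp\; \int_k^{\infty} x^{-(1+a+2r)}\,dx \;\asymp\; k^{-(a+2r)},
\]
yielding the first claim $\|\theta^*_{>k}\|_{\Sigma_{>k}}^2 = \Theta(k^{-(2r+a)})$.

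Second, for the head norm I write $\|\theta^*_{\le k}\|_{\Sigma_{\le k}^{-1}}^2 = \sum_{l=1}^k (\theta^*_l)^2/\lambda_l \;\asymp\; \sum_{l=1}^k l^{1+a-2r}$. Set $\beta := 1+a-2r$ and split according to whether the series $\sum l^\beta$ diverges polynomially, is harmonic, or converges: (i) when $\beta > -1$, equivalently $2r < 2+a$, integral comparison gives $\sum_{l=1}^k l^{\beta} = \Theta\!\left(\tfrac{k^{\beta+1}}{\beta+1}\right) = \Theta(k^{2+a-2r})$; (ii) when $\beta = -1$, equivalently $2r = 2+a$, the partial sums of $\sum l^{-1}$ are $\Theta(\log k)$; (iii) when $\beta < -1$, equivalently $2r > 2+a$, the series converges to a positive constant, so the partial sums are $\Theta(1)$. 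Absorbing the boundary logarithm into $\tilde{\Theta}$ notation collapses the three cases into $\tilde{\Theta}(k^{(2+a-2r)_+})$ as stated.

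The main obstacle is essentially bookkeeping at the boundary case $2r = 2+a$, which is precisely why the unified conclusion must be stated with $\tilde{\Theta}$: the expression $k^{(2+a-2r)_+}$ flattens to $k^0 = 1$ at the boundary and therefore cannot literally capture the $\log k$ factor without the tilde. A minor secondary point is that the hypothesis $|\theta^*_k| = O(k^{-r})$ only yields the $O$-direction of both sums immediately; the matching $\Omega$ lower bound (hence the $\Theta$ in the conclusion) requires the implicit matching lower bound $|\theta^*_k| = \Omega(k^{-r})$ on the target coefficients, which is standard in this line of work and evidently intended here. Beyond these two points, each step is a routine application of the integral test for monotone positive summands, and no probabilistic tools are needed since both quantities are deterministic functions of $k$.
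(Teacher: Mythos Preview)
Your proof is correct and is exactly the standard argument one would give for this lemma; indeed, the paper does not supply its own proof here but simply cites the result from \cite{barzilai2023generalization}, and the closely related Lemma~\ref{lemma:r} in the same section is proved by precisely the integral-comparison technique you use. Your remark that the stated hypothesis $|\theta^*_k|=\bigo{k}{k^{-r}}$ only delivers the $O$-direction, and that the $\Theta$-conclusion tacitly relies on the two-sided bound $|\theta^*_k|=\bigtheta{k}{k^{-r}}$ (which is what Assumption~\ref{assumption:PE} actually posits elsewhere in the paper), is a correct and worthwhile observation.
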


\begin{lemma} \label{lemma:square_norm}
    Assume $\lambda_k=\bigtheta{k}{e^{-ak}}$ and $|\theta^*_k|=\bigo{k}{e^{-rk}}$ for some $a,r>0$. Then the square norms $\norm{\th^*_{>k}}{\Si_{>k}}^2$ and $\norm{\th^*_{\leq k}}{\Si_{\leq k}^{-1}}^2$ have bounds:
    \begin{equation*}
        \norm{\th^*_{>k}}{\Si_{>k}}^2
        =
        \bigtheta{k}{k^{-2r-a}}, \quad\quad
        \norm{\th^*_{\leq k}}{\Si_{\leq k}^{-1}}^2
        \leq
        \begin{cases}
            \bigtheta{k}{k^{(a-2r)}},\ & 2r < a\\
            \bigtheta{k}{k},\ & 2r = a\\
            \bigtheta{k}{1},\ & 2r > a
        \end{cases}\\
        =
        k^{\bigthetat{k}{(a-2r)_+}}.
    \end{equation*}
\end{lemma}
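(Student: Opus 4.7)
The plan is to unpack the two squared norms into explicit Mercer-style sums and recognize each as a geometric series in $l$ whose behavior is governed by the sign of $a-2r$. Concretely, writing
\[
    \norm{\th^*_{>k}}{\Si_{>k}}^2
    =
    \sum_{l>k} \lambda_l (\theta^*_l)^2,
    \qquad
    \norm{\th^*_{\leq k}}{\Si_{\leq k}^{-1}}^2
    =
    \sum_{l\leq k} \lambda_l^{-1} (\theta^*_l)^2,
\]
and substituting $\lambda_l = \bigtheta{l}{e^{-al}}$ and $(\theta^*_l)^2 = \bigo{l}{e^{-2rl}}$ turns each sum into a comparison against a geometric series whose common ratio is $e^{-(a+2r)}$ in the first case and $e^{a-2r}$ in the second.

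For the first norm, I would note that $\lambda_l(\theta^*_l)^2 = \bigtheta{l}{e^{-(a+2r)l}}$, and since $a+2r>0$ the series is convergent and dominated by its first term at $l=k+1$. Summing the geometric tail and absorbing the constant $1/(1-e^{-(a+2r)})$ into the $\bigtheta{}{\cdot}$ yields the claimed scaling (once the obvious typographical substitution of $e^{-(2r+a)k}$ for $k^{-2r-a}$ is made to match exponential decay).

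For the second norm, I would perform the natural trichotomy on $a - 2r$. When $2r > a$ the summand $e^{(a-2r)l}$ is exponentially decreasing in $l$ and the partial sum up to $k$ is $\bigtheta{k}{1}$. When $2r = a$ each summand is $\bigtheta{l}{1}$, giving $\bigtheta{k}{k}$. When $2r < a$ the series grows geometrically, and the partial sum is dominated by its largest term at $l=k$, namely $\bigtheta{k}{e^{(a-2r)k}}$. These three cases combine into the single expression $\exp\bigl((a-2r)_+ k\bigr)$, which (under the appropriate exponential reinterpretation) matches the compact form written in the statement.

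The argument is essentially a bookkeeping exercise on geometric sums; there is no real obstacle. The only care required is to keep the upper bound $\bigo{}{\cdot}$ on $(\theta^*_l)^2$ distinct from the two-sided $\bigtheta{}{\cdot}$ on $\lambda_l$, so that the conclusion on $\norm{\th^*_{\leq k}}{\Si_{\leq k}^{-1}}^2$ is stated with $\leq$ (as in the lemma) while the tail bound on $\norm{\th^*_{>k}}{\Si_{>k}}^2$ remains two-sided when $|\theta^*_l|$ is additionally known to be $\bigtheta{l}{e^{-rl}}$ and downgrades to $\bigo{}{\cdot}$ when only the upper bound is available.
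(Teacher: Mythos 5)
Your proof is correct. The paper states this lemma without proof (it is the exponential-decay counterpart of the cited Lemma~\ref{lemma:barzilai:18}), and your geometric-series computation is exactly the calculation the paper performs implicitly inside the proof of Proposition~\ref{proposition:bias:ub:asymtpotic:ridge:exp}, where the head norm is bounded by $\sum_{l=1}^k e^{(a-2r)l}$ and the same trichotomy on $a-2r$ is applied. You also correctly identify the two defects in the statement as printed — the rates should read $e^{-(2r+a)k}$ and $e^{(a-2r)k}$ rather than the polynomial expressions $k^{-2r-a}$ and $k^{a-2r}$, and the two-sided $\Theta$ bound on the tail requires $|\theta^*_l|=\bigtheta{l}{e^{-rl}}$ rather than the stated one-sided $\bigo{l}{e^{-rl}}$ — so nothing further is needed.
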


\begin{lemma} \label{lemma:r}
    Recall that
    \begin{equation*}
        r_k \eqdef \frac{\tr[\Si_{>k}]}{\opnorm{\Si_{>k}}},\ \quad R_k \eqdef \frac{\tr[\Si_{>k}]^2}{\tr[\Si_{>k}^2]}.
    \end{equation*}
    If $\lambda_k=\bigtheta{k}{k^{-1-a}}$ for some $a>0$, then
    \begin{equation*}
         r_k = \bigtheta{k}{k} ,\ \quad 
         R_k = \bigtheta{k}{k}.
    \end{equation*}
    If $\lambda_k=\bigtheta{k}{e^{-ak}}$ for some $a>0$, then
    \begin{equation*}
         r_k = \bigtheta{k}{1} ,\ \quad 
         R_k = \bigtheta{k}{1}.
    \end{equation*}
\end{lemma}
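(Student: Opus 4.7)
}

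The plan is to treat the two eigen-decay regimes independently, and in each case compute $\opnorm{\Si_{>k}}$, $\tr[\Si_{>k}]$, and $\tr[\Si_{>k}^2]$ up to multiplicative constants by bracketing the relevant infinite sums with integrals (or, in the exponential case, by summing geometric series). The ratios defining $r_k$ and $R_k$ then follow by elementary arithmetic.

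For the polynomial case $\lambda_k = \bigtheta{k}{k^{-1-a}}$, first note that since the eigenvalues are decreasing, $\opnorm{\Si_{>k}} = \lambda_{k+1} = \bigtheta{k}{k^{-1-a}}$. Next, I would use the integral test: $\tr[\Si_{>k}] = \sum_{l>k}\lambda_l = \bigtheta{k}{\int_k^{\infty} x^{-1-a}\,dx} = \bigtheta{k}{k^{-a}}$, and similarly $\tr[\Si_{>k}^2] = \bigtheta{k}{\int_k^{\infty} x^{-2(1+a)}\,dx} = \bigtheta{k}{k^{-1-2a}}$. Dividing,
\[
r_k = \bigtheta{k}{k^{-a}/k^{-1-a}} = \bigtheta{k}{k},
\qquad
R_k = \bigtheta{k}{k^{-2a}/k^{-1-2a}} = \bigtheta{k}{k}.
\]

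For the exponential case $\lambda_k = \bigtheta{k}{e^{-ak}}$, I would use that $\opnorm{\Si_{>k}} = \lambda_{k+1} = \bigtheta{k}{e^{-ak}}$ and that the tails are geometric series: $\tr[\Si_{>k}] = \bigtheta{k}{\sum_{l>k}e^{-al}} = \bigtheta{k}{e^{-ak}/(1-e^{-a})} = \bigtheta{k}{e^{-ak}}$ and $\tr[\Si_{>k}^2] = \bigtheta{k}{e^{-2ak}}$. Taking the ratios gives $r_k = \bigtheta{k}{1}$ and $R_k = \bigtheta{k}{1}$.

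The only mild obstacle is being careful that the $\bigtheta{k}{\cdot}$ constants survive the integral comparison: one must note that the lower and upper Riemann sums differ from the true series by a multiplicative constant depending only on $a$ (and not on $k$), and that the decay assumption $\lambda_k = \bigtheta{k}{\cdot}$ introduces another such constant. These combine multiplicatively, so the $\bigtheta{k}{\cdot}$ notation is preserved throughout. No probabilistic ingredient or sharp inequality is needed — this is a purely deterministic statement about the spectrum.
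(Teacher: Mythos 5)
Your proposal is correct and follows essentially the same route as the paper's proof: identify $\opnorm{\Si_{>k}}=\lambda_{k+1}$, estimate the tail sums $\tr[\Si_{>k}]$ and $\tr[\Si_{>k}^2]$ by integral comparison (resp.\ geometric series in the exponential case), and take ratios. Your added remark about tracking the multiplicative constants through the integral comparison is a fair point of care but does not change the argument.
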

\begin{proof}
    By simple calculus, $\sum_{l=k+1}^p k^{-1-a} = \bigtheta{k}{\int_k^\infty t^{-1-a}dt } = \bigtheta{k}{t^{-a}}$, similarly, $\sum_{l=k+1}^p k^{-2-2a} = \bigtheta{k}{\int_k^\infty t^{-2-2a}dt } = \bigtheta{k}{t^{-1-2a}}$.
    If $\lambda_k=\bigtheta{k}{k^{-1-a}}$ for some $a>0$, then
    \begin{equation*}
         r_k = \bigtheta{k}{\frac{k^{-a}}{\lambda_{k+1}}} = \bigtheta{k}{k} ,\ \quad 
         R_k = \bigtheta{k}{\frac{k^{-2a}}{k^{-1-2a}}} = \bigtheta{k}{k}.
    \end{equation*}
    If $\lambda_k=\bigtheta{k}{e^{-ak}}$, then  $\sum_{l=k+1}^p \lambda_l = \bigtheta{k}{\int_k^\infty e^{-at}dt } = \bigtheta{k}{e^{-ak}}$. Hence,
    \begin{equation*} 
         r_k = \bigtheta{k}{\frac{e^{-ak}}{e^{-a(k+1)}}} = \bigtheta{k}{1} ,\ \quad 
         R_k = \bigtheta{k}{\frac{e^{-2ak}}{e^{-2ak}}} = \bigtheta{k}{1}.
    \end{equation*}
\end{proof}

\begin{lemma}[Lemma 7 in \cite{tsigler2023benign}] \label{lemma:tsigler:7}
    Suppose Assumption \ref{assumption:IF} holds. Then there exists some constant $c>0$ such that, for any $k<n/c$, with probability at least $1-ce^{-n/c}$, its holds that
    \begin{equation*}
        \V 
        \geq
        \frac{1}{cn} \sum_{l=1} \min\left\{ 1,\frac{\lambda_l^2}{\lambda_{k+1}^2(1+r_k/n)^2} \right\}
    \end{equation*}
\end{lemma}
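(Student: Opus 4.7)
The plan is to derive this bound by following the leave-one-out strategy used in \cite{tsigler2023benign}, specialized to the sub-Gaussian independent-feature setting of Assumption \ref{assumption:IF}. The key observation is that, conditional on all other columns, each whitened feature column $\z_l\in\R^n$ of $\Z=\X\Si^{-1/2}$ is independent of the rest, so quadratic forms in $\z_l$ concentrate by Hanson--Wright.

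First, I would compute the variance exactly. Since $\hat{\th}(\ep)=\X^\top \A^{-1}\ep$ with $\A=\X\X^\top+n\lambda\I_n$ and $\Expect{}{\ep\ep^\top}=\sigma^2\I_n$, one obtains $\V/\sigma^2=\tr(\Si\X^\top \A^{-2}\X)=\sum_{l=1}^p \lambda_l^2\,\eunorm{\A^{-1}\z_l}^2$. Setting $\A_{(-l)}=\A-\lambda_l\z_l\z_l^\top$ (which, under Assumption \ref{assumption:IF}, is independent of $\z_l$), Sherman--Morrison gives $\A^{-1}\z_l=(1+\lambda_l\z_l^\top \A_{(-l)}^{-1}\z_l)^{-1}\A_{(-l)}^{-1}\z_l$, so
\[
\V/\sigma^2 \;=\; \sum_{l=1}^p \frac{\lambda_l^2\,\z_l^\top \A_{(-l)}^{-2}\z_l}{\bigl(1+\lambda_l\z_l^\top \A_{(-l)}^{-1}\z_l\bigr)^2}.
\]

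Next, I would control each quadratic form. Conditioning on $\A_{(-l)}$ and using Hanson--Wright for sub-Gaussian vectors (valid by Assumption \ref{assumption:IF}), with probability at least $1-2e^{-n/c}$ we have $\z_l^\top \M\z_l\in[\tr \M/c,\; c\,\tr \M]$ for any fixed PSD $\M$ with $\|\M\|_{op}$ small relative to $\tr\M/n$. I would apply this to $\M=\A_{(-l)}^{-1}$ and $\M=\A_{(-l)}^{-2}$. For the spectral inputs, an argument analogous to Lemma \ref{lemma:A} (bounding $\X_{>k}\X_{>k}^\top$ around $\tr[\Si_{>k}]\I_n$ via Koltchinskii--Lounici, and absorbing the rank-one perturbation from removing $\z_l$) gives both $s_1(\A_{(-l)})\lesssim n\lambda+n\lambda_{k+1}(1+r_k/n)$ and $s_n(\A_{(-l)})\gtrsim n\lambda+n\lambda_{k+1}(1+r_k/n)$ with probability at least $1-ce^{-n/c}$, so $\tr[\A_{(-l)}^{-2}]\gtrsim n/[n\lambda_{k+1}(1+r_k/n)]^2$ and $\tr[\A_{(-l)}^{-1}]\lesssim 1/\lambda_{k+1}(1+r_k/n)$ (absorbing $n\lambda$ into these quantities).

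Plugging these into the Sherman--Morrison expression yields
\[
\V/\sigma^2 \;\gtrsim\; \frac{1}{cn}\sum_{l=1}^p \frac{(\lambda_l/\lambda_{k+1}(1+r_k/n))^2}{\bigl(1+c\,\lambda_l/\lambda_{k+1}(1+r_k/n)\bigr)^2},
\]
and elementary algebra shows the summand is of order $\min\{1,\lambda_l^2/\lambda_{k+1}^2(1+r_k/n)^2\}$, which is the claim. The main obstacle is step three: the per-$l$ Hanson--Wright application would naively cost a union bound over all $p$ features, yet the stated failure probability $ce^{-n/c}$ is independent of $p$. The resolution, following \cite{tsigler2023benign}, is to not union-bound over every $l$ in the tail $l>k$; instead one replaces the per-coordinate concentration there with the aggregate identity $\sum_{l>k}\lambda_l^2\eunorm{\A^{-1}\z_l}^2=\tr(\A^{-2}\X_{>k}\Si_{>k}\X_{>k}^\top)$ and concentrates the single operator $\X_{>k}\Si_{>k}\X_{>k}^\top$, paying only $O(e^{-n/c})$; the union bound over the head $l\le k<n/c$ contributes only $ke^{-n/c}$, which is absorbed into $ce^{-n/c}$ by enlarging $c$.
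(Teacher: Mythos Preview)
The paper does not prove this lemma; it is simply quoted as Lemma~7 of \cite{tsigler2023benign} in the technical-lemmata appendix with no argument supplied. Your sketch is the standard leave-one-out route from that reference (Sherman--Morrison to isolate $\z_l$, Hanson--Wright for the quadratic forms, spectral control of $\A_{(-l)}$ via the same concentration used in Lemma~\ref{lemma:A}), and you correctly flag the one non-obvious point---that a naive union bound over all $p$ columns would blow up the failure probability, and that the fix is to aggregate the tail $l>k$ into a single trace and union-bound only over the head $l\le k<n/c$. So your proposal matches the intended approach; there is nothing in the present paper to compare it against.
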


\begin{lemma}[Lemma 8 in \cite{tsigler2023benign}] \label{lemma:tsigler:8}
    Suppose Assumption \ref{assumption:PS} holds. Furthermore, if Assumption \ref{assumption:GF} (or resp. \ref{assumption:IF}) holds, then with probability $1-1$, the following inequality holds:
    \begin{equation}
        \Expect{\th^*\sim \vartheta}{\mathcal{B}}
        \geq
        \sum_{l=1}^p \frac{\lambda_l\Expect{\th^*}{(\theta^*_l)^2}}{\left(1+\lambda_ls_n(\A_{-l})^{-1}\eunorm{\z^{(l)}}^2\right)^2}.
    \end{equation}
    where the expectation is taken as described in Assumption \ref{assumption:PS}, and $\A_{-l}\eqdef \sum_{l'\neq l} \lambda_{l'}\z^{(l')}(\z^{(l')})^\top +n\lambda\I_n \in\R^{n\times n}$ where $\z^{(l)}\in\R^n$ denotes the $l$-th column of the whitened feature block $\Z\eqdef\X\Si^{-1/2}$.
\end{lemma}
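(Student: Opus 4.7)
My plan is to lower bound $\mathbb{E}_{\th^*}[\mathcal{B}]$ pathwise in the sample $\X$, exploiting the sign symmetry of Assumption~\ref{assumption:PS} to eliminate off-diagonal contributions, and then lower bound each diagonal term using a rank-one Sherman--Morrison identity. Start from the closed form $\hat{\th}(\X\th^*) - \th^* = -(\I_p - \mathbf{P}_\lambda)\th^*$ (with $\mathbf{P}_\lambda = \X^\top(\X\X^\top+n\lambda\I_n)^{-1}\X$), used in Proposition~\ref{proposition:bias:ub:asymptotic:ridgeless}, so that $\mathcal{B} = \|(\I_p-\mathbf{P}_\lambda)\th^*\|_{\Si}^2$. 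Writing $\mathbf{M}\eqdef \I_p - \mathbf{P}_\lambda$ and $\Si=\diag(\lambda_l)$, we have
\begin{equation*}
    \mathcal{B} = \sum_l \lambda_l \Bigl(\sum_{l'} M_{l,l'}\theta^*_{l'}\Bigr)^2,
\end{equation*}
and since under Assumption~\ref{assumption:PS} the entries $\theta^*_{l'}$ are independent and symmetric, $\Expect{\th^*}{\theta^*_{l'}\theta^*_{l''}} = \Expect{}{(\theta^*_{l'})^2}\delta_{l',l''}$, so the cross terms vanish and
\begin{equation*}
    \Expect{\th^*}{\mathcal{B}} = \sum_{l'} \Expect{}{(\theta^*_{l'})^2}\,\|\mathbf{M}\e_{l'}\|_{\Si}^2.
\end{equation*}

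Next, since $\Si$ is diagonal with non-negative entries, I drop all but the $l$-th coordinate to obtain the clean lower bound $\|\mathbf{M}\e_l\|_{\Si}^2 \geq \lambda_l \bigl((\mathbf{M}\e_l)_l\bigr)^2$. The diagonal entry is
\begin{equation*}
    (\mathbf{M}\e_l)_l \;=\; 1 - (\mathbf{P}_\lambda)_{l,l} \;=\; 1 - \lambda_l (\z^{(l)})^\top (\X\X^\top + n\lambda\I_n)^{-1}\z^{(l)},
\end{equation*}
where I used $\X\e_l = \sqrt{\lambda_l}\,\z^{(l)}$. Decomposing $\X\X^\top + n\lambda\I_n = \A_{-l} + \lambda_l \z^{(l)}(\z^{(l)})^\top$ and applying the Sherman--Morrison identity gives, after a short algebraic simplification,
\begin{equation*}
    (\mathbf{M}\e_l)_l \;=\; \frac{1}{1 + \lambda_l (\z^{(l)})^\top \A_{-l}^{-1}\z^{(l)}}.
\end{equation*}

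Finally, since $\A_{-l}$ is positive definite, the Rayleigh bound $(\z^{(l)})^\top \A_{-l}^{-1}\z^{(l)} \leq s_n(\A_{-l})^{-1}\|\z^{(l)}\|_2^2$ holds deterministically, so monotonicity of $x\mapsto 1/(1+x)^2$ yields
\begin{equation*}
    \lambda_l\bigl((\mathbf{M}\e_l)_l\bigr)^2 \;\geq\; \frac{\lambda_l}{\bigl(1 + \lambda_l\, s_n(\A_{-l})^{-1}\|\z^{(l)}\|_2^2\bigr)^2}.
\end{equation*}
Combining this with the sum-over-$l$ identity from the first paragraph delivers the claimed inequality. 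The argument is entirely pathwise, so the probability statement is $1$ (the ``$1-1$'' in the statement appears to be a typographical slip), and neither Assumption~\ref{assumption:GF} nor Assumption~\ref{assumption:IF} is needed for this step—they become relevant only when one later bounds $s_n(\A_{-l})$ and $\|\z^{(l)}\|_2^2$ to convert the expression into the effective-rank form used in Propositions~\ref{proposition:bias:lb:asymptotic:poly}--\ref{proposition:bias:lb:asymptotic:exp}. The only mildly delicate step is ensuring that the Sherman--Morrison reduction gives the telescoped form $1/(1+\lambda_l(\z^{(l)})^\top\A_{-l}^{-1}\z^{(l)})$ rather than the naive $1 - \lambda_l(\z^{(l)})^\top\A_{-l}^{-1}\z^{(l)}$, which would fail to be non-negative; everything else is a routine manipulation.
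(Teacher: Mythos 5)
Your proposal is correct and follows essentially the same route as the paper's proof: both use the sign symmetry and independence in Assumption \ref{assumption:PS} to kill the cross terms, reduce to the diagonal entries $[(\I_p-\mathbf{P}_\lambda)\Si(\I_p-\mathbf{P}_\lambda)]_{ll}$, discard the non-negative off-diagonal contributions, apply Sherman--Morrison to get the $1/(1+\lambda_l(\z^{(l)})^\top\A_{-l}^{-1}\z^{(l)})$ form, and finish with the operator-norm bound on $\A_{-l}^{-1}$. Your observations that the bound is pathwise (so the ``$1-1$'' is a typo) and that Assumptions \ref{assumption:GF}/\ref{assumption:IF} are not actually used here are also consistent with the paper's argument.
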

\begin{proof}
    By Assumption \ref{assumption:PS}, the expected value over $\th^*$ on the bias admits the following expression as in line (\ref{bias:expression}):
    \begin{equation*}
        \Expect{\th^*}{\mathcal{B}}
        =
        \Expect{\th^*}{\norm{(\I_p-\mathbf{P}_\lambda)\th^*}{\Si}^2}
        =
        \sum_{l=1}^p [(\I_p-\mathbf{P}_\lambda)\Si(\I_p-\mathbf{P}_\lambda)]_{ll} \cdot \Expect{\th^*}{(\theta^*_l)^2} 
    \end{equation*}
    where $\P_\lambda\eqdef\X^\top(\X\X^\top+n\lambda\I_p)^{-1}\X$. Denote $\z^{(l)}\in\R^n$ be the $l$-th column of the whitened feature block $\Z\eqdef\X\Si^{-1/2}\in\R^{n\times p}$. Then the $l$-th diagonal element of the matrix $(\I_p-\mathbf{P}_\lambda)\Si(\I_p-\mathbf{P}_\lambda)$ can be written as
    \begin{equation*}
        [(\I_p-\mathbf{P}_\lambda)\Si(\I_p-\mathbf{P}_\lambda)]_{ll}
        =
        \sum_{l'=1}^p \lambda_l\left(1-\lambda_l(\z^{(l)})^\top\A^{-1}\z^{(l)}\right)^2
        +
        \sum_{l'\neq l} \lambda_l \lambda_{l'}^2 \left((\z^{(i)})^\top\A^{-1}\z^{(l')}\right)^2
    \end{equation*}
    where $\A \eqdef \X\X^\top+n\lambda\I_p$. 
    If we write $\A_{-l}\eqdef\A-\lambda_l\z^{(l)}(\z^{(l)})^\top$, by Sherman-Morrison-Woodbury formula, we have
    \begin{equation*}
        1-\lambda_l(\z^{(l)})^\top\A^{-1}\z^{(l)}
        =
        \frac{1}{1+\lambda_l(\z^{(l)})^\top\A_{-l}^{-1}\z^{(l)}},
    \end{equation*}
    and hence the averaged bias is bounded by:
    \begin{align*}
        \Expect{\th^*}{\mathcal{B}}
        &\geq
        \sum_{l=1}^p \frac{\lambda_l\Expect{\th^*}{(\theta^*_l)^2}}{(1+\lambda_l(\z^{(l)})^\top\A_{-l}^{-1}\z^{(l)})^2}\\
        &\geq
        \sum_{l=1}^p \frac{\lambda_l\Expect{\th^*}{(\theta^*_l)^2}}{\left(1+\lambda_ls_1(\A_{-l}^{-1})\eunorm{\z^{(l)}}^2\right)^2}\\
        &=
        \sum_{l=1}^p \frac{\lambda_l\Expect{\th^*}{(\theta^*_l)^2}}{\left(1+\lambda_ls_n(\A_{-l})^{-1}\eunorm{\z^{(l)}}^2\right)^2}.
    \end{align*}
    
\end{proof}

\begin{lemma}[Lemma 9 in \cite{bartlett2020benign}] \label{lemma:bartlett:9}
    Suppose that $\{X_k\}_{k=1}^p$ is a sequence of non-negative random variables, and that $\{t_k\}_{k=1}^p$ is sequence of non-negative real numbers (with at least one of which strictly positive), such that for some $\delta\in(0,1)$, with a probability at least $1-\delta$, $\eta_k>t_k$ for all $k=1,...,p$. Theb with probability at least $1-2\delta$, 
    \begin{equation*}
        \sum_{k=1}^p \eta_k \geq \half \sum_{k=1}^p t_k.
    \end{equation*}
\end{lemma}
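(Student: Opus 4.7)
The plan is a short Markov-type argument. The statement is the standard Lemma~9 of Bartlett--Long--Lugosi--Tsigler, and the intended hypothesis (as the $1-2\delta$ conclusion makes clear) is that each individual event $\{\eta_k>t_k\}$ has probability at least $1-\delta$, not that all hold simultaneously with probability $1-\delta$ (in which case the conclusion would be trivial with $1-\delta$). Under that individual reading, I will control the total ``deficit'' $\sum_k (t_k-\eta_k)_+$ in expectation and then apply Markov.

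Concretely, define the non-negative random variables $Y_k \eqdef (t_k - \eta_k)_+$. Since $\eta_k \ge 0$, we have the deterministic bound $0 \le Y_k \le t_k$, and $Y_k>0$ exactly on the event $\{\eta_k<t_k\}$. Therefore
\begin{equation*}
\mathbb{E}[Y_k] \;\le\; t_k\cdot \mathbb{P}(\eta_k < t_k) \;\le\; \delta\, t_k,
\end{equation*}
so by linearity $\mathbb{E}\bigl[\sum_{k=1}^p Y_k\bigr] \le \delta \sum_{k=1}^p t_k$. By Markov's inequality applied to the non-negative random variable $\sum_k Y_k$,
\begin{equation*}
\mathbb{P}\!\left(\sum_{k=1}^p Y_k \;\ge\; \tfrac{1}{2}\sum_{k=1}^p t_k\right)
\;\le\; \frac{\mathbb{E}[\sum_k Y_k]}{\tfrac{1}{2}\sum_k t_k}
\;\le\; 2\delta,
\end{equation*}
which uses that $\sum_k t_k>0$ by the hypothesis that at least one $t_k$ is strictly positive.

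To conclude, note the pointwise identity $\eta_k = t_k - (t_k-\eta_k) \ge t_k - Y_k$, so $\sum_k \eta_k \ge \sum_k t_k - \sum_k Y_k$. On the complementary event $\{\sum_k Y_k < \tfrac{1}{2}\sum_k t_k\}$, which has probability at least $1-2\delta$, we immediately obtain $\sum_k \eta_k \ge \tfrac{1}{2}\sum_k t_k$. I do not anticipate a real obstacle: the only subtle point is the interpretation of the hypothesis, after which the argument is the textbook truncate-and-Markov proof; non-negativity of $\eta_k$ is what gives $Y_k \le t_k$ and hence the clean $\delta t_k$ bound on $\mathbb{E}[Y_k]$.
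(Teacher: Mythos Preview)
Your proposal is correct, including the crucial point that the hypothesis must be read as $\Prob{\eta_k>t_k}\ge 1-\delta$ for each $k$ individually (otherwise the $1-2\delta$ would be unmotivated). The paper does not supply its own proof of this lemma---it simply cites it as Lemma~9 of \cite{bartlett2020benign}---and your truncate-and-Markov argument via $Y_k=(t_k-\eta_k)_+$ is exactly the proof given in that reference.
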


\begin{lemma}[Theorem 10 in \cite{tsigler2023benign}] \label{theorem:tsigler:10}
    Let $k\in\N$ be an integer. Denote
    \begin{align*}
        \underline{B}
        &\eqdef
        \sum_{l=1}^p \frac{\lambda_l|\theta^*_l|^2}{(1+\frac{n\lambda_l}{\lambda_{k+1}r_k})^2},\\
        \overline{B} 
        &\eqdef  
        \norm{\th^*_{>k}}{\Si_{>k}}^2 + \left(\frac{n\lambda+\tr[\Si_{>k}]}{n}\right)^2 \norm{\th^*_{\leq k}}{\Si_{\leq k}^{-1}}^2,\\
        \underline{V}
        &\eqdef
        \frac{1}{n} \sum_{l=1}^p \min\left\{ 1, \frac{\lambda_l^2}{\lambda_{k+1}^2(1+r_k/n)^2} \right\},\\
        \overline{V}
        &\eqdef
        \frac{k}{n} + \frac{n\tr[\Si_{>k}^2]}{(n\lambda+\tr[\Si_{>k}])^2}.
    \end{align*}
    Fix constants $a>0$ and $b>\frac{1}{n}$. There exists a constant $c>0$ that only depends on $a,b$ such that: if either $\frac{r_k}{n}\in(a,b)$ or $k=\min\{\kappa:r_\kappa>bn\}$, then
    \begin{align*}
        c^{-1} \leq &\underline{B} / \overline{B} \leq 1,\\
        c^{-1} \leq &\underline{V} / \overline{V} \leq 1.
    \end{align*}
\end{lemma}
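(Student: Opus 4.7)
The plan is to obtain the three-branch upper bound by combining Propositions \ref{proposition:bias:ub:asymptotic:ridgeless} and \ref{proposition:bias:ub:asymptotic:ridgeless:0}, and the matching lower bound on the middle branch from Proposition \ref{proposition:bias:lb:asymptotic}. Throughout, the ridgeless setting is treated as the limit $\lambda \to 0$ (equivalently $b = \infty$) of Assumption \ref{assumption:PE}, so that $\lambda_n = \Theta(n^{-1-a})$, and the source coefficient evaluates to $s = (2r+a)/(1+a)$, as computed in Subsection \ref{subsection:SC}.

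For the branch $s > 1$, I will invoke Proposition \ref{proposition:bias:ub:asymptotic:ridgeless}, which gives $\mathcal{B} = \tilde{O}(\lambda_n^{\min\{s,2\}})$ whenever $\lambda = \Omega(\lambda_n)$, a condition satisfied in the ridgeless setting via the monotonicity $0 \preccurlyeq I_p - \mathbf{P}_0 \preccurlyeq I_p - \mathbf{P}_{\tilde\lambda}$ with $\tilde\lambda = \Theta(\lambda_n)$ used inside that proposition. Substituting $\lambda_n = \Theta(n^{-1-a})$ produces exponent $(1+a)s = 2r+a$ for $1 \leq s \leq 2$ and the saturated exponent $2(1+a)$ for $s > 2$, yielding the second and third branches respectively. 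For the branch $s < 1$, I will apply Proposition \ref{proposition:bias:ub:asymptotic:ridgeless:0} under Assumption \ref{assumption:GF} with $b = \infty$, producing $\mathcal{B} = \tilde{O}(n^{-(\min\{2(r-a), 2-a\})_+})$; since $s < 1$ forces $r < 1/2$ and hence $2(r-a) < 2-a$, this simplifies to $\tilde{O}(n^{-2(r-a)_+})$, which in particular implies the claimed $O(n^{-(r-a)_+})$ rate. The matching $\Omega(n^{-(2r+a)})$ lower bound for $1 \leq s \leq 2$ is then immediate from Proposition \ref{proposition:bias:lb:asymptotic} upon substituting the worst-case rate $\Omega(\lambda_n^s) = \Omega(n^{-(1+a)s})$ and using $(1+a)s = 2r+a$.

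The main obstacle is the justification of Proposition \ref{proposition:bias:ub:asymptotic:ridgeless} at $s$ strictly between $1$ and $2$: its proof reduces the ridgeless bias with source coefficient $s$ to a strong-ridge bias problem with source coefficient exactly $2$ via a Cordes-inequality operator interpolation (the chain of operator-norm bounds in lines (\ref{line:bias:ub:asymptotic:ridgeless:3})--(\ref{line:bias:ub:asymptotic:ridgeless:5})), and the resulting auxiliary problem is controlled by Proposition \ref{proposition:bias:ub:asymtpotic:ridge}. Once that reduction is in place, all remaining work is routine algebraic substitution of $s = (2r+a)/(1+a)$ into the rates from the three cited propositions, together with the observation that the threshold $s=2$ in Proposition \ref{proposition:bias:ub:asymptotic:ridgeless} is precisely the saturation boundary that gives rise to the third branch of the theorem.
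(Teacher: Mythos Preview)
Your proposal addresses the wrong statement. The lemma labeled \texttt{theorem:tsigler:10} is a purely deterministic equivalence result, cited verbatim from \cite{tsigler2023benign}, asserting that the four explicit sums $\underline{B},\overline{B},\underline{V},\overline{V}$ satisfy $c^{-1}\le \underline{B}/\overline{B}\le 1$ and $c^{-1}\le \underline{V}/\overline{V}\le 1$ under a condition on $r_k/n$. There is no bias term $\mathcal{B}$, no source coefficient $s$, no ridgeless limit, and no three-branch case distinction in this statement; everything you wrote pertains instead to the ``Improved upper bound'' theorem in Section~\ref{section:main_result} (the one summarized in Figure~\ref{fig:phase_diagram}).

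In the paper, Lemma~\ref{theorem:tsigler:10} is not proved at all: it sits in Section~\ref{section:technical_lemmata} as an imported result from \cite{tsigler2023benign}, and the paper merely \emph{uses} it (e.g.\ in Propositions~\ref{proposition:bias:lb:asymptotic:poly}, \ref{proposition:bias:lb:asymptotic:exp}, \ref{proposition:variance:lb:asymptotic}) to match upper and lower bounds. A correct response here is either to point to the original reference, or to give the elementary argument comparing $\underline{B}$ with $\overline{B}$ term-by-term: for $l\le k$ one has $\lambda_l\ge\lambda_{k+1}$ so the denominator $(1+n\lambda_l/(\lambda_{k+1}r_k))^2$ is of order $(n\lambda_l/(\lambda_{k+1}r_k))^2$, giving a contribution comparable to $(\lambda_{k+1}r_k/n)^2\lambda_l^{-1}|\theta_l^*|^2$, while for $l>k$ the denominator is $\Theta(1)$ and the contribution is $\lambda_l|\theta_l^*|^2$; summing and using $\lambda_{k+1}r_k=\tr[\Si_{>k}]$ with $r_k/n\in(a,b)$ recovers $\overline{B}$ up to constants depending only on $a,b$. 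The variance argument is analogous. None of the propositions you cite (\ref{proposition:bias:ub:asymptotic:ridgeless}, \ref{proposition:bias:ub:asymptotic:ridgeless:0}, \ref{proposition:bias:lb:asymptotic}, \ref{proposition:bias:ub:asymtpotic:ridge}) are relevant to establishing this lemma.
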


\begin{lemma}[Corde's inequality, \cite{fujii1993norm}] \label{lemma:cordes}
    For any positive definite symmetric matrices
    \footnote{Or equivalently positive definite self-adjoint operator in a Hilbert space.} 
    $\M_1,\M_2$ and positive number $m\in[0,1]$, it holds that
    $\opnorm{\M_1^{m}\M_2^{m}}\leq \opnorm{\M_1\M_2}^m$.
\end{lemma}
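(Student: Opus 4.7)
The plan is to prove the Cordes inequality $\|M_1^m M_2^m\| \le \|M_1 M_2\|^m$ for positive definite symmetric matrices $M_1, M_2$ and $m \in [0,1]$ via the classical route through the L\"owner--Heinz operator-monotonicity inequality, which states that $A \preccurlyeq B$ implies $A^t \preccurlyeq B^t$ for all $t \in [0,1]$ on positive operators. Since the paper cites \cite{fujii1993norm}, one could simply defer to that reference, but a self-contained sketch is short.

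First I would reduce to a convenient normalization. By homogeneity of the inequality under rescaling $M_i \mapsto c_i M_i$ (both sides scale as $c_1^m c_2^m$), it suffices to establish the result under $\|M_1 M_2\| = 1$, where we aim to show $\|M_1^m M_2^m\| \le 1$. If $M_2$ fails to be strictly positive, I would pass to $M_2 + \varepsilon I$, prove the bound, and take $\varepsilon \downarrow 0$ using continuity of operator norm and of the functional calculus $t \mapsto t^m$ on $[0,\infty)$.

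The core algebraic step uses the $C^*$-identity $\|X\|^2 = \|X^* X\|$ applied to $X = M_1 M_2$ and to $X = M_1^m M_2^m$ (with $M_1, M_2$ symmetric so $X^* = M_2 M_1$ and $M_2^m M_1^m$ respectively), giving
\begin{equation*}
\|M_1 M_2\|^2 = \|M_2 M_1^2 M_2\|, \qquad \|M_1^m M_2^m\|^2 = \|M_2^m M_1^{2m} M_2^m\|.
\end{equation*}
The normalization $\|M_2 M_1^2 M_2\| = 1$ combined with positivity of $M_2 M_1^2 M_2$ yields the operator inequality $M_2 M_1^2 M_2 \preccurlyeq I$. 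Conjugating by $M_2^{-1}$ (available by the reduction to strict positivity) gives $M_1^2 \preccurlyeq M_2^{-2}$.

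At this point the main content of the proof is invoked: the L\"owner--Heinz inequality applied with exponent $m \in [0,1]$ upgrades $M_1^2 \preccurlyeq M_2^{-2}$ to $M_1^{2m} \preccurlyeq M_2^{-2m}$. Conjugating back by $M_2^m$ produces $M_2^m M_1^{2m} M_2^m \preccurlyeq I$, which by the norm identity above means $\|M_1^m M_2^m\|^2 \le 1$, as required. The single non-trivial ingredient is L\"owner--Heinz; this is a well-known classical fact (provable, for instance, through the integral representation $t^m = c_m \int_0^\infty \lambda^{m-1} t (t+\lambda)^{-1} \, d\lambda$ and operator monotonicity of $t \mapsto t(t+\lambda)^{-1}$), so in this paper it can be cited without further argument. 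The only mild obstacle is the reduction to strict positivity needed to form $M_2^{-1}$, which is a routine perturbation that commutes with all the continuous functional calculus steps above.
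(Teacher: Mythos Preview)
Your proof is correct and follows the classical route (reduce to $\|M_1 M_2\|=1$, use the $C^*$-identity to pass to $M_2 M_1^2 M_2 \preccurlyeq I$, invoke L\"owner--Heinz, and conjugate back). The paper itself does not supply a proof of this lemma at all: it is stated in the technical-lemmata section with a bare citation to \cite{fujii1993norm}, so your sketch is strictly more detailed than what the paper offers and matches the argument in the cited reference.
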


\begin{proposition} \label{proposition:bias:expression}
    Let $\lambda>0$. The bias term $\B\eqdef\Expect{\x}{\left(\x^\top\hat{\th}(\X\th^*)-\x^\top\th^*\right)^2}$ has the following expression:
    \begin{equation*}
        \B
        =
        \lambda^2\norm{(\lambda\I_p+\hat{\Si})^{-1}\th^*}{\Si}^2.
    \end{equation*}
    where $\hat{\Si}\eqdef\frac{1}{n}\X^\top\X$.
\end{proposition}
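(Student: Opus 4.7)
The plan is to unfold the definition of $\hat{\th}$ and reduce everything to two identities: a matrix identity converting the ``large-space'' form of the regressor to its ``small-space'' form, and the elementary fact $\Expect{\x}{\x\x^\top} = \Si$.

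First, I would substitute $\y = \X\th^*$ into the definition $\hat{\th}(\y) = \X^\top(\X\X^\top + n\lambda\I_n)^{-1}\y$ from line~(\ref{line:regressor}), yielding $\hat{\th}(\X\th^*) = \mathbf{P}_\lambda \th^*$, where $\mathbf{P}_\lambda \eqdef \X^\top(\X\X^\top+n\lambda\I_n)^{-1}\X$ is the projector-like matrix already introduced in line~(\ref{bias:expression}). Hence $\hat{\th}(\X\th^*) - \th^* = -(\I_p - \mathbf{P}_\lambda)\th^*$, and the sign is immaterial after squaring.

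Next, I would apply the Sherman-Morrison-Woodbury identity (exactly as carried out in the proof of Proposition \ref{proposition:bias:ub:asymptotic:ridgeless}) to obtain
\begin{equation*}
    \I_p - \mathbf{P}_\lambda
    =
    \lambda\left(\lambda\I_p + \tfrac{1}{n}\X^\top\X\right)^{-1}
    =
    \lambda(\lambda\I_p + \hat{\Si})^{-1},
\end{equation*}
so that $\hat{\th}(\X\th^*)-\th^* = -\lambda(\lambda\I_p+\hat{\Si})^{-1}\th^*$.

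Finally, expanding the squared inner product and using linearity of expectation gives
\begin{equation*}
    \B
    =
    \bigl((\I_p-\mathbf{P}_\lambda)\th^*\bigr)^\top \Expect{\x}{\x\x^\top} \bigl((\I_p-\mathbf{P}_\lambda)\th^*\bigr)
    =
    \norm{(\I_p-\mathbf{P}_\lambda)\th^*}{\Si}^2,
\end{equation*}
and substituting the Sherman-Morrison-Woodbury expression pulls the scalar $\lambda^2$ out of the $\Si$-norm to deliver the claimed formula. There is no real obstacle here; the only subtlety is remembering that $\mathbf{P}_\lambda$ is symmetric and that $\Expect{\x}{\x\x^\top}=\Si$ holds by the covariance definition, both of which are standing assumptions from Section~\ref{section:preliminary}.
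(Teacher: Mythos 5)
Your proposal is correct and follows essentially the same route as the paper: rewrite $\hat{\th}(\X\th^*)-\th^*$ as $-(\I_p-\mathbf{P}_\lambda)\th^*$, convert the expectation over $\x$ into the $\Si$-norm, and apply Sherman--Morrison--Woodbury to obtain $\I_p-\mathbf{P}_\lambda=\lambda(\lambda\I_p+\hat{\Si})^{-1}$. The only cosmetic difference is that the paper starts directly from the $\Si$-norm form of the bias, whereas you spell out the step $\Expect{\x}{\x\x^\top}=\Si$ explicitly.
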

\begin{proof}
    By definition, rewrite the bias into:
    \begin{align}
        \mathcal{B}
        &=
        \norm{\th^*-\hat{\th}(\X\th^*)}{\Si}^2\nonumber \\
        &=
        \norm{\th^*-\X^\top(\X\X^\top+n\lambda\I_n)^{-1}(\X\th^*)}{\Si}^2 \nonumber \\
        &=
        \norm{\left(\I_p-\underbrace{\X^\top(\X\X^\top+n\lambda\I_n)^{-1}\X}_{\mathbf{P}_\lambda}\right)\th^*}{\Si}^2.
    \end{align}
    Denote $
        \mathbf{P}_\lambda
        \eqdef
        \X^\top(\X\X^\top+n\lambda\I_n)^{-1}\X\in\R^{p\times p}$.
    By Sherman-Morrison-Woodbury formula,
    \begin{equation*}
        \I_p - \mathbf{P}_\lambda
        =
        \I_p - (n\lambda)^{-1}\X^\top\left(\I_n+(n\lambda)^{-1}\X\X^\top\right)^{-1}\X  
        =
        \lambda\left(\lambda I_p+\frac{1}{n}\X^\top\X\right)^{-1}
        =
        \lambda\left(\lambda I_p+\hat{\Si}\right)^{-1}. 
    \end{equation*} 
    Hence $\B=\lambda^2\norm{(\lambda\I_p+\hat{\Si})^{-1}\th^*}{\Si}^2$.
\end{proof}

\begin{proposition}[Variance expression] \label{proposition:variance:expression}
    The variance term $\V\eqdef\Expect{\x,\ep}{\left(\x^\top\hat{\th^*}(\ep)\right)^2}$ has the following expression:
    \begin{equation*}
        \mathcal{V}
        =
        \frac{\sigma^2}{n}\tr\left[\left(\hat{\Si} + \lambda \I_p\right)^{-1}\Si\left(\hat{\Si} + \lambda \I_p\right)^{-1}\hat{\Si}\right] 
        =
        \frac{\sigma^2}{n}\Expect{\x\sim\mu}{ \norm{\left(\hat{\Si} + \lambda \I_p\right)^{-1}\x}{\hat{\Si}}^2 }. 
    \end{equation*}
\end{proposition}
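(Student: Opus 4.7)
The plan is to start from the definition $\V \eqdef \Expect{\x,\ep}{(\x^\top \hat{\th}(\ep))^2}$ with $\hat{\th}(\ep) = \X^\top(\X\X^\top + n\lambda\I_n)^{-1}\ep$, and first integrate out the noise $\ep$. Since the entries of $\ep$ are i.i.d.\ centered with variance $\sigma^2$, so $\Expect{}{\ep\ep^\top} = \sigma^2\I_n$, the inner expectation collapses to
\[
\Expect{\ep}{(\x^\top \hat{\th}(\ep))^2} = \sigma^2\, \x^\top \X^\top (\X\X^\top + n\lambda\I_n)^{-2} \X\, \x.
\]
Then taking expectation over $\x$ (independent of $\X$) and using $\Expect{\x}{\x\x^\top} = \Si$ together with the cyclic property of the trace, one obtains $\V = \sigma^2\, \tr\!\left[\X^\top (\X\X^\top + n\lambda\I_n)^{-2} \X\, \Si\right]$.

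Next, I would convert $\X^\top (\X\X^\top + n\lambda\I_n)^{-2}\X$ into a form involving $\hat{\Si} = \frac{1}{n}\X^\top\X$. The workhorse is the push-through identity $\X^\top(\X\X^\top + n\lambda\I_n)^{-1} = (\X^\top\X + n\lambda\I_p)^{-1}\X^\top$, which follows from the trivial equality $\X^\top(\X\X^\top + n\lambda\I_n) = (\X^\top\X + n\lambda\I_p)\X^\top$. Applying this identity on both factors of the squared resolvent, and using $\X^\top\X = n\hat{\Si}$ so that $(\X^\top\X + n\lambda\I_p)^{-1} = n^{-1}(\hat{\Si}+\lambda\I_p)^{-1}$, yields
\[
\X^\top (\X\X^\top + n\lambda\I_n)^{-2} \X = \tfrac{1}{n}(\hat{\Si}+\lambda\I_p)^{-1}\hat{\Si}(\hat{\Si}+\lambda\I_p)^{-1}.
\]
Substituting back and invoking cyclicity of the trace gives the first claimed expression $\V = \frac{\sigma^2}{n}\tr[(\hat{\Si}+\lambda\I_p)^{-1}\Si(\hat{\Si}+\lambda\I_p)^{-1}\hat{\Si}]$.

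For the second equality, I would rewrite $\Si = \Expect{\x\sim\mu}{\x\x^\top}$ inside the trace, pull the expectation outside by linearity, and apply the scalar identity $\tr[\M\, \x\x^\top] = \x^\top\M\x$ with the symmetric matrix $\M = (\hat{\Si}+\lambda\I_p)^{-1}\hat{\Si}(\hat{\Si}+\lambda\I_p)^{-1}$, noting that $\x^\top\M\x = \norm{(\hat{\Si}+\lambda\I_p)^{-1}\x}{\hat{\Si}}^2$ by Definition~\ref{definition:norm}. No serious obstacle is anticipated here; the only delicate point is bookkeeping of the $n$ factors when swapping between the kernel-matrix resolvent $(\X\X^\top+n\lambda\I_n)^{-1}$ and the covariance resolvent $(\hat{\Si}+\lambda\I_p)^{-1}$, which is handled cleanly by the push-through identity.
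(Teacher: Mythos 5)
Your proposal is correct and follows essentially the same route as the paper's proof: both reduce the noise expectation via $\Expect{}{\ep\ep^\top}=\sigma^2\I_n$, apply the push-through identity $\X^\top(\X\X^\top+n\lambda\I_n)^{-1}=(\X^\top\X+n\lambda\I_p)^{-1}\X^\top$ to pass to the covariance resolvent $(\hat{\Si}+\lambda\I_p)^{-1}$, and then rewrite $\Si=\Expect{\x}{\x\x^\top}$ inside the trace to obtain the $\hat{\Si}$-norm form. The only difference is the order in which the two expectations are taken, which is immaterial.
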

\begin{proof}
    By definition,
    \begin{align*}
        \mathcal{V}
        &=
        \Expect{\ep}{\norm{\hat{\th}(\ep)}{\Si}^2}\\
        &=
        \Expect{\ep}{\ep^\top(\X\X^\top + n\lambda \I_n)^{-1}\X\Si\X^\top(\X\X^\top + n\lambda \I_n)^{-1}\ep} \\
        &=
        \Expect{\ep}{\tr[(\X\X^\top + n\lambda \I_n)^{-1}\X\Si\X^\top(\X\X^\top + n\lambda \I_n)^{-1}\ep\ep^\top]} \\
        &=
        \tr[(\X\X^\top + n\lambda \I_n)^{-1}\X\Si\X^\top(\X\X^\top + n\lambda \I_n)^{-1}\Expect{\ep}{\ep\ep^\top}]  \\
        &=
        \sigma^2 
        \tr[(\X\X^\top + n\lambda \I_n)^{-1}\X\Si\X^\top(\X\X^\top + n\lambda \I_n)^{-1}]\\
        &=
        \sigma^2\tr[\X(\X^\top\X + n\lambda \I_p)^{-1}\Si(\X^\top\X + n\lambda \I_p)^{-1}\X^\top] \\
        &= 
        \frac{\sigma^2}{n}\tr\left[\left(\frac{1}{n}\X^\top\X + \lambda \I_p\right)^{-1}\Si\left(\frac{1}{n}\X^\top\X + \lambda \I_p\right)^{-1}\frac{1}{n}\X^\top\X\right] \\
        &=
        \frac{\sigma^2}{n}\tr\left[\left(\hat{\Si} + \lambda \I_p\right)^{-1}\Expect{\x\sim\mu}{\x\x^\top}\left(\hat{\Si} + \lambda \I_p\right)^{-1}\hat{\Si}\right] \\
        &=
        \frac{\sigma^2}{n}\Expect{\x\sim\mu}{\tr\left[\left(\hat{\Si} + \lambda \I_p\right)^{-1}\x\x^\top\left(\hat{\Si} + \lambda \I_p\right)^{-1}\hat{\Si}\right] }\\
        &=
        \frac{\sigma^2}{n}\Expect{\x\sim\mu}{\x^\top\left(\hat{\Si} + \lambda \I_p\right)^{-1}\hat{\Si}\left(\hat{\Si} + \lambda \I_p\right)^{-1}\x } \\
        &=
        \frac{\sigma^2}{n}\Expect{\x\sim\mu}{ \norm{\left(\hat{\Si} + \lambda \I_p\right)^{-1}\x}{\hat{\Si}}^2 } .
    \end{align*}
\end{proof}

\newpage
\section{Experiments in details} \label{section:experiments_detailed}
Some of the results in Table \ref{tab:over_parameterized} have already been validated by experiments in previous literature. Hence, this section will focus on the novel result of this paper:
\begin{enumerate}
    \item Same decay rate for independent \ref{assumption:IF}/generic \ref{assumption:GF} features under strong ridge;
    \item Decay of $\B$ for $s<1$ under weak ridge;
    \item Decay of $\V$ under weak ridge (tempered vs catastrophic overfitting).
\end{enumerate}

All experiments were conducted on a computer with a 2.3 GHz Quad-Core Intel Core i7 processor. The code for the experiments is available in the supplementary materials.

\subsection{Bias under strong ridge}
we first consider a simple example: 
let $\lambda_k=(\frac{2k-1}{2}\pi)^{-1-a}$, $\psi_k(\cdot)=\sqrt{2}\sin\left( \frac{2k-1}{2} \pi \cdot \right)$ such that $\norm{\psi_k}{L^2_\mu}=1$ for $\mu=\unif[0,1]$; let $\theta^*_k = (\frac{2k-1}{2}\pi)^{-r}$. For $p=\infty$ and $a=1$, the regression coincides with the kernel ridge regression with kernel $k(x,x')=\min\{x,x'\}$ defined on the interval $[0,1]$ by \cite{Wainwright2019high}. \cite{li2023kernel, long2024duality} have conducted similar experiments on this kernel $k$. However, to simulate the regression for independent features \ref{assumption:IF}, the feature rank $p$ has to be finite. In the following experiment, we choose $p=2000$, the sample size $n$ ranges from 100 to 1000, ridge $\lambda=\lambda_n=(\frac{2n-1}{2}\pi)^{-1-a}$.

The first thing to check is whether both independent features (\ref{assumption:IF}) and generic features (\ref{assumption:GF}) satisfy
\begin{equation*}
\B
=
\bigo{}{n^{-b\Tilde{s}}}
\end{equation*}
under strong ridge, where $\tilde{s}\eqdef\min{s,2}$ and $s=\frac{2a+r}{1+a}$. To accurately obtain the bias term $\B$, we compute the exact formula:
\begin{equation*}
\B
=
\lambda^2\norm{(\lambda\I_p+\hat{\Si})^{-1}\th^*}{\Si}^2,
\end{equation*}
as shown in Proposition \ref{proposition:bias:expression}, rather than computing the squared difference $\left(\hat{f}(x)-f^*(x)\right)^2$ by evaluating on test points as done in \cite{li2023on}, or by using an integral function as in \cite{long2024duality}.
To demonstrate the Gaussian Equivalent Property (GEP), we also compute $\B$ after replacing the Sine feature vector $\ps={\psi_k(x)}_{k=1}^p$ with a random Gaussian vector $\z\sim\mathcal{N}(0,1)$ or a random Rademacher vector $\z\sim(\unif\{\pm1\})^p$. It is worth noting that the random Rademacher vector $\z\sim(\unif\{\pm1\})^p$ satisfies Assumption \ref{assumption:IF}. This shows that our statement holds more generally than under the Gaussian Design Assumption (\ref{assumption:GD}). From Figure \ref{fig:bias:ridge}, we observe that for different choices of $a$ and $r$ (and hence $s$), the bias decays at its theoretical rate for all three different features.

\begin{figure}[ht]
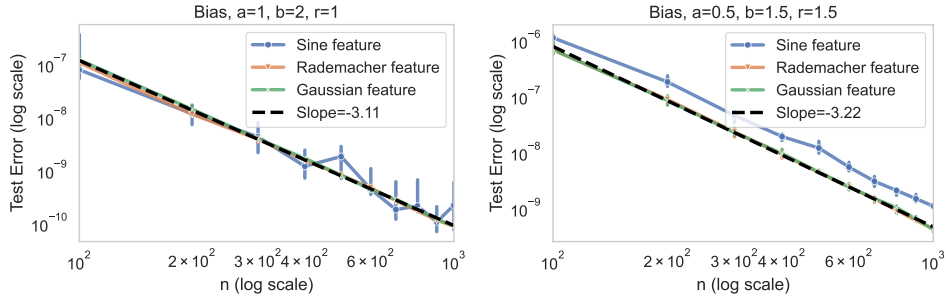

    \centering
    \includegraphics[width=0.45\linewidth]{pictures/test_errorB,1,2,1.pdf}
    \includegraphics[width=0.45\linewidth]{pictures/test_errorB,0.5,1.5,1.5.pdf}
    \caption{Decay of the bias term $\B$ under strong ridge $\lambda=\lambda_n=\bigtheta{}{n^{-1-a}}$. $\lambda_k=(\frac{2k-1}{2}\pi)^{-1-a}$, $\theta^*_k = (\frac{2k-1}{2}\pi)^{-r}$. Theoretical decay $\B=\bigo{}{n^{-(1+a)\Tilde{s}}}=\bigo{}{n^{-(1+a)\Tilde{s}}}$, where $\tilde{s}=\min\{s,2\}$, source coefficient $s=\frac{2a+r}{1+a}$. (Left): $s=1.5$ and $\B=\bigo{}{n^{-(1+1)\min\{1.5,2\}}}=\bigo{}{n^{-3}}$; (right): $s=2.33>2$ and $\B=\bigo{}{n^{-(1+0.5)\min\{2.33,2\}}}=\bigo{}{n^{-3}}$, showing the saturation effect mentioned in \cite{li2022saturation}. All features demonstrate the same theoretical decay, validating the GEP.}
    \label{fig:bias:ridge}
\end{figure}

\subsection{Bias under weak ridge}

For weak ridge, we find that the decay rate is better than this theoretical rate (see Figure \ref{fig:bias:ridgeless}). One explanation for this is the estimation error of replacing the kernel $K(x,x')=\min\{x,x'\}$ by its finite rank approximation; another reason is that the decay flattens for large sample sizes $n\gg1000$. However, the learning curve of independent features (Gaussian or Rademacher) behaves similarly to the dependent one (Sine). The left plot in Figure \ref{fig:bias:ridgeless} fits our theoretical result, as $s>1$, while the right plot shows that our theoretical bound is too pessimistic. However, we suspect this is due to the fact that the eigenfunctions on the one-dimensional input space are simply sines, which are uniformly bounded.

\begin{figure}[ht]
    \centering
    \includegraphics[width=0.45\linewidth]{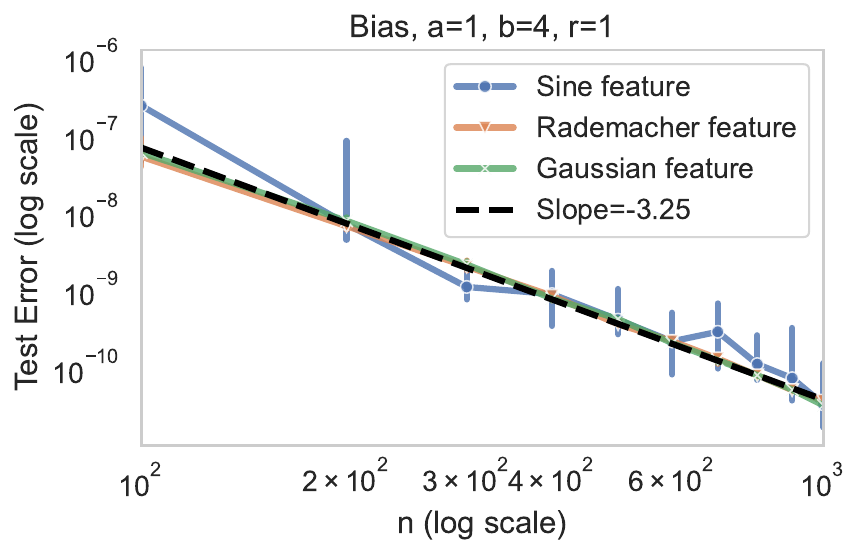}
    \includegraphics[width=0.45\linewidth]{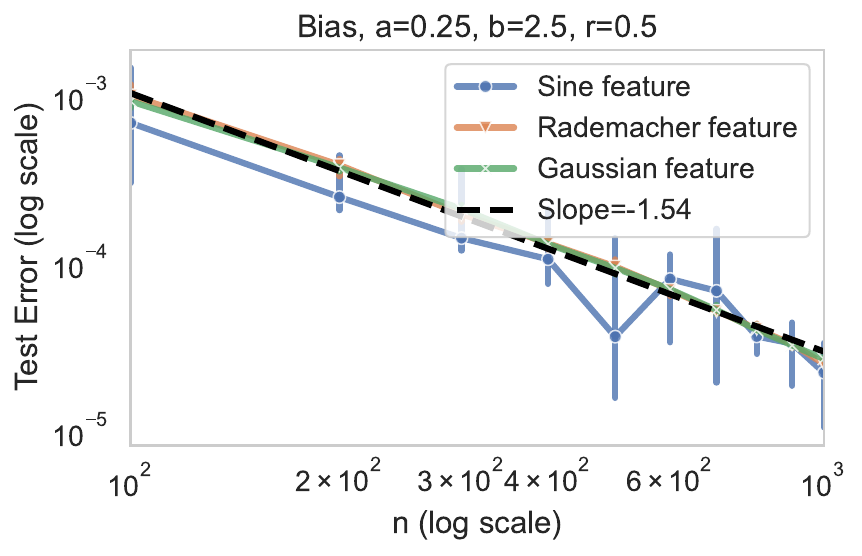}
    \caption{\textit{Decay of the bias term $\B$ under weak ridge}. $\lambda=\bigtheta{}{n^{-b}}$. $\lambda_k=(\frac{2k-1}{2}\pi)^{-1-a}$, $\theta^*_k = (\frac{2k-1}{2}\pi)^{-r}$. Theoretical decay $\B=\bigo{}{n^{-(1+a)\Tilde{s}}}=\bigo{}{n^{-(1+a)\Tilde{s}}}$, where $\tilde{s}=\min\{s,2\}$, source coefficient $s=\frac{2a+r}{1+a}$. (Left): $s=1.5>1$ with theoretical bound $\B=\bigo{}{n^{-(1+0.5)\min\{2.33,2\}}}=\bigo{}{n^{-3}}$ for all features; (right): $s=0.8<1$ with theoretical bound $\B=\bigo{}{n^{-(1+0.25)\min\{0.8,2\}}}=\bigo{}{n^{-1}}$ for Gaussian and Rademacher (independent) features, the empirical result for Sine features is better than its theoretical bound $\B=\bigo{}{n^{-(r-a)}}=\bigo{}{n^{-0.25}}$.}
    \label{fig:bias:ridgeless}
\end{figure}

\subsection{Variance under strong ridge} 
Analogous to $\B$, we want to check whether both independent features (\ref{assumption:IF}) and generic features (\ref{assumption:GF}) satisfy
\begin{equation*}
\V
=
\bigo{}{\sigma^2n^{-1+\frac{b}{1+a}}},
\end{equation*}
under strong ridge, where $\sigma^2=\Expect{}{\epsilon^2}$ is the noise level. To ease the computation, instead of computing the expression of $\V$ directly from Proposition \ref{proposition:variance:expression}:
\begin{align*}
    \V
    &=
    \frac{\sigma^2}{n}
    \tr\left[ (\hat{\Si}+\lambda\I_p)^{-1}\Si(\hat{\Si}+\lambda\I_p)^{-1}\hat{\Si} \right]\\
    &=
    \frac{\sigma^2}{n}
    \tr\left[ \Si(\hat{\Si}+\lambda\I_p)^{-1}\hat{\Si} (\hat{\Si}+\lambda\I_p)^{-1}\right]\\
    &=
    \frac{\sigma^2}{n}
    \tr\left[ \Si \mathbf{U} \mathbf{D}^2(\mathbf{D}+\lambda\I_p)^{-2} \mathbf{U}^\top  \right]
\end{align*}
where $\hat{\Si}=\mathbf{U}\mathbf{D}\mathbf{U}^\top$ is the singular value decomposition of $\hat{\Si}$. Figure \ref{fig:variance:ridge} confirms the Gaussian Equivalent Property (GEP) under strong ridge.

All dotted lines in Figures \ref{fig:bias:ridge}, \ref{fig:bias:ridgeless} and \ref{fig:variance:ridge} are regression of the learning curve with Gaussian features.

\begin{figure}[ht]
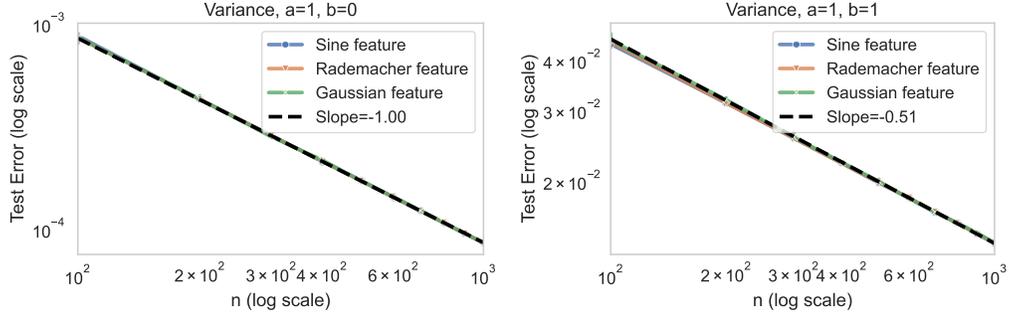

    \centering
    \includegraphics[width = 0.48 \linewidth]{pictures/test_errorV,1,0,1.pdf}
    \includegraphics[width = 0.48 \linewidth]{pictures/test_errorV,1,1,1.pdf}
    \caption{\textit{Decay of the variance term $\V$ under strong ridge}. $\lambda=\bigtheta{}{n^{-b}}$. $\lambda_k=(\frac{2k-1}{2}\pi)^{-1-a}$. Theoretical decay $\V=\bigo{}{n^{-1+\frac{b}{1+a}}}$. 
    (Left): Theoretical decay $\V=\bigo{}{n^{-1}}$ for all features; 
    (right):  Theoretical decay $\V=\bigo{}{n^{-1/2}}$ for all features.}
    \label{fig:variance:ridge}
\end{figure}

\subsection{Variance under weak ridge}

As reported in \cite{barzilai2023generalization, cheng2024characterizing}, by setting $\lambda=0$, $\hat{f}$ is indeed the norm minimum interpolant, which may demonstrates tempered or catastrophic overfitting as $n\to\infty$. For this example, we focus on another setting where we take samples uniformly from a unit 2-disk and approximate $\V$ by evaluating the regressor of the zero function on the test point. We set $\lambda=0$ and compute the kernel ridgeless regression with the Laplacian kernel $K(x,z)=e^{-\norm{x-z}{2}}$ and the neural tangent kernel $K(x,z)=x^\top z \kappa_0(x^\top z)+ \kappa_1(x^\top z)$ where $\kappa_0(t)\eqdef1-\frac{1}{\pi}\arccos(t)$, $\kappa_1(t)\eqdef\frac{1}{\pi}\left(  t(\pi-\arccos(t))+\sqrt{1-t^2}\right)$.

\begin{figure}[ht]
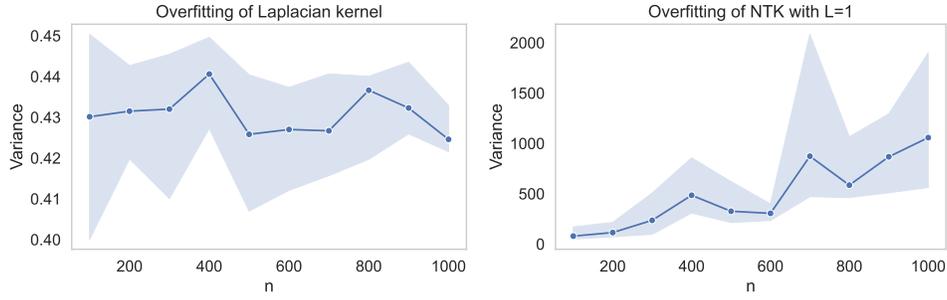

    \centering
    \includegraphics[width=0.45\linewidth]{pictures/overfitting_laplacian.pdf}
    \includegraphics[width=0.45\linewidth]{pictures/overfitting_NTK_L=1.pdf}
    \caption{\textit{The variance against the sample size $n$ under no ridge.}
    (Left): Tempered overfitting with Laplacian kernel.
    (Right): Catastrophic overfitting with NTK.
    }  
    \label{fig:variance_decay}
\end{figure}

In Figure \ref{fig:variance_decay}, we can see that although having polynomial eigen-decay, the Laplacian kernel exhibits tempered overfitting, as termed by \cite{mallinar2022benign}, while the NTK catastrophic overfitting.  

\newpage
\section{Tables} \label{section:tables}

\begin{table}[ht]
    \centering
    \begin{tabular}{c|c}
        Linear & Kernel \\
         \hline
        $\x$ & $K(x,\cdot)$ \\
        $\z$ & $\ps_x$ \\
        $\Si$ & $\La$ \\
        $\Z$ & $\Ps$ \\
        $\X\X^\top$ & $\K$ \\ 
        $\norm{\cdot}{2} = \norm{\cdot}{\I_p}$ & $\norm{\cdot}{\mathcal{H}}$ \\
        $\norm{\cdot}{\Si^{1-s}}$ & $\norm{\cdot}{\H^s}$ \\ 
        $\norm{\cdot}{\Si}$ & $\norm{\cdot}{L^2_\mu(\mathcal{X})}$ \\
        $\norm{\cdot}{\infty}$ & $\norm{\cdot}{\infty}$ on $\supp(\mu)$\\
        $\th^*$ & $f^*$ 
    \end{tabular}
    \caption{The translation of ridge regression setting in KRR setting (see Subsection \ref{subsection:KRR} for the elaboration.)}
    \label{tab:translation}
\end{table}

\begin{table}[ht]
\centering
\resizebox{\textwidth}{!}{%
\begin{tabular}{cc|cc|cc}
\hline
\multicolumn{2}{c}{Ridge}    & \multicolumn{2}{c}{strong} & \multicolumn{2}{c}{weak} \\ \hline
\multicolumn{2}{c}{Feature}  & \ref{assumption:IF}   & \ref{assumption:GF} & \ref{assumption:IF}  & \ref{assumption:GF}\\
\multirow{2}{*}{Poly \ref{assumption:PE}} & $\B$ & \textcolor{blue}{\cite{cui2021generalization}} &  \cite{li2023on, barzilai2023generalization} & \textcolor{blue}{\cite{cui2021generalization}} & \textcolor{blue}{\cite{li2023on, barzilai2023generalization}}\\
                      & $\V$ & \textcolor{blue}{\cite{cui2021generalization}}  &  \cite{li2023kernel, barzilai2023generalization} & \textcolor{black}{\cite{cui2021generalization, cheng2024characterizing}}  &  \cite{barzilai2023generalization} \\
\multirow{2}{*}{Exp \ref{assumption:EE}}  & $\B$ & this paper & \cite{long2024duality} & this paper & \cite{long2024duality}\\
                      & $\V$ & this paper & this paper  &  -  &     -  \\ 
                      \cline{1-6} 
\hline
\hline
\multicolumn{2}{c}{Ridge}    & \multicolumn{2}{c}{strong} & \multicolumn{2}{c}{weak} \\ \hline
\multicolumn{2}{c}{Feature}  & \ref{assumption:IF}& \ref{assumption:GF} & \ref{assumption:IF}  & \ref{assumption:GF}\\
\multirow{2}{*}{Poly \ref{assumption:PE}} & $\B$ & - &  $\bigo{}{n^{-b\Tilde{s}}}$ & - & 
$\bigo{}{n^{-(\min\{2(r-a),2-a\})_+}}$\\
                      & $\V$ &  -  &  $\bigot{}{\sigma^2n^{-1+\frac{b}{a+1}}}$ & $\bigtheta{}{\sigma^2}$  &  $\bigot{}{\sigma^2 n^{2a}}$ \\
\multirow{2}{*}{Exp \ref{assumption:EE}}  & $\B$ & - & - & - & -\\
                      & $\V$ & - & -  & \multicolumn{2}{c}{catastrophic overfitting}  \\ 
                      \cline{1-6} 

\hline
\hline
\multicolumn{2}{c}{Ridge}    & \multicolumn{2}{c}{strong} & \multicolumn{2}{c}{weak} \\ \hline
\multicolumn{2}{c}{Feature}  & \ref{assumption:IF}   & \ref{assumption:GF} & \ref{assumption:IF}  & \ref{assumption:GF}\\
\multirow{2}{*}{Poly \ref{assumption:PE}} & $\B$ & \textcolor{blue}{${\bigo{}{n^{-b\Tilde{s}}}}$} & - & \textcolor{blue}{${\bigo{}{n^{-(1+a)\Tilde{s}}}}$} & 
\textcolor{blue}{$\bigo{}{n^{-(1+a)\tilde{s}}}, s>1$}\\
                      & $\V$ &  \textcolor{blue}{${\bigo{}{\sigma^2n^{-1+\frac{b}{a+1}}}}$}  &  - & -  &  - \\
\multirow{2}{*}{Exp \ref{assumption:EE}}  & $\B$ & - & \textcolor{blue}{$\bigo{}{e^{-b\Tilde{s}n}}$} & - & \textcolor{blue}{$\bigo{}{e^{-a\Tilde{s}n}},\ s>1$}\\
                      & $\V$ & - & - &  \multicolumn{2}{c}{catastrophic overfitting}  \\ 
                      \cline{1-6}    

\hline
\hline
\multicolumn{2}{c}{Ridge}    & \multicolumn{2}{c}{strong} & \multicolumn{2}{c}{weak} \\ \hline
\multicolumn{2}{c}{Feature}  & \ref{assumption:IF}   & \ref{assumption:GF} & \ref{assumption:IF}  & \ref{assumption:GF}\\
\multirow{2}{*}{Poly \ref{assumption:PE}} & $\B$ & $\textcolor{blue}{\bigtheta{}{n^{-b\Tilde{s}}}}$ &  $\textcolor{orange}{\bigtheta{}{n^{-b\Tilde{s}}}}$  & $\textcolor{blue}{\bigtheta{}{n^{-(1+a)\Tilde{s}}}}$ & 
$\textcolor{blue}{\begin{cases}
  \bigo{}{n^{-(1+a)\tilde{s}}},& s>1\\
  \bigot{}{n^{-(\min\{2(r-a),2-a\})_+}},& s\leq1
\end{cases}}$\\
                      & $\V$ &  $\textcolor{blue}{\bigtheta{}{\sigma^2n^{-1+\frac{b}{a+1}}}}$  &  \textcolor{orange}{$\bigtheta{}{\sigma^2n^{-1+\frac{b}{a+1}}}$} & $\bigtheta{}{\sigma^2}$  &  $\bigot{}{\sigma^2 n^{2a}}$, \textcolor{orange}{$\bigomega{}{\sigma^2}$} \\
\multirow{2}{*}{Exp \ref{assumption:EE}}  & $\B$ &$\textcolor{blue}{\bigtheta{}{e^{-b\Tilde{s}n}}}$ & $\bigo{}{e^{-b\Tilde{s}n}}$ & $\textcolor{blue}{\bigo{}{e^{-a\Tilde{s}n}},\ s>1}$ & $\bigo{}{e^{-a\Tilde{s}n}},\ s>1$\\
                      & $\V$ & $\textcolor{blue}{\bigtheta{}{\sigma^2n^{-1+\frac{b}{a}}}}$ & $\textcolor{blue}{\bigo{}{\sigma^2n^{-1+\frac{b}{a}}}}$  &  \multicolumn{2}{c}{catastrophic overfitting}  \\ 
                      \cline{1-6}

\end{tabular}
}

\caption{Various filters for Table \ref{tab:over_parameterized}. 
(top): Recovered (in black), improved (in blue) and novel (denoted by ``this paper'') results over previous literature.
(top 2): Recovered results under same assumptions.
(top 3): Recovered results under weaker assumptions.
(bottom): State-of-the-Art (SOTA) result. Black indicates results recovered results under our assumptions, blue indicates improved results over previous literature, orange indicates SOTA results from \cite{li2023on} under extra assumptions.
}
\label{tab:over_parameterized:2}
\end{table}


\end{document}